\documentclass[letterpaper]{article} 
\usepackage[preprint]{aaai2027}
\usepackage[hyphens]{url}  
\usepackage{graphicx} 
\usepackage{natbib}  
\usepackage{caption} 
\usepackage{algorithm}
\usepackage{algorithmic}
\usepackage{makecell}
\usepackage{newfloat}
\usepackage{listings}
\DeclareCaptionStyle{ruled}{labelfont=normalfont,labelsep=colon,strut=off} 
\floatstyle{ruled}
\newfloat{listing}{tb}{lst}{}
\floatname{listing}{Listing}

\usepackage{booktabs}
\usepackage{arydshln} 

\usepackage{subfigure}

\usepackage{amsmath} 

\usepackage{amsmath}
\usepackage{amssymb}
\usepackage{mathtools}
\usepackage{amsthm}

\theoremstyle{plain}
\newtheorem{theorem}{Theorem}[section]
\newtheorem{proposition}[theorem]{Proposition}
\newtheorem{lemma}[theorem]{Lemma}
\newtheorem{corollary}[theorem]{Corollary}
\theoremstyle{definition}
\newtheorem{definition}[theorem]{Definition}
\newtheorem{assumption}[theorem]{Assumption}
\theoremstyle{remark}
\newtheorem{remark}[theorem]{Remark}

\usepackage{multirow}

\newcommand{\cO}{\mathcal{O}}
\newcommand{\cA}{\mathcal{A}}
\newcommand{\cS}{\mathcal{S}}
\newcommand{\cM}{\mathcal{M}}
\newcommand{\cC}{\mathcal{C}}
\newcommand{\cX}{\mathcal{X}}
\newcommand{\cP}{\mathcal{P}}
\newcommand{\cI}{\mathcal{I}}
\newcommand{\cR}{\mathcal{R}}

\newcommand{\bP}{\mathbb{P}}
\newcommand{\bE}{\mathbb{E}}
\newcommand{\Id}{\mathrm{Id}}
\newcommand{\Sym}{\mathrm{Sym}}
\newcommand{\Hol}{\mathrm{Hol}}

\newcommand{\argmax}{\operatorname*{arg\,max}}
\newcommand{\argmin}{\operatorname*{arg\,min}}
\newcommand{\Holo}{\mathrm{Holo}}

\DeclareMathOperator{\src}{src}
\DeclareMathOperator{\tgt}{tgt}

\DeclareMathOperator{\ab}{ab}

\title{Minimal Markovization via Stable Quotients in Holonomy-Cover Decision Processes}

\author{
    Zuyuan Zhang\textsuperscript{\rm 1},
    Yongshan Chen\textsuperscript{\rm 2},
    Mahdi Imani\textsuperscript{\rm 2},
    Tian Lan\textsuperscript{\rm 1}
}

\affiliations{
    \textsuperscript{\rm 1}Department of Electrical and Computer Engineering,
    The George Washington University\\
    \textsuperscript{\rm 2}Department of Electrical and Computer Engineering,
    Northeastern University\\
    zuyuan.zhang@gwu.edu,
    chen.yongs@northeastern.edu,
    m.imani@northeastern.edu,
    tlan@gwu.edu
}

\newcommand{\PDFmode}{0}

\ifcase\PDFmode
\or
  \usepackage[1-9,10000]{pagesel}
\or
  \usepackage[10-]{pagesel}
\fi

\begin{document}

 \maketitle

\begin{abstract}
An agent acting under partial observability must retain a recursively updateable statistic of history that restores the Markov property, but the smallest such statistic is generally unknown. We characterize this minimal Markov sufficient statistic for holonomy-cover decision processes, a structured POMDP class in which the visible dynamics are Markov and every realized visible transition applies a fixed permutation to a hidden mode. 
In particular, we construct the stable quotient, the coarsest observation-wise abstraction preserving one-step rewards and quotient successors, and prove that the pair of the current observation and stable class forms an exact finite Markov state. When the current class is correctly initialized, exact class tracking requires exactly the minimal
memory symbols, in the sense that under reachability and pairwise decision separation at a maximizing observation, no arbitrary finite-memory controller can use fewer. Under resettable diagnostics, nearest-prototype class inference has exponentially decaying error, 
and a calibrate-then-restart reduction transfers finite-MDP guarantees to the recovered state. 
The results enable \emph{Holonomy Memory Reinforcement Learning}. It represents memory by the current stable class, updates it through ordered edge transports, identifies local class coordinates when diagnostics are available, and applies a standard finite-MDP RL backbone after synchronization.
Experiments recover an exact compression from 
raw states to 
quotient states and achieve perfect paired-order accuracy with three decision-time memory states, matching the quotient oracle and outperforming the non-oracle baselines.

\end{abstract}

\section{Introduction}
\label{sec:intro}

Reinforcement learning (RL) relies on a Markov state that makes future rewards and action-conditioned transitions depend only on the present~\cite{sutton1998reinforcement}. Under partial observability, the current observation alone may not be sufficient to predict the future. Partially observable Markov decision process (POMDP) theory restores the Markov property through belief states--posterior distributions over latent states conditioned on the interaction history~\cite{aastrom1965optimal,smallwood1973optimal,monahan1982state,kaelbling1998planning}. In practice, this belief is often approximated by encoding the action--observation history with sequential or memory-augmented neural architectures~\cite{meuleau2013learning}. Although these methods can learn useful history representations from a computational perspective, they generally do not characterize the minimal Markov sufficient statistic--the smallest recursively updateable representation of history that preserves the Markovian property.

We study a structured family of POMDPs through the lens of transport theory and identify its minimal Markov sufficient statistic. We model partial observability as hidden-mode transport over the graph of visible transitions. Each latent state consists of an observed component and an unobserved mode. The observed component evolves according to a Markov transition rule that is independent of the hidden mode. Whenever an action leads from one observation to the next, the realized transition applies a fixed permutation to the hidden mode. A trajectory therefore updates the hidden mode by composing these transition-specific transports in temporal order. We call the resulting model a \emph{holonomy-cover decision process} (HCDP). Its visible dynamics are Markov, but two histories ending at the same observation can induce different hidden modes---and hence different rewards or optimal decisions---because their ordered transports differ.

Our approach departs from existing work that primarily designs expressive or efficient history representations rather than characterizing the smallest Markov sufficient statistic. 
Recurrent, Transformer, and linear-recurrent networks can encode ordered histories in POMDPs, but their learned representations do not generally guarantee that the resulting memory-augmented process is Markov~\cite{hausknecht2015deep,parisotto2020stabilizing,ni2021recurrent,morad2024recurrent,zhanghodgeflow,zhanggeometric}. Belief compression, predictive-state representations, and world models focus on summarizing posterior or predictive information~\cite{roy2005finding,igl2018deep,littman2001predictive,singh2012predictive,hafner2019learning,hafner2019dream,zhang2024modeling,zhang2025learning}.  Bisimulation and homomorphism methods provide principled aggregation only after a Markov state is identified
~\cite{givan2003equivalence,ravindran2003smdp,ferns2004metrics,abel2016near,gelada2019deepmdp,zhang2020learning,efroni2022provable,lamb2022guaranteed}, while reward machines and regular decision processes introduce finite automata for history dependence, but do not directly isolate the observation-wise minimal Markov state generated by ordered hidden transport~\cite{bresina1996heuristic,thiebaux2006decision,icarte2018using,camacho2019ltl,icarte2022reward,toro2019learning,brafman2019regular,ronca2021efficient,cipollone2023provably,deb2024tractable,zhang2026matrix}. Appendix~\ref{app:positioning_table} offers a detailed comparison.

We analyze an HCDP by refining the hidden modes separately within each observation fiber. Two modes remain in the same class only if they agree on every immediate action reward and every feasible edge transports them into the same successor classes. The key proof is a monotone finite refinement: the operator stabilizes at a fixed point $\Pi^\star$ (Lemma~\ref{lem:stable_fp_main}), the exactness criterion characterizes every reward-and-successor-preserving abstraction (Proposition~\ref{prop:stable_exact_equiv_main}), and the resulting factorization shows that every exact observation-wise abstraction refines the stable quotient (Theorem~\ref{thm:coarsest_exact_main}). Consequently, the pair $(o,c)$ of the current observation $o$ and stable quotient class $c=[i]_o$ is an exact value-preserving finite Markov state (Theorem~\ref{thm:markovization_main}). Within this HCDP abstraction class, it is therefore shown to be the minimal Markov sufficient statistic. When the current class is correctly initialized, exactly $\max_o|\cC_o|$ reusable memory symbols are necessary and sufficient for recursive class tracking (Corollary~\ref{cor:min_memory_main}). If the initial class is unknown, the generic exact observable information state is instead a posterior over stable classes (Proposition~\ref{prop:quotient_belief_main}).

The analysis enables \emph{Holonomy Memory Reinforcement Learning} (HMRL). HMRL represents memory by the current stable class, updates it through ordered edge transports, identifies local class coordinates when diagnostics are available, and applies a standard finite-MDP RL backbone after synchronization.
The quotient is structural, but learning it requires additional information. We prove that arbitrary passive logs do not uniformly identify stable classes and transports (Theorem~\ref{thm:passive_nonident_main}); under resettable calibrated diagnostics, class error decays exponentially (Theorem~\ref{thm:local_class_inf_main}) and transition-anchored estimates eventually recover edge transports up to observation-wise relabeling (Theorem~\ref{thm:transport_local_main}). Once local class tracking is synchronized, the learned process is isomorphic to the canonical quotient MDP (Theorem~\ref{thm:eventual_exact_reduction_main}), and a calibrate-then-restart condition transfers the original finite-MDP guarantee to the recovered state (Corollary~\ref{cor:backbone_transfer_main}).

Our main contributions are:
\begin{itemize}
    \item We formulate HCDPs and identify their stable quotient as the coarsest exact observation-wise Markov state, with a matching $\max_o|\cC_o|$ memory characterization for exact known-class tracking.
    \item We separate passive nonidentifiability from diagnostic recovery, identify classes and transports up to observation-wise relabeling, and give the conditions under which a finite-MDP RL guarantee transfers to the recovered state.
    \item We prove a nonabelian lower bound for count-factorized memories and show that ordered stable-class tracking avoids this loss with finite memory.
\end{itemize}

Experiments support these claims. ChainCover is compressed from $216$ raw states to $25$ quotient states without reward, transition, or optimal-value inconsistency. On nonabelian LoopGuess, HMRL matches the quotient oracle, achieves $1.000$ final success and perfect paired-order accuracy, and uses only three decision-time memory states.

\section{Preliminaries}
\label{sec:prelim}

This section separates three objects that are easy to conflate: the hidden Markov state, the observable support path, and the controller's recursively updated memory. The first is Markov but unavailable, the second is observable but may be insufficient, and the third is implementable but need not be minimal. We fix their notation before imposing the transport structure.
Let $\cS$, $\cO$, and $\cA$ be finite, and consider
$\cP=(\cS,\cO,\cA,P,r,h,\gamma,\rho_0)$ with
$P(\cdot\mid s,a)\in\Delta(\cS)$,
$r:\cS\times\cA\to\mathbb R$ bounded,
$h:\cS\to\cO$, $\gamma\in(0,1)$, and
$\rho_0\in\Delta(\cS)$. Here
$o_t=h(s_t)$ and $s_{t+1}\sim P(\cdot\mid s_t,a_t)$.
For $o\in\cO$, let $\cS_o=h^{-1}(\{o\})$. A length-$t$ observable history is
$
H_t=(o_0,a_0,\ldots,a_{t-1},o_t),
$
and a history-dependent policy is
$
\boldsymbol\pi=(\pi_t)_{t\geq0},
\qquad
\pi_t(\cdot\mid H_t)\in\Delta(\cA).
$
Its discounted return is
$
J(\boldsymbol\pi)
=
\bE^{\boldsymbol\pi}
\left[
\sum_{t\geq0}\gamma^t r(s_t,a_t)
\right].
$
The posterior $b_t(\cdot\mid H_t)$ is supported on $\cS_{o_t}$, but different
histories ending at $o_t$ may induce different posteriors. Thus the visible
state need not be Markov; a full belief is sufficient but generally
continuous and finer than control requires.

The action-labeled observation support graph $X=(\cO,E)$ has
$
E=
\left\{
(o\xrightarrow{a}o'):
\begin{array}{l}
P(s'\mid s,a)>0\text{ for some }s\in\cS_o,\\[-1mm]
s'\in\cS_{o'}
\end{array}
\right\}.
$
For a path $\omega=e_1\cdots e_k$, write $\src(\omega)$ and
$\tgt(\omega)$ for its endpoints and $\omega_2\circ\omega_1$ for execution
of $\omega_1$ followed by $\omega_2$. A support path is latent realizable
only when one sequence $s_j\in\cS_{o_j}$ witnesses all of its edges; edgewise
feasibility alone need not provide such a sequence. Supplementary
Lemma~\ref{lem:history_support_path} relates positive-probability histories
to realizable paths.

A deterministic finite-memory controller $(\cM,\iota,U,\pi_{\cM})$ obeys
$
m_0=\iota(o_0),
m_{t+1}=U(m_t,o_t,a_t,o_{t+1}),
\pi_{\cM}(\cdot\mid o_t,m_t)\in\Delta(\cA).
$
Equivalently, a recursive encoder $F$ satisfies
$
F(H_t\cdot(a_t,o_{t+1}))
=
U(F(H_t),o_t,a_t,o_{t+1}).
$
Histories merged by $F$ therefore remain merged after every common observable continuation, so finite memory induces a finite right congruence. For evaluation, the analyst may augment the latent state to $(s_t,m_t)$, which is Markov under a fixed controller; this does not grant the controller access to $s_t$, and pointwise Bellman maximization over $(s,m)$ would generally be a latent-state oracle. Unless stated otherwise, realized rewards are training signals rather than within-episode controller observations. Reward-aware histories, stochastic emissions, and the formal augmentation results are in Supplementary Appendix~\ref{app:prelim_details}. We use $[n]=\{1,\ldots,n\}$, $S_n=\Sym([n])$, $\Delta(\cX)$ for the simplex on finite $\cX$, and $\mathbf1\{\cdot\}$ for an indicator.

\section{Holonomy-cover decision processes}
\label{sec:model}

This section introduces the transport structure and establishes the minimal exact Markov state of an HCDP. Lemma~\ref{lem:hcdp_path_lifting_main} gives every feasible visible path a unique hidden lift, and Lemma~\ref{lem:stable_fp_main} shows that reward-initialized partition refinement reaches the coarsest stable family in finitely many splits. Proposition~\ref{prop:stable_exact_equiv_main} characterizes exact abstractions, Theorem~\ref{thm:coarsest_exact_main} proves that every exact observation-wise abstraction refines the stable quotient, and Theorem~\ref{thm:markovization_main} turns the observation--class pair into a value-preserving finite MDP. Lemma~\ref{lem:directed_quotient_holonomy_main} identifies the finite permutation group induced by closed directed walks, Proposition~\ref{prop:quotient_belief_main} gives the exact posterior state when the current class is unknown, and Corollary~\ref{cor:min_memory_main} proves that known-class tracking uses the minimal alphabet size $\max_o|\cC_o|$. These results separate the environment-side minimal statistic from the controller-side problem of observing or inferring its current value.

\begin{definition}[Holonomy-cover decision process]
\label{def:hcdp_main}
Fix finite $\cO,\cA$ and $n\geq1$. An HCDP has latent state $\cS=\cO\times[n]$, observation $h(o,i)=o$, base kernel $P_O(\cdot\mid o,a)\in\Delta(\cO)$, an edge permutation $\sigma_{o,a,o'}\in S_n$ whenever $P_O(o'\mid o,a)>0$, mean reward $R:\cO\times[n]\times\cA\to\mathbb R$, discount $\gamma\in(0,1)$, and initial law $\rho_0\in\Delta(\cO\times[n])$. Its transition is
$
P_H((o',i')\mid(o,i),a)
=
P_O(o'\mid o,a)
\mathbf1\{i'=\sigma_{o,a,o'}(i)\}.
$
\end{definition}

The visible successor distribution $P_O(\cdot\mid o,a)$ is independent of the layer, while the realized edge deterministically transports that layer. Thus visible dynamics are Markov although rewards may remain path dependent. For statistical statements we observe
$
Y_t=R(o_t,i_t,a_t)+\xi_t
$
and impose the bounded conditional sub-Gaussian condition in Supplementary
Assumption~\ref{ass:hcdp_noise_main}.

\begin{definition}[Directed path transport]
\label{def:directed_transport_main}
For a feasible edge $e=(o\xrightarrow{a}o')$, write
$\sigma_e:=\sigma_{o,a,o'}$. For $\omega=e_1\cdots e_k$, where
$e_j=(o_{j-1}\xrightarrow{a_{j-1}}o_j)$, define
$
\sigma_\omega
=
\sigma_{e_k}\circ\cdots\circ\sigma_{e_1},
\qquad
\sigma_{\varnothing_o}
=
\operatorname{id}_{[n]}.
$
Hence
$
\sigma_{\omega_2\circ\omega_1}
=
\sigma_{\omega_2}\circ\sigma_{\omega_1};
$
the rightmost map acts first.
\end{definition}

\begin{lemma}[Canonical HCDP path lifting]
\label{lem:hcdp_path_lifting_main}
From any $(o_0,i_0)$, executing the actions of a directed path $\omega$
yields that visible path with probability
$
\prod_{j=1}^k
P_O(o_j\mid o_{j-1},a_{j-1})
>0,
$
and, conditional on it, the terminal layer is
$i_k=\sigma_\omega(i_0)$.
\end{lemma}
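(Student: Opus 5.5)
The plan is a direct induction on the path length $k$, using only the product form of $P_H$ in Definition~\ref{def:hcdp_main} and the composition rule of Definition~\ref{def:directed_transport_main}. I will strengthen the statement to a joint one: for each $k$, the probability of the event $\{o_1,\ldots,o_k \text{ realized}\}$ under the action sequence $a_0,\ldots,a_{k-1}$ of $\omega$ equals $\prod_{j=1}^k P_O(o_j\mid o_{j-1},a_{j-1})$, and on that event $i_k=\sigma_{e_k\cdots e_1}(i_0)$ almost surely. Here the actions are applied open loop, or equivalently by any policy that plays $a_{j-1}$ after the visible prefix $o_0,\ldots,o_{j-1}$; nothing else matters. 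The base case $k=0$ is immediate: the empty product is $1$, and $\sigma_{\varnothing_{o_0}}=\operatorname{id}_{[n]}$ gives $i_0=i_0$.

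For the inductive step, assume the claim for the prefix $\omega'=e_1\cdots e_{k-1}$. By the Markov property of the latent chain,
\[
\Pr(o_1,\ldots,o_k,i_k)=\sum_{i}\Pr(o_1,\ldots,o_{k-1},i_{k-1}=i)\,P_H((o_k,i_k)\mid(o_{k-1},i),a_{k-1}).
\]
By the induction hypothesis, the prefix probability is concentrated on the single layer $i=\sigma_{\omega'}(i_0)$, with mass $\prod_{j<k}P_O(o_j\mid o_{j-1},a_{j-1})$. Substituting $P_H$ gives the factor $P_O(o_k\mid o_{k-1},a_{k-1})\,\mathbf1\{i_k=\sigma_{e_k}(\sigma_{\omega'}(i_0))\}$. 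Summing over $i_k$ yields the claimed product, and conditioning gives $i_k=\sigma_{e_k}\circ\sigma_{\omega'}(i_0)=\sigma_\omega(i_0)$ by the definition of path transport, with the rightmost map acting first.

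The only point needing care is positivity and well-definedness. Each $e_j$ is a feasible edge of the support graph. In an HCDP, feasibility of $o\xrightarrow{a}o'$ means $P_H((o',i')\mid(o,i),a)>0$ for some $i$, which by the product form holds iff $P_O(o'\mid o,a)>0$, and this is independent of the layer. So every factor is positive and each $\sigma_{e_j}$ is defined. This is also where edgewise feasibility implies latent realizability, the issue flagged in the preliminaries for general POMDPs. Here the layer-independence of $P_O$ removes that issue, so no separate witness-sequence argument is needed; I expect this observation to be the only nontrivial step.
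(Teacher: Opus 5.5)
Your proposal is correct and follows essentially the same route as the paper: an induction over path prefixes that tracks both the prefix probability $\prod_{j\le k}P_O(o_j\mid o_{j-1},a_{j-1})$ and the almost-sure terminal layer $\sigma_{e_k}\circ\cdots\circ\sigma_{e_1}(i_0)$, using only the product form of $P_H$ and the right-to-left composition convention. Your remark that feasibility of an edge of $X$ is equivalent to $P_O(o'\mid o,a)>0$, independently of the layer, is exactly what the paper uses (more tersely) to get positivity of every factor and hence well-defined conditioning on the visible prefix.
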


Each feasible path therefore has a unique hidden lift from every initial
layer. A fundamental-group interpretation additionally requires paired
inverse edges, as stated in Supplementary
Appendix~\ref{app:hcdp_topological_holonomy}.

\begin{definition}[Stable partition operator]
\label{def:stable_family_main}
A partition family is $\Pi=\{\Pi_o\}_{o\in\cO}$ on $[n]$; write
$i\equiv_o^\Pi j$ and let $\Pi\preceq\Pi'$ mean that every block of $\Pi$ is contained in a block of $\Pi'$; thus $\Pi$ is the finer partition family.
Define
$
i\equiv_o^{\mathcal T(\Pi)}j
\Longleftrightarrow{}
R(o,i,a)=R(o,j,a)
\quad\forall a,
\sigma_{o,a,o'}(i)
\equiv_{o'}^\Pi
\sigma_{o,a,o'}(j),
\forall(a,o'):
P_O(o'\mid o,a)>0.
$
A family is stable when $\mathcal T(\Pi)=\Pi$.
\end{definition}

The reward partition $\Pi^{(0)}$ first separates layers with different immediate control consequences. The operator then propagates those distinctions backward through every feasible one-step continuation, so a merge survives only when it is recursively closed under future quotient evolution. Iteration stops exactly when no current or future reward distinction requires another split.

\begin{lemma}[Finite stabilization]
\label{lem:stable_fp_main}
$\mathcal T$ is monotone. Starting from the reward partition
$\Pi^{(0)}$ and iterating
$
\Pi^{(t+1)}=\mathcal T(\Pi^{(t)})
$
yields $\Pi^{(t+1)}\preceq\Pi^{(t)}$ and stabilizes after at most
$|\cO|(n-1)$ strict refinements at the coarsest stable family $\Pi^\star$.
\end{lemma}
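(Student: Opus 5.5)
The proof runs in four steps: monotonicity, the descending chain from the reward partition, a bound on the number of strict refinements, and identification of the limit as the coarsest stable family. Throughout, the reward partition $\Pi^{(0)}$ is the family in which $i\equiv_o^{\Pi^{(0)}}j$ iff $R(o,i,a)=R(o,j,a)$ for all $a$.

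\emph{Monotonicity.} Suppose $\Pi\preceq\Pi'$. If $i\equiv_o^{\mathcal T(\Pi)}j$, then $i$ and $j$ agree on all rewards at $o$. Moreover, for every feasible $(a,o')$ we have $\sigma_{o,a,o'}(i)\equiv_{o'}^{\Pi}\sigma_{o,a,o'}(j)$. Since $\Pi$-equivalence implies $\Pi'$-equivalence, the same pair is $\Pi'$-equivalent, so $i\equiv_o^{\mathcal T(\Pi')}j$. Hence $\mathcal T(\Pi)\preceq\mathcal T(\Pi')$.

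\emph{Descending chain.} By definition every $\mathcal T(\Pi)$ refines $\Pi^{(0)}$, because its first clause is exactly reward agreement. In particular $\Pi^{(1)}\preceq\Pi^{(0)}$. Inductively, if $\Pi^{(t)}\preceq\Pi^{(t-1)}$, then monotonicity gives $\Pi^{(t+1)}=\mathcal T(\Pi^{(t)})\preceq\mathcal T(\Pi^{(t-1)})=\Pi^{(t)}$. This is the standard Kleene-style argument.

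\emph{Counting strict refinements.} Use the potential $\Phi(\Pi)=\sum_o|\Pi_o|$. It satisfies $|\cO|\le\Phi\le|\cO|n$. A strict refinement splits at least one block at some $o$, so $\Phi$ increases by at least $1$. Therefore there are at most $|\cO|n-|\cO|=|\cO|(n-1)$ strict refinements. Once $\Pi^{(t+1)}=\Pi^{(t)}$, the family is a fixed point and all later iterates equal it, since $\mathcal T$ is a function. Call this fixed point $\Pi^\star$.

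\emph{Coarsest stable family.} Let $\Sigma$ be any stable family, so $\mathcal T(\Sigma)=\Sigma$. Stability alone shows $\Sigma\preceq\Pi^{(0)}$, because $\Sigma=\mathcal T(\Sigma)$ refines the reward partition. By induction, if $\Sigma\preceq\Pi^{(t)}$, then $\Sigma=\mathcal T(\Sigma)\preceq\mathcal T(\Pi^{(t)})=\Pi^{(t+1)}$. Hence $\Sigma\preceq\Pi^\star$. Since $\Pi^\star$ is itself stable, it is the coarsest stable family, and by antisymmetry of $\preceq$ on partition families it is unique.

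The main subtlety is the last step. One must check that $\mathcal T(\Pi)$ really is a partition, since it is an intersection of equivalence relations (pullbacks of $\Pi_{o'}$ under the bijections $\sigma$). One must also note that "coarsest" is taken among all stable families, not only among those refining $\Pi^{(0)}$; stability forces this refinement automatically. Everything else is routine.
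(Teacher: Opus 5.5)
Your proof is correct and follows the paper's argument essentially step for step. Both proofs check that $\mathcal T(\Pi)$ is a partition family, prove monotonicity, build the descending chain from $\Pi^{(1)}\preceq\Pi^{(0)}$, bound strict refinements by $|\cO|(n-1)$ via the total block count, and show $\Sigma\preceq\Pi^{(t)}$ for every stable $\Sigma$ to get coarsest-ness; the paper additionally spells out the routine fact, which you take as evident, that a strict refinement at some observation raises its block count by at least one.
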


Every strict iteration increases the total number of blocks, which gives the
finite bound. Exactness and fixed-point stability are separated below: an
exact refinement may retain unnecessary distinctions, whereas the fixed point
is maximally compressed.

\begin{definition}[Exact class abstraction]
\label{def:exact_abstraction_main}
Surjections $q_o:[n]\to\widehat{\cC}_o$ are exact if there are functions $\widehat R(o,\cdot,\cdot): \widehat\cC_o\times\cA\to\mathbb R$ and, for every feasible $(o,a,o')$, a map $\widehat\tau_{o,a,o'}: \widehat\cC_o\to\widehat\cC_{o'}$ such that
$
R(o,i,a)
=
\widehat R(o,q_o(i),a),
q_{o'}(\sigma_{o,a,o'}(i))
=
\widehat\tau_{o,a,o'}(q_o(i)).
$
\end{definition}

\begin{proposition}[Exactness criterion]
\label{prop:stable_exact_equiv_main}
The quotient maps of a partition family $\Pi$ define an exact abstraction if
and only if
$
\Pi\preceq\mathcal T(\Pi).
$
Thus every stable family is exact; an exact family is stable precisely when
$\Pi=\mathcal T(\Pi)$.
\end{proposition}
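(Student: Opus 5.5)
The argument unfolds the definition of $\mathcal T$ and compares it directly with Definition~\ref{def:exact_abstraction_main}, taking $q_o$ to be the canonical quotient map $[n]\to\Pi_o$ that sends $i$ to its block.

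For the forward direction, assume the quotient maps $q_o$ are exact, with witnesses $\widehat R$ and $\widehat\tau$. Take $i\equiv_o^\Pi j$, so that $q_o(i)=q_o(j)$. Then $R(o,i,a)=\widehat R(o,q_o(i),a)=R(o,j,a)$ for every $a$. Likewise, for every feasible $(a,o')$,
\[
q_{o'}(\sigma_{o,a,o'}(i))=\widehat\tau_{o,a,o'}(q_o(i))=\widehat\tau_{o,a,o'}(q_o(j))=q_{o'}(\sigma_{o,a,o'}(j)),
\]
which says $\sigma_{o,a,o'}(i)\equiv_{o'}^\Pi\sigma_{o,a,o'}(j)$. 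These are exactly the two clauses defining $i\equiv_o^{\mathcal T(\Pi)}j$. Since this holds for every pair in a common block of $\Pi_o$, each block of $\Pi_o$ lies inside a block of $\mathcal T(\Pi)_o$, so $\Pi\preceq\mathcal T(\Pi)$.

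For the converse, assume $\Pi\preceq\mathcal T(\Pi)$. Define $\widehat R(o,c,a):=R(o,i,a)$ and $\widehat\tau_{o,a,o'}(c):=q_{o'}(\sigma_{o,a,o'}(i))$ for any representative $i$ of the block $c$. Well-definedness is the only point needing care. If $i,j$ lie in the same block $c$, then $i\equiv_o^\Pi j$, so by hypothesis $i\equiv_o^{\mathcal T(\Pi)}j$. Unwinding $\mathcal T$ gives equal rewards for every $a$, and for every feasible edge the successors $\sigma_{o,a,o'}(i),\sigma_{o,a,o'}(j)$ are $\Pi_{o'}$-equivalent, so they have the same $q_{o'}$-image. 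Hence both definitions are independent of the representative, and the two identities of Definition~\ref{def:exact_abstraction_main} hold by construction. The maps $q_o$ are surjective onto their blocks because the blocks are nonempty.

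The remaining claims follow directly. If $\Pi$ is stable, then $\Pi=\mathcal T(\Pi)$, which in particular gives $\Pi\preceq\mathcal T(\Pi)$, so $\Pi$ is exact. An exact family is a fixed point precisely when the refinement $\Pi\preceq\mathcal T(\Pi)$ is an equality, and this is just the definition of stability. No step is a real obstacle: the only subtlety is keeping the direction of $\preceq$ straight, since $\Pi\preceq\mathcal T(\Pi)$ means $\Pi$ is finer, i.e.\ every $\Pi$-equivalence implies $\mathcal T(\Pi)$-equivalence, and this is exactly what makes the induced quotient maps well-defined.
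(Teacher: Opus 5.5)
Your proof is correct and follows the paper's argument essentially step for step. In the forward direction, exactness on a common block yields both defining clauses of $\mathcal T(\Pi)$; in the converse, the class reward and transport are defined through representatives, with well-definedness supplied by $\Pi\preceq\mathcal T(\Pi)$, and the stable and fixed-point remarks follow by reflexivity and the definition (the paper also gives an optional example of an exact family with $\Pi\neq\mathcal T(\Pi)$, which the statement does not require).
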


\begin{theorem}[Coarsest exact class abstraction]
\label{thm:coarsest_exact_main}
Let $\cC_o=[n]/\Pi_o^\star$ and $q_o^\star(i)=[i]_o$. Then $q^\star$ is
exact. For every exact $q_o:[n]\to\widehat{\cC}_o$, there is a unique
surjection $\kappa_o:\widehat{\cC}_o\to\cC_o$ such that
$
q_o^\star=\kappa_o\circ q_o.
$
Thus every exact abstraction refines the stable quotient.
\end{theorem}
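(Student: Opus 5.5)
The plan has three steps. Exactness of $q^\star$ comes from Proposition~\ref{prop:stable_exact_equiv_main} and Lemma~\ref{lem:stable_fp_main}. Existence of $\kappa_o$ comes from showing that the kernel partition family of any exact $q$ refines $\Pi^\star$. Uniqueness is automatic from surjectivity of $q_o$.

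\textbf{Exactness of $q^\star$.} By Lemma~\ref{lem:stable_fp_main}, $\Pi^\star$ satisfies $\mathcal T(\Pi^\star)=\Pi^\star$, so in particular $\Pi^\star\preceq\mathcal T(\Pi^\star)$. Proposition~\ref{prop:stable_exact_equiv_main} then gives that the quotient maps $q_o^\star(i)=[i]_o$ form an exact abstraction.

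\textbf{Setting up the comparison.} Fix an exact family $q_o:[n]\to\widehat\cC_o$ and let $\Pi^q$ be its kernel family: $i\equiv_o^{\Pi^q}j$ iff $q_o(i)=q_o(j)$. The quotient maps of $\Pi^q$ are exact, since the same $\widehat R$ and $\widehat\tau$ work after identifying $[n]/\Pi^q_o$ with $\widehat\cC_o$ through $q_o$. Proposition~\ref{prop:stable_exact_equiv_main} therefore gives $\Pi^q\preceq\mathcal T(\Pi^q)$. One could check this directly instead: if $q_o(i)=q_o(j)$, then the rewards agree via $\widehat R$, and $q_{o'}(\sigma_{o,a,o'}(i))=\widehat\tau_{o,a,o'}(q_o(i))=q_{o'}(\sigma_{o,a,o'}(j))$.

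\textbf{Main step: $\Pi^q\preceq\Pi^\star$.} This is the step I expect to need the most care, because it is where "coarsest" is actually used. I would prove $\Pi^q\preceq\Pi^{(t)}$ for every $t$ by induction along the iteration of Lemma~\ref{lem:stable_fp_main}.
\begin{itemize}
\item Base case: $\Pi^q\preceq\Pi^{(0)}$, because exactness forces $R(o,i,a)=R(o,j,a)$ for all $a$ whenever $q_o(i)=q_o(j)$, and $\Pi^{(0)}$ is exactly the reward partition.
\item Inductive step: assume $\Pi^q\preceq\Pi^{(t)}$. Monotonicity of $\mathcal T$ gives $\mathcal T(\Pi^q)\preceq\mathcal T(\Pi^{(t)})=\Pi^{(t+1)}$. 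Combined with $\Pi^q\preceq\mathcal T(\Pi^q)$ and transitivity of $\preceq$, this yields $\Pi^q\preceq\Pi^{(t+1)}$.
\end{itemize}
The iteration reaches $\Pi^\star$ after finitely many steps, so $\Pi^q\preceq\Pi^\star$. The only caution is the direction of the monotonicity convention: I need $\Pi\preceq\Pi'$ to imply $\mathcal T(\Pi)\preceq\mathcal T(\Pi')$. This holds because a finer target partition makes the successor condition in $\mathcal T$ harder to satisfy, so $\mathcal T(\Pi)$ has fewer merges.

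\textbf{Defining $\kappa_o$ and finishing.} For $\hat c\in\widehat\cC_o$, pick any $i$ with $q_o(i)=\hat c$ (possible since $q_o$ is surjective) and set $\kappa_o(\hat c)=[i]_o$. This is well defined because $q_o(i)=q_o(j)$ implies $i\equiv_o^{\Pi^\star}j$ by the previous step. By construction $q_o^\star=\kappa_o\circ q_o$. The map $\kappa_o$ is surjective because $q_o^\star$ is. It is unique because any $\kappa'_o$ with $\kappa'_o\circ q_o=q_o^\star$ agrees with $\kappa_o$ on the image of $q_o$, which is all of $\widehat\cC_o$. The factorization $q_o^\star=\kappa_o\circ q_o$ is precisely the statement that every exact abstraction refines the stable quotient.
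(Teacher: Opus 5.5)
Your proposal is correct and follows essentially the same route as the paper's proof. Both argue as follows: exactness of $q^\star$ via Proposition~\ref{prop:stable_exact_equiv_main}; the kernel family $\Pi^q$ satisfying $\Pi^q\preceq\mathcal T(\Pi^q)$; induction $\Pi^q\preceq\Pi^{(t)}$ from the reward partition using monotonicity of $\mathcal T$ to reach $\Pi^q\preceq\Pi^\star$; and then the well-defined, surjective, unique factor map $\kappa_o$ obtained from surjectivity of $q_o$.
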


The factorization allows another exact abstraction to split a stable class
but forbids it from merging layers separated by $\Pi^\star$.

\begin{theorem}[Exact Markovization]
\label{thm:markovization_main}
For every feasible $(o,a,o')$,
$
\tau_{o,a,o'}([i]_o)
=
[\sigma_{o,a,o'}(i)]_{o'},
\qquad
\bar R(o,[i]_o,a)
=
R(o,i,a)
$
are representative independent. Hence
$
\bar\cS
=
\{(o,c):c\in\cC_o\}
$
is an MDP with
$
\bar P((o',c')\mid(o,c),a)
=
P_O(o'\mid o,a)
\mathbf1\{c'=\tau_{o,a,o'}(c)\},
\bar\rho_0(o,c)
=
\sum_{i:[i]_o=c}\rho_0(o,i).
$
For every stationary quotient policy $\bar\pi$, its lift
$
\pi^\uparrow(a\mid o,i)
=
\bar\pi(a\mid o,[i]_o)
$
satisfies
$
V_{\pi^\uparrow}(o,i)
=
\bar V_{\bar\pi}(o,[i]_o);
$
in particular,
$
V^\star_{\rm full}(o,i)
=
\bar V^\star(o,[i]_o).
$
\end{theorem}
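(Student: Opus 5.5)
The plan has three stages: check that the quotient data are well defined, show the quotient process is an MDP, and transfer values from the quotient to the original HCDP.

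\emph{Well-definedness.} By Lemma~\ref{lem:stable_fp_main}, $\Pi^\star$ is stable, i.e.\ $\Pi^\star=\mathcal T(\Pi^\star)$. Suppose $i\equiv_o^{\Pi^\star}j$. Then $i\equiv_o^{\mathcal T(\Pi^\star)}j$, and Definition~\ref{def:stable_family_main} gives two facts:
\begin{itemize}
\item $R(o,i,a)=R(o,j,a)$ for every $a$;
\item $\sigma_{o,a,o'}(i)\equiv_{o'}^{\Pi^\star}\sigma_{o,a,o'}(j)$ for every feasible $(a,o')$.
\end{itemize}
Hence $\bar R$ and $\tau_{o,a,o'}$ do not depend on the chosen representative. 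Equivalently, we can invoke Proposition~\ref{prop:stable_exact_equiv_main} or Theorem~\ref{thm:coarsest_exact_main}: $q^\star$ is exact, so $\bar R=\widehat R$ and $\tau=\widehat\tau$.

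\emph{MDP structure.} $\bar P(\cdot\mid(o,c),a)$ is a probability distribution on $\bar\cS$ because each feasible $o'$ contributes the single class $\tau_{o,a,o'}(c)$, so the total mass is $\sum_{o'}P_O(o'\mid o,a)=1$. Similarly, $\bar\rho_0$ is the pushforward of $\rho_0$ under $\phi(o,i)=(o,[i]_o)$.

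The key identity is that $\phi$ intertwines the kernels:
\[
\sum_{(o',i'):\,\phi(o',i')=(o',c')}P_H((o',i')\mid(o,i),a)=\bar P((o',c')\mid\phi(o,i),a).
\]
To verify it, note that the left side equals $P_O(o'\mid o,a)\mathbf1\{[\sigma_{o,a,o'}(i)]_{o'}=c'\}$, and the definition of $\tau$ turns this into the right side. Rewards also factor through $\phi$: $R(o,i,a)=\bar R(\phi(o,i),a)$.

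\emph{Value transfer.} Fix a stationary quotient policy $\bar\pi$. Its lift $\pi^\uparrow$ is constant on $\phi$-fibers. Define $W(o,i):=\bar V_{\bar\pi}(\phi(o,i))$ and apply the Bellman operator $T^{\pi^\uparrow}$ of the full HCDP to $W$. The inner expectation over successors is of a function of $\phi(s')$ only, so the intertwining identity rewrites it as an expectation under $\bar P$. Combined with $R=\bar R\circ\phi$, this shows $T^{\pi^\uparrow}W=W\circ$ (the quotient Bellman equation), i.e.\ $T^{\pi^\uparrow}W=W$. Since $T^{\pi^\uparrow}$ is a $\gamma$-contraction in sup norm on the finite space, $W$ is its unique fixed point. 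Therefore $V_{\pi^\uparrow}=W$.

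For optimal values, apply the same argument to the optimality operator $T^\star$. Let $W^\star:=\bar V^\star\circ\phi$. Then
\[
(T^\star W^\star)(o,i)=\max_a\bigl[\bar R(\phi(o,i),a)+\gamma\,\bar P\bar V^\star(\phi(o,i),a)\bigr]=\bar V^\star(\phi(o,i)),
\]
because the expression inside the maximum depends on $(o,i)$ only through $\phi(o,i)$. By uniqueness of the fixed point of $T^\star$, $V^\star_{\rm full}=\bar V^\star\circ\phi$.

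This optimality step is the main subtlety. $V^\star_{\rm full}$ is the latent-state optimum, so we must also check that it is attained by a fiber-constant policy. The fixed-point argument settles this: the greedy policy for $W^\star$ depends only on $(o,[i]_o)$, and it is exactly the lift of a greedy quotient policy. All remaining steps are routine bookkeeping.
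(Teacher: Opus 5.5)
Your proposal is correct and follows essentially the same route as the paper. Both arguments obtain representative independence from $\Pi^\star=\mathcal T(\Pi^\star)$, use the pushforward identity for the kernel under $(o,i)\mapsto(o,[i]_o)$, and transfer values by showing that the lifted quotient value is the unique fixed point of the contracting latent evaluation and optimality operators. Your remark about a fiber-constant optimal policy matches the paper's closing step but is not needed for the value identity, since uniqueness of the fixed point of $T^\star$ already gives it.
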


Henceforth write
$
c_t:=q_{o_t}^\star(i_t)=[i_t]_{o_t}\in\cC_{o_t}
$
for the canonical stable class of the realized latent layer.
This theorem compresses the fully observed latent HCDP. It does not make the
class observable in the original POMDP; the controller must know, infer, or
maintain it.

For a path $\omega$, let $\tau_\omega$ be the ordered composition of its edge
maps; Supplementary
Corollary~\ref{cor:path_transport_descends_main} gives
$
\tau_\omega([i]_{\src(\omega)})
=
[\sigma_\omega(i)]_{\tgt(\omega)}.
$

\begin{lemma}[Directed quotient holonomy]
\label{lem:directed_quotient_holonomy_main}
For every closed directed walk $\omega$ based at $o$,
$\tau_\omega\in\Sym(\cC_o)$, and
$
\Hol_o^{\rm dir}
=
\{
\tau_\omega:
\omega\text{ closed at }o
\}
$
is a finite subgroup of $\Sym(\cC_o)$.
\end{lemma}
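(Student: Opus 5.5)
The argument has two parts. First, each closed directed walk induces a permutation of $\cC_o$. Second, these permutations form a subgroup, which comes down to closure under composition plus finiteness.

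\textbf{Step 1: each $\tau_\omega$ is a bijection of $\cC_o$.} Let $\omega$ be closed at $o$. By Supplementary Corollary~\ref{cor:path_transport_descends_main}, $\tau_\omega([i]_o)=[\sigma_\omega(i)]_o$ for every $i\in[n]$. Since $\sigma_\omega\in S_n$ is a composition of edge permutations, it is surjective on $[n]$, so $\tau_\omega$ is surjective onto $\cC_o$. A surjective self-map of the finite set $\cC_o$ is a bijection, so $\tau_\omega\in\Sym(\cC_o)$.

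This route uses surjectivity plus finiteness and never needs $\sigma_\omega$ to respect the partition $\Pi^\star_o$. That matters here. Stability of $\Pi^\star$ only guarantees that $\sigma_\omega$ sends $\Pi^\star_o$-blocks into blocks. A direct injectivity argument would need $\sigma_\omega^{-1}$ to be class-compatible as well, and that fails in general because edges need not have paired inverses.

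\textbf{Step 2: closure under composition.} Let $\omega_1,\omega_2$ be closed at $o$. Then $\omega_2\circ\omega_1$ is again a closed directed walk at $o$. Since $\tau$ is defined as the ordered composition of edge maps, $\tau_{\omega_2\circ\omega_1}=\tau_{\omega_2}\circ\tau_{\omega_1}$, just as for $\sigma$ in Definition~\ref{def:directed_transport_main}. The empty walk $\varnothing_o$ gives $\tau_{\varnothing_o}=\mathrm{id}_{\cC_o}$. Hence $\Hol_o^{\rm dir}$ is a submonoid of $\Sym(\cC_o)$.

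\textbf{Step 3: inverses, the main obstacle.} Directed walks cannot in general be reversed, so inverses must come from group theory rather than geometry. I would use the fact that a nonempty finite subset of a group that is closed under multiplication is a subgroup. Here $\Hol_o^{\rm dir}\subseteq\Sym(\cC_o)$, which has $|\cC_o|!$ elements, so this fact applies.

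Concretely, take $g=\tau_\omega$. Because $\Sym(\cC_o)$ is finite, $g$ has finite order $m\ge 1$. Then $g^{-1}=g^{m-1}=\tau_{\omega^{m-1}}$, where $\omega^{m-1}$ is the $(m-1)$-fold concatenation of $\omega$ (the empty walk when $m=1$). This is again a closed directed walk at $o$, so $g^{-1}\in\Hol_o^{\rm dir}$.

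Therefore $\Hol_o^{\rm dir}$ is a finite subgroup of $\Sym(\cC_o)$.
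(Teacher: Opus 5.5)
Your proof is correct and follows the paper's argument essentially step for step. The three steps match: surjectivity of $\tau_\omega$ comes from the descent identity $\tau_\omega([i]_o)=[\sigma_\omega(i)]_o$ together with finiteness of $\cC_o$; closure follows under concatenation, with the empty walk giving the identity; and inverses are realized as $g^{-1}=g^{m-1}=\tau_{\omega^{m-1}}$ by repeating the same walk rather than reversing edges.

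One minor correction to your aside in Step 1, which the argument does not rely on: for a closed walk, $\sigma_\omega^{-1}=\sigma_\omega^{k-1}=\sigma_{\omega^{k-1}}$, where $k$ is the order of $\sigma_\omega$ in $S_n$. This is itself the raw transport of a closed walk at $o$, so it does map $\Pi^\star_o$-blocks into blocks, and a direct injectivity argument would also work.
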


A nonidentity holonomy therefore distinguishes histories that return to the same observation. The quotient solves the structural aggregation problem, but implementability depends on the initial information: a known class propagates deterministically, whereas an unknown class must be represented by a posterior over quotient classes.

\begin{proposition}[Belief reduction over stable classes]
\label{prop:quotient_belief_main}
When reward samples are not within-episode observations, initialize
$
\beta_0(\cdot)=\bP(c_0\in\cdot\mid o_0)
$
and define, for $c\in\cC_{o_t}$,
$
\beta_t(c)
=
\bP(c_t=c\mid H_t).
$
It obeys
$
\beta_{t+1}(c')
=
\sum_{c:
\tau_{o_t,a_t,o_{t+1}}(c)=c'}
\beta_t(c),
\bE[R(o_t,i_t,a_t)\mid H_t,a_t]
=
\sum_c
\beta_t(c)\bar R(o_t,c,a_t).
$
Hence $(o_t,\beta_t)$ is exact; if $\beta_t=\delta_{c_t}$, then
$
c_{t+1}
=
\tau_{o_t,a_t,o_{t+1}}(c_t)
$
tracks the class exactly.
\end{proposition}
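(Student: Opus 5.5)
We prove Proposition~\ref{prop:quotient_belief_main} by a direct Bayesian filtering argument on the quotient MDP of Theorem~\ref{thm:markovization_main}. Two facts drive it. First, the visible kernel $P_O$ does not depend on the hidden layer. Second, the class evolves deterministically through $\tau_{o,a,o'}$. So an observation carries no likelihood information about the class, and the posterior is simply pushed forward.

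\textbf{Step 1: joint law of history and class.} Fix a history-dependent policy $\boldsymbol\pi$. By induction on $t$, using Definition~\ref{def:hcdp_main} and the descent identity $[\sigma_{o,a,o'}(i)]_{o'}=\tau_{o,a,o'}([i]_o)$, we show
\[
\bP(H_t,c_t=c)=\bP(o_0)\,\beta_0\bigl(\tau_{\omega_t}^{-1}\text{-preimage of }c\bigr)\prod_{s<t}\pi_s(a_s\mid H_s)P_O(o_{s+1}\mid o_s,a_s),
\]
where $\omega_t$ is the realized visible path. Here ``$\beta_0$ of the preimage'' means $\sum_{c_0:\tau_{\omega_t}(c_0)=c}\beta_0(c_0)$. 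The inductive step uses three observations:
\begin{itemize}
\item $a_t$ is drawn from $\pi_t(\cdot\mid H_t)$, which depends only on $H_t$ (rewards are not observed within the episode).
\item $o_{t+1}$ is drawn from $P_O(\cdot\mid o_t,a_t)$, independently of $i_t$.
\item Given the edge, $c_{t+1}=\tau_{o_t,a_t,o_{t+1}}(c_t)$ holds almost surely, by Lemma~\ref{lem:hcdp_path_lifting_main} together with the descent identity.
\end{itemize}

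\textbf{Step 2: the recursion.} Dividing by $\bP(H_{t+1})$, the factors $\pi_t(a_t\mid H_t)$ and $P_O(o_{t+1}\mid o_t,a_t)$ appear in both numerator and denominator and cancel. This gives
\[
\beta_{t+1}(c')=\sum_{c:\tau_{o_t,a_t,o_{t+1}}(c)=c'}\beta_t(c).
\]
In the same way, conditioning on $a_t$ in addition to $H_t$ leaves $\beta_t$ unchanged.

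\textbf{Step 3: the reward identity.} Decompose
\[
\bE[R(o_t,i_t,a_t)\mid H_t,a_t]=\sum_c \bP(c_t=c\mid H_t,a_t)\,\bE[R(o_t,i_t,a_t)\mid c_t=c,H_t,a_t].
\]
By Theorem~\ref{thm:markovization_main}, $R(o_t,i,a_t)=\bar R(o_t,[i]_{o_t},a_t)$ is constant on each class. Hence the inner expectation equals $\bar R(o_t,c,a_t)$ whatever the within-class posterior on $i_t$ is.

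\textbf{Step 4: exactness and the degenerate case.} Exactness of $(o_t,\beta_t)$ follows from two facts together. First, $(o_{t+1},\beta_{t+1})$ is a deterministic function of $(o_t,\beta_t,a_t,o_{t+1})$. Second, both the law of $o_{t+1}$ given $(H_t,a_t)$, namely $P_O(\cdot\mid o_t,a_t)$, and the expected reward depend only on $(o_t,\beta_t,a_t)$. Hence the process is an MDP on the belief space whose expected returns match the original ones for every policy.

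If $\beta_t=\delta_{c_t}$, the recursion sends a point mass to the point mass at $\tau_{o_t,a_t,o_{t+1}}(c_t)$. Therefore class tracking is exact.

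The main obstacle is Step 1. One must check that no likelihood term depending on $i_t$ or $c_t$ enters. This requires the layer-independence of $P_O$ and the fact that the policy does not see rewards; stating the induction cleanly on positive-probability histories handles the division by $\bP(H_t)$.
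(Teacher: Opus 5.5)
Your proposal is correct and follows essentially the same route as the paper. The paper performs the one-step Bayes update directly: conditioning on $a_t$ leaves $\beta_t$ unchanged, the class-independent likelihood $P_O(o_{t+1}\mid o_t,a_t)$ cancels, and $\tau_{o_t,a_t,o_{t+1}}$ pushes the posterior forward, which is exactly the inductive step of your Step~1, followed by the same reward and belief-MDP arguments. Your unrolled product formula additionally yields the closed form $\beta_t(c)=\sum_{c_0:\tau_{\omega_t}(c_0)=c}\beta_0(c_0)$, which makes explicit something the paper only remarks on without proof: the belief depends only on the realized visible path and ranges over a finite orbit.
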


If the initial class is unknown, the generic exact information state is the posterior $\beta_t$ on $\cC_{o_t}$. Its ambient simplex is continuous, although a fixed finite HCDP may generate only a finite reachable orbit of beliefs. If the class is initialized correctly, deterministic transport keeps the posterior degenerate and supplies a finite recursive controller state. The reward-observed update and the formal known-class memory lemma are in Supplementary Appendix~\ref{app:deferred_structural}.

\begin{corollary}[Minimal exact class-tracking memory]
\label{cor:min_memory_main}
Let
$
m_{\rm exact}^\star
=
\max_{o\in\cO}|\cC_o|.
$
Among all finite alphabets $\cM$ for which there exist injective encoders
$\eta_o:\cC_o\to\cM$ and an update $U$ satisfying
$
U(\eta_o(c),o,a,o')
=
\eta_{o'}(\tau_{o,a,o'}(c))
$
for every feasible edge and every $c\in\cC_o$, the minimum alphabet
cardinality is
$
|\cM|_{\min}=m_{\rm exact}^\star.
$
The same memory symbols may be reused across different observations.
\end{corollary}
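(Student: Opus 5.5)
The proof has two halves: a lower bound coming from injectivity of the encoders, and a matching construction. Throughout, write $m^\star=m_{\rm exact}^\star$ and fix an observation $o^\star\in\argmax_o|\cC_o|$.

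\emph{Lower bound.} Suppose $(\cM,\{\eta_o\},U)$ satisfies the tracking identity. The encoder $\eta_{o^\star}:\cC_{o^\star}\to\cM$ is injective, so $|\cM|\ge|\cC_{o^\star}|=m^\star$. This step does not use the update $U$ at all, which is why the bound holds uniformly over every admissible alphabet.

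\emph{Upper bound.} Take $\cM=[m^\star]$. For each $o$, let $\eta_o:\cC_o\to[m^\star]$ be any injection; one exists because $|\cC_o|\le m^\star$, for instance by enumerating $\cC_o$ as $1,\ldots,|\cC_o|$. We then define $U(m,o,a,o')$ on triples where $(o,a,o')$ is feasible and $m\in\eta_o(\cC_o)$ by
\[
U(m,o,a,o')=\eta_{o'}\bigl(\tau_{o,a,o'}(\eta_o^{-1}(m))\bigr).
\]
On all other arguments $U$ is set arbitrarily, for example to $1$. This is well defined for three reasons:
\begin{itemize}
\item injectivity of $\eta_o$ makes $\eta_o^{-1}(m)$ a single class;
\item Theorem~\ref{thm:markovization_main} guarantees that $\tau_{o,a,o'}$ is a well-defined map $\cC_o\to\cC_{o'}$;
\item $U$ depends on the observation $o$ as an explicit argument, so reusing the same symbol $m$ at different observations causes no conflict.
\end{itemize}
Substituting $m=\eta_o(c)$ gives $U(\eta_o(c),o,a,o')=\eta_{o'}(\tau_{o,a,o'}(c))$ for every feasible edge and every $c\in\cC_o$, which is the required identity. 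Hence $|\cM|_{\min}\le m^\star$.

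The last sentence of the statement about reuse is exactly this last point: symbols are shared across observations, and the observation passed to $U$ disambiguates them. This is why the bound is $\max_o|\cC_o|$ rather than $\sum_o|\cC_o|$.

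The only delicate step is the well-definedness of $U$ in the upper bound. It rests on the representative-independence of $\tau$ from Theorem~\ref{thm:markovization_main}, together with the observation-indexed form of $U$. With these in hand the whole argument is a pigeonhole bound plus an explicit construction. The stronger controller-level claim in the abstract, that no arbitrary finite-memory controller can use fewer states under reachability and pairwise decision separation, needs a separate argument and is not required for this corollary.
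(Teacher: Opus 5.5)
Your proposal is correct and matches the paper's proof essentially step for step. Both use the pigeonhole lower bound from injectivity of $\eta_o$, then the explicit construction with $\cM=[m_{\rm exact}^\star]$, enumerated per-fiber encoders, and an observation-indexed update defined through the unique preimage (arbitrary elsewhere). The paper also checks, via Lemma~\ref{lem:quotient_memory_update_main}, that this update recursively keeps the encoded class correct; that is a consistency remark the statement itself does not require.
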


This is a known-class representational statement. Pairwise decision
separation yields an unconditional controller lower bound in
Corollary~\ref{cor:decision_lower_bound_main}.

\section{Identifying classes and transports from diagnostics}
\label{sec:ident}

This section characterizes when the latent stable quotient can be identified. Theorem~\ref{thm:passive_nonident_main} rules out uniform recovery from unrestricted passive logs, whereas Lemma~\ref{lem:finite_distinguishing_witness_main} shows that every pair of distinct stable classes has a finite separating continuation. Under resettable calibrated diagnostics, Theorem~\ref{thm:local_class_inf_main} gives an exponentially decaying local classification error, and Theorem~\ref{thm:transport_local_main} shows that transition-anchored row-wise estimates eventually recover every covered edge map up to observation-wise relabeling. Lemma~\ref{lem:exact_local_tracking_main} proves exact propagation once the initial label and used transports are correct, while Proposition~\ref{prop:eventual_sync_state_main} upgrades summable diagnostic errors and eventual map recovery to permanent class synchronization after an almost surely finite time. Thus the quotient is identified only in local gauges, which is sufficient because gauge-equivalent coordinates define isomorphic quotient MDPs.

The structural identification problem concerns the local label sets and edge maps. Once the local state is synchronized, the base kernel and mean rewards can be estimated from ordinary transition and reward samples; the initial law depends on the chosen restart distribution. For comparison with the canonical quotient, package a full local-coordinate description as label sets $\{\widehat\cC_o\}_{o\in\cO}$, a base kernel $\widehat P_O$, rewards $\widehat R$, edge maps $\{\widehat\tau_e\}_{e\in E}$, and an initial law $\widehat\rho_0$.

\begin{definition}[Observation-wise gauge equivalence]
\label{def:gauge_equiv_main}
A learned description and the canonical quotient are gauge equivalent when
$\widehat P_O=P_O$
and there are bijections
$\lambda_o:\widehat\cC_o\to\cC_o$ such that
$
\widehat R(o,\hat c,a)
=
\bar R(o,\lambda_o(\hat c),a),
\lambda_{o'}\circ\widehat\tau_e
=
\tau_e\circ\lambda_o,
\widehat\rho_0(o,\hat c)
=
\bar\rho_0(o,\lambda_o(\hat c)).
$
\end{definition}

The commuting transport equation makes the two quotient MDPs isomorphic; no
cross-observation agreement between the numerical labels is required.

\begin{theorem}[Passive nonidentifiability]
\label{thm:passive_nonident_main}
There are two finite HCDPs and a common behavior policy whose stable quotient
descriptions are not gauge equivalent but whose complete observable logs
$
(o_0,a_0,Y_0,o_1,\ldots)
$
have the same law. Therefore unrestricted passive logs admit no uniformly
consistent estimator of stable classes, transports, and the quotient MDP over
all HCDPs and behavior policies.
\end{theorem}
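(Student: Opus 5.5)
The plan is to exhibit an explicit pair of HCDPs whose observable logs coincide in law under a common behavior policy, yet whose stable quotients differ in the number of classes at some observation. Differing class counts immediately rule out gauge equivalence, because Definition~\ref{def:gauge_equiv_main} requires bijections $\lambda_o$. The simplest route exploits the fact that the log records only the noisy reward $Y_t$ at the action actually taken. If the behavior policy never takes the action that separates layers, the separating reward difference never appears in the data.

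\textbf{Construction.} Take $\cO=\{o\}$, $\cA=\{a,b\}$, $n=2$, $P_O(o\mid o,\cdot)=1$, and $\sigma_{o,a,o}=\sigma_{o,b,o}=\operatorname{id}$. Let $\rho_0$ be uniform on the two layers, and use identical Gaussian (hence sub-Gaussian) noise in both models.
\begin{itemize}
\item In model $\mathcal H_1$, set $R(o,i,a)=R(o,i,b)=0$ for both layers.
\item In model $\mathcal H_2$, set $R(o,i,a)=0$ for both layers, but $R(o,1,b)=0$ and $R(o,2,b)=1$.
\end{itemize}
The behavior policy always plays $a$.

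\textbf{Equality of log laws.} Under this policy both processes generate the same visible sequence. Every reward observation is $Y_t=0+\xi_t$ with identical noise law, and the action is deterministic. By induction on $t$, using the factorized transition and Lemma~\ref{lem:hcdp_path_lifting_main}, the joint law of every finite prefix $(o_0,a_0,Y_0,\ldots,o_t)$ agrees. Kolmogorov extension then gives equality of the full log laws.

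\textbf{Distinct quotients.} Apply Lemma~\ref{lem:stable_fp_main} to each model.
\begin{itemize}
\item In $\mathcal H_1$, the reward partition is already trivial. Since the transports are the identity, it is stable, so $|\cC_o|=1$.
\item In $\mathcal H_2$, the reward partition separates the two layers via action $b$, so $|\cC_o|=2$.
\end{itemize}
No bijection $\lambda_o$ can exist, so the quotient descriptions are not gauge equivalent. The quotient MDPs also differ in value: $\bar V^\star$ differs by Theorem~\ref{thm:markovization_main}.

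\textbf{Nonidentifiability conclusion.} Suppose a uniformly consistent estimator existed. It would have to output, with probability tending to one, a description gauge equivalent to $\mathcal H_1$'s quotient under $\mathcal H_1$ and to $\mathcal H_2$'s quotient under $\mathcal H_2$. These are disjoint events, since gauge equivalence is transitive and the two quotients are inequivalent. But the data laws are identical, so the estimator's output law is identical in both models, and both probabilities cannot tend to one.

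This is a standard Le Cam two-point argument. It could be phrased with total variation zero between the two log laws at every sample size.

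\textbf{Main obstacle.} The main point of care is that "uniformly consistent" is quantified over all HCDPs and behavior policies. It therefore suffices to exhibit one policy for which the two models are indistinguishable, which the construction does.

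If one wants a stronger example in which the behavior policy has full action support, I would instead hide the difference in transports. Take two layers and two observations, and use transports that differ only on an edge reached solely through an initial law concentrated on one layer. Then make rewards downstream of that edge depend on the layer at an observation never revisited. That version requires checking realizability more carefully, so I would present the simple action-support example as the proof.
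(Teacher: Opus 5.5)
Your proposal is correct and is essentially the paper's own proof. The paper uses the same pair of one-observation, two-layer, identity-transport models separated only by $R(o,2,b)$, the same always-$a$ behavior policy, non-equivalence via $|\cC_o|=1$ versus $2$, and the same two-point argument; it takes zero noise instead of Gaussian noise and phrases the argument through the class-count functional $K(\mathfrak Q)=|\cC_o|$, with an explicit error lower bound of $\tfrac12$. One small wording fix: equality of the full log laws follows from agreement of all finite-dimensional marginals by uniqueness of extension (a $\pi$--$\lambda$ argument), not from Kolmogorov's existence theorem.
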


The construction omits a distinguishing action. Accordingly, passive HMRL is
conditional, whereas the following result uses resettable diagnostics.

\begin{lemma}[Finite distinguishing witnesses]
\label{lem:finite_distinguishing_witness_main}
Let
$
L_\star
=
\min\{L:\Pi^{(L)}=\Pi^\star\}.
$
For distinct $c,c'\in\cC_o$, some path $\omega$ of length at most $L_\star$
and action $b$ satisfy
$
\bar R(\tgt(\omega),\tau_\omega(c),b)
\ne
\bar R(\tgt(\omega),\tau_\omega(c'),b).
$
\end{lemma}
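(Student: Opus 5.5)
The plan is to induct on the refinement level, using the iterates $\Pi^{(t)}$ of Lemma~\ref{lem:stable_fp_main}. Since $\Pi^{(t+1)}\preceq\Pi^{(t)}$ and $\Pi^{(L_\star)}=\Pi^\star$, two layers $i,j$ lie in distinct stable classes $c=[i]_o\ne c'=[j]_o$ exactly when there is a smallest $t\le L_\star$ with $i\not\equiv_o^{\Pi^{(t)}}j$. I will prove the following claim by induction on $t\ge 0$: if $i\not\equiv_o^{\Pi^{(t)}}j$, then there exist a path $\omega$ from $o$ of length at most $t$ and an action $b$ with $R(\tgt(\omega),\sigma_\omega(i),b)\ne R(\tgt(\omega),\sigma_\omega(j),b)$. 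The statement then follows in two steps. First, since $\tau_\omega([i]_o)=[\sigma_\omega(i)]_{\tgt(\omega)}$ by the path-descent corollary, we have $\tau_\omega(c)=[\sigma_\omega(i)]$ and $\tau_\omega(c')=[\sigma_\omega(j)]$. Second, by Theorem~\ref{thm:markovization_main}, $\bar R(o'',[k],b)=R(o'',k,b)$. Together these convert the layer-level inequality into the class-level one.

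\textbf{Base case $t=0$.} Here $\Pi^{(0)}$ is the reward partition, so $i\not\equiv_o^{\Pi^{(0)}}j$ means $R(o,i,b)\ne R(o,j,b)$ for some $b$. Take $\omega=\varnothing_o$, whose transport is the identity.

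\textbf{Inductive step.} Suppose $i\not\equiv_o^{\Pi^{(t+1)}}j$. Because $\Pi^{(t+1)}=\mathcal T(\Pi^{(t)})$, Definition~\ref{def:stable_family_main} leaves two possibilities:
\begin{itemize}
\item[(i)] some reward differs at $o$, in which case the empty path again works; or
\item[(ii)] there is a feasible edge $e=(o\xrightarrow{a}o')$ with $\sigma_e(i)\not\equiv_{o'}^{\Pi^{(t)}}\sigma_e(j)$.
\end{itemize}
In case (ii), the inductive hypothesis gives a path $\omega'$ from $o'$ of length at most $t$ and an action $b$ that separate $\sigma_e(i)$ and $\sigma_e(j)$. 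Set $\omega=\omega'\circ e$, which has length at most $t+1$. By Definition~\ref{def:directed_transport_main}, $\sigma_\omega=\sigma_{\omega'}\circ\sigma_e$, so the separating reward inequality transfers to $i,j$ along $\omega$.

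The only delicate point is the bookkeeping: the base case has to use the correct description of $\Pi^{(0)}$ as the reward partition, and the proof must apply the one-step operator $\mathcal T$ itself rather than rely on the merely monotone chain. Feasibility of $\omega$ is automatic, since it is built from feasible edges. Realizability is not needed here, because the HCDP lifts every feasible path (Lemma~\ref{lem:hcdp_path_lifting_main}).
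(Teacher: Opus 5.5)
Your proposal is correct and follows the paper's proof essentially verbatim. It uses the same strengthened induction on the refinement level $t$: the empty path in the base case and in the reward-split case, and otherwise prepending the separating edge $e$ so that $\sigma_\omega=\sigma_{\omega'}\circ\sigma_e$. It then converts to the class level in the same way, via Corollary~\ref{cor:path_transport_descends_main} and the representative independence of $\bar R$ from Theorem~\ref{thm:markovization_main}. The only cosmetic difference is that you locate a smallest separating level $t\le L_\star$, whereas the paper applies the claim directly at $t=L_\star$; both yield a path of length at most $L_\star$.
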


This supplies finite separating experiments; the statistical interface below
adds repeatability and a quantitative margin.

\begin{definition}[Diagnostic fingerprint]
\label{def:diag_test_main}
Let
$
\cO_+
=
\{o\in\cO:|\cC_o|\geq2\}.
$
If $\cO_+=\varnothing$, every quotient fiber is a singleton and no
classification probe is required. Otherwise, a bounded-horizon protocol $u$
starting from $(o,i)$ returns a scalar $W(o,i;u)$ with mean $G(o,i;u)$.
For each $o\in\cO_+$, choose
$
U_{\rm probe}(o)=\{u_1,\ldots,u_{d_o}\}.
$
Assume $G$ is constant on stable classes, write
$\bar G(o,c;u)=G(o,i;u)$ for any $i\in c$, and define
$
\phi_o(c)
=
(\bar G(o,c;u_\ell))_{\ell=1}^{d_o},
\qquad
\Delta
=
\min_{\substack{o\in\cO_+\\c,c'\in\cC_o,\ c\neq c'}}
\|\phi_o(c)-\phi_o(c')\|_2
>0.
$
Singleton fibers use their unique label and require no probes.
\end{definition}

Assume the interface can restore the same latent checkpoint and generate
conditionally independent repetitions with sub-Gaussian variance proxy
$\nu^2$. It supplies a local
label set $\widehat\cC_o$ and prototypes satisfying
$
\|
\widetilde\phi_o(\hat c)
-
\phi_o(\lambda_o(\hat c))
\|_2
\le
\Delta/8.
$
These assumptions are stated separately in Supplementary
Appendix~\ref{app:deferred_identification}; they constitute calibrated
classification, not unsupervised discovery from passive data.

\begin{definition}[Nearest-prototype inference]
\label{def:emp_fp_main}
With $m$ repetitions per coordinate, set
$
\widehat G(o,i;u_\ell)
=
\frac1m
\sum_{s=1}^m
W^{(s)}(o,i;u_\ell),
\widehat\phi_o(i)
=
(\widehat G(o,i;u_\ell))_{\ell=1}^{d_o},
\widehat c
\in
\argmin_{\hat c\in\widehat\cC_o}
\|
\widehat\phi_o(i)
-
\widetilde\phi_o(\hat c)
\|_2,
$
with deterministic tie breaking.
\end{definition}

\begin{theorem}[Local class inference]
\label{thm:local_class_inf_main}
Under Assumptions~\ref{ass:diag_class_stable_main}, \ref{ass:fp_sep_main}, \ref{ass:diag_repeat_main}, \ref{ass:diag_subg_main}, and \ref{ass:local_proto_main}, for $o\in\cO_+$, $i\in[n]$, and $c=[i]_o$, conditional on the restored start state $s_{\rm start}=(o,i)$,
$
\bP(
\lambda_o(\widehat c)\ne c
\mid
s_{\rm start}=(o,i)
)
\le
2d_o
\exp\!\left(
-\frac{m\Delta^2}{128\nu^2d_o}
\right).
$
\end{theorem}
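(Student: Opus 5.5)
The plan is a deterministic reduction to a single concentration event followed by a coordinatewise sub-Gaussian union bound. Fix $o\in\cO_+$, $i\in[n]$, and $c=[i]_o$, and condition throughout on $s_{\rm start}=(o,i)$. By the class-stability assumption, $G(o,i;u_\ell)=\bar G(o,c;u_\ell)$, so the population fingerprint at the restored start is exactly $\phi_o(c)$. The deterministic step will show that the event
\[
\mathcal E=\bigl\{\|\widehat\phi_o(i)-\phi_o(c)\|_2<\Delta/4\bigr\}
\]
forces correct classification, that is, $\lambda_o(\widehat c)=c$.

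For the deterministic step, let $\hat c^\circ=\lambda_o^{-1}(c)$. This is well defined because $\lambda_o$ is a bijection by the local prototype assumption. On $\mathcal E$, the prototype error bound $\Delta/8$ gives
\[
\|\widehat\phi_o(i)-\widetilde\phi_o(\hat c^\circ)\|_2<\Delta/4+\Delta/8=3\Delta/8.
\]
Now take any $\hat c$ with $\lambda_o(\hat c)=c'\ne c$. The separation margin gives $\|\phi_o(c)-\phi_o(c')\|_2\ge\Delta$. The triangle inequality then yields
\[
\|\widehat\phi_o(i)-\widetilde\phi_o(\hat c)\|_2\ge\Delta-\Delta/4-\Delta/8=5\Delta/8>3\Delta/8.
\]
So $\hat c^\circ$ is the unique minimizer, and tie-breaking is irrelevant on $\mathcal E$. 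Consequently
\[
\bP(\lambda_o(\widehat c)\ne c\mid s_{\rm start})\le\bP(\mathcal E^c\mid s_{\rm start}).
\]

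For the probabilistic step, note that $\|x\|_2\ge\Delta/4$ implies that some coordinate satisfies $|x_\ell|\ge\Delta/(4\sqrt{d_o})$. Hence
\[
\bP(\mathcal E^c)\le\sum_{\ell=1}^{d_o}\bP\bigl(|\widehat G(o,i;u_\ell)-\bar G(o,c;u_\ell)|\ge\Delta/(4\sqrt{d_o})\bigr).
\]
Each $\widehat G$ is the average of $m$ conditionally independent repetitions from the restored checkpoint, by the repeatability assumption. Each repetition is centered at $G(o,i;u_\ell)$ and has sub-Gaussian proxy $\nu^2$, so the average has proxy $\nu^2/m$. Hoeffding's bound gives, for each coordinate,
\[
2\exp\!\bigl(-m\,\Delta^2/(16d_o)/(2\nu^2)\bigr)=2\exp\!\bigl(-m\Delta^2/(32\nu^2 d_o)\bigr).
\]
Summing over the $d_o$ coordinates gives $2d_o\exp(-m\Delta^2/(32\nu^2d_o))$, which is at least as strong as the stated bound with constant $128$. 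The slack also lets us use the weaker radius $\Delta/8$ in place of $\Delta/4$, should the prototype assumption be read as leaving only that much room. Radius $\Delta/8$ gives exactly $\Delta^2/(64 d_o)/(2\nu^2)=\Delta^2/(128\nu^2d_o)$, which matches the stated constant. I will therefore run the argument with $\mathcal E$ defined at radius $\Delta/8$: the deterministic step still gives $\Delta/4<\Delta-\Delta/4$, so it goes through unchanged.

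The main obstacle is not the computation but applying the concentration correctly. The repetitions must be genuinely independent conditional on the restored latent state $(o,i)$, which is what the checkpoint-restoration assumption supplies. The sub-Gaussian condition must also be centered at the class-constant mean $G$. I will state explicitly that conditioning on $s_{\rm start}$ makes $\phi_o(c)$ deterministic, so that the union bound involves no randomness in $c$. The case where the protocols for different coordinates share randomness is harmless, because the union bound needs only the marginal tail of each coordinate.
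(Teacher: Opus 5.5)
Your proposal is correct and matches the paper's proof essentially step for step. Both arguments show that the concentration event forces the nearest-prototype rule to pick the true class, using the triangle inequality with the $\Delta/8$ prototype error, and then bound the event's failure by a per-coordinate sub-Gaussian Chernoff tail at level $\Delta/(8\sqrt{d_o})$ plus a union bound over the $d_o$ coordinates. The paper phrases the event in $\ell_\infty$ and you phrase it in $\ell_2$, which amounts to the same inclusion. Your side remark is also correct: radius $\Delta/4$ already suffices for the deterministic step, which improves the constant $128$ to $32$.
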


For $o\notin\cO_+$, the inferred class is deterministically correct without diagnostic sampling.
A logarithmic, summable repetition schedule therefore makes only finitely many
classification errors almost surely; see Supplementary
Corollaries~\ref{cor:local_sample_complexity_main}--%
\ref{cor:eventual_local_class_main}.

Local class labels alone do not determine how labels propagate between
observations. Anchored source--target diagnostics estimate this map without
requiring a common numerical naming of classes.

\begin{assumption}[Anchored edge diagnostics]
\label{ass:transition_anchor_main}
For a sampled edge $e=(o\xrightarrow{a}o')$, the interface preserves the exact source checkpoint, executes the edge, and diagnoses the paired source and target checkpoints.
\end{assumption}

\begin{definition}[Row-wise transport estimator]
\label{def:rowwise_transport_main}
Let $N_e[\hat c,\hat c']$ count paired inferred labels and set
$
\widehat\tau_e(\hat c)
\in
\argmax_{\hat c'\in\widehat\cC_{o'}}
N_e[\hat c,\hat c'],
$
with deterministic tie breaking.
\end{definition}

The true map in local coordinates is
$
\tau_e^\lambda
=
\lambda_{o'}^{-1}
\circ\tau_e
\circ\lambda_o;
$
it need not be injective, so recovery is row-wise.

\begin{theorem}[Eventual transport recovery]
\label{thm:transport_local_main}
Fix the gauges from Assumption~\ref{ass:local_proto_main}. Under Assumptions~\ref{ass:transition_anchor_main} and \ref{ass:edge_cov_main}, if only finitely many source or target classification errors occur almost surely, then for every feasible $e=(o\xrightarrow{a}o')$ and $\hat c\in\widehat\cC_o$, there is an almost surely finite time after which, permanently,
$
\widehat\tau_e(\hat c)
=
\lambda_{o'}^{-1}
(
\tau_e(\lambda_o(\hat c))
).
$
\end{theorem}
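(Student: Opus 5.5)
The argument is a row-wise counting argument in the fixed local gauges. Fix a feasible edge $e=(o\xrightarrow{a}o')$ and a source label $\hat c\in\widehat\cC_o$. Write
$
\hat c^\star:=\lambda_{o'}^{-1}(\tau_e(\lambda_o(\hat c)))
$
for the target label predicted by the true map in local coordinates. We must show that, from some almost surely finite time onward, $\hat c^\star$ is the unique maximizer of the row $N_e[\hat c,\cdot]$. Once that holds, deterministic tie breaking plays no role and $\widehat\tau_e(\hat c)=\hat c^\star$ permanently. There are finitely many pairs $(e,\hat c)$, so the maximum of the corresponding finite random times is again almost surely finite, which gives the uniform statement.

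\textbf{Step 1: correctly diagnosed samples land in the predicted cell.} Consider an anchored sample on $e$ whose source checkpoint is $(o,i)$. By Assumption~\ref{ass:transition_anchor_main} the source checkpoint is exact, and conditional on the realized edge, Lemma~\ref{lem:hcdp_path_lifting_main} gives the target layer $\sigma_e(i)$. Theorem~\ref{thm:markovization_main} then gives the target class $[\sigma_e(i)]_{o'}=\tau_e([i]_o)$. Suppose both source and target are classified correctly, so the source label $\hat s$ satisfies $\lambda_o(\hat s)=[i]_o$ and the target label $\hat t$ satisfies $\lambda_{o'}(\hat t)=\tau_e([i]_o)$. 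Then $\hat t=\lambda_{o'}^{-1}(\tau_e(\lambda_o(\hat s)))$. So every correctly diagnosed sample with source label $\hat c$ increments only the cell $N_e[\hat c,\hat c^\star]$. This identity is exact and uses no injectivity of $\tau_e$, which is why recovery is stated row-wise.

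\textbf{Step 2: errors contribute only boundedly many counts.} By hypothesis, only finitely many source or target classification errors occur almost surely. Let $K_e<\infty$ be the total number of misclassified anchored samples on $e$. Any increment to a cell $N_e[\hat c,\hat c']$ with $\hat c'\ne\hat c^\star$ requires a misclassification. Either the target label is wrong, or the source label is wrong so that the sample is filed in row $\hat c$ for a true class other than $\lambda_o(\hat c)$. Therefore $N_e[\hat c,\hat c']\le K_e$ for all $\hat c'\ne\hat c^\star$ and all times.

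\textbf{Step 3: the correct cell grows without bound.} This is the main obstacle, and it is where Assumption~\ref{ass:edge_cov_main} is needed. I would read that assumption as requiring that, for each feasible $e$ and each source class $c\in\cC_o$, anchored samples of $e$ starting in class $c$ occur infinitely often almost surely. A conditional Borel--Cantelli argument upgrades this from a requirement of positive conditional probability infinitely often, if the assumption is phrased that way. Among these infinitely many samples with $[i]_o=\lambda_o(\hat c)$, all but finitely many are correctly diagnosed at both ends. By Step 1 each of those increments $N_e[\hat c,\hat c^\star]$, so $N_e[\hat c,\hat c^\star]\to\infty$ almost surely.

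\textbf{Conclusion.} Define $T_{e,\hat c}$ as the first time after which $N_e[\hat c,\hat c^\star]>K_e$. This time is almost surely finite. From then on $N_e[\hat c,\hat c^\star]$ strictly exceeds every other entry of the row, since Step 2 bounds those by $K_e$ and counts never decrease. Hence $\hat c^\star$ is the unique argmax in Definition~\ref{def:rowwise_transport_main} for all later times. Taking the maximum of $T_{e,\hat c}$ over the finitely many pairs $(e,\hat c)$ completes the argument.
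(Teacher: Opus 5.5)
Your proposal is correct and follows essentially the same route as the paper's proof. Both fix a row $\hat c$, use the coverage assumption together with the almost surely finite set of classification errors to drive $N_e[\hat c,\hat c^\star]$ to infinity, and bound every competing entry of that row by a finite random constant, so that $\hat c^\star$ is permanently the unique maximizer; the only difference is that the paper bounds the competing entries by their frozen values just before the last error, whereas you bound them by the total error count $K_e$ on edge $e$, and your aside about conditional Borel--Cantelli is unnecessary because Assumption~\ref{ass:edge_cov_main} already asserts infinitely many such samples.
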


\begin{definition}[Local-coordinate class tracker]
\label{def:local_class_tracker_main}
At an exogenous initialization, diagnostics choose
$\widehat c_{t_0}\in\widehat\cC_{o_{t_0}}$. Until the next reset,
$
\widehat c_{t+1}
=
\widehat\tau_{(o_t\xrightarrow{a_t}o_{t+1})}
(\widehat c_t).
$
A reset discards the propagated label and triggers a new diagnostic
initialization.
\end{definition}

\begin{lemma}[Exact local tracking]
\label{lem:exact_local_tracking_main}
On a reset-free interval, if
$
\lambda_{o_{t_0}}(\widehat c_{t_0})=c_{t_0}
$
and every used edge satisfies
$
\widehat\tau_e
=
\lambda_{\tgt(e)}^{-1}
\circ\tau_e
\circ\lambda_{\src(e)},
$
then
$
\lambda_{o_t}(\widehat c_t)=c_t
$
for all $t\ge t_0$ on that interval.
\end{lemma}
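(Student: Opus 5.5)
The plan is an induction on $t$ along the reset-free interval, using the tracker recursion of Definition~\ref{def:local_class_tracker_main} together with the fact that the true canonical class also propagates deterministically through the quotient transports. On the environment side I first record that for every transition realized on the interval the canonical class evolves by
\[
c_{t+1}=[i_{t+1}]_{o_{t+1}}=[\sigma_{o_t,a_t,o_{t+1}}(i_t)]_{o_{t+1}}=\tau_{o_t,a_t,o_{t+1}}(c_t).
\]
Here the latent update $i_{t+1}=\sigma_{o_t,a_t,o_{t+1}}(i_t)$ comes from Definition~\ref{def:hcdp_main}, or equivalently Lemma~\ref{lem:hcdp_path_lifting_main} applied to a single edge. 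The second equality is the representative independence of $\tau$ established in Theorem~\ref{thm:markovization_main}. Every realized edge $e_t=(o_t\xrightarrow{a_t}o_{t+1})$ has positive probability under $P_O$, so it is feasible and $\tau_{e_t}$ is defined. It is also a used edge in the sense of the hypothesis.

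For the induction, the base case $t=t_0$ is exactly the assumption $\lambda_{o_{t_0}}(\widehat c_{t_0})=c_{t_0}$. For the step, suppose $\lambda_{o_t}(\widehat c_t)=c_t$ and $t+1$ still lies in the interval. No reset occurs in between, so the tracker update gives $\widehat c_{t+1}=\widehat\tau_{e_t}(\widehat c_t)$. Substituting the assumed local form of $\widehat\tau_{e_t}$ yields
\[
\lambda_{o_{t+1}}(\widehat c_{t+1})
=\lambda_{o_{t+1}}\circ\lambda_{o_{t+1}}^{-1}\circ\tau_{e_t}\circ\lambda_{o_t}(\widehat c_t)
=\tau_{e_t}(c_t)=c_{t+1}.
\]
This uses only that each $\lambda_o$ is a bijection, which is given by Definition~\ref{def:gauge_equiv_main} and the gauges fixed in Assumption~\ref{ass:local_proto_main}. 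Since $\tau_e$ itself need not be injective, the argument relies on the bijectivity of the gauges only, never on that of $\tau_e$.

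I do not expect a real obstacle; the argument is a commuting-square chase. The only points needing care are bookkeeping. First, the hypothesis on $\widehat\tau_e$ is needed only for edges actually traversed on the interval, and those are exactly the $e_t$. Second, the claim is pathwise, holding on every realization, because both the latent transport and the tracker update are deterministic once the visible edge is realized. Third, a reset ends the interval, so the induction never has to cross a diagnostic reinitialization.
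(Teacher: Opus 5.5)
Your proposal is correct and follows essentially the same route as the paper's proof. Both argue by pathwise induction on the reset-free interval, use the tracker recursion, cancel $\lambda_{o_{t+1}}\circ\lambda_{o_{t+1}}^{-1}$ using only the bijectivity of the gauges, and identify $\tau_{e_t}(c_t)$ with $c_{t+1}$ via $i_{t+1}=\sigma_{e_t}(i_t)$ and the representative-independent quotient transport of Theorem~\ref{thm:markovization_main}.
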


\begin{proposition}[Eventual exact quotient tracking]
\label{prop:eventual_sync_state_main}
If exogenous initialization times are almost surely unbounded, their diagnostic error probabilities are summable, every edge--class transport entry used by the tracker is eventually correct, and the tracker uses the preceding recursion, then for fixed local gauges there is an almost surely finite $T$ such that
$
\lambda_{o_t}(\widehat c_t)=c_t
$
for every $t\ge T$.
\end{proposition}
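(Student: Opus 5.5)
The plan is a Borel--Cantelli argument on the diagnostic errors, combined with a finiteness argument for the transport entries, closed by Lemma~\ref{lem:exact_local_tracking_main}.

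\textbf{Step 1 (finitely many diagnostic errors).} Let $t_1<t_2<\cdots$ be the exogenous initialization times, and let $A_k$ be the event that the $k$-th diagnostic initialization is wrong, i.e.\ $\lambda_{o_{t_k}}(\widehat c_{t_k})\ne c_{t_k}$. By hypothesis $\sum_k\bP(A_k)<\infty$, so Borel--Cantelli gives an almost surely finite index $K_1$ with $A_k^c$ for all $k\ge K_1$. If the error bounds are only conditional, as in Theorem~\ref{thm:local_class_inf_main}, I would apply Borel--Cantelli to the conditional probabilities and then take expectations; the conditional version of Lévy's extension is available if needed.

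\textbf{Step 2 (eventual correctness of the transport table).} There are finitely many feasible edges $e\in E$ and, for each, finitely many local labels $\hat c\in\widehat\cC_{\src(e)}$. For each entry $(e,\hat c)$ that the tracker ever uses, the hypothesis (for instance via Theorem~\ref{thm:transport_local_main}) gives an almost surely finite time $T_{e,\hat c}$ after which $\widehat\tau_e(\hat c)=\lambda_{\tgt(e)}^{-1}(\tau_e(\lambda_{\src(e)}(\hat c)))$ holds permanently. Set $T_2=\max_{e,\hat c}T_{e,\hat c}$, the maximum over this finite collection. It is almost surely finite, since a finite union of null sets is null.

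\textbf{Step 3 (choosing the synchronizing reset).} Initialization times are almost surely unbounded, so there is an almost surely finite index $k^\star\ge K_1$ with $t_{k^\star}\ge T_2$. Set $T=t_{k^\star}$. Now take any reset-free interval starting at some $t_k$ with $k\ge k^\star$. Its initial label is correct because $k\ge K_1$. Every transport entry used on it is correct because all times involved are at least $T_2$. Lemma~\ref{lem:exact_local_tracking_main} then gives $\lambda_{o_t}(\widehat c_t)=c_t$ throughout that interval.

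These intervals tile $[T,\infty)$, since after $T$ every time lies in the interval of the most recent reset. Hence the identity holds for all $t\ge T$.

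The main obstacle is a subtlety in Step 3. The transport estimate $\widehat\tau_e$ may change over time, while Lemma~\ref{lem:exact_local_tracking_main} assumes fixed correct maps on the interval. This is handled by restarting at a reset after $T_2$, so that every map applied on the interval is already in its permanently correct regime. A second point is that the entries used by the tracker are only those along realized edges. To be safe I would take $T_2$ as a maximum over all entries, since the table is finite. The hypothesis that each used entry is eventually correct is then enough, because entries never used impose no constraint.
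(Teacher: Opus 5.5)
Your proposal is correct and matches the paper's proof essentially step for step. The shared steps are Borel--Cantelli on the initialization-error events, a maximum over the finitely many edge--label transport-recovery times, and then Lemma~\ref{lem:exact_local_tracking_main} applied on every reset-free interval beginning at an initialization time at or after both thresholds; you also make the same observation that restarting from such a reset sidesteps the time-varying transport table. The aside about conditional bounds and L\'evy's extension is unnecessary, since the hypothesis already supplies summable unconditional error probabilities.
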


All formal access, coverage, estimator, and exact-tracking conditions are stated in Supplementary Appendix~\ref{app:deferred_identification}.

\section{Quotient-lifted reinforcement learning}
\label{sec:algo}

This section converts synchronized local class labels into a valid state for an RL backbone. Theorem~\ref{thm:eventual_exact_reduction_main} proves that, whenever the current label and all relevant edge maps are correct, the learned observation--label process has the exact quotient transition law, conditional reward mean, and noise control, and is isomorphic to the canonical quotient MDP. Corollary~\ref{cor:diag_to_exact_reduction_main} combines the diagnostic recovery results with this theorem to obtain eventual ambient correctness, while emphasizing that eventual correctness alone does not automatically justify an arbitrary learning theorem. Corollary~\ref{cor:backbone_transfer_main} gives the required transfer condition: restart the finite-MDP backbone at a learned-information stopping time using only post-calibration data, or use a guarantee explicitly robust to the finite corrupted prefix and induced algorithmic state. This distinguishes the proved diagnostic HMRL reduction from passive variants, which remain conditional because of Theorem~\ref{thm:passive_nonident_main}.
Let
$
\widehat\cS
=
\bigsqcup_o
(\{o\}\times\widehat\cC_o)
$
and
$
\widehat z_t
=
(o_t,\widehat c_t).
$
The proof-level coordinate map is
$
\Lambda_\lambda(o,\hat c)
=
(o,\lambda_o(\hat c)).
$

\begin{definition}[Local quotient MDP]
\label{def:quotient_lifted_state_main}
The transport $\tau_e^\lambda$ is the exact canonical map written in local
coordinates, whereas $\widehat\tau_e$ is its learned estimator. The hats on
$\widehat P^\lambda$ and $\widehat R^\lambda$ indicate the local state space,
not statistical estimation. Define
$
\tau_e^\lambda
=
\lambda_{\tgt(e)}^{-1}
\circ\tau_e
\circ\lambda_{\src(e)},
$
$
\widehat R^\lambda(o,\hat c,a)
=
\bar R(o,\lambda_o(\hat c),a),
$
and, for $\hat z=(o,\hat c)$ and $\hat z'=(o',\hat c')$,
$
\widehat P^\lambda(\hat z'\mid\hat z,a)
=
\begin{cases}
P_O(o'\mid o,a)
\mathbf 1\!\left\{
\hat c'=\tau_{o,a,o'}^\lambda(\hat c)
\right\},
& P_O(o'\mid o,a)>0,\\[1mm]
0,
& P_O(o'\mid o,a)=0.
\end{cases}
$
\end{definition}

The local initial law is
$
\widehat\rho_0^\lambda(o,\hat c)
=
\bar\rho_0(o,\lambda_o(\hat c)).
$
Through $\Lambda_\lambda$, stationary policies and values transfer by
$
\bar\pi(a\mid o,c)
=
\widehat\pi(a\mid o,\lambda_o^{-1}(c))
$
and
$
\widehat V_{\widehat\pi}(o,\hat c)
=
\bar V_{\bar\pi}(o,\lambda_o(\hat c)).
$
Thus the learned labels may be used directly; the gauges are proof devices,
not algorithmic inputs.

\begin{theorem}[Exact reduction after synchronization]
\label{thm:eventual_exact_reduction_main}
Let $\mathcal G_t^-$ be the pre-transition sigma-field generated by the
latent and observable histories, the current tracker and transport tables, and
the chosen action $a_t$, before $(Y_t,o_{t+1})$ is observed. Let
$\widehat{\mathcal F}_t^-\subseteq\mathcal G_t^-$ be the corresponding
learned-information subfiltration, which excludes the latent class.

Let $A_t$ be the event that
$
\lambda_{o_t}(\widehat c_t)=c_t
$
and, for every $o'$ with $P_O(o'\mid o_t,a_t)>0$,
$
\widehat\tau_{o_t,a_t,o'}
=
\lambda_{o'}^{-1}
\circ
\tau_{o_t,a_t,o'}
\circ
\lambda_{o_t}.
$
On $A_t$, for every $\hat z'\in\widehat\cS$,
$
\bP(
\widehat z_{t+1}=\hat z'
\mid
\mathcal G_t^-
)
=
\widehat P^\lambda(
\hat z'
\mid
\widehat z_t,a_t
),
$
and
$
\bE[
Y_t
\mid
\mathcal G_t^-
]
=
\widehat R^\lambda(
\widehat z_t,a_t
).
$
The centered reward remains conditionally sub-Gaussian with variance proxy
$\sigma_Y^2$, and $\Lambda_\lambda$ is an MDP isomorphism to the canonical
quotient.

If an almost surely finite $T_0$ satisfies $A_t$ for every $t\geq T_0$,
these laws hold after $T_0$ relative to the ambient filtration
$(\mathcal G_t^-)$. If, in addition, $T_0$ is a stopping time for
$(\widehat{\mathcal F}_t^-)$, then the same laws hold relative to the
post-$T_0$ learned-information filtration. The distribution at $T_0$ need not
equal $\widehat\rho_0^\lambda$.
\end{theorem}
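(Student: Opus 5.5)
The plan is to prove the one-step identities on $A_t$ first, then the isomorphism claim, and finally the two filtration statements.

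\textbf{One-step transition law.} Condition on $\mathcal G_t^-$. This sigma-field contains the latent state $(o_t,i_t)$, the tracker label $\widehat c_t$, the transport tables and $a_t$. By Definition~\ref{def:hcdp_main}, the next observation satisfies $\bP(o_{t+1}=o'\mid\mathcal G_t^-)=P_O(o'\mid o_t,a_t)$. This holds because the environment transition uses only $(s_t,a_t)$, and the tracker and tables are functions of past data and independent randomization, so they carry no extra information about the fresh transition.

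The tracker then sets $\widehat c_{t+1}=\widehat\tau_{o_t,a_t,o_{t+1}}(\widehat c_t)$, using tables that are $\mathcal G_t^-$-measurable. Fix $o'$ with $P_O(o'\mid o_t,a_t)>0$. On $A_t$,
\[
\widehat\tau_{o_t,a_t,o'}(\widehat c_t)=\lambda_{o'}^{-1}\tau_{o_t,a_t,o'}(\lambda_{o_t}(\widehat c_t))=\tau^\lambda_{o_t,a_t,o'}(\widehat c_t).
\]
So the label is a deterministic function of $o'$, and summing gives exactly $\widehat P^\lambda(\hat z'\mid\widehat z_t,a_t)$. Infeasible $o'$ contribute zero on both sides.

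This transition step needs only the first clause of $A_t$ together with the table correctness. For the coherence of the process itself, Theorem~\ref{thm:markovization_main} shows that $\tau_{o_t,a_t,o'}(c_t)=[\sigma(i_t)]_{o'}=c_{t+1}$. Hence $\lambda_{o_{t+1}}(\widehat c_{t+1})=c_{t+1}$, which is the one-step content of Lemma~\ref{lem:exact_local_tracking_main}.

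\textbf{Reward mean, noise and isomorphism.} By the noise assumption, $\bE[Y_t\mid\mathcal G_t^-]=R(o_t,i_t,a_t)$. Theorem~\ref{thm:markovization_main} turns this into $\bar R(o_t,c_t,a_t)$, and on $A_t$ this equals $\bar R(o_t,\lambda_{o_t}(\widehat c_t),a_t)=\widehat R^\lambda(\widehat z_t,a_t)$. The conditional sub-Gaussian bound is inherited directly, since the centered noise $\xi_t$ is unchanged.

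That $\Lambda_\lambda$ is an MDP isomorphism is a direct check. It is a fiberwise bijection because each $\lambda_o$ is a bijection. It preserves rewards by the definition of $\widehat R^\lambda$. It preserves transitions because $\lambda_{o'}\circ\tau^\lambda_e=\tau_e\circ\lambda_o$, which is Definition~\ref{def:gauge_equiv_main}.

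\textbf{Filtration statements.} If $A_t$ holds for all $t\ge T_0$, apply the one-step identities on $\{T_0\le t\}\cap A_t=\{T_0\le t\}$ for each $t$; this gives the ambient statement.

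The main obstacle is passing to the learned subfiltration $\widehat{\mathcal F}_t^-$. I would use the tower property: for an event $B\in\widehat{\mathcal F}_t^-$ with $B\subseteq\{T_0\le t\}$,
\[
\bP(\widehat z_{t+1}=\hat z',B)=\bE\bigl[\mathbf 1_B\,\widehat P^\lambda(\hat z'\mid\widehat z_t,a_t)\bigr].
\]
The integrand is $\widehat{\mathcal F}_t^-$-measurable, because $\widehat z_t$ and $a_t$ are learned quantities; no latent class appears once $A_t$ is in force. This yields the conditional law given $\widehat{\mathcal F}_t^-$ on $\{T_0\le t\}$. The reward mean follows the same way.

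The stopping-time property of $T_0$ is essential here. It is what guarantees $\{T_0\le t\}\in\widehat{\mathcal F}_t^-$, so the post-$T_0$ learned-information filtration is well defined and the strong-Markov-type restart is valid. Finally, the law of $\widehat z_{T_0}$ depends on the pre-$T_0$ history, so nothing forces it to equal $\widehat\rho_0^\lambda$; the statement makes only this remark.
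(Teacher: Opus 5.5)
Your proposal is correct and follows essentially the same route as the paper. The one-step identities are proved on the $\mathcal G_t^-$-measurable event $A_t$: table correctness gives $\widehat c_{t+1}=\tau^\lambda_{o_t,a_t,o_{t+1}}(\widehat c_t)$, and label correctness together with reward consistency of the stable quotient gives the mean and the noise bound. $\Lambda_\lambda$ is then checked directly, the ambient claim uses $\{T_0\le t\}\subseteq A_t$, and the learned-filtration claim uses the tower property once $\{T_0\le t\}\in\widehat{\mathcal F}_t^-$.

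A few small points are left implicit. $\Lambda_\lambda$ must also carry $\widehat\rho_0^\lambda$ to $\bar\rho_0$, and the conditional sub-Gaussian bound transfers to $\widehat{\mathcal F}_t^-$ by the same tower argument. The transition identity needs only the table clause of $A_t$; the label clause is what the reward identity uses.
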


The theorem identifies the transition kernel, conditional reward mean, and
martingale noise. A stationary reward-output kernel additionally requires the
full conditional law of $Y_t$ to depend only on the quotient state and action.

\begin{corollary}[Diagnostics imply eventual exact reduction]
\label{cor:diag_to_exact_reduction_main}
Under Theorem~\ref{thm:transport_local_main} and
Proposition~\ref{prop:eventual_sync_state_main}, fixed gauges and an almost
surely finite $T_0$ satisfy $A_t$ for every $t\geq T_0$, giving eventual
ambient correctness in
Theorem~\ref{thm:eventual_exact_reduction_main}. A finite-MDP backbone may be
invoked directly only after a certified calibration-and-restart stopping time,
as in Corollary~\ref{cor:pac_calibration_main}, or under a separate theorem
formulated for the ambient filtration and robust to a finite corrupted prefix.
\end{corollary}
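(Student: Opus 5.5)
The plan is to chain the three earlier results and then check that their conclusions match the hypotheses of Theorem~\ref{thm:eventual_exact_reduction_main}. We also have to be careful about which filtration each statement refers to.

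\textbf{Transport entries.} Fix the gauges $\{\lambda_o\}$ supplied by Assumption~\ref{ass:local_proto_main}. Theorem~\ref{thm:transport_local_main} gives, for each feasible edge $e$ and each $\hat c\in\widehat\cC_o$, an almost surely finite time $T_{e,\hat c}$ after which
$\widehat\tau_e(\hat c)=\lambda_{o'}^{-1}(\tau_e(\lambda_o(\hat c)))$ holds permanently. The edge set $E$ and the label sets are finite, so
$T_\tau:=\max_{e,\hat c}T_{e,\hat c}$
is also almost surely finite. After $T_\tau$, every transport table agrees entrywise with $\tau_e^\lambda$. This covers all edges out of $o_t$ with positive $P_O$, which is the second clause of the event $A_t$.

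\textbf{Class labels.} The hypotheses of Proposition~\ref{prop:eventual_sync_state_main} are assumed. Its requirement that used transport entries are eventually correct is implied by the previous step. It therefore yields an almost surely finite $T_c$ with $\lambda_{o_t}(\widehat c_t)=c_t$ for all $t\ge T_c$, which is the first clause of $A_t$. Set $T_0:=\max(T_\tau,T_c)$. This time is almost surely finite, and $A_t$ holds for all $t\ge T_0$. Invoking the final paragraph of Theorem~\ref{thm:eventual_exact_reduction_main} then gives the quotient transition law, the conditional reward mean, and sub-Gaussian noise relative to the ambient filtration $(\mathcal G_t^-)$ after $T_0$. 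That is the claimed eventual ambient correctness.

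\textbf{Limits on invoking a backbone.} The remaining claim is a delimitation, not a positive result. I would justify it by pointing out that $T_0$ is defined through latent quantities: $c_t$ and the true $\tau_e$ enter through $A_t$. So $T_0$ need not be a stopping time for $(\widehat{\mathcal F}_t^-)$, and Theorem~\ref{thm:eventual_exact_reduction_main} gives learned-filtration laws only under that extra stopping-time property. Moreover, the initial distribution at $T_0$ and the algorithm's internal state built from the pre-$T_0$ prefix are arbitrary. A finite-MDP guarantee stated for the learned filtration from a fresh start therefore applies only if one of two things holds. Either the backbone is restarted at a learned-information stopping time using post-calibration data only, which is the route of Corollary~\ref{cor:pac_calibration_main}, or one uses a guarantee formulated for the ambient filtration that tolerates a finite corrupted prefix.

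The main obstacle I expect is this measurability and filtration point rather than any computation. The first part is essentially a finite union of almost surely finite times, whereas the second part needs a short counterexample-style remark explaining why ambient correctness alone does not give the learned-filtration statement.
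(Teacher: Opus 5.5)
Your proposal is correct and follows essentially the paper's route: take the maximum of the finitely many row-recovery times from Theorem~\ref{thm:transport_local_main}, obtain the class-synchronization time from Proposition~\ref{prop:eventual_sync_state_main}, set $T_0$ to their maximum so that $\{T_0\le t\}\subseteq A_t$, and invoke the ambient clause of Theorem~\ref{thm:eventual_exact_reduction_main}. There are two minor differences. First, the paper additionally derives the finite-classification-error hypothesis of Theorem~\ref{thm:transport_local_main} from the summable schedules of Corollary~\ref{cor:eventual_local_class_main}, whereas you take it as part of ``under Theorem~\ref{thm:transport_local_main}''. Second, the paper's proof leaves the delimitation clause implicit, and your justification of it matches the poisoned-prefix counterexample in the proof of Corollary~\ref{cor:backbone_transfer_main}.
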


\begin{corollary}[Transfer of an MDP-backbone guarantee]
\label{cor:backbone_transfer_main}
Suppose there is an almost surely finite stopping time $T$ for the learned-information filtration such that $A_t$ holds for every $t\geq T$. Any finite-MDP guarantee transfers with its original exploration, sampling, step-size, and approximation conditions if the backbone is restarted at $T$ using only post-$T$ data and its theorem applies to the resulting initial-state law, or uniformly over initial states. The same conclusion also holds when the backbone theorem is explicitly invariant to the finite pre-$T$ prefix and the induced internal algorithmic state. Without one of these conditions, eventual representation correctness alone is insufficient.
\end{corollary}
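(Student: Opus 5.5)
The plan is to reduce the statement to Theorem~\ref{thm:eventual_exact_reduction_main} combined with a strong-Markov-type restart argument for the learned-information filtration. First, since $T$ is almost surely finite and $A_t$ holds for all $t\ge T$, Theorem~\ref{thm:eventual_exact_reduction_main} gives, for every $t\ge T$, the one-step laws relative to $(\widehat{\mathcal F}_t^-)$:
\[
\bP(\widehat z_{t+1}=\hat z'\mid\widehat{\mathcal F}_t^-)=\widehat P^\lambda(\hat z'\mid\widehat z_t,a_t),
\qquad
\bE[Y_t\mid\widehat{\mathcal F}_t^-]=\widehat R^\lambda(\widehat z_t,a_t),
\]
together with conditional sub-Gaussian centered rewards. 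To pass from the ambient laws to the learned-information laws, I would use the tower property, relying on the fact that $T$ is a stopping time for $(\widehat{\mathcal F}_t^-)$, so the event $\{t\ge T\}$ is learned-measurable. This is exactly the second clause of that theorem.

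Next, I would define the shifted process $\widetilde z_k=\widehat z_{T+k}$, $\widetilde a_k=a_{T+k}$, $\widetilde Y_k=Y_{T+k}$, with filtration $\widetilde{\mathcal F}_k=\widehat{\mathcal F}^-_{T+k}$. The restarted backbone sees only these post-$T$ data. Its actions are therefore $\widetilde{\mathcal F}$-adapted, using only its own fresh internal randomness independent of the future given the past. Conditional on $\widetilde{\mathcal F}_0$, the shifted process is then an interaction with the finite MDP $(\widehat\cS,\cA,\widehat P^\lambda,\widehat R^\lambda)$. The initial state law is the conditional law of $\widehat z_T$, which need not equal $\widehat\rho_0^\lambda$. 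Through the isomorphism $\Lambda_\lambda$, this MDP is the canonical quotient, and by Theorem~\ref{thm:markovization_main} its values coincide with latent values. A guarantee stated either for this initial law or uniformly over initial states therefore applies verbatim, with its original exploration, sampling, step-size and approximation hypotheses, which are checked on the shifted process. Value or regret statements pull back through $\Lambda_\lambda$ and the lift $\pi^\uparrow$. The invariant-to-prefix alternative follows the same way. There, instead of restarting, one checks that the backbone theorem's hypotheses hold on $[T,\infty)$ given an arbitrary $\widetilde{\mathcal F}_0$-measurable algorithmic state, which is precisely the assumed invariance.

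The main obstacle is the measurability step that justifies treating the post-$T$ process as a fresh MDP run conditional on $\widetilde{\mathcal F}_0$. Concretely, I need the transition and reward laws for $t\ge T$ to hold when conditioning on the learned information rather than the ambient $\mathcal G_t^-$. I would handle this by summing over $\{T=s\}\in\widehat{\mathcal F}_s^-$ and applying the kernel identity on each piece, noting that $A_t$ holds there. This also requires that the backbone's internal randomness be conditionally independent of the latent class given learned data.

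Finally, for the negative claim, I would give a counterexample showing that eventual correctness alone does not suffice. Consider a backbone, such as a tabular algorithm with decaying step sizes or optimism counts, that is run without restart through a corrupted prefix. That prefix can leave arbitrary internal state: for example, visit counts that exhaust an exploration bonus on a wrongly labeled state. The algorithm's guarantee, which assumes counts generated by the true MDP, then fails. It suffices to exhibit one such backbone and prefix to show the hypotheses are needed.
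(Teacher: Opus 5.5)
Your treatment of the two positive cases follows the paper's proof. You invoke Theorem~\ref{thm:eventual_exact_reduction_main} relative to $(\widehat{\mathcal F}_t^-)$ and transfer the kernel, reward-mean and sub-Gaussian identities to the stream the restarted backbone sees via the tower property. You keep the caveat that the initial law is $\mathcal L(\widehat z_T)$ rather than $\widehat\rho_0^\lambda$. You handle the random time by decomposing over $\{T=s\}$; the paper does the same over $\{T_0=m\}$ in the prefix-invariant case. You relabel through $\Lambda_\lambda$, where the paper uses Lemma~\ref{lem:gauge_policy_transfer_main}.

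One small imprecision: $\widehat{\mathcal F}^-_T$ already contains $\widehat z_T$, so its conditional law given $\widetilde{\mathcal F}_0$ is a point mass. For a theorem stated for a specific initial law, condition only on the post-$T$ data, as the paper does with $\mathcal H^{\widehat z}_{T,t}$, so that the relevant law is the unconditional $\mathcal L(\widehat z_T)$.

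The genuine gap is the final sentence of the statement. Insufficiency is an existence claim. You must exhibit one backbone whose finite-MDP guarantee is valid on uncorrupted streams but fails after a finite corrupted prefix followed by permanently correct representation. You only gesture at this, and the candidates you name do not serve as written.
\begin{itemize}
\item Tabular Q-learning with Robbins--Monro step sizes converges from an arbitrary bounded initial table. A prefix that advances the step-size counters still leaves $\sum_t\alpha_t=\infty$ and $\sum_t\alpha_t^2<\infty$ on the tail, so its standard guarantee survives a finite corrupted prefix.
\item Optimism bonuses with a growing $\log t$ numerator regrow after the prefix, so they are not permanently ``exhausted''. Showing that a particular finite-time bound fails would require fixing that bound and a quantitative calculation you have not set up.
\end{itemize}

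The missing idea is that the counterexample may use a deliberately constructed backbone whose internal state is permanently poisoned by the prefix. The paper's construction:
\begin{itemize}
\item Take a two-state MDP with deterministic self-loops and actions $a^\star$ (reward $1$) and $a^-$ (reward $0$).
\item The backbone plays $a^\star$ first and stores one permanent bit, set to good iff its first recorded tuple has the valid form $(s_j,a^\star,1,s_j)$. It then plays $a^\star$ forever if good and $a^-$ forever if bad.
\item On every uncorrupted trajectory it is optimal.
\item Now let the true state always be $s_0$, with only $\widehat z_0=s_1$ wrong, so one may take $T_0=1$. The first recorded tuple is $(s_1,a^\star,1,s_0)$, which is invalid, and the post-transient value is $0<1/(1-\gamma)$.
\end{itemize}
An explore-then-commit backbone whose exploration quota is filled by mislabeled prefix samples would also work, but it would have to be written out concretely.
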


The finite-sample calibration corollary, diagnostic and passive algorithms,
and conditional passive reduction are in Supplementary
Appendix~\ref{app:algo_details}. In the passive regime, exact reduction is
therefore conditional on eventual class correctness under fixed local gauges;
Theorem~\ref{thm:passive_nonident_main} precludes an unconditional claim from
unrestricted logs.

\section{Nonabelian loop memory barrier}
\label{sec:nonabelian}
This section establishes when loop order is intrinsically control relevant. Theorem~\ref{thm:abelian_barrier_main} proves that any terminal memory factoring through signed loop counts assigns the same action distribution to equal-count histories and therefore incurs a fixed continuation-value loss when their transported classes are decision separated; Corollary~\ref{cor:commutator_barrier_main} applies this obstruction to a zero-count commutator. Corollary~\ref{cor:decision_lower_bound_main} removes the count-based architectural restriction and shows that pairwise decision separation forces any exact finite-memory controller to distinguish every reachable stable class. Proposition~\ref{prop:min_nonabelian_main} realizes the gap in a three-layer, two-action HCDP, while Proposition~\ref{prop:ordered_tracking_nonabelian_main} shows that ordered transport tracking is exact using only the current finite class rather than the complete loop word. Proofs and construction details are given in Appendices~\ref{app:proofs} and~\ref{app:nonabelian_details}.

Fix a base observation $o$.

\begin{definition}[Executable reversible loop alphabet]
\label{def:reversible_loop_alphabet_main}
A family of directed closed walks
$\{\ell_j^+,\ell_j^-\}_{j=1}^r$ based at $o$ is reversible when
$
\tau_{\ell_j^-}
=
\tau_{\ell_j^+}^{-1}.
$
The inverse transport need not reverse physical edges: every permutation has finite order, so its inverse is a positive power and may be executed by repeated positive traversal.
\end{definition}

These loops induce a representation of the free group
$
F_r
=
\langle x_1,\ldots,x_r\rangle,
$
$
\rho_o:F_r\to\Sym(\cC_o),
\qquad
\rho_o(x_j^{\pm1})
=
\tau_{\ell_j^{\pm}},
$
with the algebraic composition convention
$
\rho_o(uv)=\rho_o(u)\circ\rho_o(v),
$
so the loop represented by $v$ is executed first and the rightmost transport acts first. For reachable $c_0$, write

\begin{definition}[Abelianized loop memory]
\label{def:abelianized_memory_main}
The abelianization
$
\ab:F_r\to\mathbb Z^r
$
records signed generator counts. A surrogate $S:F_r\to\cM$ is abelianized
when
$
S=f\circ\ab,
$
equivalently
$
\ab(w)=\ab(w')
\Longrightarrow
S(w)=S(w').
$
\end{definition}

It includes signed-count vectors and arbitrary downstream readouts, but not
general order-sensitive recurrent or attention encoders.

Let $\bar Q^\star$ be the quotient optimal action value and
$
\cA^\star(o,c)
=
\argmax_a
\bar Q^\star(o,c,a).
$

\begin{definition}[Decision separation]
\label{def:decision_separated_main}
Classes $c,c'\in\cC_o$ are decision separated when
$
\cA^\star(o,c)
\cap
\cA^\star(o,c')
=
\varnothing.
$
For $p\in\Delta(\cA)$ and $d\in\{c,c'\}$, let
$
\ell(p,d)
=
\bar V^\star(o,d)
-
\sum_a
p(a)\bar Q^\star(o,d,a),
$
and define
$
\varepsilon_{\rm dec}(o;c,c')
=
\min_{p\in\Delta(\cA)}
\max_{d\in\{c,c'\}}
\ell(p,d)
>0.
$
\end{definition}

The strict positivity follows because $\Delta(\cA)$ is compact, $\ell(\cdot,d)$ is continuous, and the two disjoint optimal-action faces admit no common zero-loss distribution.

Noncommuting raw transports alone do not imply a control obstruction: the
commutator must act differently on reachable stable classes, and those
classes must demand incompatible decisions.

\begin{theorem}[Abelianization barrier]
\label{thm:abelian_barrier_main}
Suppose $\ab(w)=\ab(w')$ but $c_w$ and $c_{w'}$ are decision separated. Any controller whose terminal memory has the form
$
m(w)=g_o(\ab(w)),
\qquad
g_o:\mathbb Z^r\to\cM,
$
uses the same action distribution after both histories; at least one therefore
incurs conditional optimality loss at least
$
\varepsilon_{\rm dec}(o;c_w,c_{w'}).
$
Such memory cannot be exact on a reachable history set containing both
histories.
\end{theorem}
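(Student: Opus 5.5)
The argument is a direct pigeonhole on the memory value followed by the decision-separation bound. Fix the base observation $o$ and the two words $w,w'$ with $\ab(w)=\ab(w')$. Since both histories end at $o$ and the controller's terminal memory is $m(w)=g_o(\ab(w))$, we get $m(w)=g_o(\ab(w))=g_o(\ab(w'))=m(w')$. The decision rule $\pi_{\cM}(\cdot\mid o_t,m_t)$ depends only on the current observation and memory, so both histories receive the same action distribution $p:=\pi_{\cM}(\cdot\mid o,m(w))\in\Delta(\cA)$. This settles the first claim and uses only the deterministic finite-memory controller from Section~\ref{sec:prelim}.

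For the loss claim, I identify the conditional continuation value after each history. Because $c_0$ is reachable and known, Proposition~\ref{prop:quotient_belief_main} keeps the belief degenerate. Combined with Lemma~\ref{lem:hcdp_path_lifting_main} and the path-transport descent, the terminal class after executing $w$ is $c_w=\rho_o(w)(c_0)$. Theorem~\ref{thm:markovization_main} then says that, conditional on the history, the optimal achievable value from that point is $\bar V^\star(o,c_w)$. Any one-step deviation that plays $p$ and then continues optimally attains $\sum_a p(a)\bar Q^\star(o,c_w,a)$. The controller's actual continuation from that point is at most this, since its later play is at most optimal. So its conditional optimality loss after $w$ is at least $\ell(p,c_w)$, and likewise at least $\ell(p,c_{w'})$ after $w'$. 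Hence
\[
\max\{\ell(p,c_w),\ell(p,c_{w'})\}\ \ge\ \min_{q\in\Delta(\cA)}\max_{d\in\{c_w,c_{w'}\}}\ell(q,d)=\varepsilon_{\rm dec}(o;c_w,c_{w'})>0,
\]
with positivity from Definition~\ref{def:decision_separated_main}. So at least one of the two histories incurs the stated loss.

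Non-exactness then follows. An exact memory on a reachable history set would have to attain $\bar V^\star$ conditionally after every history in the set, which the loss bound rules out. Equivalently, exactness would force $m(w)\neq m(w')$, because $c_w\ne c_{w'}$: decision-separated classes have disjoint optimal sets, so they are distinct. This contradicts the first step.

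The step I expect to need the most care is the middle one: justifying that the conditional continuation value is bounded by the one-step-deviation quantity $\sum_a p(a)\bar Q^\star(o,c,a)$. This must be done relative to the history-conditional law, and without granting the controller latent access. I would handle it through the analyst's augmented Markov chain $(s_t,m_t)$ from Section~\ref{sec:prelim}. Conditional on the history, the latent class is deterministic, equal to $c_w$. The value of any history-dependent policy from there is bounded by a Bellman inequality on the lifted full-state MDP, whose optimal value is $\bar V^\star$ and $\bar Q^\star$ by Theorem~\ref{thm:markovization_main}. Both histories must have positive probability, which holds by reachability and Lemma~\ref{lem:hcdp_path_lifting_main}, so that the conditioning is well defined.
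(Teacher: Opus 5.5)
Your proposal is correct and follows the paper's proof essentially step for step: equal abelianizations force equal terminal memory and hence a common action distribution $p$; the controller's conditional continuation value after each history is bounded by $\sum_a p(a)\bar Q^\star(o,c,a)$; and the min--max definition of $\varepsilon_{\rm dec}$ then forces a loss of at least $\varepsilon_{\rm dec}(o;c_w,c_{w'})>0$ after one of the two histories, which contradicts exactness. Your plan to justify the continuation bound through the augmented latent chain and the lifted full-state MDP is a more careful version of the paper's informal remark that the controller's continuation is just ``one admissible strategy,'' and one caution applies to your ``equivalently'' remark: distinctness $c_w\neq c_{w'}$ alone does not force $m(w)\neq m(w')$ under exactness, since distinct stable classes can share optimal behavior, so it is decision separation (through the loss bound you already proved) that does the work.
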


\begin{corollary}[Commutator barrier]
\label{cor:commutator_barrier_main}
If a reachable $c_0$ is decision separated from
$\rho_o([u,v])(c_0)$, then every abelianized memory merges the commutator
$[u,v]$ with the empty word because both have abelianization zero, and cannot
be exact at both resulting classes.
\end{corollary}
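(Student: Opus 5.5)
The plan is to reduce the corollary to Theorem~\ref{thm:abelian_barrier_main}, taking $w=[u,v]$ and $w'=\varnothing$. First I will fix the commutator convention $[u,v]=uvu^{-1}v^{-1}$ in $F_r$; any other standard convention works the same way. Since $\ab:F_r\to\mathbb Z^r$ is a group homomorphism into an abelian group, we get
\[
\ab([u,v])=\ab(u)+\ab(v)-\ab(u)-\ab(v)=0=\ab(\varnothing).
\]
By Definition~\ref{def:abelianized_memory_main}, every abelianized surrogate $S=f\circ\ab$ therefore satisfies $S([u,v])=S(\varnothing)$. This gives the merging claim directly.

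Next I identify the terminal classes reached by the two words. The empty word is executed by doing nothing, so it leaves the class at $c_0$. The word $[u,v]$ is executed by concatenating the reversible loops of Definition~\ref{def:reversible_loop_alphabet_main}. Each inverse generator is realized by a positive-power traversal, which is legitimate because the transports have finite order. The path-transport descent (Supplementary Corollary~\ref{cor:path_transport_descends_main}) together with the homomorphism convention $\rho_o(uv)=\rho_o(u)\circ\rho_o(v)$ then gives terminal class $c_{[u,v]}=\rho_o([u,v])(c_0)$. Both histories end at the base observation $o$. Since $c_0$ is reachable and the loops are directed closed walks that are feasible from $o$, Lemma~\ref{lem:hcdp_path_lifting_main} guarantees that the resulting histories have positive probability. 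Both histories therefore lie in the reachable history set.

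The last step applies Theorem~\ref{thm:abelian_barrier_main} with $w=[u,v]$ and $w'=\varnothing$. The two words have equal abelianization, and by hypothesis $c_0$ and $\rho_o([u,v])(c_0)$ are decision separated. The theorem then says that any memory of the form $g_o\circ\ab$ produces the same action distribution after both histories. One of the two histories therefore suffers loss at least $\varepsilon_{\rm dec}(o;c_0,\rho_o([u,v])(c_0))>0$, so the memory cannot be exact at both classes.

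The main obstacle I expect is bookkeeping rather than mathematics. The free-group word has to correspond to an actual executable observable history whose quotient transport equals $\rho_o$ of that word under the paper's right-to-left composition convention. In particular, I need to check that realizing $u^{-1}$ by powers of $\ell^+$ or by $\ell^-$ gives the transport $\tau_{\ell^+}^{-1}$, which is exactly the reversibility condition. Once that correspondence is verified, the corollary is an immediate specialization of the theorem.
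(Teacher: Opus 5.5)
Your proposal is correct and follows the paper's proof essentially step for step. Both arguments compute $\ab([u,v])=0=\ab(e)$ from the homomorphism property, conclude $S([u,v])=S(e)$ for $S=f\circ\ab$, identify the terminal classes as $\rho_o([u,v])(c_0)$ and $\rho_o(e)(c_0)=c_0$, and then invoke Theorem~\ref{thm:abelian_barrier_main} with $w=[u,v]$ and $w'=e$; your appeal to Lemma~\ref{lem:hcdp_path_lifting_main} for the positivity of the loop histories is a small extra justification that the paper leaves implicit when it asserts the commutator history is reachable.
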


In particular, a commutator and the empty word both have zero abelianization,
although the commutator may act nontrivially on stable classes. Any encoder
$
A(w)
=
\sum_j
N_j(w)v_j
$
followed by an arbitrary readout also factors through $\ab$ and inherits the
same barrier. The formal count-additive corollary is in Supplementary Appendix~\ref{app:nonabelian_details}. The preceding result targets count-based encoders. The next corollary removes that architectural restriction: pairwise decision separation forces any exact finite-memory controller to distinguish all reachable classes at the observation.

\begin{corollary}[Arbitrary finite-memory lower bound]
\label{cor:decision_lower_bound_main}
Fix $o$. If every distinct pair in $\cC_o$ is decision separated and every
class is reached by a positive-probability history ending at $o$, then any
finite-memory controller attaining $\bar V^\star(o_t,c_t)$ after every such
history satisfies
$
|\cM|
\ge
|\cC_o|.
$
At an observation attaining $m_{\rm exact}^\star$, this gives
$
|\cM|
\ge
m_{\rm exact}^\star.
$
\end{corollary}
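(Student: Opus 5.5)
The plan is a pigeonhole argument on the memory value held at observation $o$, combined with the decision-separation property from Definition~\ref{def:decision_separated_main}. Since $\cC_o$ is finite, I first fix, for each class $c\in\cC_o$, one positive-probability history $H^{(c)}$ ending at $o$ whose realized stable class is $c$, meaning $c_t=c$ on that history's latent realization; such a history exists by the reachability hypothesis. The finite-memory controller $(\cM,\iota,U,\pi_\cM)$ assigns to each history a memory value $m(H^{(c)})=F(H^{(c)})\in\cM$. At time $t$ the action distribution is $\pi_\cM(\cdot\mid o,m(H^{(c)}))$, so it depends on the history only through the pair $(o,m)$.

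The key step is to show that the map $c\mapsto m(H^{(c)})$ is injective. Suppose two distinct classes $c\ne c'$ had $m(H^{(c)})=m(H^{(c')})=m$. Then the controller plays the same distribution $p=\pi_\cM(\cdot\mid o,m)$ after both histories. I will argue that, conditional on the history and on the current class $d$, the controller's continuation value is at most $\sum_a p(a)\bar Q^\star(o,d,a)$. Two ingredients give this. First, the future from time $t$ depends on the latent state only through $(o,d)$, by Theorem~\ref{thm:markovization_main}. Second, no history-dependent continuation can exceed $\bar Q^\star$ after the first action, because the quotient is an exact MDP and the fully observed optimum upper-bounds any observable policy.

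From this bound, attaining $\bar V^\star(o,d)$ for both $d=c$ and $d=c'$ would force $\ell(p,c)=\ell(p,c')=0$. That would make $p$ supported in $\cA^\star(o,c)\cap\cA^\star(o,c')=\varnothing$, which is impossible; equivalently, $\max_d\ell(p,d)\ge\varepsilon_{\rm dec}(o;c,c')>0$.

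I expect one subtlety to be the main obstacle. Conditioning on the history alone gives a posterior over classes rather than a point mass. The hypothesis, however, is stated per realized class: the controller attains $\bar V^\star(o_t,c_t)$ after every such history. I will therefore work conditionally on the latent class at time $t$, which is legitimate because the augmented process $(s_t,m_t)$ is Markov under the fixed controller, as noted in the Preliminaries. This lets me compare the continuation value with $\bar Q^\star(o,c_t,\cdot)$ pointwise in $c_t$. With this in place, injectivity gives $|\cM|\ge|\cC_o|$. Applying the bound at an observation attaining $\max_o|\cC_o|$ yields $|\cM|\ge m_{\rm exact}^\star$, which matches the upper bound of Corollary~\ref{cor:min_memory_main}.
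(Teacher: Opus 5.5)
Your proposal is correct and follows the paper's proof: the same pigeonhole on memory values at $o$, the same bound of the controller's continuation value by $\sum_a p(a)\bar Q^\star(o,c,a)$ once two classes share an action distribution, and decision separation to rule out a common zero-loss $p$. Your explicit conditioning on the latent class at time $t$ (via the augmented Markov process) is a slightly more careful treatment of a point the paper's proof handles implicitly by speaking of a history ``with quotient class $c$''.
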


\begin{proposition}[Three-layer nonabelian HCDP]
\label{prop:min_nonabelian_main}
With one observation, actions $a,b$, $n=3$, transports
$
\sigma_a=(12),
\qquad
\sigma_b=(123),
$
and, suppressing the fixed observation in $R$, reward vectors
$
(R(i,a),R(i,b))_{i=1}^3
=
((1,0),(0,1),(0,0)),
$
the stable quotient is discrete and its transports generate $S_3$. An
executable commutator sends class $1$ to class $2$; for
$0<\gamma<1/2$, their unique optimal actions are $a$ and $b$, respectively.
\end{proposition}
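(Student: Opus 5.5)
The plan is to verify the four claims in order: discreteness, generation of $S_3$, the commutator computation, and the optimal actions. Each reduces to a finite computation together with one crude value bound.

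\emph{Discreteness.} Starting from the reward partition $\Pi^{(0)}$ of Lemma~\ref{lem:stable_fp_main}, the three reward vectors $(1,0),(0,1),(0,0)$ are pairwise distinct. Hence $\Pi^{(0)}$ is already the discrete partition of $[3]$. Since $\Pi^{(t+1)}\preceq\Pi^{(t)}$, every iterate, and so $\Pi^\star$, is discrete. Thus $\cC_o\cong[3]$, $\tau_a=\sigma_a$ and $\tau_b=\sigma_b$ by Theorem~\ref{thm:markovization_main}.

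\emph{Generation of $S_3$.} The transposition $(12)$ and the $3$-cycle $(123)$ generate $S_3$, by the standard fact that a transposition and an $n$-cycle on adjacent points generate $S_n$. One can also check this directly: the subgroup contains elements of orders $2$ and $3$, so its order is divisible by $6$.

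\emph{Executable commutator.} Following Definition~\ref{def:reversible_loop_alphabet_main}, I take the single-edge loops $\ell_1^+=a$, $\ell_1^-=a$ (since $\sigma_a^{-1}=\sigma_a$), $\ell_2^+=b$ and $\ell_2^-=bb$ (since $\sigma_b^{-1}=\sigma_b^2$). For $u=x_1$, $v=x_2$ the commutator is $[u,v]=uvu^{-1}v^{-1}$. Its transport is $\sigma_a\circ\sigma_b\circ\sigma_a\circ\sigma_b^{-1}$, executed as the action word $b,b,a,b,a$. On class $1$ this gives $\sigma_b^{-1}(1)=3$, then $\sigma_a(3)=3$, then $\sigma_b(3)=1$, then $\sigma_a(1)=2$. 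So class $1$ is sent to class $2$, and $1$ is reachable at the start.

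\emph{Optimal actions.} I use the quotient Bellman equation $\bar Q^\star(i,x)=R(i,x)+\gamma\bar V^\star(\sigma_x(i))$, with the bound $0\le\bar V^\star\le 1/(1-\gamma)$ because rewards lie in $[0,1]$.
\begin{itemize}
\item For class $1$, both actions lead to the same successor, since $\sigma_a(1)=\sigma_b(1)=2$. Hence $\bar Q^\star(1,a)-\bar Q^\star(1,b)=1>0$, and $a$ is the unique optimal action for every $\gamma$.
\item For class $2$, the successors are $\sigma_a(2)=1$ and $\sigma_b(2)=3$. Therefore
\[
\bar Q^\star(2,b)-\bar Q^\star(2,a)=1+\gamma\bigl(\bar V^\star(3)-\bar V^\star(1)\bigr)\ge 1-\frac{\gamma}{1-\gamma}.
\]
This is strictly positive exactly when $\gamma<1/2$, so $b$ is the unique optimal action there.
\end{itemize}
The two optimal-action sets are disjoint, which is the decision separation required by Corollary~\ref{cor:commutator_barrier_main}.

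The only step that needs care is the class-$2$ comparison, where the future values differ. The crude bound above suffices, and the threshold $\gamma<1/2$ comes precisely from it. The remaining care is bookkeeping: keeping the rightmost-acts-first convention consistent when converting the algebraic commutator into an executable action sequence.
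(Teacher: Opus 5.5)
Your proof is correct and follows essentially the same route as the paper. The shared steps are the discrete reward partition, the order-divisibility argument for $S_3$, the same loop alphabet with the executable word $b,b,a,b,a$, and the same bound $1-\gamma/(1-\gamma)>0$ at class $2$. One minor point: your aside that class $1$ is reachable ``at the start'' is unjustified, since $\rho_0$ is unspecified. The paper instead reaches layer $1$ from any positive-mass initial layer by repeating $b$, which cycles all three layers. Reachability is needed only to apply Corollary~\ref{cor:commutator_barrier_main}, not for the proposition as stated.
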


Thus the obstruction is finite.

\begin{proposition}[Ordered stable-class tracking]
\label{prop:ordered_tracking_nonabelian_main}
Starting from known $c$, exact tracking maps $w$ to $\rho_o(w)(c)$ and
therefore distinguishes any $w,w'$ with different transported classes, even
when
$
\ab(w)=\ab(w').
$
It stores only the finite current class, not the complete word.
\end{proposition}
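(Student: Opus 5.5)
The plan is to prove the two claims of the statement separately. First, that the tracker computes $\rho_o(w)(c)$. Second, that this forces distinguishability and needs only finite storage.

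\emph{Tracking identity.} Write $w$ as a reduced or unreduced word $x_{j_1}^{\epsilon_1}\cdots x_{j_k}^{\epsilon_k}$. Because $\rho_o(uv)=\rho_o(u)\circ\rho_o(v)$, the loop for the rightmost letter is executed first. Executing $w$ therefore means traversing the concatenated directed closed walk $\omega_w=\ell_{j_1}^{\epsilon_1}\circ\cdots\circ\ell_{j_k}^{\epsilon_k}$, based at $o$. Each $\ell_j^{-}$ is an actual directed walk, since Definition~\ref{def:reversible_loop_alphabet_main} makes it executable.

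Starting from $\beta_0=\delta_c$, Proposition~\ref{prop:quotient_belief_main} keeps the belief degenerate. It gives the edgewise recursion $c_{t+1}=\tau_{o_t,a_t,o_{t+1}}(c_t)$. Induction on the number of edges, using the path-composition identity $\tau_{\omega_2\circ\omega_1}=\tau_{\omega_2}\circ\tau_{\omega_1}$ (Supplementary Corollary~\ref{cor:path_transport_descends_main}), shows that the tracked class after executing $\omega_w$ equals $\tau_{\omega_w}(c)$. This in turn equals $\tau_{\ell_{j_1}^{\epsilon_1}}\circ\cdots\circ\tau_{\ell_{j_k}^{\epsilon_k}}(c)=\rho_o(w)(c)$.

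The step needing care is well-definedness of $\rho_o$ on $F_r$ rather than on words. Free reduction inserts or deletes $x_jx_j^{-1}$, and the reversibility condition $\tau_{\ell_j^-}=\tau_{\ell_j^+}^{-1}$ makes this act trivially. So the word-level and group-level values agree. Lemma~\ref{lem:directed_quotient_holonomy_main} guarantees each $\tau_{\ell}$ is a bijection of $\cC_o$, so these inverses exist.

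\emph{Distinguishing and finiteness.} If $\rho_o(w)(c)\ne\rho_o(w')(c)$, the tracker's terminal memory states differ. This holds because the tracker stores exactly the class, for instance through an injective encoder $\eta_o$ as in Corollary~\ref{cor:min_memory_main}. No hypothesis on $\ab(w)$ enters, so the conclusion holds when $\ab(w)=\ab(w')$ as well. This contrasts with Theorem~\ref{thm:abelian_barrier_main}.

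Finally, the update $U(\eta_{o}(c),o,a,o')=\eta_{o'}(\tau_{o,a,o'}(c))$ depends only on the current symbol and the observed edge. The memory alphabet therefore has size $\max_o|\cC_o|<\infty$, independent of $|w|$, and no record of the loop word is kept.

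I expect the only real obstacle to be the bookkeeping of the composition order. The algebraic convention must match the temporal order of loop execution, including for the inverse loops.
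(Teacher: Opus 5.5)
Your proposal is correct and follows essentially the same route as the paper: exact edgewise propagation of the known class (the paper cites Lemma~\ref{lem:quotient_memory_update_main}, you cite the degenerate-belief case of Proposition~\ref{prop:quotient_belief_main}), composition in right-to-left execution order to obtain $\rho_o(w)(c)$, invariance under insertion or deletion of inverse pairs via $\tau_{\ell_j^-}=\tau_{\ell_j^+}^{-1}$, and an injective class encoder for distinguishability and finite storage. The only difference is cosmetic: you induct over the edges of the concatenated walk $\omega_w$, whereas the paper first treats a single letter and then inducts letter by letter.
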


The executable construction and count-additive specialization are in
Supplementary Appendix~\ref{app:nonabelian_details}.

\section{Experiments}
\label{sec:exp}

We evaluate four finite consequences of the theory in controlled HCDPs: exact stable-quotient recovery, concentration of diagnostic identification, failure of commutative count memory under nonabelian transport, and quotient-level control with finite memory. These experiments are intended as falsifiable implementation checks under known model assumptions rather than evidence that arbitrary POMDP benchmarks admit an exact finite quotient. Complete environment definitions, protocols, metrics, robustness sweeps, and hyperparameters are given in Appendix~\ref{app:exp_details}.

\paragraph{Environments and methods.}
\textbf{ChainCover} is an acyclic depth-$D$ construction in which three reward-relevant classes are revealed only at a terminal query, forcing stable refinement to propagate distinctions backward while merging a nuisance coordinate. \textbf{LoopGuess} contains two visible loop excursions with quotient transports $\alpha=(12)$ and $\beta=(123)$, which generate the nonabelian group $S_3$; in particular, same-count words such as $\alpha\beta$ and $\beta\alpha$ can induce different terminal classes. All control methods use the same tabular Q-learning backbone and differ only in their state representation: observation only (Obs-Q), commutative loop counts (Count-Q), the complete loop word (History-Q), the privileged raw latent state (Raw-Q), the true quotient class (Quotient-Q), or the class propagated by calibrated transports (HMRL-D). HMRL-D uses the simulator-provided calibrated diagnostic interface, completes transport calibration, and then restarts Q-learning, as required by Corollary~\ref{cor:backbone_transfer_main}.

\paragraph{Structural and diagnostic checks.}
For ChainCover with $D=6$ and nuisance size $b=8$, exhaustive refinement compresses $216$ raw states to the analytical $25$ quotient states and stabilizes at round $7$. The recovered quotient has zero reward inconsistency, zero successor-class inconsistency, and zero optimal-value gap, while all $24$ one-pair strict coarsenings violate exactness. Rejecting all tested one-pair merges provides a local irreducibility check beyond mere sufficiency and complements the analytical coarseness result. In the noisy-prototype diagnostic experiment, both class and edge-transport errors decrease with repeated probes and are zero in the reported trials by $m=8$ at the primary noise level $\sigma_{\mathrm{diag}}=0.5$; see Figure~\ref{fig:hmrl_main_validation}. These results separately verify quotient construction and the calibrated interface used by HMRL-D. Complete structural and diagnostic measurements are reported in Appendices~\ref{app:exact_audit}--\ref{app:transport_recovery_exp}.

\begin{figure}[t]
    \centering
    \includegraphics[width=0.485\linewidth]{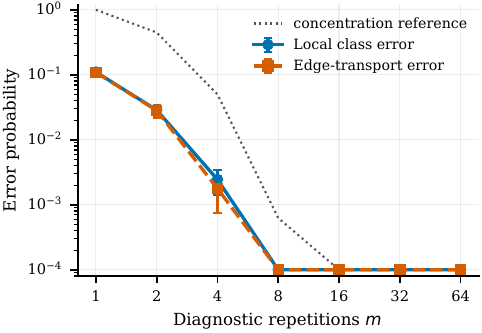}
    \hfill
    \includegraphics[width=0.485\linewidth]{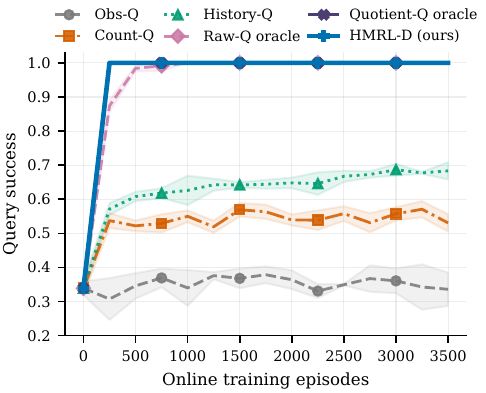}
    \caption{\textbf{Diagnostic recovery and online control.}
    Left: repeated probes reduce class and edge-transport errors at
    $\sigma_{\mathrm{diag}}=0.5$, with zero observed error by $m=8$.
    Right: after calibration and restart, HMRL-D matches the quotient-oracle
    learning curve, whereas less informative or uncompressed representations
    remain limited under the same online budget.
    shading denotes one standard deviation
    where visible.}
    \label{fig:hmrl_main_validation}
\end{figure}

\begin{figure}[t]
    \centering
    \includegraphics[width=0.485\linewidth]{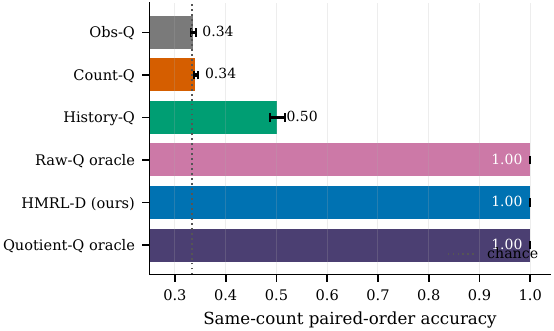}
    \hfill
    \includegraphics[width=0.485\linewidth]{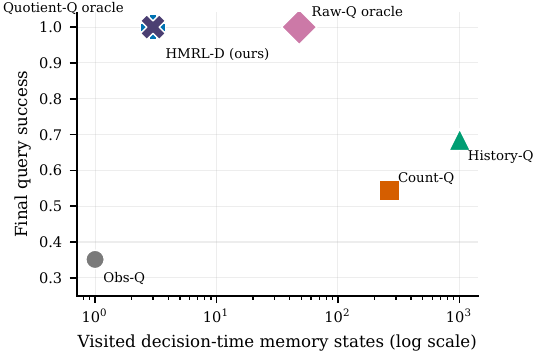}
    \caption{\textbf{Ordered-memory necessity and memory efficiency.}
    Left: on equal-count histories with different ordered transports,
    Count-Q remains near the $1/3$ chance level, while HMRL-D and the exact
    oracles attain perfect accuracy.
    Right: HMRL-D matches quotient-oracle success using three decision-time
    states, compared with $48$ for Raw-Q and more than $10^3$ visited states
    for History-Q.}
    \label{fig:hmrl_order_memory}
\end{figure}

\begin{table}[t]
\centering
\caption{\textbf{Primary LoopGuess comparison.}
Values are mean $\pm$ standard deviation.
Oracle rows are references; bold marks the best deployable result.}
\label{tab:loopguess_main}
\resizebox{\columnwidth}{!}{%
\begin{tabular}{lccc}
\toprule
Method
& Final success
& Paired-order accuracy
& Decision-time states \\
\midrule
Obs-Q
& 0.351 $\pm$ 0.055
& 0.336 $\pm$ 0.005
& 1.0 $\pm$ 0.0 \\
Count-Q
& 0.543 $\pm$ 0.014
& 0.342 $\pm$ 0.003
& 266.7 $\pm$ 6.5 \\
History-Q
& 0.686 $\pm$ 0.007
& 0.502 $\pm$ 0.014
& 1005.3 $\pm$ 16.6 \\
\midrule
Raw-Q oracle
& 1.000 $\pm$ 0.000
& 1.000 $\pm$ 0.000
& 48.0 $\pm$ 0.0 \\
\midrule
HMRL-D (ours)
& \textbf{1.000 $\pm$ 0.000}
& \textbf{1.000 $\pm$ 0.000}
& \textbf{3.0 $\pm$ 0.0} \\
Quotient-Q oracle
& 1.000 $\pm$ 0.000
& 1.000 $\pm$ 0.000
& 3.0 $\pm$ 0.0 \\
\bottomrule
\end{tabular}}
\end{table}

\paragraph{Control results.}
The primary LoopGuess setting uses nuisance multiplicity $b=16$ and query probability $p_{\mathrm{query}}=0.1$. HMRL-D matches Quotient-Q in final success, normalized learning-curve AUC ($0.976\pm0.001$), and paired-order accuracy while using only three decision-time class states. Raw-Q also reaches perfect final success but retains all $48$ class--nuisance combinations and has lower AUC ($0.966\pm0.002$). Count-Q improves ordinary success over Obs-Q but remains at chance on same-count order pairs, directly exhibiting the information loss predicted by the nonabelian barrier. History-Q retains order in principle, but its visited decision-time memory exceeds $10^3$ states and learns more slowly under the fixed budget. HMRL-D requires $216$ calibration interactions, reported separately from online control. Figures~\ref{fig:hmrl_main_validation}
and~\ref{fig:hmrl_order_memory}, together with
Table~\ref{tab:loopguess_main}, provide the primary evidence;
Appendix~\ref{app:exp_full_results} reports complete metrics and robustness
sweeps over nuisance multiplicity and expected history length.

\section{Conclusion}
\label{sec:conclusion}

We presented HMRL and identified, for holonomy-cover decision processes, the stable observation--class quotient as the minimal Markov sufficient statistic among exact observation-wise abstractions. This quotient preserves precisely the reward and transition distinctions required for Bellman control while discarding irrelevant latent differences. Under a resettable, locally calibrated diagnostic interface, HMRL identifies local classes and quotient transports up to observation-wise relabeling and applies standard finite-MDP RL to the resulting state. We further showed that commutative summaries such as loop counts fail when hidden transports are nonabelian. Experiments verified exact state compression, quotient recovery, and order-sensitive control: ChainCover was reduced from $216$ states to $25$ exact quotient states, while HMRL achieved $1.000$ success and perfect paired-order accuracy on LoopGuess. ChainCover and LoopGuess thus test structural exactness and order-sensitive control, respectively. Extending quotient discovery to noisy, continuous, and passive-data settings remains an important direction.

\nocite{*}
\bibliography{ref}

\clearpage
\appendix

\section{Positioning relative to prior work}
\label{app:positioning_table}

Table~\ref{tab:positioning} compares the scope of formal guarantees rather than empirical expressive power. A checkmark denotes a general guarantee under the method-specific assumptions cited in Section~\ref{sec:intro}; a parenthesized checkmark denotes a guarantee available only for particular variants or additional assumptions; $\times$ denotes no general guarantee; and ``--'' means that the property is not the method's primary target. Observation aliasing refers to distinct latent states sharing the same current observation while differing in rewards or future behavior.

\begin{table*}[t]
\centering
\scriptsize
\caption{Guarantee-level comparison of memory and state representations for control under partial observability.}
\label{tab:positioning}
\setlength{\tabcolsep}{3.0pt}
\renewcommand{\arraystretch}{1.10}
\begin{tabular}{@{}>{\raggedright\arraybackslash}p{0.19\textwidth}cccccc>{\raggedright\arraybackslash}p{0.19\textwidth}@{}}
\toprule
Approach
& \makecell{Obs.\\aliasing}
& \makecell{Exact\\Markov state}
& \makecell{Minimal\\finite state}
& \makecell{Order\\sensitive}
& \makecell{Unknown\\initial mode}
& \makecell{Structural\\recovery}
& Main requirement \\
\midrule
Belief-state planning
& $\checkmark$ & $\checkmark$ & $\times$ & $\checkmark$ & $\checkmark$ & $\times$ & known model and initial belief \\
\hdashline
Recurrent or Transformer memory
& $\checkmark$ & $\times$ & $\times$ & $\checkmark$ & $(\checkmark)$ & $\times$ & architecture and training data \\
\hdashline
Linear-recurrent or SSM memory
& $\checkmark$ & $\times$ & $\times$ & $\checkmark$ & $(\checkmark)$ & $\times$ & architecture and training data \\
\hdashline
Bisimulation or homomorphisms
& $\times$ & $\checkmark$ & $(\checkmark)$ & -- & -- & $(\checkmark)$ & an available Markov state \\
\hdashline
Rich-observation state discovery
& $\times$ & $\checkmark$ & $\checkmark$ & -- & -- & $\checkmark$ & decodable observations \\
\hdashline
PSRs or revealing POMDPs
& $\checkmark$ & $\checkmark$ & $\times$ & $\checkmark$ & $\checkmark$ & $\checkmark$ & rank or revealing conditions \\
\hdashline
Reward machines
& $(\checkmark)$ & $(\checkmark)$ & $\times$ & $\checkmark$ & $\times$ & $(\checkmark)$ & event-labeling interface \\
\hdashline
RDP learning
& $\checkmark$ & $\checkmark$ & $\checkmark$ & $\checkmark$ & $\times$ & $\checkmark$ & finite regularity and coverage \\
\hdashline
\textbf{HMRL (ours)}
& $\checkmark$ & $\checkmark$ & $\checkmark$ & $\checkmark$ & $\checkmark$ & $\checkmark$ & resettable calibrated diagnostics \\
\bottomrule
\end{tabular}
\end{table*}

\section{Deferred technical preliminaries}
\label{app:prelim_details}

We relate observable histories to latent-realizable support paths and formalize the right congruence induced by finite memory. We then treat reward-aware histories, deterministic and stochastic observation augmentations, and the Bellman equation for a fixed implementable controller, keeping controller evaluation separate from latent-state oracle optimization.

\begin{lemma}[Observable histories and realizable support paths]
\label{lem:history_support_path}
Let $\boldsymbol{\pi}$ be any history-dependent policy.

First, if an observable history
\[
H_t
=
(o_0,a_0,o_1,\ldots,a_{t-1},o_t)
\]
has positive probability under $\boldsymbol{\pi}$, then its associated visible path
\[
\omega(H_t)
=
(o_0\xrightarrow{a_0}o_1)
\cdots
(o_{t-1}\xrightarrow{a_{t-1}}o_t)
\]
is a $\rho_0$-realizable directed support path in $X$.

Conversely, suppose that $\omega(H_t)$ is $\rho_0$-realizable and that
\[
\pi_j(a_j\mid H_j)>0,
\qquad j=0,\ldots,t-1,
\]
along the corresponding observable prefixes. Then $H_t$ has positive probability under $\boldsymbol{\pi}$.

In particular, an arbitrary directed support path in $X$ need not be latent-realizable, because the edge-wise existential witnesses in the definition of $E$ need not be compatible with one another.
\end{lemma}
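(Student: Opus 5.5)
The plan is to write the probability of an observable history as an explicit sum over latent state sequences, and then read off both directions of the equivalence from positivity of that sum. I take \emph{$\rho_0$-realizable} to mean the following: there exist latent states $s_0,\ldots,s_t$ with $s_j\in\cS_{o_j}$, $\rho_0(s_0)>0$, and $P(s_{j+1}\mid s_j,a_j)>0$ for $j=0,\ldots,t-1$. In other words, a single compatible latent sequence witnesses every edge of $\omega(H_t)$ and starts in the support of $\rho_0$.

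First, I unroll the generative process. Because $o_j=h(s_j)$ is deterministic and $\pi_j$ depends only on the observable prefix $H_j$, the chain rule gives
\[
\bP^{\boldsymbol\pi}(H_t)
=
\Bigl(\prod_{j=0}^{t-1}\pi_j(a_j\mid H_j)\Bigr)
\sum_{\substack{s_0,\ldots,s_t\\ s_j\in\cS_{o_j}}}
\rho_0(s_0)\prod_{j=0}^{t-1}P(s_{j+1}\mid s_j,a_j).
\]
The policy factor comes out of the latent sum precisely because the controller does not see $s_j$. This is the one point that needs care in the bookkeeping. I would verify it by induction on $t$, conditioning on $H_j$ and on the latent prefix.

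Both directions then follow from the fact that every term in the display is nonnegative. For the forward direction, if $\bP^{\boldsymbol\pi}(H_t)>0$, then the latent sum is positive, so at least one term is positive. That term's sequence $(s_0,\ldots,s_t)$ is exactly a $\rho_0$-realizability witness. Each factor $P(s_{j+1}\mid s_j,a_j)>0$ also certifies that the corresponding edge $(o_j\xrightarrow{a_j}o_{j+1})$ lies in $E$, so $\omega(H_t)$ is a support path in $X$. For the converse, a witness sequence contributes a strictly positive term, and by hypothesis the policy factor is positive. Hence $\bP^{\boldsymbol\pi}(H_t)>0$.

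For the final remark I would give a small explicit counterexample. Take observations $o_0,o_1,o_2$ and a single action $a$. Let $\cS_{o_1}=\{u,u'\}$, and let $\cS_{o_0}=\{s\}$ and $\cS_{o_2}=\{v\}$ be singletons, with $\rho_0(s)=1$. Set $P(u\mid s,a)=1$ and $P(v\mid u',a)=1$, and let $u$ transition only back to $o_0$. Then $(o_0\xrightarrow{a}o_1)$ and $(o_1\xrightarrow{a}o_2)$ both belong to $E$, but their witnesses use different states of $\cS_{o_1}$. The concatenated path therefore has no compatible latent sequence, and by the forward direction the corresponding history has probability zero. I expect no real obstacle in this proof. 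The only subtle step is justifying the factorization, so I will state the conditional-independence step explicitly rather than leave it implicit.
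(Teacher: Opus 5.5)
Your proposal is correct and follows essentially the same route as the paper: it expands $\bP^{\boldsymbol\pi}(H_t)$ as a finite sum of nonnegative terms over latent sequences and reads both directions off the positivity of a single summand. The paper keeps $\pi_j(a_j\mid H_j)$ and the observation indicators inside the sum, which is equivalent to your factoring the policy term out, and its counterexample (two actions, with $p$ self-looping under $b$) differs only cosmetically from yours (one action, with $u$ returning to $o_0$), since both rest on incompatible edge witnesses in $\cS_{o_1}$.
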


\paragraph{Finite-memory history quotients.}
Recall the deterministic history encoder $F$ induced by a finite-memory controller in Section~\ref{sec:prelim}. The following result formalizes the recursive indistinguishability imposed by any finite memory state.

\begin{lemma}[Finite memory induces a right-congruent history quotient]
\label{lem:memory_right_congruence_main}
For histories $H,H'$ ending at the same observation $o$, define
$H\sim_{\cM}H'$ if and only if $F(H)=F(H')$. If
$H\sim_{\cM}H'$, then
\[
\pi_{\cM}(\cdot\mid o,F(H))
=
\pi_{\cM}(\cdot\mid o,F(H'))
\]
and, for every common observable extension $(a,o')$,
\[
H\cdot(a,o')
\sim_{\cM}
H'\cdot(a,o').
\]
Hence equivalence is preserved under every finite common observable continuation. For each terminal observation, the quotient has at most $|\cM|$ classes, and jointly over terminal observations it has at most $|\cO||\cM|$ classes.
\end{lemma}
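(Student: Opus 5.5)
The statement to prove is Lemma~\ref{lem:memory_right_congruence_main}. It follows directly from the recursive form of the encoder $F$ in Section~\ref{sec:prelim}. The plan has three parts: policy equality, closure under extensions, and counting of classes.

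\emph{Step 1: equal action laws.} Suppose $H\sim_{\cM}H'$, with both histories ending at $o$. Then $F(H)=F(H')=:m$. The controller's action law at the final time depends only on the pair (current observation, memory). Both histories give the same pair $(o,m)$, so
\[
\pi_{\cM}(\cdot\mid o,F(H))=\pi_{\cM}(\cdot\mid o,F(H')).
\]

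\emph{Step 2: one-step extension.} Take a common observable extension $(a,o')$. The recursion $F(H\cdot(a,o'))=U(F(H),o,a,o')$ gives
\[
F(H\cdot(a,o'))=U(m,o,a,o')=F(H'\cdot(a,o')).
\]
Both extended histories end at $o'$, so they are again $\sim_{\cM}$-equivalent. Here it matters that $H$ and $H'$ share the terminal observation $o$, because $U$ reads $o_t$. This is why the relation is only defined between histories with a common terminal observation.

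\emph{Step 3: arbitrary finite continuations.} Let the continuation be $(a_1,o_1,\ldots,a_k,o_k)$. Apply Step~2 repeatedly, inducting on $k$. The inductive hypothesis supplies equivalence at each intermediate prefix, and this is exactly the precondition Step~2 needs for the next step.

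\emph{Step 4: counting classes.} For a fixed terminal observation $o$, the map $H\mapsto F(H)\in\cM$ is constant on each equivalence class and separates distinct classes. This gives an injection from the classes into $\cM$, so there are at most $|\cM|$ classes. Taking the disjoint union over $o\in\cO$ gives at most $|\cO||\cM|$ classes overall.

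\emph{Possible obstacle: stochastic updates.} If the update were stochastic, one would argue with the law of the memory rather than its value. The statement fixes a deterministic $(\cM,\iota,U,\pi_{\cM})$, however, so nothing beyond the recursion itself is needed.
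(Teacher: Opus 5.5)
Your proposal is correct and matches the paper's proof step for step. Equal memory gives equal action laws, and the recursion $F(H\cdot(a,o'))=U(F(H),o,a,o')$ with a shared terminal observation gives one-step closure; induction then covers finite continuations, and injecting classes into $\cM$ per observation (then over $\cO$) gives the bounds. The only addition in the paper is an explicit check that $\sim_{\cM}$ is an equivalence relation, which your version gets implicitly as the kernel of $F$.
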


\paragraph{Reward-aware histories and transition outputs.}
The main text uses observation--action histories
\[
H_t
=
(o_0,a_0,o_1,\ldots,a_{t-1},o_t)
\]
as the controller's within-episode information. If the reward observed after action $a_t$ is also allowed to influence the next action, the appropriate history is
\[
H_t^Y
=
(o_0,a_0,Y_0,o_1,\ldots,a_{t-1},Y_{t-1},o_t).
\]
A reward-aware finite-memory controller therefore uses an update of the form
\[
U_Y:
\cM\times\cO\times\cA\times\mathcal Y\times\cO
\to\cM,
\]
with
\[
m_{t+1}
=
U_Y(m_t,o_t,a_t,Y_t,o_{t+1}),
\]
where $\mathcal Y$ is the reward-observation space.

In the deterministic-reward model,
\[
Y_t=r(s_t,a_t).
\]
It is tempting to define an augmented observation by
\[
\bar o_{t+1}=(o_{t+1},Y_t).
\]
However, $\bar o_{t+1}$ is generally not a deterministic function of $s_{t+1}$, because $Y_t$ depends on the preceding state--action pair $(s_t,a_t)$. Therefore this notation does not, by itself, define another deterministic-observation POMDP on the original latent state space.

One exact finite-state reformulation augments the latent state with the preceding state--action pair. Let
\[
\cR:=r(\cS\times\cA)
\]
be the finite reward range and let $\bot$ be a distinguished initial symbol. Define
\[
\bar{\cS}
=
\cS\times
\bigl(
\{\bot\}\cup(\cS\times\cA)
\bigr)
\]
and
\[
\bar{\cO}
=
\cO\times
\bigl(
\{\bot\}\cup\cR
\bigr).
\]
The initial augmented state is $(s_0,\bot)$ with distribution
\[
\bar\rho_0(s,\bot)=\rho_0(s).
\]
For an augmented state $(s,\eta)$ and action $a$, sample
\[
s'\sim P(\cdot\mid s,a)
\]
and set the next augmented state to
\[
(s',(s,a)).
\]
Equivalently,
\[
\bar P\bigl((s',(s,a))\mid(s,\eta),a\bigr)
=
P(s'\mid s,a),
\]
with zero probability assigned to every other next augmented state.
Define the deterministic observation map by
\[
\bar h(s,\bot)
=
(h(s),\bot)
\]
and
\[
\bar h(s',(s,a))
=
\bigl(h(s'),r(s,a)\bigr).
\]
At decision time $t\geq 1$, the augmented observation contains the current visible observation $o_t$ and the reward $r_{t-1}$ produced by the preceding transition. Thus this augmented POMDP generates exactly the same reward-aware information sequence as $H_t^Y$.

For stochastic reward observations, it is usually cleaner to use a transition-output kernel. Let
$
G(\mathrm{d}y\mid s,a,s')
$
denote the conditional law of the observed reward output after the transition from $s$ to $s'$ under action $a$. One step then consists of
$
s_{t+1}\sim P(\cdot\mid s_t,a_t),
\qquad
Y_t\sim G(\cdot\mid s_t,a_t,s_{t+1}),
\qquad
o_{t+1}=h(s_{t+1}).
$
The controller updates its memory from the complete observable transition
$
(o_t,a_t,Y_t,o_{t+1}).
$

The distinction between mean-reward abstraction and reward-observation abstraction is important. If reward samples are used only to estimate expected return, an exact quotient need only preserve the conditional mean reward. If reward samples are themselves used as online observations for subsequent control, then two states may be merged only if they induce the same relevant reward-output law. More generally, the quotient must preserve the joint conditional law of
$
(Y_t,o_{t+1},c_{t+1})
$
given the current quotient state and action. Equality of conditional means alone is insufficient when the realized reward can reveal additional hidden-state information.


\paragraph{Latent augmentation for deterministic observations.}
The next result records the analyst's Markov augmentation for a fixed finite-memory controller. It is an evaluation construction and does not grant the controller access to the latent state.

\begin{lemma}[Latent augmented Markov property]
\label{lem:memory_aug_mdp_main}
For a fixed controller $(\cM,\iota,U,\pi_{\cM})$, define an action-conditioned process on $\cS\times\cM$ by
$
\widetilde P((s',m')\mid(s,m),a)
=
P(s'\mid s,a)
\mathbf{1}
\left\{
m'=U(m,h(s),a,h(s'))
\right\},
$
with reward and initial distribution
$
\widetilde r((s,m),a)=r(s,a),
\qquad
\widetilde\rho_0(s,m)
=
\rho_0(s)\mathbf{1}\{m=\iota(h(s))\}.
$
Under
$
\widetilde\pi(a\mid s,m)
=
\pi_{\cM}(a\mid h(s),m),
$
the process $(s_t,m_t)$ is Markov and has exactly the same expected discounted return as the original finite-memory controller.
\end{lemma}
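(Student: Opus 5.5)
We prove Lemma~\ref{lem:memory_aug_mdp_main} directly: first check that $(s_t,m_t)$ is driven by the kernel $\widetilde P$ under $\widetilde\pi$, then show that the two processes have the same path law and hence the same return. The central observation is that the controller's memory is a deterministic function of the observable history, $m_t=F(H_t)$, where $F$ is defined recursively by $F(H_0)=\iota(o_0)$ and $F(H_t\cdot(a_t,o_{t+1}))=U(F(H_t),o_t,a_t,o_{t+1})$. Since $o_t=h(s_t)$ with deterministic emission, $m_t$ is a deterministic function of the latent trajectory $(s_0,a_0,\ldots,s_t)$.

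\emph{Step 1 (initial law).} Under the original controller, $s_0\sim\rho_0$ and $m_0=\iota(h(s_0))$. The joint law of $(s_0,m_0)$ is therefore $\rho_0(s)\mathbf 1\{m=\iota(h(s))\}=\widetilde\rho_0(s,m)$.

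\emph{Step 2 (one-step transition).} Let $\mathcal H_t=\sigma(s_0,m_0,a_0,\ldots,s_t,m_t)$. Under the original controller, $a_t\sim\pi_{\cM}(\cdot\mid o_t,m_t)=\widetilde\pi(\cdot\mid s_t,m_t)$, so the action depends on the past only through $(s_t,m_t)$. Given $\mathcal H_t$ and $a_t$, the POMDP dynamics give $s_{t+1}\sim P(\cdot\mid s_t,a_t)$, independent of the rest of the past. The memory update is deterministic: $m_{t+1}=U(m_t,h(s_t),a_t,h(s_{t+1}))$. Combining these,
\[
\bP\bigl((s_{t+1},m_{t+1})=(s',m')\mid\mathcal H_t,a_t\bigr)=\widetilde P\bigl((s',m')\mid(s_t,m_t),a_t\bigr).
\]
This is exactly the Markov property of the action-conditioned process on $\cS\times\cM$, and under $\widetilde\pi$ the pair process is a time-homogeneous Markov chain.

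\emph{Step 3 (equality of returns).} Steps~1--2 determine every finite-dimensional marginal of $(s_t,m_t,a_t)_{t\ge0}$ through the same product formula, namely $\widetilde\rho_0$ followed by alternating $\widetilde\pi$ and $\widetilde P$. By induction on $t$, the law of $(s_0,m_0,a_0,\ldots,s_t,m_t,a_t)$ under the original controller therefore coincides with the law under the augmented MDP with $\widetilde\pi$. The reward satisfies $\widetilde r((s,m),a)=r(s,a)$, so the expected rewards agree at each time: $\bE[\gamma^t r(s_t,a_t)]=\bE[\gamma^t\widetilde r((s_t,m_t),a_t)]$. Because $r$ is bounded and $\gamma<1$, the series converges absolutely, and dominated convergence (or Fubini) lets us exchange sum and expectation, which gives equal discounted returns.

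The main point requiring care is Step~2: we must confirm that the original controller's action really is conditionally independent of the latent past given $(s_t,m_t)$. This holds because $\pi_{\cM}$ reads only $(o_t,m_t)$ and $o_t=h(s_t)$. We must also use that the POMDP transition depends only on $(s_t,a_t)$, not on the memory, so that $P$ factors out cleanly. The remaining steps are routine bookkeeping. Finally, the construction is an evaluation device and gives the controller no access to $s_t$, since $\widetilde\pi$ uses $s$ only through $h(s)$.
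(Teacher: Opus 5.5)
Your proposal is correct and follows essentially the same route as the paper. Both arguments identify $\widetilde\rho_0$ as the initial law and derive the one-step conditional law of $(s_{t+1},m_{t+1})$ given the augmented past and action as $\widetilde P$. Both then use the chain-rule product formula to show that the original controller and the augmented chain under $\widetilde\pi$ induce identical finite-horizon trajectory laws, and pass to the infinite discounted return using bounded rewards and dominated convergence.
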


\paragraph{Stochastic emissions.}
A stochastic-observation POMDP replaces the deterministic map
\[
h:\cS\to\cO
\]
by an emission kernel
\[
Z(\cdot\mid s)\in\Delta(\cO)
\]
and is written as
\[
\cP
=
(\cS,\cO,\cA,P,r,Z,\gamma,\rho_0).
\]
The deterministic fiber $\cS_o$ is replaced by the support fiber
\[
\cS_o^Z
:=
\{s\in\cS:Z(o\mid s)>0\}.
\]
The observation support graph has edge set
$
E_Z
:=
\{
(o,a,o')\in\cO\times\cA\times\cO:
\exists\,s,s'\in\cS\text{ such that}
Z(o\mid s)>0,\;
P(s'\mid s,a)>0,\;
Z(o'\mid s')>0
\}.
$
As in the deterministic case, this graph records local visible support and may contain composable paths that admit no globally consistent latent realization.

There is one important change to the latent augmentation. Because the current observation $o_t$ is no longer determined by $s_t$, the Markov analysis state for a finite-memory controller is
$
(s_t,o_t,m_t),
$
rather than merely $(s_t,m_t)$. Given $(s,o,m)$ and action $a$, its action-conditioned transition kernel is
$
\widehat P
\bigl(
(s',o',m')
\mid
(s,o,m),a
\bigr)
=
P(s'\mid s,a)
Z(o'\mid s')
\mathbf{1}
\left\{
m'=U(m,o,a,o')
\right\}.
$
The corresponding initial distribution is
\[
\widehat\rho_0(s,o,m)
=
\rho_0(s)
Z(o\mid s)
\mathbf{1}
\left\{
m=\iota(o)
\right\}.
\]
Thus the conceptual separation remains unchanged: the analyst can construct a latent Markov augmentation, while the controller has access only to the observable coordinates $(o,m)$.

\paragraph{Bellman evaluation for a fixed finite-memory controller.}
We now record the Bellman equation corresponding to Lemma~\ref{lem:memory_aug_mdp_main}. The result evaluates a fixed implementable controller; it does not optimize over policies that observe the latent state.

\begin{lemma}[Bellman evaluation for a fixed finite-memory controller]
\label{lem:finite_memory_bellman_eval_app}
Fix a finite-memory controller
\[
(\cM,\iota,U,\pi_{\cM})
\]
in the deterministic-observation POMDP of Section~\ref{sec:prelim}. Define the operator $T_{\pi_{\cM},U}$ on bounded functions
\[
V:\cS\times\cM\to\mathbb{R}
\]
by
$
(T_{\pi_{\cM},U}V)(s,m)
=
\sum_{a\in\cA}
\pi_{\cM}(a\mid h(s),m)
\Biggl[
r(s,a)
+
\gamma
\sum_{s'\in\cS}
P(s'\mid s,a)
V\Bigl(
s',
U(m,h(s),a,h(s'))
\Bigr)
\Biggr].
$
Then $T_{\pi_{\cM},U}$ is a $\gamma$-contraction under the sup norm and has a unique fixed point
\[
V_{\pi_{\cM},U}.
\]
The expected discounted return from the original initial distribution is
\[
J(\pi_{\cM},U,\iota)
=
\sum_{s_0\in\cS}
\rho_0(s_0)
V_{\pi_{\cM},U}
\bigl(
s_0,\iota(h(s_0))
\bigr).
\]
\end{lemma}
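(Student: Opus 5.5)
The plan is to establish the contraction property by a direct sup-norm estimate, and then identify the fixed point with the value of the augmented Markov process of Lemma~\ref{lem:memory_aug_mdp_main}. Both steps are standard once $T_{\pi_{\cM},U}$ is recognized as the policy-evaluation operator of that augmented chain.

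First, $T_{\pi_{\cM},U}$ maps bounded functions to bounded functions. Since $\cS$, $\cA$ and $\cM$ are finite and $r$ is bounded, $\|T_{\pi_{\cM},U}V\|_\infty\le \|r\|_\infty+\gamma\|V\|_\infty$. For two bounded $V,W$, the reward terms cancel in the difference at each $(s,m)$, which leaves
\[
\bigl|(T_{\pi_{\cM},U}V-T_{\pi_{\cM},U}W)(s,m)\bigr|
\le
\gamma\sum_a\pi_{\cM}(a\mid h(s),m)\sum_{s'}P(s'\mid s,a)\,
\bigl|V(s',m'_{s,a,s'})-W(s',m'_{s,a,s'})\bigr|,
\]
where $m'_{s,a,s'}=U(m,h(s),a,h(s'))$. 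Each absolute difference is at most $\|V-W\|_\infty$, and the weights form a probability distribution, so the right side is at most $\gamma\|V-W\|_\infty$. The space of bounded functions on the finite set $\cS\times\cM$ is complete, so Banach's fixed-point theorem gives a unique fixed point $V_{\pi_{\cM},U}$.

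Second, I identify this fixed point with the value of the augmented chain. Consider $(s_t,m_t)$ under $\widetilde\pi$, $\widetilde P$ and $\widetilde r$ from Lemma~\ref{lem:memory_aug_mdp_main}, and define $\widetilde V(s,m)=\bE\bigl[\sum_{t\ge0}\gamma^t r(s_t,a_t)\mid s_0=s,m_0=m\bigr]$; the series converges absolutely because $r$ is bounded. I then condition on the first transition. The Markov property of $(s_t,m_t)$ from Lemma~\ref{lem:memory_aug_mdp_main}, together with time-homogeneity of the kernel $\widetilde P$ and of the policy $\widetilde\pi$, shows that the tail sum from time $1$, divided by $\gamma$, has conditional expectation $\widetilde V(s_1,m_1)$. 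Substituting the explicit forms of $\widetilde\pi$ and $\widetilde P$ shows that $\widetilde V=T_{\pi_{\cM},U}\widetilde V$, so uniqueness gives $\widetilde V=V_{\pi_{\cM},U}$.

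Finally, average over the initial law. Lemma~\ref{lem:memory_aug_mdp_main} states that the augmented process started from $\widetilde\rho_0$ has the same expected discounted return as the original controller. Because $\widetilde\rho_0(s,m)=\rho_0(s)\mathbf1\{m=\iota(h(s))\}$, that return equals $\sum_{s_0}\rho_0(s_0)V_{\pi_{\cM},U}(s_0,\iota(h(s_0)))$. The only delicate step is the one-step conditioning: it must use time-homogeneity of the augmented chain, so that the continuation value from $(s_1,m_1)$ equals $\widetilde V(s_1,m_1)$ and does not depend on the earlier history. That property is exactly what the Markov and stationarity claims of Lemma~\ref{lem:memory_aug_mdp_main} supply.
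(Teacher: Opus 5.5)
Your proposal is correct and follows the paper's proof in structure. Both arguments use the same sup-norm contraction estimate followed by Banach's theorem, identify the fixed point with the augmented-chain return of Lemma~\ref{lem:memory_aug_mdp_main}, and average over $\widetilde\rho_0$. The one difference is the identification step: you check directly that the infinite-horizon value satisfies $\widetilde V=T_{\pi_{\cM},U}\widetilde V$ and invoke uniqueness, while the paper shows $V_N=T_{\pi_{\cM},U}^N0$ for the finite-horizon values and passes to the limit with the contraction bound and a tail estimate, which needs the Markov property only over finitely many steps.
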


\begin{remark}[Why latent-state Bellman optimality is an oracle problem]
A pointwise Bellman optimality operator on $\cS\times\cM$ would choose
\[
\argmax_{a\in\cA}
Q((s,m),a)
\]
separately for each latent state $s$. Such an operator permits different actions at two latent states $s$ and $\tilde s$ satisfying
$
h(s)=h(\tilde s)
$
and having the same memory value $m$. The resulting policy observes the latent state and is therefore not, in general, implementable by a finite-memory controller whose action must be a function only of $(h(s),m)$.

For a fixed memory update $U$, admissible policy optimization remains coupled across all latent states with the same observable pair $(o,m)$. It is not equivalent to unrestricted statewise Bellman maximization on $\cS\times\cM$. The quotient theory in the main text resolves this obstruction by constructing an observation--memory state on which rewards and future class dynamics are independent of the hidden representative.
\end{remark}

The concentration tools used later to convert finite-sample diagnostic guarantees into eventual almost-sure correctness are recorded next.

\begin{definition}[$\sigma^2$-sub-Gaussian random variable]
A random variable $\xi$ is called $\sigma^2$-sub-Gaussian if
\[
\bE\!\left[\exp\!\bigl(\lambda(\xi-\bE[\xi])\bigr)\right]
\le
\exp(\lambda^2\sigma^2/2)
\qquad
\text{for all }\lambda\in\mathbb{R}.
\]
\end{definition}

\begin{lemma}[Empirical mean concentration]
\label{lem:subg_mean_app}
Let $\xi_1,\dots,\xi_m$ be i.i.d.\ $\sigma^2$-sub-Gaussian random variables with mean $\mu$. Then for any $\varepsilon>0$,
\[
\Pr\!\left(
\left|
\frac{1}{m}\sum_{i=1}^m \xi_i-\mu
\right|
\ge \varepsilon
\right)
\le
2\exp\!\left(-\frac{m\varepsilon^2}{2\sigma^2}\right).
\]
\end{lemma}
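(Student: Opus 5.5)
The statement is the standard sub-Gaussian tail bound for an empirical mean, Lemma~\ref{lem:subg_mean_app}. I would prove it with the Chernoff method applied separately to each tail, followed by a union bound.

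First center the variables. Set $\zeta_i=\xi_i-\mu$ and $S_m=\sum_{i=1}^m\zeta_i$. The event in question is $\{|S_m|\ge m\varepsilon\}$.

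\textbf{Moment generating function of the sum.} By independence and the sub-Gaussian definition, for every $\lambda\in\mathbb R$,
\[
\bE\!\left[e^{\lambda S_m}\right]
=\prod_{i=1}^m\bE\!\left[e^{\lambda\zeta_i}\right]
\le e^{m\lambda^2\sigma^2/2}.
\]

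\textbf{Upper tail.} For $\lambda>0$, Markov's inequality applied to $e^{\lambda S_m}$ gives
\[
\Pr(S_m\ge m\varepsilon)\le \exp\!\left(-\lambda m\varepsilon+\frac{m\lambda^2\sigma^2}{2}\right).
\]
Choosing $\lambda=\varepsilon/\sigma^2$ minimizes the exponent, which becomes $-m\varepsilon^2/(2\sigma^2)$.

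\textbf{Lower tail.} The variables $-\zeta_i$ satisfy the same sub-Gaussian condition, since the definition quantifies over all $\lambda\in\mathbb R$. Repeating the upper-tail argument for $-S_m$ gives the same bound for $\Pr(S_m\le -m\varepsilon)$.

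\textbf{Union bound.} Adding the two one-sided bounds yields the factor $2$ and completes the proof.

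There is no real obstacle. The only point that needs checking is that the sub-Gaussian definition is two-sided in $\lambda$, which is what lets the lower tail reuse the upper-tail argument. The i.i.d.\ assumption is stronger than required; independence alone is enough for the product step. Under the paper's conditional variant, where repetitions are conditionally independent given the restored checkpoint, the same argument goes through conditionally.
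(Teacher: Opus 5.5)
Your proof is correct and follows the paper's argument essentially step for step. Like the paper, you bound the moment generating function via independence, apply Markov's inequality to each tail with the optimized Chernoff parameter, obtain the lower tail by using the sub-Gaussian bound at $-\lambda$, and finish with a union bound. The only difference is cosmetic: you work with the centered sum $S_m$ and $\lambda=\varepsilon/\sigma^2$, whereas the paper works with the empirical mean and $\lambda_\star=m\varepsilon/\sigma^2$, which is the same computation after rescaling.
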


\begin{corollary}[Coordinate-wise vector concentration]
\label{cor:vector_subg_mean_app}
Let $X_1,\dots,X_m\in\mathbb{R}^d$ be i.i.d.\ random vectors with mean $\mu\in\mathbb{R}^d$, and assume that each coordinate $(X_i)_j$ is $\sigma^2$-sub-Gaussian. Then for every $\varepsilon>0$,
\[
\Pr\!\left(
\left\|
\frac{1}{m}\sum_{i=1}^m X_i-\mu
\right\|_\infty
\ge \varepsilon
\right)
\le
2d\exp\!\left(-\frac{m\varepsilon^2}{2\sigma^2}\right).
\]
\end{corollary}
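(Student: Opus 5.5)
The plan is to apply Lemma~\ref{lem:subg_mean_app} to each coordinate separately and then combine the $d$ coordinates with a union bound. The $\ell_\infty$ norm is a maximum over coordinates, so
\[
\Bigl\{\bigl\|\tfrac1m\textstyle\sum_{i=1}^m X_i-\mu\bigr\|_\infty\ge\varepsilon\Bigr\}
=
\bigcup_{j=1}^d
\Bigl\{\bigl|\tfrac1m\textstyle\sum_{i=1}^m (X_i)_j-\mu_j\bigr|\ge\varepsilon\Bigr\}.
\]
This is an exact set identity, and it is the only structural fact the argument needs.

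For a fixed coordinate $j$, the scalars $(X_1)_j,\dots,(X_m)_j$ are i.i.d.: each is a fixed measurable function of the i.i.d.\ vectors $X_i$. Each has mean $\mu_j$ and is $\sigma^2$-sub-Gaussian by hypothesis. Lemma~\ref{lem:subg_mean_app} therefore gives
\[
\Pr\bigl(|\tfrac1m\textstyle\sum_i (X_i)_j-\mu_j|\ge\varepsilon\bigr)\le 2\exp\bigl(-m\varepsilon^2/(2\sigma^2)\bigr).
\]
If one wants to see where the scalar bound comes from, it follows from a Chernoff argument: the moment generating function of the average factorizes, giving the bound $\exp(\lambda^2\sigma^2/(2m))$; one optimizes at $\lambda=m\varepsilon/\sigma^2$ for each tail and then adds the two tails.

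Summing the scalar bound over the $d$ events in the union gives $2d\exp(-m\varepsilon^2/(2\sigma^2))$, which is the claimed inequality. The coordinates are not assumed to be independent of one another, but that does not matter here, because the union bound requires no independence across coordinates.

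The argument has no substantive obstacle. The only step that needs care is confirming that the coordinate sequences inherit the i.i.d.\ property from the vectors, so that Lemma~\ref{lem:subg_mean_app} applies as stated.
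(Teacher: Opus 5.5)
Your proposal is correct and follows essentially the same route as the paper's proof. You project onto each coordinate to inherit the i.i.d. sub-Gaussian property, apply Lemma~\ref{lem:subg_mean_app} per coordinate, write the $\ell_\infty$ event as a union of coordinate events, and finish with a union bound that needs no independence across coordinates.
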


\begin{corollary}[Finite-family uniform concentration]
\label{cor:finite_family_union_app}
Let $\{\hat\mu_t^\alpha:\alpha\in\cI_t\}$ be a finite family of empirical estimators indexed by a finite set $\cI_t$. Suppose that for each fixed $\alpha\in\cI_t$,
\[
\Pr\!\left(
\|\hat\mu_t^\alpha-\mu^\alpha\|_\infty\ge \varepsilon_t
\right)
\le \delta_t.
\]
Then
\[
\Pr\!\left(
\exists\,\alpha\in\cI_t:
\|\hat\mu_t^\alpha-\mu^\alpha\|_\infty\ge \varepsilon_t
\right)
\le
|\cI_t|\delta_t.
\]
In particular, if $\sum_{t\ge 1}|\cI_t|\delta_t<\infty$, then the corresponding uniform failure event occurs only finitely often almost surely.
\end{corollary}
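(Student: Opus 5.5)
The statement to prove is Corollary~\ref{cor:finite_family_union_app}. It is a direct union bound followed by Borel--Cantelli.

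\emph{First step: the union bound.} Fix $t$ and write
\[
B_t^\alpha=\{\|\hat\mu_t^\alpha-\mu^\alpha\|_\infty\ge\varepsilon_t\}.
\]
The uniform failure event is $B_t=\bigcup_{\alpha\in\cI_t}B_t^\alpha$, a finite union because $\cI_t$ is finite. Countable subadditivity of $\Pr$ together with the per-index hypothesis $\Pr(B_t^\alpha)\le\delta_t$ gives
\[
\Pr(B_t)\le\sum_{\alpha\in\cI_t}\Pr(B_t^\alpha)\le|\cI_t|\delta_t .
\]
No independence among the estimators is needed, since subadditivity holds for arbitrary events. In applications the per-index bound comes from Corollary~\ref{cor:vector_subg_mean_app}, but that is not required for this corollary.

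\emph{Second step: the almost-sure claim.} Suppose $\sum_{t\ge1}|\cI_t|\delta_t<\infty$. Then $\sum_t\Pr(B_t)<\infty$. The first Borel--Cantelli lemma, which needs no independence across $t$, gives
\[
\Pr(\limsup_t B_t)=0 .
\]
Hence almost surely only finitely many $B_t$ occur, i.e.\ there is an almost surely finite random time after which every estimator in the family is uniformly $\varepsilon_t$-accurate.

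\emph{Where care is needed.} There is no real obstacle. The only points to check are:
\begin{itemize}
\item the index sets $\cI_t$ and the tolerances $\varepsilon_t$ must be deterministic, or at least fixed before sampling, so that the per-index bound applies to each fixed $\alpha$;
\item the events $B_t$ must be measurable;
\item the Borel--Cantelli step must be stated for the non-independent version.
\end{itemize}
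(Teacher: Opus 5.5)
Your proposal is correct and follows the paper's proof essentially step for step: the paper also writes the uniform failure event as the finite union of the per-index failure events, bounds it by $|\cI_t|\delta_t$ via finite subadditivity, and then applies the first Borel--Cantelli lemma (its Lemma~\ref{lem:bc_wrapper_main}), noting as you do that no independence is needed. The paper additionally treats an empty index set separately and writes out the complement of the limsup event explicitly. Your argument already covers both points: the empty sum is zero, and your limsup conclusion is the same statement.
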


\begin{lemma}[Summable failure implies eventual correctness]
\label{lem:bc_wrapper_main}
Let $(E_t)_{t\ge 1}$ be a sequence of events. If
\[
\sum_{t\ge 1}\Pr(E_t)<\infty,
\]
then
\[
\Pr(E_t\ \mathrm{i.o.})=0.
\]
In particular, if at round $t$ a finite identification procedure fails with probability at most $\delta_t$ and $\sum_t \delta_t<\infty$, then only finitely many rounds are incorrect almost surely.
\end{lemma}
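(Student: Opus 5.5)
This is the first Borel--Cantelli lemma. The plan is to bound the tail union of the events by a tail sum and let the tail go to zero; the second sentence then follows by taking the events to be the failures of the finite identification procedure.

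\textbf{Step 1: rewrite the limsup event.} Write
\[
\{E_t\ \mathrm{i.o.}\}=\bigcap_{N\ge1}\bigcup_{t\ge N}E_t .
\]
Then for every $N$,
\[
\Pr(E_t\ \mathrm{i.o.})\le \Pr\Bigl(\bigcup_{t\ge N}E_t\Bigr).
\]

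\textbf{Step 2: union bound.} By countable subadditivity,
\[
\Pr\Bigl(\bigcup_{t\ge N}E_t\Bigr)\le\sum_{t\ge N}\Pr(E_t).
\]
This is the tail of a convergent series, so it tends to $0$ as $N\to\infty$. Hence $\Pr(E_t\ \mathrm{i.o.})=0$.

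\textbf{Step 3: the ``in particular'' clause.} Let $E_t$ be the event that the identification procedure at round $t$ fails. Then $\Pr(E_t)\le\delta_t$ with $\sum_t\delta_t<\infty$, so Step 2 applies. Almost surely $\omega\notin\limsup_t E_t$. For such $\omega$ there is an $N(\omega)<\infty$ with $\omega\notin E_t$ for all $t\ge N(\omega)$. So only finitely many rounds are incorrect, and the last incorrect round is almost surely finite.

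No step is a genuine obstacle. The only point needing care is that no independence among the $E_t$ is required: the union bound in Step 2 holds for arbitrary events. This matters for the later applications, where the diagnostic rounds may be adaptively dependent.
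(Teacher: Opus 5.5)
Your proposal is correct and follows essentially the same route as the paper's proof: you write the limsup as an intersection of tail unions, bound each tail union by the tail of the convergent series via countable subadditivity, let $N\to\infty$, and then apply the result to the failure events using $\Pr(E_t)\le\delta_t$ with $\sum_t\delta_t<\infty$. The paper additionally spells out the De Morgan complement $\bigcup_N\bigcap_{t\ge N}E_t^{\mathsf c}$ to state eventual correctness, which your Step~3 covers pointwise, and like you it stresses that no independence among the events is needed.
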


\subsection{Finite-sample and eventual diagnostic consequences}
\label{app:local_diagnostic_consequences}

The exponential classification bound in Theorem~\ref{thm:local_class_inf_main} yields the following fixed-confidence and almost-sure consequences.

\begin{corollary}[Diagnostic sample complexity]
\label{cor:local_sample_complexity_main}
For any non-singleton fiber $\cC_o$ and any $\delta\in(0,1)$, it suffices that
\[
m
\geq
\frac{128\nu^2d_o}{\Delta^2}
\log\!\left(
\frac{2d_o}{\delta}
\right)
\]
to guarantee
\[
\Pr\!\left(
\lambda_o(\widehat c)\neq[i]_o
\middle|\,
s_{\mathrm{start}}=(o,i)
\right)
\leq\delta.
\]
For a singleton fiber, the error probability is zero without diagnostic sampling.
\end{corollary}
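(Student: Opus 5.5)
The statement to prove is Corollary~\ref{cor:local_sample_complexity_main}. The plan is to invert the exponential bound of Theorem~\ref{thm:local_class_inf_main}, treating the singleton case separately.

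For $o\in\cO_+$, the theorem gives, conditional on $s_{\rm start}=(o,i)$,
\[
\bP\bigl(\lambda_o(\widehat c)\ne[i]_o\mid s_{\rm start}=(o,i)\bigr)\le 2d_o\exp\!\left(-\frac{m\Delta^2}{128\nu^2d_o}\right).
\]
It therefore suffices to make the right-hand side at most $\delta$. Since the bound is decreasing in $m$, this holds exactly when
\[
\frac{m\Delta^2}{128\nu^2 d_o}\ge\log\frac{2d_o}{\delta},
\]
that is, when $m\ge \frac{128\nu^2 d_o}{\Delta^2}\log\frac{2d_o}{\delta}$. For $\delta\in(0,1)$ and $d_o\ge1$, the logarithm is positive, so the threshold is a genuine positive requirement. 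Any integer $m$ at or above the threshold then satisfies the inequality by monotonicity.

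The only point needing care is that the constants in the corollary must match those of the theorem: the same $\nu^2$, the same $\Delta$ from Definition~\ref{def:diag_test_main}, and the same per-fiber probe count $d_o$. We also keep the standing assumptions of Theorem~\ref{thm:local_class_inf_main}, including the $\Delta/8$ prototype accuracy, which is where the factor $128$ originates. We do not rederive the theorem; we invoke it as stated.

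For a singleton fiber, with $o\notin\cO_+$ so that $|\cC_o|=1$, Definition~\ref{def:diag_test_main} assigns the unique label without probes. Moreover, $\lambda_o$ is a bijection between one-point sets, so $\lambda_o(\widehat c)=[i]_o$ deterministically and the error probability is $0\le\delta$. This step is immediate, and no part of the argument presents a real obstacle.
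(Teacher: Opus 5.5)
Your proposal is correct and follows essentially the paper's own argument: you invert the exponential bound of Theorem~\ref{thm:local_class_inf_main} by taking logarithms, and you handle the singleton fiber through the one-point bijection $\lambda_o$ of Assumption~\ref{ass:local_proto_main}. The paper adds only the remark that $m=\bigl\lceil \tfrac{128\nu^2 d_o}{\Delta^2}\log(2d_o/\delta)\bigr\rceil$ is an admissible integer choice.
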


\begin{corollary}[Summable schedules imply eventual local correctness]
\label{cor:eventual_local_class_main}
Fix a non-singleton fiber $o\in\cO_+$. At successive classification events
$k=2,3,\ldots$ for $o$, suppose that, for some $\eta>0$,
\[
m_{o,k}
\geq
\frac{128\nu^2d_o}{\Delta^2}
\bigl[
(1+\eta)\log k+\log(2d_o)
\bigr].
\]
Then
\[
\Pr\!\left(
\lambda_o(\widehat c_{o,k})\neq c_{o,k}
\right)
\leq
k^{-(1+\eta)}.
\]
Hence the error probabilities are summable, so the first Borel--Cantelli lemma implies that almost surely only finitely many classification errors occur at $o$. If every required observation follows such a schedule, then the finiteness of $\cO$ implies almost surely only finitely many classification errors system-wide.
\end{corollary}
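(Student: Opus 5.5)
The plan is to derive Corollary~\ref{cor:eventual_local_class_main} directly from Corollary~\ref{cor:local_sample_complexity_main}, or equivalently from Theorem~\ref{thm:local_class_inf_main}, and then apply Lemma~\ref{lem:bc_wrapper_main}.

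First, fix $o\in\cO_+$ and a classification event $k\ge2$. Set $\delta_k:=k^{-(1+\eta)}\in(0,1)$. Then
\[
\log\!\left(\frac{2d_o}{\delta_k}\right)=(1+\eta)\log k+\log(2d_o),
\]
so the assumed lower bound on $m_{o,k}$ is exactly the sufficient condition of Corollary~\ref{cor:local_sample_complexity_main} with $\delta=\delta_k$. To be explicit, substitute this into the bound of Theorem~\ref{thm:local_class_inf_main}. This gives
\[
2d_o\exp\!\left(-\frac{m_{o,k}\Delta^2}{128\nu^2 d_o}\right)\le 2d_o\exp\bigl(-(1+\eta)\log k-\log(2d_o)\bigr)=k^{-(1+\eta)}.
\]

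The one point needing care is that Theorem~\ref{thm:local_class_inf_main} is conditional on the restored start state $s_{\rm start}=(o,i)$. That latent state is random and depends on the past. I would handle this by noting that the bound is uniform in $i$, and that the $k$-th diagnostic uses fresh, conditionally independent repetitions given the checkpoint, as in Assumption~\ref{ass:diag_repeat_main}. Averaging the conditional bound over the law of the start state, including whatever history precedes it, then yields the unconditional bound $\Pr(\lambda_o(\widehat c_{o,k})\ne c_{o,k})\le k^{-(1+\eta)}$. This uniformity-and-averaging step is the only real obstacle. I would state explicitly that the prototypes and gauges $\lambda_o$ are fixed, so the bound does not depend on $k$ except through $m_{o,k}$.

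Finally, $\sum_{k\ge2}k^{-(1+\eta)}<\infty$ because $\eta>0$. By Lemma~\ref{lem:bc_wrapper_main}, almost surely only finitely many errors occur at $o$. Singleton fibers contribute no errors. Since $\cO$ is finite, a finite union of null sets is null, so almost surely only finitely many classification errors occur system-wide.
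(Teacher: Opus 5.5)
Your proposal is correct and follows essentially the same route as the paper. The paper also applies Theorem~\ref{thm:local_class_inf_main} conditionally on the start layer $I_{o,k}=i$, uses the uniformity of that bound in $i$ to average it out by total probability, reduces the schedule algebraically to $k^{-(1+\eta)}$, and concludes with $p$-series summability, the first Borel--Cantelli lemma, and a finite union over $\cO$; your extra remark about conditioning on the preceding history is a harmless strengthening of the paper's averaging over $I_{o,k}$ alone.
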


\section{Deferred structural and identification statements}
\label{app:deferred_structural}

This appendix states the complete technical conditions and secondary results
summarized in Sections~\ref{sec:model} and~\ref{sec:ident}.

\subsection{Stable-quotient details}

\begin{assumption}[Bounded mean rewards and reward noise]
\label{ass:hcdp_noise_main}
There exist constants $r_{\max}>0$ and $\sigma_Y>0$ such that
$|R(o,i,a)|\leq r_{\max}$ for all
$(o,i,a)\in\cO\times[n]\times\cA$.
For statistical identification, the observed reward is
$Y_t=R(o_t,i_t,a_t)+\xi_t$.
Let $\mathcal G_t^-$ be the sigma-algebra generated by the latent and
observable histories, every algorithmic state measurable from that history,
and the current action, immediately before observing $(Y_t,o_{t+1})$.
Then, for every $\lambda\in\mathbb{R}$,
\[
\bE[\xi_t\mid\mathcal G_t^-]=0,
\qquad
\bE\!\left[
e^{\lambda\xi_t}\mid\mathcal G_t^-
\right]
\leq
\exp\!\left(
\frac{\lambda^2\sigma_Y^2}{2}
\right).
\]
\end{assumption}

\begin{corollary}[Path transport descends to stable classes]
\label{cor:path_transport_descends_main}
For a directed path $\omega=e_1\cdots e_k$ from $o$ to $o'$, define
$\tau_\omega:=\tau_{e_k}\circ\cdots\circ\tau_{e_1}$.
Then
\[
\tau_\omega([i]_o)
=
[\sigma_\omega(i)]_{o'}
\]
for every $i\in[n]$.
\end{corollary}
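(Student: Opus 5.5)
The proof is an induction on the path length $k$, using the one-step descent in Theorem~\ref{thm:markovization_main} as the inductive step. That theorem says the edge map $\tau_e([i]_{\src(e)})=[\sigma_e(i)]_{\tgt(e)}$ is well defined, i.e.\ independent of the representative. This in turn rests on stability $\Pi^\star=\mathcal T(\Pi^\star)$ from Lemma~\ref{lem:stable_fp_main}, together with the exactness criterion of Proposition~\ref{prop:stable_exact_equiv_main}. The compositions $\tau_\omega$ and $\sigma_\omega$ are both taken in the same rightmost-first order, as in Definition~\ref{def:directed_transport_main}, so the identity should follow by peeling off the last edge.

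\textbf{Base case.} For $k=0$ and the empty path $\varnothing_o$, we have $\tau_{\varnothing_o}=\operatorname{id}_{\cC_o}$ and $\sigma_{\varnothing_o}=\operatorname{id}_{[n]}$. The claim then reads $[i]_o=[i]_o$.

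\textbf{Inductive step.} Write $\omega=e_k\circ\omega'$, where $\omega'=e_1\cdots e_{k-1}$ runs from $o$ to an intermediate observation $o_{k-1}$, and $e_k=(o_{k-1}\xrightarrow{a_{k-1}}o')$. By definition $\tau_\omega=\tau_{e_k}\circ\tau_{\omega'}$, and by Definition~\ref{def:directed_transport_main} $\sigma_\omega=\sigma_{e_k}\circ\sigma_{\omega'}$. The induction hypothesis gives $\tau_{\omega'}([i]_o)=[\sigma_{\omega'}(i)]_{o_{k-1}}$. Apply the one-step identity from Theorem~\ref{thm:markovization_main} with representative $j=\sigma_{\omega'}(i)$; this is legitimate because each edge $e_k$ of a directed path is feasible, i.e.\ $P_O(o'\mid o_{k-1},a_{k-1})>0$. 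We obtain
\[
\tau_{e_k}\bigl([\sigma_{\omega'}(i)]_{o_{k-1}}\bigr)=[\sigma_{e_k}(\sigma_{\omega'}(i))]_{o'}=[\sigma_\omega(i)]_{o'}.
\]
Combining the two identities closes the induction.

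The only genuine content lies in the representative independence of $\tau_e$, which is already packaged in Theorem~\ref{thm:markovization_main}. If one wants this self-contained, it is the main step: suppose $i\equiv_{o_{k-1}}^{\Pi^\star}j$. Stability gives $i\equiv_{o_{k-1}}^{\mathcal T(\Pi^\star)}j$, and the definition of $\mathcal T$ then gives $\sigma_e(i)\equiv_{o'}^{\Pi^\star}\sigma_e(j)$ for every feasible $e$. The remaining work is bookkeeping: keep the composition order consistent, and check that every edge of a directed path is feasible so the one-step identity applies.
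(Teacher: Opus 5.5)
Your proof is correct and follows essentially the same route as the paper: an induction over path prefixes, with the empty prefix as the base case and the one-edge descent identity $\tau_e([r]_{\src(e)})=[\sigma_e(r)]_{\tgt(e)}$ from Theorem~\ref{thm:markovization_main}, applied to the representative $r=\sigma_{\omega'}(i)$, closing the inductive step. Your appeal to Proposition~\ref{prop:stable_exact_equiv_main} is unnecessary, since representative independence follows directly from $\Pi^\star=\mathcal T(\Pi^\star)$, as your own self-contained argument shows.
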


\begin{lemma}[Exact quotient-class memory update]
\label{lem:quotient_memory_update_main}
If the current class $c_t\in\cC_{o_t}$ is known, then the update
\[
c_{t+1}:=\tau_{o_t,a_t,o_{t+1}}(c_t)
\]
makes $(o_t,c_t)$ evolve according to the quotient MDP of
Theorem~\ref{thm:markovization_main}. Hence a correctly initialized or
inferred class is a recursively maintainable finite memory state.
\end{lemma}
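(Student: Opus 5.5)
The statement to prove is Lemma~\ref{lem:quotient_memory_update_main}. The plan is a direct induction on $t$ that combines the canonical path lifting of Lemma~\ref{lem:hcdp_path_lifting_main} with the representative independence established in Theorem~\ref{thm:markovization_main}. The invariant is
\[
c_t=[i_t]_{o_t},
\]
the canonical stable class of the realized latent layer. At $t=0$ it holds by hypothesis, since the class is assumed known (correctly initialized or inferred).

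For the inductive step, fix a feasible realized transition $(o_t\xrightarrow{a_t}o_{t+1})$. By Definition~\ref{def:hcdp_main}, the next layer is deterministically
\[
i_{t+1}=\sigma_{o_t,a_t,o_{t+1}}(i_t).
\]
Theorem~\ref{thm:markovization_main} says that $\tau_{o,a,o'}([i]_o)=[\sigma_{o,a,o'}(i)]_{o'}$ does not depend on the representative. Hence
\[
c_{t+1}=\tau_{o_t,a_t,o_{t+1}}(c_t)=[\sigma_{o_t,a_t,o_{t+1}}(i_t)]_{o_{t+1}}=[i_{t+1}]_{o_{t+1}},
\]
so the invariant is preserved. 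The update uses only the current memory and the observable triple $(o_t,a_t,o_{t+1})$, so it is a valid recursive update $U$ in the sense of Section~\ref{sec:prelim}. Its values lie in the finite set $\cC_{o_{t+1}}$.

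Next I verify the law of the tracked pair. Condition on the full past together with $a_t$, on the event that the invariant holds. The observation $o_{t+1}$ is drawn from $P_O(\cdot\mid o_t,a_t)$, independently of the layer. Given $o_{t+1}$, the new class is the deterministic value $\tau_{o_t,a_t,o_{t+1}}(c_t)$. Therefore
\[
\bP\bigl((o_{t+1},c_{t+1})=(o',c')\mid\text{past},a_t\bigr)=P_O(o'\mid o_t,a_t)\,\mathbf1\{c'=\tau_{o_t,a_t,o'}(c_t)\}=\bar P\bigl((o',c')\mid(o_t,c_t),a_t\bigr).
\]
The expected reward satisfies $R(o_t,i_t,a_t)=\bar R(o_t,c_t,a_t)$, again by Theorem~\ref{thm:markovization_main}. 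The initial pair has law $\bar\rho_0$ by construction. Since the conditional law depends on the past only through $(o_t,c_t,a_t)$, the process $(o_t,c_t)$ is a controlled Markov chain with exactly the quotient kernel and reward. This gives the claim that a correctly initialized class is a recursively maintainable finite memory state.

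No step here is a real obstacle; the work is bookkeeping. The point needing most care is that the update is well defined on classes rather than on layers, and this is exactly the representative-independence part of Theorem~\ref{thm:markovization_main}. A second point is that conditioning on the observable history does not perturb the deterministic transport, because the layer does not influence $P_O$. If one wants the statement under an arbitrary history-dependent policy, the same computation goes through: $a_t$ is measurable with respect to the conditioning sigma-field and never enters the layer dynamics beyond selecting the edge.
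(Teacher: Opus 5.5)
Your proof is correct and follows essentially the same route as the paper: you induct on the invariant $c_t=[i_t]_{o_t}$ using the deterministic layer update and the representative independence of $\tau$, then identify the one-step law and mean reward with $\bar P$ and $\bar R$. The paper also remarks that when the class is inferred at a later time $t_0$ the argument runs conditionally from $t_0$, so the starting law need not be $\bar\rho_0$. It likewise takes the finite memory alphabet to be the tagged set $\bar\cS$ with a totally defined update, which is the care needed to make your ``values lie in $\cC_{o_{t+1}}$'' remark into a fixed finite alphabet.
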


\subsection{Diagnostic assumptions and quotient descriptions}
\label{app:deferred_identification}

\begin{definition}[Stable quotient description]
\label{def:quotient_description_main}
The stable quotient description is
\[
\mathfrak Q
=
\bigl(
\{\cC_o\}_{o\in\cO},
P_O,
\bar R,
\{\tau_e\}_{e\in E},
\bar\rho_0
\bigr),
\]
where $\cC_o=[n]/\Pi_o^\star$, $P_O$ is the observation kernel,
$\bar R(o,c,a)$ is the class-level mean reward,
$\tau_e:\cC_o\to\cC_{o'}$ for every feasible edge
$e=(o\xrightarrow{a}o')$, and
\[
\bar\rho_0(o,c)=\sum_{i:[i]_o=c}\rho_0(o,i).
\]
\end{definition}

\begin{assumption}[Class-stable diagnostic means]
\label{ass:diag_class_stable_main}
For every $o\in\cO$, $u\in U_{\mathrm{probe}}(o)$, and $i,j\in[n]$ with
$[i]_o=[j]_o$,
\[
G(o,i;u)=G(o,j;u).
\]
\end{assumption}

\begin{definition}[Class fingerprint]
\label{def:class_fp_main}
For
$U_{\mathrm{probe}}(o)=\{u_1,\ldots,u_{d_o}\}$,
define
\[
\phi_o(c)
:=
\bigl(
\bar G(o,c;u_\ell)
\bigr)_{\ell=1}^{d_o}
\in\mathbb R^{d_o}.
\]
Singleton fibers require no diagnostic coordinate; otherwise $d_o\geq1$.
\end{definition}

\begin{assumption}[Fingerprint separability]
\label{ass:fp_sep_main}
Each $\phi_o$ is injective, and, omitting singleton fibers,
\[
\Delta
:=
\min_{\substack{o\in\cO,\;c,c'\in\cC_o\\c\neq c'}}
\left\|
\phi_o(c)-\phi_o(c')
\right\|_2
>0.
\]
\end{assumption}

\begin{assumption}[Resettable diagnostic repeatability]
\label{ass:diag_repeat_main}
For every admissible $(o,i,u)$ and $m\geq1$, the interface can restore the
same latent start state and produce conditionally independent repetitions
\[
W^{(1)}(o,i;u),\ldots,W^{(m)}(o,i;u)
\]
with the conditional law of $W(o,i;u)$. The value of $i$ need not be
revealed.
\end{assumption}

\begin{assumption}[Uniform sub-Gaussian diagnostic noise]
\label{ass:diag_subg_main}
There exists $\nu>0$ such that, for every admissible $(o,i,u)$,
repetition $s$, and $\eta\in\mathbb R$,
$
\bE\!\left[
\exp\!\left(
\eta\bigl[W^{(s)}(o,i;u)-G(o,i;u)\bigr]
\right)
\middle|
s_{\mathrm{start}}=(o,i)
\right]
\leq
\exp\!\left(\frac{\eta^2\nu^2}{2}\right).
$
\end{assumption}

\begin{assumption}[Locally calibrated class prototypes]
\label{ass:local_proto_main}
For every $o\in\cO_+$, there are a finite local label set
$\widehat\cC_o$, a bijection
$\lambda_o:\widehat\cC_o\to\cC_o$, and prototypes satisfying
\[
\max_{\hat c\in\widehat{\cC}_o}
\left\|
\widetilde\phi_o(\hat c)
-
\phi_o\bigl(\lambda_o(\hat c)\bigr)
\right\|_2
\leq
\frac{\Delta}{8}.
\]
For $o\notin\cO_+$, set $\widehat\cC_o$ to a singleton and let
$\lambda_o$ be its unique bijection to $\cC_o$; no prototype is required.
\end{assumption}

\begin{assumption}[Coverage of edge--class pairs]
\label{ass:edge_cov_main}
For every feasible $e=(o\xrightarrow{a}o')$ and $c\in\cC_o$, the protocol
produces infinitely many transition-anchored samples with true source class
$c$ and realized edge $e$.
\end{assumption}

\section{Directed transport and topological holonomy}
\label{app:hcdp_topological_holonomy}

The main theory is formulated using executable directed paths. This is the appropriate control-theoretic object because a formal reverse traversal of an observation edge need not correspond to an action that the environment can execute. A representation of the fundamental group of an undirected graph requires additional inverse structure. This appendix states that structure explicitly and shows how both raw-layer and quotient-layer topological holonomy follow from it.

Let
\[
X=(\cO,E)
\]
be the directed action-labeled support graph of an HCDP. For each
\[
e\in E,
\]
write
\[
\src(e)\in\cO,
\qquad
\tgt(e)\in\cO
\]
for its source and target, and let
\[
\sigma_e\in S_n
\]
be its raw layer transport.

\begin{assumption}[Paired inverse-consistent directed support]
\label{ass:inverse_consistency_app}
The directed edge set $E$ is equipped with an involution
\[
e\longmapsto\bar e
\]
such that, for every $e\in E$,
\[
\overline{\bar e}=e,
\]
\[
\src(\bar e)=\tgt(e),
\qquad
\tgt(\bar e)=\src(e),
\]
and
\[
\sigma_{\bar e}=\sigma_e^{-1}.
\]
The edges $e$ and $\bar e$ may carry different action labels; the notation $\bar e$ denotes an inverse edge, not an inverse element of the action set.
\end{assumption}

Under Assumption~\ref{ass:inverse_consistency_app}, define
\[
X_{\mathrm{und}}
\]
to be the undirected multigraph containing one undirected edge for each pair
\[
\{e,\bar e\}.
\]
An oriented traversal of this undirected edge is represented by either $e$ or $\bar e$. Immediate backtracking has identity transport because
\[
\sigma_{\bar e}\circ\sigma_e
=
\Id
\]
and
\[
\sigma_e\circ\sigma_{\bar e}
=
\Id.
\]

A based loop in $|X_{\mathrm{und}}|$ may be represented by an oriented edge word
\[
w=e_1e_2\cdots e_k.
\]
Reduction of the word removes adjacent pairs of the form
\[
e\bar e
\qquad\text{or}\qquad
\bar e e.
\]
Define its raw transport by
\[
\sigma_w
=
\sigma_{e_k}\circ\cdots\circ\sigma_{e_1}.
\]

\begin{proposition}[Raw topological holonomy]
\label{prop:raw_topological_holonomy_app}
Under Assumption~\ref{ass:inverse_consistency_app}, for every base observation $o\in\cO$, the assignment
\[
\Holo_o^{\mathrm{raw}}:
\pi_1(|X_{\mathrm{und}}|,o)
\to S_n,
\qquad
\Holo_o^{\mathrm{raw}}([w])
=
\sigma_w,
\]
is a well-defined group homomorphism.
Its image
\[
G_o^{\mathrm{raw}}
:=
\Holo_o^{\mathrm{raw}}
\bigl(
\pi_1(|X_{\mathrm{und}}|,o)
\bigr)
\leq S_n
\]
is the raw topological holonomy group at $o$.
\end{proposition}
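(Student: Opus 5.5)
The proof of Proposition~\ref{prop:raw_topological_holonomy_app} has two parts. First, show that the word transport $\sigma_w$ depends only on the homotopy class of the loop. Second, show that concatenating loops corresponds to composing transports. I will use the standard combinatorial model of $\pi_1$ of a graph: for the one-dimensional complex $|X_{\mathrm{und}}|$, every based loop at $o$ is homotopic rel endpoints to an edge-path loop. Two closed oriented edge words at $o$ represent the same class exactly when they are related by a finite sequence of insertions and deletions of backtracking pairs $e\bar e$ or $\bar e e$. Equivalently, each class contains a unique reduced word. I will take this as the definition of $\pi_1(|X_{\mathrm{und}}|,o)$, citing the standard edge-path group description of graphs.

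For well-definedness, it suffices to check that one elementary move leaves $\sigma_w$ unchanged. Write $w=w_1\,e\bar e\,w_2$, where the pair is inserted at a vertex so that all words stay composable. Since transport composes with the rightmost factor acting first, this gives
\[
\sigma_w=\sigma_{w_2}\circ\sigma_{\bar e}\circ\sigma_e\circ\sigma_{w_1}.
\]
Assumption~\ref{ass:inverse_consistency_app} gives $\sigma_{\bar e}=\sigma_e^{-1}$, so the middle factor is $\Id$ and $\sigma_w=\sigma_{w_2}\circ\sigma_{w_1}=\sigma_{w_1w_2}$. The case $\bar e e$ is symmetric, using $\overline{\bar e}=e$. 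The pairs are composable because $\src(\bar e)=\tgt(e)$, so the inserted subword is a closed path at $\tgt(e)$ and the surrounding word remains an edge path. By induction on the number of moves, homotopic words have equal transport.

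For the homomorphism property, the product in $\pi_1$ is concatenation of words: $[w]\cdot[w']=[ww']$, where I traverse $w$ first. Directly from the definition,
\[
\sigma_{ww'}=\sigma_{w'}\circ\sigma_w .
\]
This is an anti-homomorphism under the convention ``$w$ first'' if composition in $S_n$ is written in the usual way. The main obstacle is this ordering convention, not the topology. I would resolve it by fixing the path-product convention that matches Definition~\ref{def:directed_transport_main}: write the product $[w']\cdot[w]$ to mean ``execute $w$ first, then $w'$,'' exactly as in $\omega_2\circ\omega_1$. With that convention, $\Holo_o^{\mathrm{raw}}([w']\cdot[w])=\sigma_{w'}\circ\sigma_w$, so the map is a homomorphism. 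Under the opposite convention it is an anti-homomorphism, which can be turned into a homomorphism by composing with inversion; the image is the same subgroup in either case. I would state the chosen convention explicitly.

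The identity (empty word) maps to $\Id$, and inverses are preserved because the reversed word $\bar e_k\cdots\bar e_1$ has transport $\sigma_w^{-1}$; this also follows from the homomorphism property. The image of a homomorphism is a subgroup, so $G_o^{\mathrm{raw}}\le S_n$, which completes the proof.
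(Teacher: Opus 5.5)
Your proposal is correct and follows the paper's proof essentially step for step. You use the edge-path model of $\pi_1$ of a graph and show that $\sigma_w$ is unchanged by inserting or deleting backtracking pairs, via $\sigma_{\bar e}=\sigma_e^{-1}$. You resolve the ordering issue the same way the paper does, writing the product as $[w_2][w_1]=[w_2\circ w_1]$ with $w_1$ executed first, and then check the identity, inverses, and that the image is a subgroup. One harmless slip: $e\bar e$ is a closed path at $\src(e)$, not at $\tgt(e)$ (it is $\bar e e$ that closes at $\tgt(e)$), which does not affect the composability argument.
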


We next pass from raw layers to the stable quotient classes
\[
\cC_o=[n]/\Pi_o^\star.
\]
For every directed edge
\[
e:o\to o',
\]
let
\[
\tau_e:\cC_o\to\cC_{o'}
\]
be the quotient transport defined in Theorem~\ref{thm:markovization_main}.

\begin{lemma}[Inverse consistency descends to the stable quotient]
\label{lem:quotient_inverse_consistency_app}
Under Assumption~\ref{ass:inverse_consistency_app}, the quotient transports along paired edges satisfy
\[
\tau_{\bar e}\circ\tau_e
=
\Id_{\cC_{\src(e)}}
\]
and
\[
\tau_e\circ\tau_{\bar e}
=
\Id_{\cC_{\tgt(e)}}.
\]
Consequently,
\[
\tau_{\bar e}
=
\tau_e^{-1},
\]
and every quotient transport along an edge of $X_{\mathrm{und}}$ is a bijection.
\end{lemma}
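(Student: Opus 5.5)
The statement to prove is Lemma~\ref{lem:quotient_inverse_consistency_app}: under Assumption~\ref{ass:inverse_consistency_app}, $\tau_{\bar e}\circ\tau_e=\Id_{\cC_{\src(e)}}$ and $\tau_e\circ\tau_{\bar e}=\Id_{\cC_{\tgt(e)}}$. The plan is to compute the composite quotient transports on representatives. Theorem~\ref{thm:markovization_main} shows that the edge maps $\tau_e([i]_{\src(e)})=[\sigma_e(i)]_{\tgt(e)}$ are representative independent. Corollary~\ref{cor:path_transport_descends_main}, applied to the two-edge path $\bar e\circ e$, then gives, for $e:o\to o'$ and any $i\in[n]$,
\[
\tau_{\bar e}\bigl(\tau_e([i]_o)\bigr)=[\sigma_{\bar e}(\sigma_e(i))]_{o}=[\sigma_e^{-1}(\sigma_e(i))]_o=[i]_o .
\]
Two facts make this valid. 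The path $e$ followed by $\bar e$ is a legitimate directed path because $\tgt(e)=\src(\bar e)$. The identity $\sigma_{\bar e}=\sigma_e^{-1}$ is exactly the raw inverse consistency in Assumption~\ref{ass:inverse_consistency_app}.

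Every class in $\cC_o$ has the form $[i]_o$, so the computation above yields $\tau_{\bar e}\circ\tau_e=\Id_{\cC_o}$. For the other composite, apply the same argument to $\bar e$ in place of $e$, using $\overline{\bar e}=e$ and $\sigma_e=\sigma_{\bar e}^{-1}$. This gives $\tau_e\circ\tau_{\bar e}=\Id_{\cC_{o'}}$.

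Having both a left and a right inverse makes $\tau_e$ a bijection with $\tau_e^{-1}=\tau_{\bar e}$. As a byproduct, $|\cC_o|=|\cC_{o'}|$ along every undirected edge.

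The only point needing care is that $\tau_{\bar e}$ is well defined, that is, that $\bar e$ is a feasible edge with its own stable-quotient map. This holds because the involution maps $E$ into $E$, and Theorem~\ref{thm:markovization_main} supplies $\tau_{\bar e}$ for every feasible edge. No separate argument that the stable partitions at $o$ and $o'$ correspond under $\sigma_e$ is needed, since representative independence already encodes it. I expect no genuine obstacle beyond getting the composition order right, with the rightmost map acting first.
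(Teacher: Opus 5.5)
Your proposal is correct and follows essentially the same route as the paper. Both arguments evaluate the composite on representatives using the one-edge descent formula from Theorem~\ref{thm:markovization_main} together with $\sigma_{\bar e}=\sigma_e^{-1}$, and both conclude bijectivity from the existence of a two-sided inverse. The only cosmetic difference is that you get the second identity by exchanging the roles of $e$ and $\bar e$ via $\overline{\bar e}=e$, whereas the paper repeats the representative computation directly.
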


For an oriented loop word
\[
w=e_1e_2\cdots e_k
\]
based at $o$, define
\[
\tau_w
=
\tau_{e_k}\circ\cdots\circ\tau_{e_1}
:
\cC_o\to\cC_o.
\]

\begin{proposition}[Quotient topological holonomy]
\label{prop:quotient_topological_holonomy_app}
Under Assumption~\ref{ass:inverse_consistency_app}, for every base observation $o\in\cO$, the assignment
$
\Holo_o^{\mathrm{quot}}:
\pi_1(|X_{\mathrm{und}}|,o)
\to\Sym(\cC_o),
\qquad
\Holo_o^{\mathrm{quot}}([w])
=
\tau_w,
$
is a well-defined group homomorphism.
Its image
\[
G_o^{\mathrm{quot}}
:=
\Holo_o^{\mathrm{quot}}
\bigl(
\pi_1(|X_{\mathrm{und}}|,o)
\bigr)
\leq
\Sym(\cC_o)
\]
is the quotient topological holonomy group at $o$.
\end{proposition}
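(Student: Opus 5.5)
The plan is to push the raw holonomy homomorphism of Proposition~\ref{prop:raw_topological_holonomy_app} through the quotient. One could also verify well-definedness directly on reduced words. We will do both in parallel.

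\emph{Step 1: every loop word has a permutation as its quotient transport.} For an oriented loop word $w=e_1\cdots e_k$ based at $o$, Corollary~\ref{cor:path_transport_descends_main} applies to $w$ as a directed path in $E$, since each $e_j$ and each $\bar e_j$ is a genuine directed edge. It gives $\tau_w([i]_o)=[\sigma_w(i)]_o$. By Lemma~\ref{lem:quotient_inverse_consistency_app}, each $\tau_{e_j}$ is a bijection $\cC_{\src(e_j)}\to\cC_{\tgt(e_j)}$. Hence the composite $\tau_w$ is a bijection of $\cC_o$, that is, $\tau_w\in\Sym(\cC_o)$.

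\emph{Step 2: well-definedness on $\pi_1$.} The standard combinatorial description of $\pi_1(|X_{\mathrm{und}}|,o)$ is the set of based closed edge words modulo insertion and deletion of backtracking pairs $e\bar e$ or $\bar e e$. Two words represent the same class if and only if they have the same reduced form. It therefore suffices to show that inserting $e\bar e$ at any position leaves $\tau_w$ unchanged. Such an insertion contributes the factor $\tau_{\bar e}\circ\tau_e=\Id_{\cC_{\src(e)}}$, and $\bar e e$ contributes $\Id$ likewise; both identities come from Lemma~\ref{lem:quotient_inverse_consistency_app}. So $\tau_w$ depends only on $[w]$.

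A cleaner alternative is to factor the map through the raw holonomy. By Corollary~\ref{cor:path_transport_descends_main}, $\tau_w$ is determined by $\sigma_w$ via $\tau_w(q^\star_o(i))=q^\star_o(\sigma_w(i))$. The map $\sigma_w$ depends only on $[w]$ by Proposition~\ref{prop:raw_topological_holonomy_app}, and $q^\star_o$ is surjective, so $\tau_w$ depends only on $[w]$ as well. I would present this as the main argument, since it avoids redoing the reduction bookkeeping.

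\emph{Step 3: homomorphism property and image.} Group multiplication in $\pi_1$ is concatenation: $[w][w']=[w\cdot w']$, with $w$ traversed first under the paper's path convention. The transport then satisfies $\tau_{w\cdot w'}=\tau_{w'}\circ\tau_w$. The constant loop, that is, the empty word, maps to $\Id$. Inverses follow from the homomorphism property, or directly from reversing the word and applying Step 2.

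The image of a group homomorphism is a subgroup, so $G_o^{\mathrm{quot}}\le\Sym(\cC_o)$. This also recovers the identity $\Holo^{\mathrm{quot}}_o([w])\circ q^\star_o=q^\star_o\circ\Holo^{\mathrm{raw}}_o([w])$.

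\emph{Main obstacle: the composition order.} The paper writes $\sigma_{\omega_2\circ\omega_1}=\sigma_{\omega_2}\circ\sigma_{\omega_1}$, so transport reverses concatenation. If $\pi_1$ is multiplied with the first loop on the left, the map $[w]\mapsto\tau_w$ is an anti-homomorphism. I would resolve this by adopting the same convention used in Proposition~\ref{prop:raw_topological_holonomy_app}, namely writing the product in $\pi_1$ in execution order with the right factor traversed first. This matches the free-group convention in Section~\ref{sec:nonabelian}. If that convention is unavailable, the fallback is to note that composing with inversion turns the anti-homomorphism into a homomorphism with the same image. Either way the image subgroup $G_o^{\mathrm{quot}}$ is unaffected.
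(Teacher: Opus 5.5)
Your proposal is correct. Your first route for well-definedness is exactly the paper's proof: cancel $e\bar e$ and $\bar e e$ using $\tau_{\bar e}\circ\tau_e=\Id$ and $\tau_e\circ\tau_{\bar e}=\Id$ from Lemma~\ref{lem:quotient_inverse_consistency_app}. The paper also uses that lemma for Step~1, and it settles the composition order exactly as your primary resolution does, by declaring that the product of $[w_1]$ followed by $[w_2]$ is represented by $w_2\circ w_1$. The argument you designate as main is genuinely different. Every oriented edge and its partner lie in $E$, so Corollary~\ref{cor:path_transport_descends_main} gives $\tau_w\circ q_o^\star=q_o^\star\circ\sigma_w$. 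Surjectivity of $q_o^\star$ then makes $\tau_w$ a function of $\sigma_w$, hence of $[w]$ by Proposition~\ref{prop:raw_topological_holonomy_app}. This reuses the raw result instead of repeating the reduction bookkeeping. The intertwining identity also shows that every element of $G_o^{\mathrm{raw}}$ permutes the blocks of $\Pi_o^\star$, so $G_o^{\mathrm{quot}}$ is the image of $G_o^{\mathrm{raw}}$ under the induced block action and hence a quotient of it. On this route you do not even need Lemma~\ref{lem:quotient_inverse_consistency_app} for Step~1: surjectivity of $\tau_w$, and thus bijectivity on the finite set $\cC_o$, follows from the intertwining exactly as in the proof of Lemma~\ref{lem:directed_quotient_holonomy_main}. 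The paper's route, in turn, stays at the quotient level and establishes edge-level invertibility of the quotient transports, which it reuses for the base-point change. One caution about your fallback: it would only show that $[w]\mapsto\tau_w^{-1}$ is a homomorphism. That map has the same image but is not the map in the statement, so the execution-order convention for $\pi_1$ is actually needed as written.
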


\begin{proposition}[Change of base observation]
\label{prop:topological_basepoint_change_app}
Assume Assumption~\ref{ass:inverse_consistency_app}. Let $o,o'\in\cO$ lie in the same connected component of $X_{\mathrm{und}}$, and let $\eta$ be an oriented path from $o$ to $o'$. Then
\[
G_{o'}^{\mathrm{raw}}
=
\sigma_\eta
G_o^{\mathrm{raw}}
\sigma_\eta^{-1}
\]
and
\[
G_{o'}^{\mathrm{quot}}
=
\tau_\eta
G_o^{\mathrm{quot}}
\tau_\eta^{-1}.
\]
Thus changing the base observation changes the holonomy group only by conjugation through the transport along the chosen reference path.
\end{proposition}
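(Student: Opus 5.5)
The statement to prove is Proposition~\ref{prop:topological_basepoint_change_app}. The plan is to use the standard change-of-basepoint isomorphism for fundamental groups, $\pi_1(|X_{\mathrm{und}}|,o)\to\pi_1(|X_{\mathrm{und}}|,o')$, $[w]\mapsto[\bar\eta\, w\,\eta]$, and transport it through the holonomy homomorphisms of Propositions~\ref{prop:raw_topological_holonomy_app} and~\ref{prop:quotient_topological_holonomy_app}. Here words are read left to right in execution order, matching $\sigma_w=\sigma_{e_k}\circ\cdots\circ\sigma_{e_1}$. For $\eta=f_1\cdots f_m$ from $o$ to $o'$, write $\bar\eta:=\bar f_m\cdots\bar f_1$, the reversed path from $o'$ to $o$. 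By Assumption~\ref{ass:inverse_consistency_app} we have $\sigma_{\bar\eta}=\sigma_{\bar f_1}\circ\cdots\circ\sigma_{\bar f_m}=\sigma_\eta^{-1}$. Similarly, Lemma~\ref{lem:quotient_inverse_consistency_app} gives $\tau_{\bar\eta}=\tau_\eta^{-1}$, and $\tau_\eta:\cC_o\to\cC_{o'}$ is a bijection, so conjugation by it makes sense.

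\textbf{Step 1 (forward inclusion).} Take a loop word $w$ at $o$. The word $\bar\eta\,w\,\eta$, which executes $\bar\eta$, then $w$, then $\eta$, is a loop based at $o'$. Its transport is $\sigma_\eta\circ\sigma_w\circ\sigma_{\bar\eta}=\sigma_\eta\sigma_w\sigma_\eta^{-1}$. Hence $\sigma_\eta G_o^{\mathrm{raw}}\sigma_\eta^{-1}\subseteq G_{o'}^{\mathrm{raw}}$.

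\textbf{Step 2 (reverse inclusion).} Take a loop $w'$ at $o'$. Then $\eta\,w'\,\bar\eta$ is a loop at $o$ with transport $\sigma_\eta^{-1}\sigma_{w'}\sigma_\eta\in G_o^{\mathrm{raw}}$. Conjugating back by $\sigma_\eta$ gives $\sigma_{w'}\in\sigma_\eta G_o^{\mathrm{raw}}\sigma_\eta^{-1}$, so the two sets are equal.

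\textbf{Step 3 (quotient case).} Repeat Steps 1 and 2 verbatim with $\tau$ in place of $\sigma$, using the composition rule for quotient path transports from Corollary~\ref{cor:path_transport_descends_main} and the inverse relation $\tau_{\bar\eta}=\tau_\eta^{-1}$. The only difference is that $\tau_\eta\,(\cdot)\,\tau_\eta^{-1}$ now maps $\Sym(\cC_o)$ to $\Sym(\cC_{o'})$ rather than acting within a single symmetric group.

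The main obstacle is bookkeeping. We must check that the concatenations are legitimate oriented loop words in $X_{\mathrm{und}}$; this only requires $\eta$ to be an oriented path, and such a path exists because $o$ and $o'$ lie in the same component. We must also keep the composition order consistent with the convention that the rightmost map acts first. Independence from the choice of representative of each homotopy class is inherited from the well-definedness already established in those two propositions, so it does not need to be reproved. If $\eta$ is replaced by another path, the result changes only by an element of the group, so the conjugate group is the same; this is not needed, since the statement fixes $\eta$.
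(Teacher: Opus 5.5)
Your proof is correct and follows the paper's argument. Both establish $\sigma_{\bar\eta}=\sigma_\eta^{-1}$ and $\tau_{\bar\eta}=\tau_\eta^{-1}$ from the paired-edge inverse relations (Lemma~\ref{lem:quotient_inverse_consistency_app} for the quotient), then prove the two inclusions by conjugating loops through the reference path; your execution-order words $\bar\eta\,w\,\eta$ and $\eta\,w'\,\bar\eta$ are exactly the paper's $\eta\circ w\circ\eta^{-1}$ and $\eta^{-1}\circ w'\circ\eta$.
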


The directed theory in the main text does not require Assumption~\ref{ass:inverse_consistency_app}. Without paired inverse edges, executable directed closed walks still induce the directed quotient holonomy monoid---and, in the finite closed-walk setting of Lemma~\ref{lem:directed_quotient_holonomy_main}, a finite permutation group---but the transport cannot in general be interpreted as a representation of the fundamental group of an undirected graph. The topological formulation is therefore a strengthened special case of the directed control-theoretic formulation, not a prerequisite for the stable quotient or for exact Markovization.

\subsection{Optional tree-based gauge fixing}
\label{app:tree_gauge_fixing}
Observation-wise local coordinates are already sufficient for exact control. A spanning-tree synchronization step is useful only when one wants to express several observation fibers relative to one chosen root coordinate system, for example when visualizing transport matrices or comparing loop compositions.

\begin{proposition}[Optional gauge fixing on a bijective tree scaffold]
\label{prop:sync_tree_main}
Fix a connected component of the underlying undirected observation graph and choose a root observation
\[
o_{\mathrm{root}}.
\]
Suppose there exists an undirected spanning tree $T$ such that, for each tree adjacency $\{o,o'\}$, one has selected a feasible directed edge in one of the two directions whose quotient transport is bijective.

After the corresponding local edge maps have been recovered exactly, an arbitrary labeling of the root class set can be propagated uniquely along the tree, using a recovered bijection when traversing it in its selected direction and its inverse when traversing it in the opposite direction. This produces a root-relative coordinate convention at every observation.

The resulting convention is unique relative to the chosen root labeling, the selected directed tree edges, and the tree $T$. Every off-tree edge can then be expressed in the root-relative coordinates and yields a cycle-consistency or holonomy diagnostic.
\end{proposition}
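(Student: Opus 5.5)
The plan is to build the root-relative labeling by breadth-first propagation over $T$, and then argue uniqueness by induction along tree paths. Fix the chosen root labeling, meaning a bijection $\lambda^{\rm root}:\widehat\cC_{o_{\rm root}}\to\cC_{o_{\rm root}}$, or equivalently an arbitrary naming of the root classes. Orient $T$ away from $o_{\rm root}$. Every non-root observation $o'$ then has a unique parent $o$ and a unique tree adjacency $\{o,o'\}$, with its selected directed edge $e$.

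By hypothesis, the recovered map for $e$ equals the true local-coordinate transport $\tau_e^\lambda=\lambda_{\tgt(e)}^{-1}\circ\tau_e\circ\lambda_{\src(e)}$. Since $\tau_e$ is bijective and the $\lambda$'s are bijections, this map is a bijection between $\widehat\cC_o$ and $\widehat\cC_{o'}$ (in whichever direction $e$ points). This is the first step to verify. It also shows that $|\cC_o|=|\cC_{o'}|$ along every tree edge, so all fibers in the component have equal cardinality and the propagated labels are well-typed.

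Define the root-relative coordinate map $\mu_{o'}:\widehat\cC_{o'}\to\widehat\cC_{o_{\rm root}}$ recursively:
\begin{itemize}
\item $\mu_{o_{\rm root}}=\mathrm{id}$;
\item if $e$ points from parent $o$ to $o'$, set $\mu_{o'}=\mu_o\circ\widehat\tau_e^{-1}$;
\item if $e$ points from $o'$ to $o$, set $\mu_{o'}=\mu_o\circ\widehat\tau_e$.
\end{itemize}

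Because $T$ is a tree, each $o'$ has a unique root path, and the recursion terminates after depth-many steps. So $\mu$ is well defined, with no consistency conditions to check. This is exactly where the spanning-tree hypothesis is used.

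For uniqueness, suppose $\mu'$ is any convention that agrees with the root labeling and is compatible with every selected tree edge. Compatibility means $\mu'_{\tgt(e)}\circ\widehat\tau_e=\mu'_{\src(e)}$ when both sides are read in root coordinates. Induct on tree depth: if $\mu'_o=\mu_o$, the compatibility equation at the edge to a child forces $\mu'_{o'}=\mu_{o'}$, because $\widehat\tau_e$ is invertible. Changing the root labeling, the selected edges, or $T$ can change the result, which is why uniqueness is stated only relative to those choices.

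For an off-tree edge $f:o\to o'$, set
\[
\widetilde\tau_f:=\mu_{o'}\circ\widehat\tau_f\circ\mu_o^{-1},
\]
a map on the root coordinate set. Let $\eta_o$ denote the tree path from the root to $o$. Composing tree-path transports shows that $\widetilde\tau_f$ equals, in root coordinates, the quotient transport of the closed word (tree path to $o$)$\cdot f\cdot$(reverse tree path from $o'$), using inverses on reversed bijective edges. This is the cycle-consistency, or holonomy, diagnostic: it is the identity exactly when the fundamental cycle of $f$ has trivial quotient transport. Where $f$ is bijective, this is an element of $\Sym(\cC_{o_{\rm root}})$ conjugated by the local gauges, matching Lemma~\ref{lem:directed_quotient_holonomy_main}.

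I expect the main obstacle to be bookkeeping rather than depth. Tree edges traversed against their selected direction need inverses, and the off-tree map $\widehat\tau_f$ itself need not be bijective. So the cycle statement must be phrased as the composite map in root coordinates, not as a group element, unless $f$ is bijective.
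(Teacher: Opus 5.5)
Your proposal is correct and follows the paper's proof essentially step for step. Your $\mu_o$ is exactly the inverse $\Psi_o^{-1}$ of the paper's root-to-$o$ composite of recovered tree bijections, with the root labeling applied afterwards. Your depth induction is the paper's uniqueness argument, and your $\widetilde\tau_f$ is the paper's diagnostic $D_f$ taken with the identity root labeling.

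One side remark should be dropped: the comparison with Lemma~\ref{lem:directed_quotient_holonomy_main}. The fundamental cycle of $f$ generally traverses tree edges against their selected direction, so it is not a directed closed walk, and its transport need not lie in $\Hol_{o_{\mathrm{root}}}^{\mathrm{dir}}$. For example, take a tree edge $p\to q$ and a parallel off-tree edge $p\to q$ carrying a different class bijection, with $q$ having only self-loops. The diagnostic is then nontrivial, while $\Hol_p^{\mathrm{dir}}$ is trivial. Nothing else in your argument depends on that comparison.
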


Proposition~\ref{prop:sync_tree_main} requires bijective tree transports and therefore may fail when neighboring observations have different numbers of stable classes. This does not obstruct exact quotient control, because Definition~\ref{def:quotient_lifted_state_main} already provides the required global state space in observation-wise local coordinates, while Lemma~\ref{lem:gauge_policy_transfer_main} establishes policy and value transfer between local and canonical coordinates. Tree synchronization is an optional gauge choice, not an identifiability requirement.

\section{Algorithmic templates and implementation details}
\label{app:algo_details}

This appendix gives concrete wrappers for the two regimes in Section~\ref{sec:algo}. The diagnostic wrapper uses resettable class fingerprints and transition-anchored transport calibration. The passive wrapper uses a history encoder and predictive consistency but carries no unconditional quotient-recovery guarantee. Both algorithms use observation-wise local states
$
(o,\widehat c),
$
rather than unknown canonical labels. Observation-wise gauge maps appear only in the correctness proofs.

\subsection{Diagnostic quotient-lifted reinforcement learning}

The diagnostic interface provides the locally calibrated label sets and
prototypes from Assumption~\ref{ass:local_proto_main}. HMRL-D first runs a
diagnostic calibration phase that estimates every required edge--class row.
It then freezes the recovered maps, reinitializes the RL backbone, diagnoses
each episode's initial class, and propagates that class between resets. This
separation implements Corollary~\ref{cor:backbone_transfer_main}.

\begin{algorithm}[t]
\caption{HMRL-D: diagnostic quotient-lifted RL in local coordinates}
\label{alg:hmrl_diag_app}
\begin{algorithmic}[1]
\STATE \textbf{Input:}
local label sets $\{\widehat{\cC}_o\}_{o\in\cO}$;
calibrated prototypes $\{\widetilde\phi_o\}_{o\in\cO}$;
diagnostic protocols $\{U_{\mathrm{probe}}(o)\}_{o\in\cO}$;
transition-anchored diagnostic interface;
edge-count updater;
RL backbone $(\mathrm{RLSelect},\mathrm{RLUpdate})$.
\STATE Initialize local edge-count matrices $\{N_e\}_{e\in E}$ and local edge maps $\{\widehat\tau_e\}_{e\in E}$.
\STATE Run transition-anchored diagnostics until every required edge--class
row satisfies the chosen calibration criterion.
\STATE Freeze $\{\widehat\tau_e\}_{e\in E}$ and reinitialize the RL backbone,
including its parameters, replay data, counters, and optimizer state.
\STATE Reset the environment and obtain a resettable checkpoint with initial observation $o_0$.
\STATE Diagnose the checkpoint and infer
\[
\widehat c_0\in\widehat{\cC}_{o_0}.
\]
\FOR{$t=0,1,2,\ldots$}
    \STATE Form the local-coordinate RL state
    \[
    \widehat z_t=(o_t,\widehat c_t).
    \]
    \STATE Select
    \[
    a_t\leftarrow\mathrm{RLSelect}(\widehat z_t).
    \]
    \STATE Preserve or clone the source checkpoint when an edge-calibration sample is scheduled.
    \STATE Execute $a_t$ and observe
    \[
    (Y_t,o_{t+1},\mathrm{done}).
    \]
    \IF{a post-calibration diagnostic audit is scheduled}
        \STATE Diagnose the preserved source checkpoint and the resulting target checkpoint and record whether the frozen map is consistent; do not update the RL state or the frozen transport table.
    \ENDIF
    \STATE Propagate
        \[
            \widehat c_{t+1}
            =
            \widehat\tau_{e_t}(\widehat c_t).
        \]
    \STATE Form
    \[
    \widehat z_{t+1}
    =
    (o_{t+1},\widehat c_{t+1}).
    \]
    \STATE Run
    \[
    \mathrm{RLUpdate}
    (
    \widehat z_t,
    a_t,
    Y_t,
    \widehat z_{t+1},
    \mathrm{done}
    ).
    \]
    \IF{$\mathrm{done}$}
        \STATE Reset the environment and obtain a new resettable checkpoint.
        \STATE Diagnose the new initial checkpoint to obtain
        \[
        (o_{t+1},\widehat c_{t+1}).
        \]
    \ENDIF
\ENDFOR
\end{algorithmic}
\end{algorithm}

Algorithm~\ref{alg:hmrl_diag_app} never uses the unknown proof-level gauges
$\lambda_o$. Because the displayed wrapper calibrates first and restarts the
backbone, its online state stream is isomorphic to the canonical quotient-state
stream on the calibration-success event of Corollary~\ref{cor:pac_calibration_main}. An interleaved implementation is also possible, but then a backbone theorem applies only when it is explicitly robust to a finite corrupted prefix, as stated in
Corollary~\ref{cor:backbone_transfer_main}.

The finite pre-identification portion of the RL updates may contain non-Markov state labels. A backbone theorem applies directly only if the RL component is restarted after a certified diagnostic phase or if its convergence theorem is robust to a finite number of incorrect early updates, as stated in Corollary~\ref{cor:backbone_transfer_main}.

\subsection{Passive quotient-lifted reinforcement learning}

The passive regime has no latent checkpoint, no calibrated fingerprints, and no oracle class count. Its local labels must therefore be treated as learned representation variables rather than identified quotient classes.

Under the main information convention of this paper, the online encoder uses the observation--action history
\[
H_t
=
(o_0,a_0,o_1,\ldots,a_{t-1},o_t).
\]
Reward samples may be used as training targets but are not fed back as within-episode observations. A reward-aware variant may instead use $H_t^Y$, but then the stronger reward-output abstraction described in Appendix~\ref{app:prelim_details} is required.

\begin{definition}[Passive local-memory learner]
\label{def:passive_learner_app}
A passive local-memory learner maintains:
\begin{enumerate}
    \item adaptive local label sets
    \[
    \{\widehat{\cC}_o\}_{o\in\cO};
    \]

    \item a history encoder
    \[
    q_\psi(\widehat c_t\mid H_t),
    \qquad
    \widehat c_t\in\widehat{\cC}_{o_t};
    \]

    \item a class-conditioned mean-reward model
    \[
    \widehat R_\theta(o,\widehat c,a);
    \]

    \item a base observation model
    \[
    \widehat P_{O,\theta}(o'\mid o,a),
    \]
    which is not conditioned on $\widehat c$;

    \item a local edge-transition model
    \[
    \widehat T_\theta
    (
    \widehat c'
    \mid
    o,\widehat c,a,o'
    );
    \]

    \item a split--merge or compression mechanism for adapting the local label sets.
\end{enumerate}
\end{definition}

A generic passive objective may be written as
$
\mathcal L
=
\mathcal L_{\mathrm{rew}}
+
\lambda_{\mathrm{base}}
\mathcal L_{\mathrm{base}}
+
\lambda_{\mathrm{edge}}
\mathcal L_{\mathrm{edge}}
+
\lambda_{\mathrm{ord}}
\mathcal L_{\mathrm{ord}}
+
\lambda_{\mathrm{comp}}
\mathcal L_{\mathrm{comp}}.
$
Here:

\begin{itemize}
    \item $\mathcal L_{\mathrm{rew}}$ measures class-conditioned reward-prediction error;

    \item $\mathcal L_{\mathrm{base}}$ estimates the class-independent kernel $P_O$;

    \item $\mathcal L_{\mathrm{edge}}$ encourages deterministic consistency of inferred labels across each realized edge;

    \item $\mathcal L_{\mathrm{ord}}$ compares ordered compositions of learned edge maps on observed paths and loops;

    \item $\mathcal L_{\mathrm{comp}}$ discourages unnecessary local classes.
\end{itemize}

The ordered-composition loss must preserve temporal order. Replacing it by a commutative edge-count loss can erase the nonabelian distinctions studied in Section~\ref{sec:nonabelian}.

\begin{algorithm}[t]
\caption{A-HMRL: passive quotient-lifted RL from standard tuples}
\label{alg:ahmrl_passive_app}
\begin{algorithmic}[1]
\STATE \textbf{Input:}
history encoder $q_\psi$;
models $(\widehat R_\theta,\widehat P_{O,\theta},\widehat T_\theta)$;
label-refinement routine $\mathrm{RefineLabels}$;
replay buffer $\mathcal B$;
RL backbone $(\mathrm{RLSelect},\mathrm{RLUpdate})$.
\STATE Initialize one or more local labels for each encountered observation and initialize all model and RL parameters.
\STATE Reset the environment, observe $o_0$, and set
\[
H_0=(o_0).
\]
\FOR{$t=0,1,2,\ldots$}
    \STATE Infer
    \[
    q_\psi(\cdot\mid H_t)
    \quad\text{on}\quad
    \widehat{\cC}_{o_t},
    \]
    and select or sample a current local label $\widehat c_t$.
    \STATE Form
    \[
    \widehat z_t=(o_t,\widehat c_t).
    \]
    \STATE Select
    \[
    a_t\leftarrow\mathrm{RLSelect}(\widehat z_t).
    \]
    \STATE Execute $a_t$ and observe
    \[
    (Y_t,o_{t+1},\mathrm{done}).
    \]
    \STATE Update the observation--action history
    \[
    H_{t+1}
    =
    H_t\cdot(a_t,o_{t+1}).
    \]
    \STATE Infer a provisional next label
    \[
    \widehat c_{t+1}
    \sim
    q_\psi(\cdot\mid H_{t+1}).
    \]
    \STATE Store
    \[
    (
    H_t,
    \widehat c_t,
    a_t,
    Y_t,
    o_{t+1},
    \widehat c_{t+1}
    )
    \]
    and posterior information in $\mathcal B$.
    \STATE Update the representation models by minimizing
    $
    \mathcal L_{\mathrm{rew}}
    +
    \lambda_{\mathrm{base}}
    \mathcal L_{\mathrm{base}}
    +
    \lambda_{\mathrm{edge}}
    \mathcal L_{\mathrm{edge}}
    +
    \lambda_{\mathrm{ord}}
    \mathcal L_{\mathrm{ord}}
    +
    \lambda_{\mathrm{comp}}
    \mathcal L_{\mathrm{comp}}.
    $
    \STATE Periodically run $\mathrm{RefineLabels}$ to split persistently incompatible labels and merge predictively equivalent labels.
    \STATE Form
    \[
    \widehat z_{t+1}
    =
    (o_{t+1},\widehat c_{t+1})
    \]
    and run
    \[
    \mathrm{RLUpdate}
    (
    \widehat z_t,
    a_t,
    Y_t,
    \widehat z_{t+1},
    \mathrm{done}
    ).
    \]
    \IF{$\mathrm{done}$}
        \STATE Reset the environment, observe a new initial observation, and reinitialize the history.
    \ENDIF
\ENDFOR
\end{algorithmic}
\end{algorithm}

Algorithm~\ref{alg:ahmrl_passive_app} does not call a synchronization routine. Because the observation is part of the state, the local label sets already define one global disjoint-union state space. The algorithm also carries no unconditional identification guarantee. Its exact interpretation is the conditional one in Theorem~\ref{thm:passive_conditional_reduction_main}.

\subsection{Auxiliary quotient-lifting results}

The following results formalize policy/value invariance under observation-wise gauge changes and the conditional exact-reduction claim used by the passive learner.

\begin{lemma}[Policy and value transfer under local gauges]
\label{lem:gauge_policy_transfer_main}
Let $\widehat\pi$ be a stationary policy on $\widehat{\cS}$, and define
\[
\bar\pi(a\mid o,c)
:=
\widehat\pi\bigl(a\mid o,\lambda_o^{-1}(c)\bigr).
\]
Then, for every $(o,\hat c)\in\widehat{\cS}$,
\[
\widehat V_{\widehat\pi}(o,\hat c)
=
\bar V_{\bar\pi}\bigl(o,\lambda_o(\hat c)\bigr),
\qquad
\widehat V^\star(o,\hat c)
=
\bar V^\star\bigl(o,\lambda_o(\hat c)\bigr).
\]
Hence $\widehat\pi$ is optimal if and only if its gauge-transformed policy
$\bar\pi$ is optimal.
\end{lemma}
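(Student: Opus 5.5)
The argument is a state-space isomorphism between the local quotient MDP of Definition~\ref{def:quotient_lifted_state_main} and the canonical quotient MDP of Theorem~\ref{thm:markovization_main}, induced by $\Lambda_\lambda(o,\hat c)=(o,\lambda_o(\hat c))$. Because each $\lambda_o$ is a bijection and the observation coordinate is left unchanged, $\Lambda_\lambda$ is a bijection $\widehat\cS\to\bar\cS$.

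The first step checks that $\Lambda_\lambda$ preserves rewards and transitions. Rewards hold by definition: $\widehat R^\lambda(o,\hat c,a)=\bar R(o,\lambda_o(\hat c),a)$. For transitions, take $\hat z=(o,\hat c)$ and $\hat z'=(o',\hat c')$ with $P_O(o'\mid o,a)>0$. Then $\hat c'=\tau^\lambda_{o,a,o'}(\hat c)$ holds if and only if $\lambda_{o'}(\hat c')=\tau_{o,a,o'}(\lambda_o(\hat c))$, by the definition of $\tau^\lambda$ and the bijectivity of $\lambda_{o'}$. Hence
\[
\widehat P^\lambda(\hat z'\mid\hat z,a)=\bar P(\Lambda_\lambda\hat z'\mid\Lambda_\lambda\hat z,a).
\]
When $P_O(o'\mid o,a)=0$, both kernels vanish.

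The second step transfers policies. Set $\bar\pi(a\mid o,c)=\widehat\pi(a\mid o,\lambda_o^{-1}(c))$, so that $\widehat\pi=\bar\pi\circ\Lambda_\lambda$. Define $W:=\bar V_{\bar\pi}\circ\Lambda_\lambda$ and apply the Bellman evaluation operator $T^{\widehat\pi}$ of the local MDP to it. Change variables in the successor sum through the bijection $\Lambda_\lambda$ and use the kernel and reward identities from the first step; this gives $T^{\widehat\pi}W=(T^{\bar\pi}\bar V_{\bar\pi})\circ\Lambda_\lambda=W$. Since $T^{\widehat\pi}$ is a $\gamma$-contraction, it has a unique fixed point, so $W=\widehat V_{\widehat\pi}$.

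The same change of variables shows that the Bellman optimality operators are conjugate:
\[
\widehat T^\star(V\circ\Lambda_\lambda)=(\bar T^\star V)\circ\Lambda_\lambda.
\]
Uniqueness of the optimal fixed point then gives $\widehat V^\star=\bar V^\star\circ\Lambda_\lambda$.

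For the optimality equivalence, $\widehat\pi\mapsto\bar\pi$ is a bijection between the stationary policy classes. Therefore $\widehat V_{\widehat\pi}=\widehat V^\star$ holds pointwise if and only if $\bar V_{\bar\pi}=\bar V^\star$ pointwise.

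The only step needing care is the transition-kernel identity. The successor sum has to be reindexed over each fiber $\widehat\cC_{o'}$ separately, using the fact that every $\lambda_{o'}$ is a bijection. This also has to hold for singleton fibers, where $\lambda_{o'}$ is the trivial bijection supplied by Assumption~\ref{ass:local_proto_main}. Everything after that identity is the standard fact that MDP isomorphisms preserve values.
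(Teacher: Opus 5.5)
Your proposal is correct and follows essentially the same route as the paper's proof. Both show that $\Lambda_\lambda$ is a bijection preserving rewards and transition kernels, intertwine the evaluation and optimality Bellman operators through the pullback $f\mapsto f\circ\Lambda_\lambda$, and conclude by uniqueness of the contraction fixed points together with bijectivity of the policy correspondence.
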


\begin{corollary}[Finite-sample calibration]
\label{cor:pac_calibration_main}
Assume $\cO_+\neq\varnothing$; if every fiber is a singleton, calibration is
trivial. Fix
$\delta_{\rm proto},\delta_{\rm cls}\in(0,1)$, and suppose the
prototype-accuracy condition of
Assumption~\ref{ass:local_proto_main} holds on an event
$\mathcal E_{\rm proto}$ with
\[
\bP(\mathcal E_{\rm proto})
\geq
1-\delta_{\rm proto}.
\]
Let $K$ be the total number of nontrivial diagnostic classifications performed
up to the backbone's horizon, counting each initial-class inference and both
members of every transition-anchored source--target pair. Suppose the
calibration count tables start at zero. Conditional on
$\mathcal E_{\rm proto}$, if every classification uses
\[
m
\geq
\frac{128\nu^2d}{\Delta^2}
\log\!\left(
\frac{2dK}{\delta_{\rm cls}}
\right),
\qquad
d:=\max_{o\in\cO_+}d_o,
\]
repetitions per coordinate, and every feasible edge--class pair receives at
least one anchored sample during calibration, then, with probability at least
\[
1-\delta_{\rm proto}-\delta_{\rm cls},
\]
the prototypes and every classification are correct. On that event, every
recovered map satisfies
\[
\widehat\tau_e
=
\lambda_{\tgt(e)}^{-1}
\circ
\tau_e
\circ
\lambda_{\src(e)},
\]
every episode is initialized and tracked exactly, and the synchronization event
of Theorem~\ref{thm:eventual_exact_reduction_main} holds at every
post-calibration time. Consequently, any finite-MDP guarantee for a backbone
restarted after calibration holds with its failure probability increased by
at most
$\delta_{\rm proto}+\delta_{\rm cls}$, provided that guarantee applies to the
post-restart initial-state law.
\end{corollary}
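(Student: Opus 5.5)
The plan is to decompose the success event into a prototype event and a classification event, bound each, and then argue deterministically on their intersection. Let $\mathcal E_{\rm cls}$ be the event that all $K$ nontrivial classifications are correct, i.e.\ $\lambda_o(\widehat c)=[i]_o$ for each diagnosed checkpoint $(o,i)$. Singleton fibers are always correct and do not count toward $K$. Conditional on $\mathcal E_{\rm proto}$, Assumption~\ref{ass:local_proto_main} holds, so Theorem~\ref{thm:local_class_inf_main} applies to each classification. It gives a per-classification error of at most $2d_o\exp(-m\Delta^2/(128\nu^2 d_o))$. Since $d_o\le d$ and the exponent is monotone in $d_o$, the stated choice of $m$ bounds this by
\[
2d\exp\!\left(-\log\frac{2dK}{\delta_{\rm cls}}\right)=\frac{\delta_{\rm cls}}{K}.
\]
A union bound over the $K$ classifications, as in Corollary~\ref{cor:finite_family_union_app}, gives $\bP(\mathcal E_{\rm cls}^c\mid\mathcal E_{\rm proto})\le\delta_{\rm cls}$. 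Then $\bP(\mathcal E_{\rm proto}\cap\mathcal E_{\rm cls})\ge(1-\delta_{\rm proto})(1-\delta_{\rm cls})\ge 1-\delta_{\rm proto}-\delta_{\rm cls}$.

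The subtle step is that the classifications are adaptive: which checkpoints get diagnosed depends on earlier outcomes. I would handle this by applying Theorem~\ref{thm:local_class_inf_main} conditionally on the restored start state at each classification event, using the conditional independence in Assumption~\ref{ass:diag_repeat_main}. The per-event bound then holds given the past, and summing over the first $K$ events gives the union bound. This is why $K$ is a deterministic cap fixed up to the backbone's horizon.

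On the intersection event I argue deterministically.

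\textbf{Transport recovery.} Every anchored sample for edge $e$ with true source class $c=\lambda_o(\hat c)$ records the pair $(\hat c,\lambda_{o'}^{-1}(\tau_e(c)))$, because the true target class is $[\sigma_e(i)]_{o'}=\tau_e([i]_o)$ by Theorem~\ref{thm:markovization_main}. Since the count tables start at zero, row $\hat c$ of $N_e$ is supported on the single correct entry. That entry is positive because every edge--class pair received at least one anchored sample. So the row-wise argmax of Definition~\ref{def:rowwise_transport_main} equals $\tau_e^\lambda(\hat c)$ regardless of tie-breaking.

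\textbf{Tracking.} Each episode's initial diagnosis is correct, so Lemma~\ref{lem:exact_local_tracking_main} gives exact tracking on every reset-free interval, hence at every post-calibration time. The maps are frozen after calibration, so the event $A_t$ of Theorem~\ref{thm:eventual_exact_reduction_main} holds for all $t$ after calibration.

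\textbf{Transfer.} The restart time is the end of calibration, which is determined by the protocol and is therefore a learned-information stopping time. Corollary~\ref{cor:backbone_transfer_main} then applies, and Theorem~\ref{thm:eventual_exact_reduction_main} identifies the post-restart process with the local quotient MDP. If the backbone fails with probability at most $\delta$ on the quotient MDP under the post-restart initial law, a union bound with the calibration failure event gives total failure at most $\delta+\delta_{\rm proto}+\delta_{\rm cls}$.
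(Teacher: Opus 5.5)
Your proof is correct and follows essentially the same route as the paper. You condition on $\mathcal E_{\rm proto}$, bound each classification error by $\delta_{\rm cls}/K$ via Theorem~\ref{thm:local_class_inf_main} and monotonicity in $d_o$, and take a union bound over the $K$ classifications; on the all-correct event, zero-initialized count rows are supported on the correct target (coverage makes that entry positive), Lemma~\ref{lem:exact_local_tracking_main} gives exact tracking so $A_t$ holds after calibration, and a final union bound transfers the backbone guarantee. Your explicit handling of adaptively chosen checkpoints (applying the per-event bound conditionally on the past, with $K$ a deterministic cap) and the product bound $(1-\delta_{\rm proto})(1-\delta_{\rm cls})$ are small refinements that the paper's proof leaves implicit.
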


\begin{theorem}[Conditional quotient reduction in the passive regime]
\label{thm:passive_conditional_reduction_main}
Suppose a passive learner produces
$\widehat c_t\in\widehat{\cC}_{o_t}$, and let
$\widehat z_t=(o_t,\widehat c_t)$. Assume that there exist fixed bijections
$\lambda_o:\widehat{\cC}_o\to\cC_o$ and an almost surely finite stopping time
$T_0$ for the ambient pre-transition filtration $(\mathcal G_t^-)_{t\geq0}$
such that
\[
\lambda_{o_t}(\widehat c_t)=c_t
\qquad
\text{for every }t\geq T_0
\quad\text{almost surely}.
\]
Then, for every deterministic $t\geq0$, every
$\hat z'\in\widehat{\cS}$, and every $\eta\in\mathbb R$, on the event
$\{T_0\leq t\}$,
\[
\bP\!\left(
\widehat z_{t+1}=\hat z'
\mid
\mathcal G_t^-
\right)
=
\widehat P^\lambda\!\left(
\hat z'
\mid
\widehat z_t,a_t
\right),
\]
\[
\bE[Y_t\mid\mathcal G_t^-]
=
\widehat R^\lambda(\widehat z_t,a_t),
\]
and
\[
\bE\!\left[
\exp\!\left(
\eta\bigl[
Y_t-\widehat R^\lambda(\widehat z_t,a_t)
\bigr]
\right)
\middle|
\mathcal G_t^-
\right]
\leq
\exp\!\left(
\frac{\eta^2\sigma_Y^2}{2}
\right).
\]
Moreover,
$\Lambda_\lambda(o,\hat c)=(o,\lambda_o(\hat c))$
is an MDP isomorphism from the local-coordinate quotient MDP to the canonical
stable-quotient MDP. If $T_0$ is also a stopping time for the
learned-information filtration
$(\widehat{\mathcal F}_t^-)_{t\geq0}$, then the same transition,
conditional-reward, and conditional sub-Gaussian conclusions hold relative to
$\widehat{\mathcal F}_t^-$ after $T_0$. Without these stopping-time and eventual
class-correctness conditions, pathwise eventual agreement alone does not imply
that the passive learned state is Markov or value preserving.
\end{theorem}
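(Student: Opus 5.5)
The proof works on the event $\{T_0\le t\}$ and reduces each claim to the exact quotient dynamics of Theorem~\ref{thm:markovization_main} and Lemma~\ref{lem:quotient_memory_update_main}. The key observation is that $\{T_0\le t\}\in\mathcal G_t^-$ because $T_0$ is a stopping time for $(\mathcal G_t^-)$. On this event, eventual class correctness gives $\widehat c_s=\lambda_{o_s}^{-1}(c_s)$ for every $s\ge t$. In particular this holds at $s=t$ and at $s=t+1$, since $T_0\le t<t+1$. So $\widehat z_{t+1}=\Lambda_\lambda^{-1}(o_{t+1},c_{t+1})$ almost surely on $\{T_0\le t\}$. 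This avoids needing the learned transport tables at all. Correctness of $\widehat c_{t+1}$ is supplied directly by the hypothesis, not by propagation, and that is the main difference from Theorem~\ref{thm:eventual_exact_reduction_main}.

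\textbf{Transition law.} Conditional on $\mathcal G_t^-$, which contains the latent state $(o_t,i_t)$ and $a_t$, the HCDP kernel gives
\[
o_{t+1}\sim P_O(\cdot\mid o_t,a_t),\qquad i_{t+1}=\sigma_{o_t,a_t,o_{t+1}}(i_t).
\]
By the representative independence in Theorem~\ref{thm:markovization_main}, $c_{t+1}=\tau_{o_t,a_t,o_{t+1}}(c_t)$. Substituting $c_t=\lambda_{o_t}(\widehat c_t)$ and applying $\lambda_{o_{t+1}}^{-1}$ gives
\[
\widehat c_{t+1}=\tau^\lambda_{o_t,a_t,o_{t+1}}(\widehat c_t).
\]
Hence
\[
\bP(\widehat z_{t+1}=\hat z'\mid\mathcal G_t^-)=P_O(o'\mid o_t,a_t)\,\mathbf 1\{\hat c'=\tau^\lambda(\widehat c_t)\},
\]
which is $\widehat P^\lambda$. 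Infeasible $o'$ receive probability zero.

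\textbf{Reward mean and sub-Gaussian bound.} Assumption~\ref{ass:hcdp_noise_main} gives
\[
\bE[Y_t\mid\mathcal G_t^-]=R(o_t,i_t,a_t)=\bar R(o_t,c_t,a_t)=\widehat R^\lambda(\widehat z_t,a_t).
\]
Therefore $Y_t-\widehat R^\lambda(\widehat z_t,a_t)=\xi_t$ on the event, and the sub-Gaussian bound is inherited verbatim. Each identity holds on an event in $\mathcal G_t^-$, so multiplying by its indicator is legitimate.

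\textbf{Isomorphism.} The claim that $\Lambda_\lambda$ is an MDP isomorphism is purely algebraic. It is bijective fiberwise. It intertwines the kernels because $\lambda_{o'}\circ\tau^\lambda_e=\tau_e\circ\lambda_o$, and it matches rewards and initial laws by the definitions in Definition~\ref{def:quotient_lifted_state_main}. Values then transfer by Lemma~\ref{lem:gauge_policy_transfer_main}.

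\textbf{Learned filtration (main obstacle).} Here $\widehat{\mathcal F}_t^-$ excludes the latent class, so we use the tower property. On $\{T_0\le t\}\in\widehat{\mathcal F}_t^-$, condition the $\mathcal G_t^-$ statements down to $\widehat{\mathcal F}_t^-$. The right-hand sides $\widehat P^\lambda(\hat z'\mid\widehat z_t,a_t)$ and $\widehat R^\lambda(\widehat z_t,a_t)$ are $\widehat{\mathcal F}_t^-$-measurable, since $\widehat z_t$ and $a_t$ are learned information. So they pass through the conditional expectation unchanged. For the exponential moment, conditional Jensen is not needed: the tower property applied to $e^{\eta\xi_t}\mathbf 1\{T_0\le t\}$ preserves the bound $\exp(\eta^2\sigma_Y^2/2)$, because that bound is a constant.

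The delicate point to verify is that $\widehat z_t$ is indeed $\widehat{\mathcal F}_t^-$-measurable while the identity $\widehat z_t=\Lambda^{-1}_\lambda(o_t,c_t)$ is only used inside $\mathcal G_t^-$. With that checked, the argument goes through.

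For the final negative remark, I would point to Theorem~\ref{thm:passive_nonident_main}, or to a simple example where labels agree pathwise only after a time that is not a stopping time. In that example, conditioning on the future-dependent agreement event biases the transition law, so Markovness fails.
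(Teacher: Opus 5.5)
Your proposal is correct and follows the paper's proof essentially step for step. You work on $B_t=\{T_0\le t\}\in\mathcal G_t^-$, take label correctness at both $t$ and $t+1$ directly from the hypothesis (so no learned transports are needed), and derive the transition, reward and sub-Gaussian identities from the HCDP kernel and representative independence. You then check the isomorphism algebraically and pass to $\widehat{\mathcal F}_t^-$ by the tower property, exactly as the paper does.

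One small point: like the paper, you treat the final negative sentence only qualitatively. For that sentence, Theorem~\ref{thm:passive_nonident_main} is about identifiability from passive logs, not about the stopping-time issue, so your sketched non-stopping-time example is the more relevant justification.
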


\section{Explicit nonabelian construction}
\label{app:nonabelian_details}

This appendix instantiates Proposition~\ref{prop:min_nonabelian_main} and makes the executable commutator calculation explicit. The construction uses one visible observation, two executable actions, and three hidden layers. It therefore shows that the nonabelian memory barrier does not require a complicated observation graph.

\begin{corollary}[Barrier for count-additive loop encoders]
\label{cor:additive_barrier_main}

Let
\[
N_j(w):=(\ab(w))_j
\]
denote the signed exponent sum of generator $x_j$ in $w$.

If an encoder computes
\[
A(w)
=
\sum_{j=1}^{r}N_j(w)v_j
\]
and its readout depends only on $A(w)$, then its memory factors through
$\ab:F_r\to\mathbb Z^r$. Therefore
Theorem~\ref{thm:abelian_barrier_main} applies whenever its
decision-separation condition holds.
\end{corollary}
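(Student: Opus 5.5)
The plan is to reduce the statement to the abelianization barrier, Theorem~\ref{thm:abelian_barrier_main}. The only thing to check is that a count-additive encoder followed by a readout is a memory of the form $m(w)=g_o(\ab(w))$.

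First, I would record that $N_j(w)=(\ab(w))_j$ depends only on $\ab(w)$. By definition, $\ab:F_r\to\mathbb Z^r$ sends $w$ to its vector of signed exponent sums, and $N_j$ is its $j$-th coordinate. Next, I define the linear map $L:\mathbb Z^r\to V$ by $L(n)=\sum_j n_j v_j$, where $V$ is the ambient space of the vectors $v_j$. This gives $A=L\circ\ab$.

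If the readout (terminal memory or action distribution) is $m(w)=\phi(A(w))$ for some function $\phi$, then
\[
m(w)=(\phi\circ L)(\ab(w)).
\]
So $g_o:=\phi\circ L$ is exactly the factorization required in Definition~\ref{def:abelianized_memory_main} and in the hypothesis of Theorem~\ref{thm:abelian_barrier_main}. Equivalently, $\ab(w)=\ab(w')$ implies $A(w)=A(w')$, which in turn implies $m(w)=m(w')$.

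Finally, I apply Theorem~\ref{thm:abelian_barrier_main}. Suppose $w$ and $w'$ have equal abelianization and their transported classes $c_w,c_{w'}$ are decision separated. Then the encoder assigns the same action distribution after both histories. One of the two histories therefore incurs conditional loss at least $\varepsilon_{\rm dec}(o;c_w,c_{w'})$, so the encoder cannot be exact on a reachable set containing both. The commutator case of Corollary~\ref{cor:commutator_barrier_main} follows in the same way, since $A([u,v])=0=A(\varnothing)$.

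There is no real obstacle here. The only care needed is to state the readout so that it depends on $w$ solely through $A(w)$, and not also on auxiliary order-dependent state. Any dependence beyond $A(w)$ would break the factorization, and the corollary's hypothesis rules it out.
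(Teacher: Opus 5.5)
Your proposal is correct and follows essentially the same route as the paper: it likewise writes $A=\Phi\circ\ab$ with $\Phi(n)=\sum_j n_j v_j$, composes with the readout to obtain a memory of the form $(\psi\circ\Phi)\circ\ab$, and then invokes Theorem~\ref{thm:abelian_barrier_main}. The paper also remarks explicitly that $\Phi$ need not be injective; your argument never uses injectivity, so it covers that case as well.
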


Let
$
\cO=\{o\},
\qquad
\cA=\{a,b\},
\qquad
[n]=\{1,2,3\},
$
with
\[
P_O(o\mid o,a)
=
P_O(o\mid o,b)
=
1.
\]
Assign
\[
\alpha
:=
\sigma_a
=
(12)
\]
and
\[
\beta
:=
\sigma_b
=
(123).
\]
The generated transport group is
\[
\langle\alpha,\beta\rangle
=
S_3.
\]

The inverse transports are executable using the same two actions:
\[
\alpha^{-1}
=
\alpha,
\qquad
\beta^{-1}
=
\beta^2.
\]
Thus no additional inverse action is required.

Under the function-composition convention of Section~\ref{sec:nonabelian}, the commutator is
\[
[\alpha,\beta]
=
\alpha\beta\alpha^{-1}\beta^{-1}.
\]
A direct calculation gives
\[
[\alpha,\beta]
=
(123).
\]
An executable positive action sequence implementing this transport is obtained by replacing
\[
\alpha^{-1}
\quad\text{with}\quad
\alpha
\]
and
\[
\beta^{-1}
\quad\text{with}\quad
\beta^2.
\]
Reading temporal execution from right to left under the composition convention, one such chronological action sequence is
\[
b,b,a,b,a.
\]
Its total transport is
\[
\alpha\circ\beta\circ\alpha\circ\beta\circ\beta
=
[\alpha,\beta].
\]

In particular,
\[
[\alpha,\beta](1)=2.
\]

Choose the reward table
\[
\begin{array}{c|cc}
\text{layer} & a & b\\
\hline
1 & 1 & 0\\
2 & 0 & 1\\
3 & 0 & 0
\end{array}
\]
and let
\[
0<\gamma<\frac{1}{2}.
\]
The three immediate reward vectors are distinct:
\[
(1,0),
\qquad
(0,1),
\qquad
(0,0).
\]
Therefore the initial reward partition is already the discrete partition, and the stable quotient consists of the three singleton classes
\[
\{1\},
\qquad
\{2\},
\qquad
\{3\}.
\]

At class \(1\), action \(a\) is uniquely optimal. Indeed, both actions send layer \(1\) to layer \(2\), so their continuation values are identical while
\[
R(o,1,a)-R(o,1,b)=1.
\]

At class \(2\), action \(b\) is uniquely optimal for
\[
\gamma<\frac{1}{2}.
\]
The immediate advantage of \(b\) is \(1\), while the largest possible discounted continuation-value disadvantage is at most
\[
\frac{\gamma}{1-\gamma}<1.
\]
Hence
\[
\bar Q^\star(o,\{2\},b)
>
\bar Q^\star(o,\{2\},a).
\]

The empty word leaves class \(1\) unchanged, whereas the commutator sends it to class \(2\):
\[
\rho_o(e)(\{1\})
=
\{1\},
\]
\[
\rho_o([x_a,x_b])(\{1\})
=
\{2\}.
\]
However,
\[
\ab(e)
=
\ab([x_a,x_b])
=
0.
\]
Thus every memory surrogate that factors through abelianization assigns the same memory value to these two histories, even though their unique optimal actions are different. This verifies the hypotheses of Corollary~\ref{cor:commutator_barrier_main}.

\section{Scope remarks on extensions}
\label{app:extensions}

This appendix records brief scope remarks beyond the finite deterministic HCDP framework studied in the main text. These remarks clarify conceptual reach but do not enlarge the theorem claims of the paper.

\begin{remark}[Operator-valued transport]
A natural extension replaces edge permutations by stochastic transport kernels. In that setting, path transport is operator-valued and loop effects generate an operator semigroup rather than, in general, a permutation group. The quotient perspective remains meaningful, but both identification and exact recovery become substantially harder because edge recovery is no longer combinatorial.
\end{remark}

\begin{remark}[Continuous observation spaces]
A second extension replaces the finite observation graph by a finite-resolution skeleton extracted from a continuous observation space. At each fixed resolution one obtains a finite transport-control model to which the structural quotient theory applies. A genuine continuum theory would additionally require discretization-stability, refinement-consistency, and sample-complexity arguments, which are outside the scope of the present paper.
\end{remark}

\begin{remark}[Conceptual continuity]
Both extensions preserve the same conceptual separation as the main text: history acts through path transport, control-relevant memory is obtained by quotienting predictive redundancies, and reinforcement learning becomes standard once the correct augmented state is available.
\end{remark}

\section{Experimental Details and Additional Results}
\label{app:exp_details}

This appendix gives the complete controlled environments, diagnostic protocol, baselines, metrics, hyperparameters, and measured results for Section~\ref{sec:exp}. The experiments are designed to validate finite consequences of the theory under known HCDP assumptions. They do not claim that generic public POMDP benchmarks possess a finite exact holonomy quotient.

\subsection{Experimental Claim Boundary}
\label{app:exp_claim_boundary}

The experiments test four claims.

\begin{enumerate}
    \item \textbf{Exact quotient construction.}
    Stable refinement should recover the known quotient of ChainCover, preserve the exact optimal value, and reject every attempted one-pair strict coarsening.

    \item \textbf{Diagnostic identification.}
    Repeated class-stable probes should reduce local class error, and transition-anchored source--target diagnostics should recover deterministic quotient transports under the simulator's fixed local class labeling.

    \item \textbf{Nonabelian order dependence.}
    A memory based only on commutative loop counts should fail on equal-count histories whose ordered transport differs.

    \item \textbf{Finite quotient control.}
    Once calibrated, HMRL-D should approach the quotient oracle with a fixed three-state class memory, rather than storing the raw nuisance coordinate or the complete loop word.
\end{enumerate}

The experiments do not attempt to prove the theorems. They test implementation-level consequences that would be false if the partition, diagnostic, transport, or quotient-lifting procedures were implemented incorrectly.

\subsection{Controlled HCDP Constructions}
\label{app:controlled_hcdps}

Both environment families use a raw hidden layer
$
(c,u)\in[3]\times[b],
$
which is identified with the layer set $[3b]$ through any fixed bijection. The coordinate
$
c\in[3]
$
is control relevant, whereas
$
u\in[b]
$
is a nuisance coordinate. Rewards depend on $c$ but never on $u$.

Define
\[
\alpha=(12),
\qquad
\beta=(123)
\]
as permutations of $[3]$. For the nuisance coordinate, define the cyclic permutations
\[
\nu_{+}(u)
=
1+(u\bmod b)
\]
and
\[
\nu_{-}(u)
=
1+((u-2)\bmod b).
\]
When $b=1$, both nuisance transports reduce to the identity.

\subsubsection{ChainCover}

For a depth parameter $D\geq 1$, ChainCover has observation space
\[
\cO_{\mathrm{chain}}
=
\{o_0,o_1,\ldots,o_D,q,\dagger\}
\]
and action space
\[
\cA
=
\{g_1,g_2,g_3\}.
\]

At every chain observation $o_d$, all actions have zero reward and induce the same visible transition. Specifically,
\[
P_O(o_{d+1}\mid o_d,a)=1,
\qquad
d=0,\ldots,D-1,
\]
and
\[
P_O(q\mid o_D,a)=1.
\]
The class transport alternates between $\alpha$ and $\beta$:
\[
\pi_d
=
\begin{cases}
\alpha, & d\text{ even},\\
\beta, & d\text{ odd}.
\end{cases}
\]
The full raw transport is
\[
\sigma_{o_d,a,o_{d+1}}(c,u)
=
\begin{cases}
(\pi_d(c),\nu_{+}(u)), & d<D,\\
(\pi_D(c),\nu_{-}(u)), & d=D,
\end{cases}
\]
where the second line denotes the edge from $o_D$ to $q$.

At the query observation $q$, action $g_j$ gives reward
\[
R(q,(c,u),g_j)
=
\mathbf{1}\{c=j\},
\]
and the process transitions to the terminal observation:
\[
P_O(\dagger\mid q,g_j)=1.
\]
The query-to-terminal transport is the identity. At $\dagger$, every action has zero reward and the process self-loops with identity transport.

The stable quotient is known analytically. At every observation
\[
o\in
\{o_0,\ldots,o_D,q\},
\]
the stable classes are
\[
C_o^{(j)}
=
\{(j,u):u\in[b]\},
\qquad
j=1,2,3.
\]
At $\dagger$, all raw layers form one class. Hence
\[
|\cS_{\mathrm{raw}}|
=
3b(D+3)
\]
and
\[
|\bar{\cS}|
=
3(D+2)+1.
\]

The reward partition separates the three classes at $q$ immediately. One additional application of the stability operator propagates this distinction backward by one chain edge. Therefore the ground-truth stabilization depth is
\[
L_\star=D+1.
\]

The primary audit uses
\[
D=6,
\qquad
b=8.
\]
Its exact ground-truth counts are
\[
|\cS_{\mathrm{raw}}|=216,
\qquad
|\bar{\cS}|=25,
\qquad
L_\star=7.
\]

\subsubsection{LoopGuess}

LoopGuess has observation space
\[
\cO_{\mathrm{loop}}
=
\{h,x_\alpha,x_\beta,q,\dagger\}
\]
and action space
\[
\cA
=
\{g_1,g_2,g_3\}.
\]

The observation $h$ is the common loop base. For every action $a$,
\[
P_O(q\mid h,a)
=
p_{\mathrm{query}},
\]
\[
P_O(x_\alpha\mid h,a)
=
\frac{1-p_{\mathrm{query}}}{2},
\]
and
\[
P_O(x_\beta\mid h,a)
=
\frac{1-p_{\mathrm{query}}}{2}.
\]
All three outgoing transports from $h$ are the identity.

The two excursion observations return deterministically to $h$:
\[
P_O(h\mid x_\alpha,a)=1,
\qquad
P_O(h\mid x_\beta,a)=1.
\]
Their raw transports are
\[
\sigma_{x_\alpha,a,h}(c,u)
=
(\alpha(c),\nu_{+}(u))
\]
and
\[
\sigma_{x_\beta,a,h}(c,u)
=
(\beta(c),\nu_{-}(u)).
\]

Thus the visible excursions
\[
h\to x_\alpha\to h
\]
and
\[
h\to x_\beta\to h
\]
are closed walks based at $h$ whose quotient transports are $\alpha$ and $\beta$, respectively.

At the query observation,
\[
R(q,(c,u),g_j)
=
\mathbf{1}\{c=j\},
\]
and
\[
P_O(\dagger\mid q,g_j)=1.
\]
All other rewards are zero. The initial distribution is
\[
\rho_0(h,(1,u))
=
\frac{1}{b},
\qquad
u\in[b].
\]

At each of
\[
h,x_\alpha,x_\beta,q,
\]
the stable quotient contains the three classes
\[
C_o^{(j)}
=
\{(j,u):u\in[b]\}.
\]
At $\dagger$, all layers merge. Consequently,
\[
|\cS_{\mathrm{raw}}|
=
15b
\]
and
\[
|\bar{\cS}|=13.
\]

The primary control setting uses
\[
b=16,
\qquad
p_{\mathrm{query}}=0.1.
\]
The expected number of completed loops before a query is
\[
\frac{1-p_{\mathrm{query}}}{p_{\mathrm{query}}}
=
9.
\]

For numerical safety, an episode is forced to query after at most $48$ completed loops. For the primary setting $p_{\mathrm{query}}=0.1$, the probability that an uncapped geometric episode would reach this limit is
\[
(1-p_{\mathrm{query}})^{48}
=
0.9^{48}
\approx
6.36\times 10^{-3}.
\]
The measured primary cap frequency is
$0.0058\pm0.0010$,
which is consistent with this probability. At
$p_{\mathrm{query}}=0.05$,
the measured cap frequency increases to
$0.085\pm0.010$;
therefore that sensitivity setting should be interpreted as the configured capped process rather than as an effectively uncapped geometric process.

\subsection{Exact Stable-Quotient Audit}
\label{app:exact_audit}

The exact audit enumerates every raw state, action, and feasible visible successor. Starting from the reward partition, it repeatedly applies
\[
\Pi^{(t+1)}
=
\mathcal T(\Pi^{(t)})
\]
until no partition changes.

For an inferred partition family $\widehat\Pi$, define
\[
\epsilon_R(\widehat\Pi)
=
\max_{\substack{o,a,\\
i\equiv_o^{\widehat\Pi}j}}
\left|
R(o,i,a)-R(o,j,a)
\right|.
\]

Define the successor inconsistency by
\[
\epsilon_T(\widehat\Pi)
=
\max_{\substack{
e:o\to o',\\
i\equiv_o^{\widehat\Pi}j
}}
\mathbf{1}
\left\{
[\sigma_e(i)]_{o'}^{\widehat\Pi}
\neq
[\sigma_e(j)]_{o'}^{\widehat\Pi}
\right\}.
\]

The quotient reward and transition kernel are constructed from $\widehat\Pi$, and value iteration is run on both the raw latent MDP and quotient MDP until the sup-norm Bellman residual is below
\[
10^{-12}.
\]
The value-preservation metric is
\[
\Delta_V
=
\max_{o,i}
\left|
V^\star_{\mathrm{full}}(o,i)
-
\bar V^\star
\left(
o,[i]_o^{\widehat\Pi}
\right)
\right|.
\]

To audit coarseness, for every nonterminal observation and every unordered pair of distinct stable classes, we form the partition obtained by merging exactly that pair while leaving every other block unchanged. A strict coarsening is rejected if
\[
\epsilon_R>0
\quad\text{or}\quad
\epsilon_T>0.
\]

\begin{table}[t]
\centering
\caption{\textbf{Exact ChainCover structural audit.}
Measured quantities are obtained by exhaustive enumeration and agree with the analytical construction.}
\label{tab:chain_exact_audit}
\begin{tabular}{lcc}
\toprule
Metric & Analytical & Measured \\
\midrule
Raw latent states & 216 & 216 \\
Stable quotient states & 25 & 25 \\
Compression ratio & 8.64$\times$ & 8.64$\times$ \\
Stabilization round & 7 & 7 \\
Reward inconsistency $\epsilon_R$ & 0 & 0 \\
Transition inconsistency $\epsilon_T$ & 0 & 0 \\
Optimal-value gap $\Delta_V$ & 0 & 0 \\
Rejected strict coarsenings & 24/24 & 24/24 \\
\bottomrule
\end{tabular}
\end{table}

\begin{figure}[t]
    \centering
    \includegraphics[width=\columnwidth]{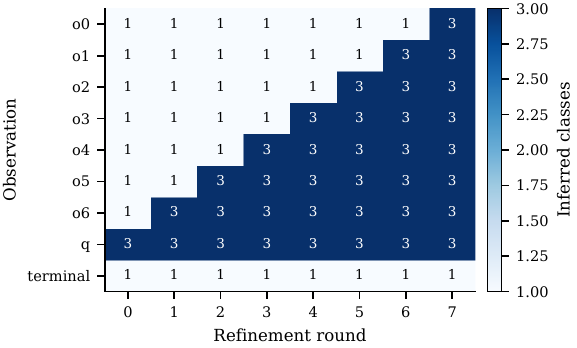}
    \caption{\textbf{Backward propagation of stable class distinctions in ChainCover.}
    The reward partition separates the three classes at the query observation in round $0$.
    Each subsequent refinement round propagates this distinction backward by one chain position, and the partition stabilizes at round $7$, as predicted for depth $D=6$.}
    \label{fig:chain_refinement_full}
\end{figure}

\subsection{Diagnostic Fingerprints}
\label{app:diagnostic_protocol}

The standalone recovery experiment assigns the three classes the fixed prototype vectors
$
v_1=(0.85,0,0.35),\qquad
v_2=(-0.42,0.74,-0.20),\qquad
v_3=(-0.42,-0.74,-0.20).
$
Their minimum pairwise Euclidean separation is
\[
\Delta
=
\min_{i\neq j}\|v_i-v_j\|_2
=
1.48.
\]
A diagnostic observation from class $c$ is
\[
Y=v_c+\xi,
\qquad
\xi\sim\mathcal N(0,\sigma_{\mathrm{diag}}^2 I_3).
\]
For each trial, the evaluator averages
$m\in\{1,2,4,8,16,32,64\}$
independent observations and applies nearest-prototype decoding:
\[
\widehat c
=
\argmin_{j\in[3]}
\left\|
\frac{1}{m}\sum_{\ell=1}^{m}Y_\ell-v_j
\right\|_2.
\]
The tested noise levels are
\[
\sigma_{\mathrm{diag}}
\in
\{0.25,0.50,1.00\}.
\]
For every $(\sigma_{\mathrm{diag}},m)$ configuration, each repeated evaluation uses $500$ trials from each of the three classes, giving $1{,}500$ class trials in total. Because the released diagnostic sweep uses a fixed prototype labeling rather than independently permuted local gauges, the reported quantity is direct classification error rather than a separately tested gauge-alignment error.

\subsection{Transition-Anchored Transport Recovery}
\label{app:transport_recovery_exp}

For each generator
$e\in\{\alpha,\beta\}$
and source class $c$, the target class is $\tau_e(c)$. A transport trial generates $m$ independent noisy observations of the target prototype $v_{\tau_e(c)}$, averages them, and applies the same nearest-prototype decoder. The row-wise transport error is
\[
\operatorname{Err}_{\mathrm{transport}}
=
\frac{1}{6N_e}
\sum_{e\in\{\alpha,\beta\}}
\sum_{c=1}^{3}
\sum_{r=1}^{N_e}
\mathbf{1}
\left\{
\widehat c_{e,c,r}
\neq
\tau_e(c)
\right\},
\]
where $N_e=250$ trials are used for each source-class and generator pair. Thus, each repeated evaluation at a given $(\sigma_{\mathrm{diag}},m)$ setting contains $6N_e=1{,}500$ transport trials.

\begin{table*}[t]
\centering
\caption{\textbf{Diagnostic and transport-recovery errors.}
Entries are reported as mean $\pm$ standard deviation.
Each repeated evaluation uses $1{,}500$ class trials and $1{,}500$ transport trials.}
\label{tab:diagnostic_full}
\resizebox{\textwidth}{!}{%
\begin{tabular}{c ccc ccc}
\toprule
& \multicolumn{3}{c}{Class error}
& \multicolumn{3}{c}{Transport error} \\
\cmidrule(lr){2-4}
\cmidrule(lr){5-7}
$m$
& $\sigma=0.25$
& $\sigma=0.50$
& $\sigma=1.00$
& $\sigma=0.25$
& $\sigma=0.50$
& $\sigma=1.00$ \\
\midrule
1
& 0.0024 $\pm$ 0.0010
& 0.1104 $\pm$ 0.0110
& 0.3436 $\pm$ 0.0090
& 0.0029 $\pm$ 0.0010
& 0.1087 $\pm$ 0.0131
& 0.3467 $\pm$ 0.0127 \\
2
& 0.0000 $\pm$ 0.0000
& 0.0287 $\pm$ 0.0027
& 0.2198 $\pm$ 0.0128
& 0.0000 $\pm$ 0.0000
& 0.0282 $\pm$ 0.0062
& 0.2278 $\pm$ 0.0088 \\
4
& 0.0000 $\pm$ 0.0000
& 0.0024 $\pm$ 0.0010
& 0.1113 $\pm$ 0.0052
& 0.0000 $\pm$ 0.0000
& 0.0018 $\pm$ 0.0010
& 0.1127 $\pm$ 0.0064 \\
8
& 0.0000 $\pm$ 0.0000
& 0.0000 $\pm$ 0.0000
& 0.0260 $\pm$ 0.0007
& 0.0000 $\pm$ 0.0000
& 0.0000 $\pm$ 0.0000
& 0.0282 $\pm$ 0.0034 \\
16
& 0.0000 $\pm$ 0.0000
& 0.0000 $\pm$ 0.0000
& 0.0018 $\pm$ 0.0010
& 0.0000 $\pm$ 0.0000
& 0.0000 $\pm$ 0.0000
& 0.0011 $\pm$ 0.0010 \\
32
& 0.0000 $\pm$ 0.0000
& 0.0000 $\pm$ 0.0000
& 0.0000 $\pm$ 0.0000
& 0.0000 $\pm$ 0.0000
& 0.0000 $\pm$ 0.0000
& 0.0000 $\pm$ 0.0000 \\
64
& 0.0000 $\pm$ 0.0000
& 0.0000 $\pm$ 0.0000
& 0.0000 $\pm$ 0.0000
& 0.0000 $\pm$ 0.0000
& 0.0000 $\pm$ 0.0000
& 0.0000 $\pm$ 0.0000 \\
\bottomrule
\end{tabular}}
\end{table*}

At the primary noise level $\sigma_{\mathrm{diag}}=0.5$, class and transport errors decrease from approximately $0.11$ at $m=1$ to below $0.003$ at $m=4$, and both become zero in the reported trials by $m=8$. At the harder noise level $\sigma_{\mathrm{diag}}=1.0$, the same monotone pattern is observed, with both errors reaching zero by $m=32$.

\subsection{Control Baselines}
\label{app:control_baselines}

All tabular methods use the same Q-learning update,
$
Q_{t+1}(z_t,a_t)
=
Q_t(z_t,a_t)
+
\eta_t
\left[
Y_t
+
\gamma
\max_{a'}
Q_t(z_{t+1},a')
-
Q_t(z_t,a_t)
\right].
$

Only the state descriptor differs.

\paragraph{Obs-Q.}
The state is the current visible observation:
\[
z_t=o_t.
\]
At the query observation, all histories are therefore merged.

\paragraph{Count-Q.}
The state contains the current observation and the commutative loop-count vector:
\[
z_t
=
\left(
o_t,
N_\alpha(t),
N_\beta(t)
\right).
\]
The order of completed loops is discarded. This baseline directly instantiates an abelianized loop-memory surrogate.

\paragraph{History-Q.}
The state contains the complete visible loop word:
\[
z_t
=
(o_t,w_t),
\qquad
w_t\in\{\alpha,\beta\}^{*}.
\]
Because LoopGuess uses a fixed initial quotient class, this representation is
exact on the truncated experimental trajectories, but its number of possible
states grows exponentially with loop length.

\paragraph{Raw-Q oracle.}
The privileged state is
\[
z_t
=
(o_t,c_t,u_t).
\]
This oracle retains the nuisance coordinate and therefore uses up to $3b$ latent memory values at a fixed nonterminal observation.

\paragraph{HMRL-D.}
HMRL-D first estimates local class prototypes and the two local loop maps
$
\widehat\tau_\alpha,
\quad
\widehat\tau_\beta.
$
After calibration, all Q-tables and optimizer statistics are reinitialized. The initial local class is fixed by the calibrated initial checkpoint, and subsequent classes are propagated as
\[
\widehat c_{t+1}
=
\widehat\tau_e(\widehat c_t)
\]
after each completed loop. The control state is
\[
z_t
=
(o_t,\widehat c_t).
\]

\paragraph{Quotient-Q oracle.}
The privileged quotient state is
\[
z_t
=
(o_t,c_t).
\]
This is the smallest exact class-tracking oracle and supplies the reference control curve.

\subsection{Training and Evaluation Protocol}
\label{app:training_protocol}

Unless otherwise stated, the primary parameters are:

\begin{table}[t]
\centering
\caption{\textbf{Hyperparameters used for the reported experimental artifacts.}}
\label{tab:exp_hyperparameters}
\begin{tabular}{ll}
\toprule
Parameter & Value \\
\midrule
ChainCover depth $D$ & $6$ \\
ChainCover nuisance size & $8$ \\
LoopGuess nuisance size & $16$ \\
Query probability & $0.10$ \\
Discount factor $\gamma$ & $0.99$ \\
Online training episodes & $3{,}500$ \\
Evaluation interval & $250$ \\
Evaluation episodes/checkpoint & $500$ \\
Final evaluation episodes & $2{,}000$ \\
Q-learning rate & $0.20$ \\
Initial/final exploration & $0.90/0.03$ \\
Exploration-decay episodes & $1{,}200$ \\
Maximum completed loops & $48$ \\
Diagnostic noise & $0.50$ \\
Prototype repetitions & $24$ \\
Edge samples per source row & $24$ \\
Sweep training episodes & $1{,}800$ \\
Sweep final evaluation episodes & $1{,}000$ \\
\bottomrule
\end{tabular}
\end{table}

The exploration rate is linearly annealed from $0.90$ to $0.03$ over the first $1{,}200$ online episodes and remains fixed thereafter. Evaluation uses greedy actions on independently sampled environment trajectories.

The standalone recovery sweep in Appendix~\ref{app:diagnostic_protocol} and the LoopGuess control calibrator use separate fixed synthetic prototype sets. For LoopGuess control, HMRL-D uses the four-dimensional fingerprints
$
\mu_1=(1.25,-0.25,0.55,-0.90),
$
$
\mu_2=(-0.85,1.10,-0.20,0.65),
\quad
\mu_3=(0.15,-0.75,1.20,0.35).
$
Each prototype is estimated from $24$ repeated observations. Each of the six source rows corresponding to the two generators and three source classes uses $24$ independent edge observations. Counting one complete fingerprint observation as one resettable diagnostic interaction gives
\[
N_{\mathrm{calibration}}
=
3\times24+6\times24
=
216.
\]
After calibration, the Q-table is reinitialized before online training.

\subsection{Evaluation Metrics}
\label{app:exp_metrics}

\paragraph{Query success.}
For an evaluation episode $k$, let
\[
S_k
=
\mathbf{1}
\left\{
\text{the selected query action equals the terminal class}
\right\}.
\]
The success rate is
\[
\operatorname{Success}
=
\frac{1}{N_{\mathrm{eval}}}
\sum_{k=1}^{N_{\mathrm{eval}}}
S_k.
\]

\paragraph{Normalized learning-curve AUC.}
Let $x_j$ be the number of completed online-training episodes at checkpoint $j$, and let $s_j$ be the corresponding evaluation success. The implementation reports the normalized trapezoidal area
\[
\operatorname{AUC}
=
\frac{1}{x_K-x_0}
\sum_{j=0}^{K-1}
\frac{s_j+s_{j+1}}{2}
\left(x_{j+1}-x_j\right).
\]
Because $s_j\in[0,1]$, the normalized AUC also lies in $[0,1]$.

\paragraph{Paired-order accuracy.}
Let
\[
\mathcal P
=
\left\{
(w,w'):
\begin{array}{l}
N_\alpha(w)=N_\alpha(w'),\\
N_\beta(w)=N_\beta(w'),\\
\rho(w)(1)\neq\rho(w')(1)
\end{array}
\right\}
\]
be a finite evaluation set of same-count, different-transport word pairs.

For each word, the environment is rolled forward to the query observation without policy intervention. The method then chooses a query action using the memory produced by that history. Paired-order accuracy is the fraction of individual histories in these pairs on which the correct query action is selected.

\paragraph{Visited memory states.}
For each method, we count the number of distinct memory values observed at the query observation during final evaluation. The observation coordinate is held fixed, so this metric measures only the memory alphabet actually used to distinguish hidden conditions at the decision point.

\paragraph{Truncation frequency.}
We report
\[
f_{\mathrm{cap}}
=
\frac{
\#\{\text{episodes forced to query at the loop cap}\}
}{
\#\{\text{all episodes}\}
}.
\]
A non-negligible value means that the reported results characterize the capped
process; the cap should be increased only when an uncapped geometric-horizon
interpretation is required.

\subsection{Complete Control Results}
\label{app:exp_full_results}

We report two controlled robustness sweeps. The nuisance sweep varies the number of reward-irrelevant latent alternatives while holding the quotient fixed. The query-probability sweep changes the expected visible-history length before the terminal decision. All entries are reported as mean $\pm$ standard deviation.

\begin{figure}[t]
    \centering
    \includegraphics[width=\columnwidth]{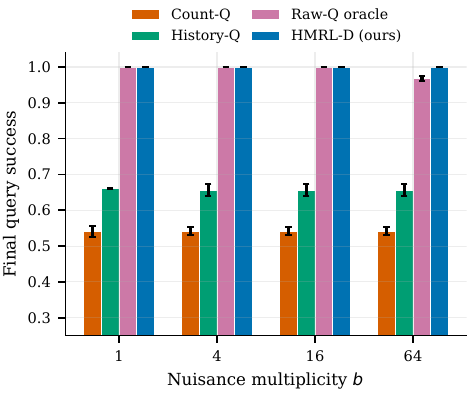}
    \caption{\textbf{Robustness to nuisance multiplicity.}
    HMRL-D preserves perfect final query success while retaining the same three-state decision memory.
    Raw-Q becomes harder to train at $b=64$ because it distinguishes all nuisance values, whereas Count-Q and History-Q remain limited by their respective representation deficiencies.}
    \label{fig:loopguess_nuisance_robustness}
\end{figure}

\begin{table*}[t]
\centering
\caption{\textbf{Final LoopGuess success across nuisance multiplicities.}
Bold marks the best deployable method; oracle methods are reference rows.}
\label{tab:loopguess_nuisance_full}
\resizebox{\textwidth}{!}{%
\begin{tabular}{lcccc}
\toprule
Method & $b=1$ & $b=4$ & $b=16$ & $b=64$ \\
\midrule
Obs-Q
& 0.352 $\pm$ 0.026
& 0.364 $\pm$ 0.026
& 0.364 $\pm$ 0.026
& 0.364 $\pm$ 0.026 \\
Count-Q
& 0.541 $\pm$ 0.016
& 0.542 $\pm$ 0.010
& 0.542 $\pm$ 0.010
& 0.542 $\pm$ 0.010 \\
History-Q
& 0.660 $\pm$ 0.001
& 0.656 $\pm$ 0.017
& 0.656 $\pm$ 0.017
& 0.656 $\pm$ 0.017 \\
Raw-Q oracle
& 1.000 $\pm$ 0.000
& 1.000 $\pm$ 0.000
& 1.000 $\pm$ 0.000
& 0.967 $\pm$ 0.007 \\
HMRL-D (ours)
& \textbf{1.000 $\pm$ 0.000}
& \textbf{1.000 $\pm$ 0.000}
& \textbf{1.000 $\pm$ 0.000}
& \textbf{1.000 $\pm$ 0.000} \\
Quotient-Q oracle
& 1.000 $\pm$ 0.000
& 1.000 $\pm$ 0.000
& 1.000 $\pm$ 0.000
& 1.000 $\pm$ 0.000 \\
\bottomrule
\end{tabular}}
\end{table*}

\begin{figure}[t]
    \centering
    \includegraphics[width=\columnwidth]{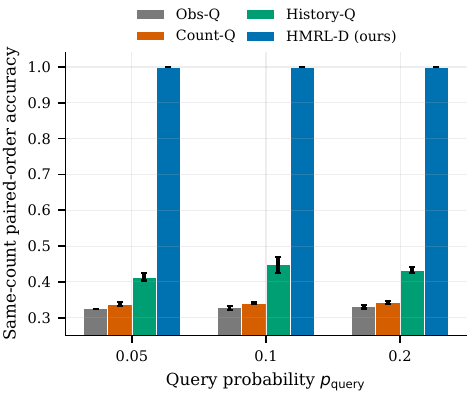}
    \caption{\textbf{Paired-order robustness across query probabilities.}
    HMRL-D remains exact as the expected visible history becomes longer.
    Count-Q remains close to the $1/3$ chance level because loop counts cannot recover ordering, while History-Q only partially resolves the larger collection of observed words.}
    \label{fig:loopguess_query_robustness}
\end{figure}

\begin{table*}[t]
\centering
\caption{\textbf{Paired-order accuracy across query probabilities.}
A smaller $p_{\mathrm{query}}$ produces longer loop histories on average.
Bold marks the best deployable method.}
\label{tab:loopguess_query_full}
\resizebox{\textwidth}{!}{%
\begin{tabular}{lccc}
\toprule
Method
& $p_{\mathrm{query}}=0.05$
& $p_{\mathrm{query}}=0.10$
& $p_{\mathrm{query}}=0.20$ \\
\midrule
Obs-Q
& 0.325 $\pm$ 0.000
& 0.328 $\pm$ 0.005
& 0.331 $\pm$ 0.005 \\
Count-Q
& 0.338 $\pm$ 0.006
& 0.340 $\pm$ 0.003
& 0.342 $\pm$ 0.004 \\
History-Q
& 0.414 $\pm$ 0.010
& 0.448 $\pm$ 0.023
& 0.433 $\pm$ 0.008 \\
Raw-Q oracle
& 1.000 $\pm$ 0.000
& 1.000 $\pm$ 0.000
& 1.000 $\pm$ 0.000 \\
HMRL-D (ours)
& \textbf{1.000 $\pm$ 0.000}
& \textbf{1.000 $\pm$ 0.000}
& \textbf{1.000 $\pm$ 0.000} \\
Quotient-Q oracle
& 1.000 $\pm$ 0.000
& 1.000 $\pm$ 0.000
& 1.000 $\pm$ 0.000 \\
\bottomrule
\end{tabular}}
\end{table*}

\begin{figure}[t]
    \centering
    \includegraphics[width=\columnwidth]{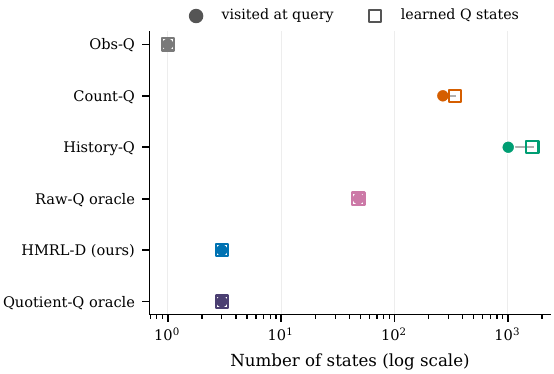}
    \caption{\textbf{Decision-time and learned-state complexity.}
    Filled markers report memory states visited at the query observation, while open markers report the total number of learned Q-states.
    HMRL-D and Quotient-Q use three states under both measures.
    Count-Q and especially History-Q create substantially larger learned state spaces.}
    \label{fig:loopguess_state_complexity}
\end{figure}

The nuisance sweep confirms that the quotient representation is invariant to an increasing number of control-irrelevant latent alternatives. In contrast, Raw-Q must learn separate values for each nuisance state and falls below perfect success at $b=64$ under the fixed sweep budget. The query-probability sweep provides a direct test of ordered transport: Count-Q remains near chance for all three settings, whereas HMRL-D maintains perfect paired-order accuracy.

\section{Proofs}
\label{app:proofs}

This section collects the proofs in the order in which the corresponding
results are used in the main text and the preceding appendices. Each result has
a single dedicated proof subsection.

\subsection{Proof of Lemma~\ref{lem:memory_right_congruence_main}}

\begin{proof}
Fix an observation $o\in\cO$, and let $H$ and $H'$ be two histories ending at $o$ such that $H\sim_{\cM}H'$. By the definition of $\sim_{\cM}$, this assumption is exactly
\[
F(H)=F(H').
\]
Let
\[
m:=F(H)=F(H')\in\cM
\]
denote their common memory value. The finite-memory controller selects its action distribution only as a function of the current observation and the current memory state. Therefore, for every action $\bar a\in\cA$,
\[
\pi_{\cM}(\bar a\mid o,F(H))
=
\pi_{\cM}(\bar a\mid o,m)
=
\pi_{\cM}(\bar a\mid o,F(H')).
\]
The two distributions consequently assign the same probability to every element of the finite action space $\cA$. Hence they are equal as probability distributions in $\Delta(\cA)$, and
\[
\pi_{\cM}(\cdot\mid o,F(H))
=
\pi_{\cM}(\cdot\mid o,F(H')).
\]

Now let $(a,o')$ be any common observable extension of $H$ and $H'$. Because both histories end at the same observation $o$, the recursive definition of the history encoder gives
\[
F\bigl(H\cdot(a,o')\bigr)
=
U\bigl(F(H),o,a,o'\bigr)
\]
and, similarly,
\[
F\bigl(H'\cdot(a,o')\bigr)
=
U\bigl(F(H'),o,a,o'\bigr).
\]
Substituting $F(H)=F(H')=m$ into the two identities yields
\[
\begin{aligned}
F\bigl(H\cdot(a,o')\bigr)
&=
U\bigl(F(H),o,a,o'\bigr)\\
&=
U(m,o,a,o')\\
&=
U\bigl(F(H'),o,a,o'\bigr)\\
&=
F\bigl(H'\cdot(a,o')\bigr).
\end{aligned}
\]
The two extended histories both end at the observation $o'$. The defining condition of $\sim_{\cM}$ therefore applies to them, and the preceding equality implies
\[
H\cdot(a,o')
\sim_{\cM}
H'\cdot(a,o').
\]

The same reasoning shows that equivalence is preserved not only by one observable extension but by every finite common observable continuation. To make this explicit, let
\[
\xi
=
\bigl((a_0,o_1),\ldots,(a_{k-1},o_k)\bigr)
\]
be a common observable continuation from $o_0:=o$. Define the corresponding history prefixes recursively by
\[
H_0:=H,
\qquad
H'_0:=H',
\]
and, for $j=0,\ldots,k-1$, by
\[
H_{j+1}:=H_j\cdot(a_j,o_{j+1}),
\qquad
H'_{j+1}:=H'_j\cdot(a_j,o_{j+1}).
\]
The histories $H_j$ and $H'_j$ end at the same observation $o_j$ for every $j$. At the initial prefix,
\[
F(H_0)=F(H)=F(H')=F(H'_0).
\]
Suppose that, for some $j<k$,
\[
F(H_j)=F(H'_j).
\]
Applying the recursive encoder update to the next common extension gives
\[
F(H_{j+1})
=
U\bigl(F(H_j),o_j,a_j,o_{j+1}\bigr)
\]
and
\[
F(H'_{j+1})
=
U\bigl(F(H'_j),o_j,a_j,o_{j+1}\bigr).
\]
Since the memory arguments in these two expressions are equal by the induction hypothesis, all four arguments supplied to $U$ are identical. Because $U$ is a deterministic function,
\[
\begin{aligned}
F(H_{j+1})
&=
U\bigl(F(H_j),o_j,a_j,o_{j+1}\bigr)\\
&=
U\bigl(F(H'_j),o_j,a_j,o_{j+1}\bigr)\\
&=
F(H'_{j+1}).
\end{aligned}
\]
Induction therefore proves that
\[
F(H_j)=F(H'_j)
\qquad
\text{for every }j=0,\ldots,k.
\]
Thus, after every prefix of the same observable continuation, the two histories remain in the same memory class. In particular, at every corresponding future decision point, they have the same terminal observation and the same memory value, so the controller again uses the same action distribution.

Finally, for each fixed observation $o$, the relation $\sim_{\cM}$ on histories ending at $o$ is an equivalence relation. Reflexivity follows from
\[
F(H)=F(H);
\]
symmetry follows because
\[
F(H)=F(H')
\quad\Longrightarrow\quad
F(H')=F(H);
\]
and transitivity follows because
$
F(H)=F(H')
\quad\text{and}\quad
F(H')=F(H'')
\quad\Longrightarrow\quad
F(H)=F(H'').
$
Since every equivalence class is determined by a memory value in the finite set $\cM$, there are at most $|\cM|$ such classes among histories ending at any fixed observation. When histories are indexed jointly by their terminal observation and memory value, the total number of classes is at most
$
|\cO|\,|\cM|.
$
The preservation of these classes under every common observable continuation is exactly the required right-congruence property.
\end{proof}

\subsection{Proof of Lemma~\ref{lem:memory_aug_mdp_main}}

\begin{proof}
Let
\[
Z_t:=(s_t,m_t)\in\cS\times\cM
\]
denote the latent state--memory pair. We first derive its action-conditioned transition law. Fix a time $t\geq 0$, a current augmented state $(s,m)\in\cS\times\cM$, an action $a\in\cA$, and a candidate next augmented state $(s',m')\in\cS\times\cM$. Conditional on
\[
(s_t,m_t)=(s,m)
\qquad\text{and}\qquad
a_t=a,
\]
the latent state evolves according to the original transition kernel. Therefore,
\[
\bP\left(
s_{t+1}=s'
\mid
s_t=s,m_t=m,a_t=a
\right)
=
P(s'\mid s,a).
\]
The presence of $m_t$ in the conditioning does not change this probability because, once $s_t$ and $a_t$ are fixed, the transition of the latent state is governed by the Markov kernel $P(\cdot\mid s_t,a_t)$ and is independent of the preceding history and the controller's internal memory.

The memory state does not evolve independently. Since the current and next observations are
\[
o_t=h(s_t)
\qquad\text{and}\qquad
o_{t+1}=h(s_{t+1}),
\]
respectively, the memory update rule gives
\[
m_{t+1}
=
U\bigl(m_t,h(s_t),a_t,h(s_{t+1})\bigr).
\]
Consequently, on the event
\[
\{s_t=s,m_t=m,a_t=a,s_{t+1}=s'\},
\]
the next memory state is uniquely determined by
\[
m_{t+1}
=
U\bigl(m,h(s),a,h(s')\bigr).
\]
Using the conditional product rule, we obtain
$
\bP\left(
s_{t+1}=s',m_{t+1}=m'
\mid
s_t=s,m_t=m,a_t=a
\right)
\quad=
\bP\left(
s_{t+1}=s'
\mid
s_t=s,m_t=m,a_t=a
\right)
\qquad\quad\times
\bP\left(
m_{t+1}=m'
\mid
s_t=s,m_t=m,a_t=a,s_{t+1}=s'
\right).
$
The first factor is
\[
P(s'\mid s,a).
\]
The second factor is equal to one if
\[
m'
=
U\bigl(m,h(s),a,h(s')\bigr),
\]
and is equal to zero otherwise. Hence
\[
\begin{aligned}
&\bP\left(
s_{t+1}=s',m_{t+1}=m'
\mid
s_t=s,m_t=m,a_t=a
\right)\\
&\quad=
P(s'\mid s,a)
\mathbf{1}
\left\{
m'
=
U\bigl(m,h(s),a,h(s')\bigr)
\right\}.
\end{aligned}
\]
Therefore, the action-conditioned transition kernel on the augmented state space is
$
\widetilde P\bigl((s',m')\mid(s,m),a\bigr)
=
P(s'\mid s,a)
\mathbf{1}
\left\{
m'
=
U\bigl(m,h(s),a,h(s')\bigr)
\right\}.
$

This expression is a valid probability kernel. It is clearly nonnegative. Moreover, for every fixed $(s,m)\in\cS\times\cM$ and $a\in\cA$,
\[
\begin{aligned}
&\sum_{s'\in\cS}
\sum_{m'\in\cM}
\widetilde P\bigl((s',m')\mid(s,m),a\bigr)\\
&\quad=
\sum_{s'\in\cS}
\sum_{m'\in\cM}
P(s'\mid s,a)
\mathbf{1}
\left\{
m'
=
U\bigl(m,h(s),a,h(s')\bigr)
\right\}\\
&\quad=
\sum_{s'\in\cS}
P(s'\mid s,a)
\sum_{m'\in\cM}
\mathbf{1}
\left\{
m'
=
U\bigl(m,h(s),a,h(s')\bigr)
\right\}.
\end{aligned}
\]
For each fixed $s'$, the deterministic update map $U$ produces exactly one memory state
\[
U\bigl(m,h(s),a,h(s')\bigr)\in\cM.
\]
Thus,
\[
\sum_{m'\in\cM}
\mathbf{1}
\left\{
m'
=
U\bigl(m,h(s),a,h(s')\bigr)
\right\}
=
1.
\]
It follows that
\[
\begin{aligned}
\sum_{s'\in\cS}
\sum_{m'\in\cM}
\widetilde P\bigl((s',m')\mid(s,m),a\bigr)
&=
\sum_{s'\in\cS}P(s'\mid s,a)\\
&=
1.
\end{aligned}
\]

We next derive the initial distribution of the augmented process. The original latent state is initialized according to
\[
s_0\sim\rho_0.
\]
Its initial observation is
\[
o_0=h(s_0),
\]
and the memory state is initialized by
\[
m_0=\iota(o_0)=\iota(h(s_0)).
\]
Therefore, for every $(s,m)\in\cS\times\cM$,
\[
\begin{aligned}
\bP(s_0=s,m_0=m)
&=
\bP\bigl(s_0=s,\iota(h(s_0))=m\bigr)\\
&=
\bP(s_0=s)
\mathbf{1}\{m=\iota(h(s))\}\\
&=
\rho_0(s)
\mathbf{1}\{m=\iota(h(s))\}.
\end{aligned}
\]
Hence the initial distribution of the augmented process is
\[
\widetilde\rho_0(s,m)
=
\rho_0(s)
\mathbf{1}\{m=\iota(h(s))\}.
\]
This is also a valid probability distribution because
\[
\begin{aligned}
\sum_{s\in\cS}
\sum_{m\in\cM}
\widetilde\rho_0(s,m)
&=
\sum_{s\in\cS}
\rho_0(s)
\sum_{m\in\cM}
\mathbf{1}\{m=\iota(h(s))\}\\
&=
\sum_{s\in\cS}\rho_0(s)\\
&=
1.
\end{aligned}
\]

The memory variable records information used by the controller, but it does not modify the physical reward generated by the latent state and action. Thus, when the augmented state is $(s,m)$ and action $a$ is selected, the augmented reward is
\[
\widetilde r((s,m),a)
=
r(s,a).
\]

Consider now the lifted policy
\[
\widetilde\pi(a\mid s,m)
=
\pi_{\cM}(a\mid h(s),m).
\]
For every $(s,m)\in\cS\times\cM$,
\[
\begin{aligned}
\sum_{a\in\cA}
\widetilde\pi(a\mid s,m)
&=
\sum_{a\in\cA}
\pi_{\cM}(a\mid h(s),m)\\
&=
1,
\end{aligned}
\]
so $\widetilde\pi$ is a valid stochastic policy on the augmented state space. Although it is written as a function of $s$, it depends on $s$ only through the observable value $h(s)$, and therefore does not provide the controller with any additional latent-state information.

To prove the Markov property, let
\[
\mathcal{F}_t
:=
\sigma\bigl(
Z_0,a_0,Z_1,a_1,\ldots,a_{t-1},Z_t
\bigr)
\]
be the augmented history immediately before the action at time $t$ is selected. Suppose that
\[
Z_t=(s,m).
\]
Under the original finite-memory controller, the conditional distribution of the action is
\[
\begin{aligned}
\bP(a_t=a\mid\mathcal{F}_t)
&=
\pi_{\cM}(a\mid h(s_t),m_t)\\
&=
\pi_{\cM}(a\mid h(s),m)\\
&=
\widetilde\pi(a\mid s,m).
\end{aligned}
\]
Thus, the conditional law of $a_t$ depends on the entire augmented history only through the present augmented state $Z_t$.

For any candidate next augmented state $(s',m')$, the conditional law of total probability gives
\[
\begin{aligned}
&\bP\bigl(
Z_{t+1}=(s',m')
\mid
\mathcal{F}_t
\bigr)\\
&\quad=
\sum_{a\in\cA}
\bP(a_t=a\mid\mathcal{F}_t)
\bP\bigl(
Z_{t+1}=(s',m')
\mid
\mathcal{F}_t,a_t=a
\bigr).
\end{aligned}
\]
Given $Z_t=(s,m)$ and $a_t=a$, the transition law derived above depends on no earlier variable. Hence
\[
\bP\bigl(
Z_{t+1}=(s',m')
\mid
\mathcal{F}_t,a_t=a
\bigr)
=
\widetilde P\bigl((s',m')\mid(s,m),a\bigr).
\]
Substituting the action law and the action-conditioned transition law yields
\[
\begin{aligned}
&\bP\bigl(
Z_{t+1}=(s',m')
\mid
\mathcal{F}_t
\bigr)\\
&\quad=
\sum_{a\in\cA}
\widetilde\pi(a\mid s,m)
\widetilde P\bigl((s',m')\mid(s,m),a\bigr).
\end{aligned}
\]
Define the induced closed-loop kernel by
$
\widetilde P_{\widetilde\pi}
\bigl((s',m')\mid(s,m)\bigr)
:=
\sum_{a\in\cA}
\widetilde\pi(a\mid s,m)
\widetilde P\bigl((s',m')\mid(s,m),a\bigr).
$
Then
\[
\bP\bigl(
Z_{t+1}=(s',m')
\mid
\mathcal{F}_t
\bigr)
=
\widetilde P_{\widetilde\pi}
\bigl((s',m')\mid Z_t\bigr).
\]
The right-hand side depends on the augmented history only through $Z_t$. Therefore,
\[
Z_t=(s_t,m_t)
\]
is a time-homogeneous Markov process under $\widetilde\pi$.

It remains to prove that the augmented process has exactly the same expected discounted return as the original finite-memory controller. Fix an arbitrary finite horizon $T\geq 1$ and an arbitrary augmented trajectory
$
\tau_T
=
(
(s_0,m_0),a_0,
(s_1,m_1),a_1,...,(s_{T-1},m_{T-1}),a_{T-1},
(s_T,m_T)).
$
Under the original finite-memory controller, the initial state has probability $\rho_0(s_0)$, and the initial memory state must satisfy
\[
m_0=\iota(h(s_0)).
\]
At every time $t$, the action is generated according to
\[
\pi_{\cM}(a_t\mid h(s_t),m_t),
\]
the next latent state is generated according to
\[
P(s_{t+1}\mid s_t,a_t),
\]
and the next memory state must satisfy
\[
m_{t+1}
=
U\bigl(
m_t,h(s_t),a_t,h(s_{t+1})
\bigr).
\]
The chain rule therefore gives
$
\bP_{\mathrm{orig}}(\tau_T)
=
\rho_0(s_0)
\mathbf{1}\{m_0=\iota(h(s_0))\}
\quad\times
\prod_{t=0}^{T-1}
\Bigl[
\pi_{\cM}(a_t\mid h(s_t),m_t)
P(s_{t+1}\mid s_t,a_t)\hspace{34mm}\times
\mathbf{1}
\left\{
m_{t+1}
=
U\bigl(
m_t,h(s_t),a_t,h(s_{t+1})
\bigr)
\right\}
\Bigr].
$

Under the augmented process initialized from $\widetilde\rho_0$ and controlled by $\widetilde\pi$, the probability of the same trajectory is
$
\bP_{\mathrm{aug}}(\tau_T)
=
\widetilde\rho_0(s_0,m_0)
\times
\prod_{t=0}^{T-1}
\Bigl[
\widetilde\pi(a_t\mid s_t,m_t)
\widetilde P\bigl(
(s_{t+1},m_{t+1})
\mid
(s_t,m_t),a_t
\bigr)
\Bigr].
$
Substituting the definitions
\[
\widetilde\rho_0(s_0,m_0)
=
\rho_0(s_0)
\mathbf{1}\{m_0=\iota(h(s_0))\},
\]
\[
\widetilde\pi(a_t\mid s_t,m_t)
=
\pi_{\cM}(a_t\mid h(s_t),m_t),
\]
and
$
\widetilde P\bigl(
(s_{t+1},m_{t+1})
\mid
(s_t,m_t),a_t
\bigr)
=
P(s_{t+1}\mid s_t,a_t)
\mathbf{1}
\left\{
m_{t+1}
=
U\bigl(
m_t,h(s_t),a_t,h(s_{t+1})
\bigr)
\right\}
$
into the preceding expression gives
$
\bP_{\mathrm{aug}}(\tau_T)
=
\rho_0(s_0)
\mathbf{1}\{m_0=\iota(h(s_0))\}
\quad\times
\prod_{t=0}^{T-1}
\Bigl[
\pi_{\cM}(a_t\mid h(s_t),m_t)
P(s_{t+1}\mid s_t,a_t)
\hspace{34mm}\times
\mathbf{1}
\left\{
m_{t+1}
=
U\bigl(
m_t,h(s_t),a_t,h(s_{t+1})
\bigr)
\right\}
\Bigr]
=
\bP_{\mathrm{orig}}(\tau_T).
$
Since the trajectory $\tau_T$ was arbitrary, the original finite-memory controller and the augmented Markov process induce exactly the same probability distribution on every finite augmented trajectory.

The reward functions also agree pointwise along every trajectory:
\[
\widetilde r((s_t,m_t),a_t)
=
r(s_t,a_t).
\]
It follows that, for every finite horizon $T$,
\[
\begin{aligned}
&\bE_{\mathrm{orig}}
\left[
\sum_{t=0}^{T-1}
\gamma^t r(s_t,a_t)
\right]\\
&\quad=
\bE_{\mathrm{aug}}^{\widetilde\pi}
\left[
\sum_{t=0}^{T-1}
\gamma^t
\widetilde r((s_t,m_t),a_t)
\right].
\end{aligned}
\]

Because the reward function is bounded, define
\[
R_{\max}
:=
\max_{(s,a)\in\cS\times\cA}
|r(s,a)|
<
\infty.
\]
For every trajectory and every $T$,
\[
\begin{aligned}
\left|
\sum_{t=0}^{T-1}
\gamma^t r(s_t,a_t)
\right|
&\leq
\sum_{t=0}^{T-1}
\gamma^t|r(s_t,a_t)|\\
&\leq
R_{\max}
\sum_{t=0}^{T-1}\gamma^t\\
&=
R_{\max}
\frac{1-\gamma^T}{1-\gamma}\\
&\leq
\frac{R_{\max}}{1-\gamma}.
\end{aligned}
\]
The same bound holds for the augmented reward sums because
\[
\widetilde r((s_t,m_t),a_t)=r(s_t,a_t).
\]
Since $\gamma\in(0,1)$, the truncated discounted returns converge pointwise to the corresponding infinite discounted returns. The common deterministic bound
$
\frac{R_{\max}}{1-\gamma}
$
is integrable. Therefore, by the dominated convergence theorem,
\[
\begin{aligned}
&\bE_{\mathrm{orig}}
\left[
\sum_{t=0}^{\infty}
\gamma^t r(s_t,a_t)
\right]\\
&\quad=
\lim_{T\to\infty}
\bE_{\mathrm{orig}}
\left[
\sum_{t=0}^{T-1}
\gamma^t r(s_t,a_t)
\right]\\
&\quad=
\lim_{T\to\infty}
\bE_{\mathrm{aug}}^{\widetilde\pi}
\left[
\sum_{t=0}^{T-1}
\gamma^t
\widetilde r((s_t,m_t),a_t)
\right]\\
&\quad=
\bE_{\mathrm{aug}}^{\widetilde\pi}
\left[
\sum_{t=0}^{\infty}
\gamma^t
\widetilde r((s_t,m_t),a_t)
\right].
\end{aligned}
\]
Thus, the process on $\cS\times\cM$ has the stated transition kernel, reward function, and initial distribution. Under the lifted policy $\widetilde\pi$, the process $(s_t,m_t)$ is Markov and has exactly the same expected discounted return as the original finite-memory controller.
\end{proof}

\subsection{Proof of Lemma~\ref{lem:hcdp_path_lifting_main}}

\begin{proof}
Throughout the proof, probabilities are taken under the experiment that starts from the fixed latent state $(o_0,i_0)$ and applies the prescribed action sequence $a_0,\ldots,a_{k-1}$. For $j=0,\ldots,k$, define the visible-prefix event
\[
\mathcal E_j
:=
\{O_1=o_1,\ldots,O_j=o_j\},
\qquad
\mathcal E_0:=\Omega,
\]
where $(O_t,I_t)$ denotes the random latent state and $A_t$ denotes the random action at time $t$, while $o_0,\ldots,o_k$ are the fixed vertices of the prescribed path $\omega$. Under the experiment fixed above, $A_t=a_t$ almost surely for $t=0,\ldots,k-1$. Also define the prefix transports
\[
\tau_0:=\operatorname{id}_{[n]},
\qquad
\tau_j
:=
\sigma_{e_j}\circ\cdots\circ\sigma_{e_1}
\quad
\text{for }j=1,\ldots,k.
\]
Thus $\tau_k=\sigma_\omega$ by Definition~\ref{def:directed_transport_main}. We will prove simultaneously, for every $j=0,\ldots,k$, that
\[
\bP(\mathcal E_j)
=
\prod_{\ell=1}^{j}
P_O(o_\ell\mid o_{\ell-1},a_{\ell-1})
\]
and
\[
\bP\bigl(I_j=\tau_j(i_0)\mid\mathcal E_j\bigr)=1.
\]
The conditional probability in the second display is well defined because the first identity will imply that $\bP(\mathcal E_j)>0$.

We first derive the relevant one-step consequences of the HCDP transition kernel. Fix a latent state $(o,i)$, an action $a$, and an observation $o'$ satisfying
\[
P_O(o'\mid o,a)>0.
\]
Marginalizing the latent transition kernel over the next hidden layer gives
\[
\begin{aligned}
&\bP\bigl(
O_{t+1}=o'
\mid
(O_t,I_t)=(o,i),A_t=a
\bigr)\\
&\quad=
\sum_{r=1}^{n}
P_H\bigl((o',r)\mid(o,i),a\bigr)\\
&\quad=
\sum_{r=1}^{n}
P_O(o'\mid o,a)
\mathbf{1}\!\left\{
r=\sigma_{o,a,o'}(i)
\right\}\\
&\quad=
P_O(o'\mid o,a)
\sum_{r=1}^{n}
\mathbf{1}\!\left\{
r=\sigma_{o,a,o'}(i)
\right\}\\
&\quad=
P_O(o'\mid o,a).
\end{aligned}
\]
The final equality holds because $\sigma_{o,a,o'}(i)$ is a single well-defined element of $[n]$, so exactly one term in the sum of indicators is equal to one. In particular, the probability of observing $o'$ depends only on the current observation $o$ and the applied action $a$, and does not depend on the current hidden layer $i$.

Moreover, for any candidate next layer $r\in[n]$, the assumption
$
P_O(o'\mid o,a)>0
$
allows us to condition on the event $\{O_{t+1}=o'\}$. Using the definition of conditional probability and the preceding marginal identity, we obtain
\[
\begin{aligned}
&\bP\bigl(
I_{t+1}=r
\mid
(O_t,I_t)=(o,i),
A_t=a,
O_{t+1}=o'
\bigr)\\
&\quad=
\frac{
\bP\bigl(
O_{t+1}=o',
I_{t+1}=r
\mid
(O_t,I_t)=(o,i),
A_t=a
\bigr)
}{
\bP\bigl(
O_{t+1}=o'
\mid
(O_t,I_t)=(o,i),
A_t=a
\bigr)
}\\
&\quad=
\frac{
P_H\bigl((o',r)\mid(o,i),a\bigr)
}{
P_O(o'\mid o,a)
}\\
&\quad=
\frac{
P_O(o'\mid o,a)
\mathbf{1}\!\left\{
r=\sigma_{o,a,o'}(i)
\right\}
}{
P_O(o'\mid o,a)
}\\
&\quad=
\mathbf{1}\!\left\{
r=\sigma_{o,a,o'}(i)
\right\}.
\end{aligned}
\]
Consequently, once the visible edge
$
o\xrightarrow{a}o'
$
has occurred, the next hidden layer is uniquely determined:
\[
I_{t+1}
=
\sigma_{o,a,o'}(i)
\]
with conditional probability one.

We now establish the two claimed prefix identities by induction on $j$. For $j=0$, the event $\mathcal E_0=\Omega$ has probability
\[
\bP(\mathcal E_0)=1,
\]
which agrees with the empty product
\[
\prod_{\ell=1}^{0}
P_O(o_\ell\mid o_{\ell-1},a_{\ell-1})
=1.
\]
Furthermore,
\[
I_0=i_0
=
\operatorname{id}_{[n]}(i_0)
=
\tau_0(i_0)
\]
with probability one. Thus both statements hold at $j=0$.

Suppose that both statements hold for some $j-1$, where
\[
1\leq j\leq k.
\]
By the induction hypothesis,
\[
\bP(\mathcal E_{j-1})
=
\prod_{\ell=1}^{j-1}
P_O(o_\ell\mid o_{\ell-1},a_{\ell-1})
>0,
\]
and, conditional on $\mathcal E_{j-1}$,
\[
I_{j-1}
=
\tau_{j-1}(i_0)
\]
with probability one. By the definition of $\mathcal E_{j-1}$, the visible observation at time $j-1$ is also equal to $o_{j-1}$ on this event. Hence, conditional on $\mathcal E_{j-1}$, the latent state at time $j-1$ is almost surely
\[
(O_{j-1},I_{j-1})
=
\bigl(
o_{j-1},
\tau_{j-1}(i_0)
\bigr).
\]
The prescribed action at that time is $A_{j-1}=a_{j-1}$. Applying the one-step visible marginal identity derived above therefore gives
\[
\begin{aligned}
&\bP(\mathcal E_j\mid\mathcal E_{j-1})\\
=&
\bP\bigl(
O_j=o_j
\mid
\mathcal E_{j-1}
\bigr)\\
=&
\bP\Bigl(
O_j=o_j
\mathrel{\Big|}
(O_{j-1},I_{j-1})\\
&=
\bigl(
o_{j-1},
\tau_{j-1}(i_0)
\bigr),
A_{j-1}=a_{j-1}
\Bigr)\\
=&
P_O(o_j\mid o_{j-1},a_{j-1}).
\end{aligned}
\]
Since
\[
\mathcal E_j
=
\mathcal E_{j-1}
\cap
\{O_j=o_j\},
\]
the multiplication rule for probabilities yields
\[
\begin{aligned}
\bP(\mathcal E_j)
&=
\bP(\mathcal E_{j-1})
\bP(\mathcal E_j\mid\mathcal E_{j-1})\\
&=
\left(
\prod_{\ell=1}^{j-1}
P_O(o_\ell\mid o_{\ell-1},a_{\ell-1})
\right)
P_O(o_j\mid o_{j-1},a_{j-1})\\
&=
\prod_{\ell=1}^{j}
P_O(o_\ell\mid o_{\ell-1},a_{\ell-1}).
\end{aligned}
\]
Because
\[
e_j
=
(o_{j-1}\xrightarrow{a_{j-1}}o_j)
\]
is an edge of the directed support graph $X$, the definition of $X$ implies
\[
P_O(o_j\mid o_{j-1},a_{j-1})>0.
\]
Every factor in the product for $\bP(\mathcal E_{j-1})$ is also strictly positive by the induction hypothesis. Therefore,
\[
\bP(\mathcal E_j)>0.
\]

It remains to determine the hidden layer after the $j$th visible edge. Conditional on $\mathcal E_{j-1}$, the current hidden layer is almost surely $\tau_{j-1}(i_0)$. Conditional further on the next prescribed observation being $O_j=o_j$, the one-step conditional layer identity gives
\[
\begin{aligned}
I_j
&=
\sigma_{o_{j-1},a_{j-1},o_j}
\bigl(
\tau_{j-1}(i_0)
\bigr)\\
&=
\sigma_{e_j}
\bigl(
\tau_{j-1}(i_0)
\bigr)\\
&=
\bigl(
\sigma_{e_j}\circ\tau_{j-1}
\bigr)(i_0).
\end{aligned}
\]
By the definition of the prefix transports,
\[
\begin{aligned}
\sigma_{e_j}\circ\tau_{j-1}
&=
\sigma_{e_j}
\circ
\bigl(
\sigma_{e_{j-1}}
\circ\cdots\circ
\sigma_{e_1}
\bigr)\\
&=
\sigma_{e_j}
\circ\sigma_{e_{j-1}}
\circ\cdots\circ\sigma_{e_1}\\
&=
\tau_j.
\end{aligned}
\]
It follows that
\[
I_j=\tau_j(i_0)
\]
with conditional probability one on $\mathcal E_j$, or equivalently,
\[
\bP\bigl(
I_j=\tau_j(i_0)
\mid
\mathcal E_j
\bigr)=1.
\]
This completes the induction.

Taking $j=k$ in the probability identity gives
\[
\bP(\mathcal E_k)
=
\prod_{j=1}^{k}
P_O(o_j\mid o_{j-1},a_{j-1}).
\]
Since every $e_j$ is an edge of $X$, every factor in this product is strictly positive, and hence
\[
\prod_{j=1}^{k}
P_O(o_j\mid o_{j-1},a_{j-1})
>0.
\]
Finally, taking $j=k$ in the hidden-layer identity and using the definition of directed path transport gives
\[
\begin{aligned}
I_k
&=
\tau_k(i_0)\\
&=
\bigl(
\sigma_{e_k}
\circ\cdots\circ
\sigma_{e_1}
\bigr)(i_0)\\
&=
\sigma_\omega(i_0)
\end{aligned}
\]
with conditional probability one on $\mathcal E_k$. Therefore, executing $a_0,\ldots,a_{k-1}$ produces the prescribed visible path with the stated positive product probability, and conditional on that visible path, its hidden lift is unique and has terminal layer
\[
i_k=\sigma_\omega(i_0).
\]
\end{proof}

\subsection{Proof of Lemma~\ref{lem:stable_fp_main}}

\begin{proof}
We first verify that the definition of $\mathcal T$ indeed produces a partition family. Fix a partition family $\Pi$ and an observation $o\in\cO$, and define a binary relation on $[n]$ by declaring that $i$ and $j$ are related precisely when
\[
R(o,i,a)=R(o,j,a)
\quad\text{for every }a\in\cA
\]
and
$
\sigma_{o,a,o'}(i)
\equiv_{o'}^\Pi
\sigma_{o,a,o'}(j)
\quad\text{for every }(a,o')\text{ such that }
P_O(o'\mid o,a)>0.
$
This relation is reflexive because equality of rewards is reflexive and each relation $\equiv_{o'}^\Pi$ is reflexive. It is symmetric because equality is symmetric and each $\equiv_{o'}^\Pi$ is symmetric. To verify transitivity, suppose that $i$ is related to $j$ and that $j$ is related to $\ell$. Then, for every $a\in\cA$,
\[
R(o,i,a)=R(o,j,a)=R(o,\ell,a),
\]
so
\[
R(o,i,a)=R(o,\ell,a).
\]
Moreover, for every feasible pair $(a,o')$ satisfying $P_O(o'\mid o,a)>0$, we have
\[
\sigma_{o,a,o'}(i)
\equiv_{o'}^\Pi
\sigma_{o,a,o'}(j)
\]
and
\[
\sigma_{o,a,o'}(j)
\equiv_{o'}^\Pi
\sigma_{o,a,o'}(\ell).
\]
The transitivity of $\equiv_{o'}^\Pi$ therefore gives
\[
\sigma_{o,a,o'}(i)
\equiv_{o'}^\Pi
\sigma_{o,a,o'}(\ell).
\]
Thus the relation is an equivalence relation on $[n]$. Since this holds for every $o\in\cO$, the family
\[
\mathcal T(\Pi)
=
\{\mathcal T(\Pi)_o\}_{o\in\cO}
\]
is well defined as a partition family.

We next prove monotonicity. Let $\Pi\preceq\Pi'$. By the definition of refinement, for every observation $o'\in\cO$ and every $x,y\in[n]$,
\[
x\equiv_{o'}^\Pi y
\quad\Longrightarrow\quad
x\equiv_{o'}^{\Pi'}y.
\]
Fix $o\in\cO$ and suppose that
\[
i\equiv_o^{\mathcal T(\Pi)}j.
\]
Expanding the definition of $\mathcal T(\Pi)$, we obtain
\[
R(o,i,a)=R(o,j,a)
\quad\text{for every }a\in\cA
\]
and, for every $(a,o')$ with $P_O(o'\mid o,a)>0$,
\[
\sigma_{o,a,o'}(i)
\equiv_{o'}^\Pi
\sigma_{o,a,o'}(j).
\]
Because $\Pi\preceq\Pi'$, the latter equivalence implies
\[
\sigma_{o,a,o'}(i)
\equiv_{o'}^{\Pi'}
\sigma_{o,a,o'}(j)
\]
for every such feasible pair $(a,o')$. The reward equalities are unchanged, and hence the defining conditions of $\mathcal T(\Pi')$ are satisfied. Therefore
\[
i\equiv_o^{\mathcal T(\Pi')}j.
\]
We have proved, for every $o\in\cO$, that every block of $\mathcal T(\Pi)_o$ is contained in a block of $\mathcal T(\Pi')_o$. Consequently,
\[
\mathcal T(\Pi)
\preceq
\mathcal T(\Pi'),
\]
which establishes the claimed monotonicity.

We now consider the sequence initialized at the reward partition. By definition,
\[
i\equiv_o^{\Pi^{(0)}}j
\quad\Longleftrightarrow\quad
R(o,i,a)=R(o,j,a)
\quad\text{for every }a\in\cA.
\]
If
\[
i\equiv_o^{\Pi^{(1)}}j,
\]
then $\Pi^{(1)}=\mathcal T(\Pi^{(0)})$, and the first condition in the definition of $\mathcal T$ gives
\[
R(o,i,a)=R(o,j,a)
\quad\text{for every }a\in\cA.
\]
It follows directly from the definition of $\Pi^{(0)}$ that
\[
i\equiv_o^{\Pi^{(0)}}j.
\]
Thus every block of $\Pi_o^{(1)}$ is contained in a block of $\Pi_o^{(0)}$ for every $o\in\cO$, and therefore
\[
\Pi^{(1)}
\preceq
\Pi^{(0)}.
\]
Suppose inductively that, for some $t\geq1$,
\[
\Pi^{(t)}
\preceq
\Pi^{(t-1)}.
\]
Applying the monotonicity of $\mathcal T$ yields
\[
\mathcal T(\Pi^{(t)})
\preceq
\mathcal T(\Pi^{(t-1)}).
\]
Using the recursion
\[
\Pi^{(s+1)}
=
\mathcal T(\Pi^{(s)}),
\]
the preceding relation becomes
\[
\Pi^{(t+1)}
\preceq
\Pi^{(t)}.
\]
Together with the base case $\Pi^{(1)}\preceq\Pi^{(0)}$, induction gives
\[
\Pi^{(t+1)}
\preceq
\Pi^{(t)}
\quad\text{for every }t\geq0.
\]

To prove finite stabilization with the stated bound, let
\[
b_o^{(t)}
:=
\lvert\Pi_o^{(t)}\rvert
\]
denote the number of blocks of the partition at observation $o$, and define the total block count
\[
B^{(t)}
:=
\sum_{o\in\cO}
b_o^{(t)}.
\]
Since $\Pi_o^{(t+1)}$ refines $\Pi_o^{(t)}$, every block of $\Pi_o^{(t+1)}$ is contained in exactly one block of $\Pi_o^{(t)}$. It follows that
\[
b_o^{(t+1)}
\geq
b_o^{(t)}.
\]
If the refinement is strict at observation $o$, then this inequality is strict. Indeed, suppose that $\Pi_o^{(t+1)}$ strictly refines $\Pi_o^{(t)}$ but that
\[
b_o^{(t+1)}
=
b_o^{(t)}.
\]
Every block of the finer partition $\Pi_o^{(t+1)}$ is contained in a unique block of $\Pi_o^{(t)}$, and every block of $\Pi_o^{(t)}$ contains at least one block of $\Pi_o^{(t+1)}$, because the finer blocks cover all of $[n]$. Since the two finite collections contain the same number of blocks, every block of $\Pi_o^{(t)}$ must contain exactly one block of $\Pi_o^{(t+1)}$. If $C$ is a block of $\Pi_o^{(t)}$ and $D$ is the unique block of $\Pi_o^{(t+1)}$ contained in $C$, then every element of $C$ must belong to some finer block contained in $C$. Since $D$ is the only such finer block, we have
\[
C\subseteq D.
\]
The refinement relation already gives
\[
D\subseteq C,
\]
and hence $C=D$. This holds for every block, so the two partitions are equal, contradicting the assumed strictness. Therefore a strict refinement at observation $o$ necessarily satisfies
\[
b_o^{(t+1)}
\geq
b_o^{(t)}+1.
\]

Consequently, whenever the partition family changes,
\[
\Pi^{(t+1)}
\neq
\Pi^{(t)},
\]
the relation
\[
\Pi^{(t+1)}
\preceq
\Pi^{(t)}
\]
implies that there is at least one observation $o$ at which $\Pi_o^{(t+1)}$ strictly refines $\Pi_o^{(t)}$. Hence
\[
B^{(t+1)}
\geq
B^{(t)}+1.
\]
Every partition of the nonempty set $[n]$ has at least one block and at most $n$ blocks. Therefore, for every $t\geq0$,
\[
1
\leq
b_o^{(t)}
\leq
n
\quad\text{for every }o\in\cO,
\]
and summing over $o$ gives
\[
\lvert\cO\rvert
\leq
B^{(0)}
\leq
B^{(t)}
\leq
\lvert\cO\rvert n.
\]
If the sequence underwent $N$ strict refinements, the total block count would increase by at least $N$, so
\[
B^{(0)}+N
\leq
\lvert\cO\rvert n.
\]
It follows that
\[
N
\leq
\lvert\cO\rvert n-B^{(0)}
\leq
\lvert\cO\rvert n-\lvert\cO\rvert
=
\lvert\cO\rvert(n-1).
\]
Thus the sequence can undergo at most
\[
\lvert\cO\rvert(n-1)
\]
strict refinements. Therefore there exists an index $t_\star$, reached after at most that many strict refinements, such that
\[
\Pi^{(t_\star+1)}
=
\Pi^{(t_\star)}.
\]
Define
\[
\Pi^\star
:=
\Pi^{(t_\star)}.
\]
Using the recursive definition of the sequence, we then obtain
\[
\begin{aligned}
\mathcal T(\Pi^\star)
&=
\mathcal T(\Pi^{(t_\star)})\\
&=
\Pi^{(t_\star+1)}\\
&=
\Pi^{(t_\star)}\\
&=
\Pi^\star.
\end{aligned}
\]
Hence $\Pi^\star$ is a fixed point of $\mathcal T$. Moreover, once two consecutive iterates are equal, all subsequent iterates remain equal. Indeed,
\[
\begin{aligned}
\Pi^{(t_\star+2)}
&=
\mathcal T(\Pi^{(t_\star+1)})\\
&=
\mathcal T(\Pi^\star)\\
&=
\Pi^\star,
\end{aligned}
\]
and repeating the same argument gives
\[
\Pi^{(t)}
=
\Pi^\star
\quad\text{for every }t\geq t_\star.
\]

It remains to prove the maximality statement among stable families. Let $\Theta$ be any stable partition family, so that
\[
\mathcal T(\Theta)
=
\Theta.
\]
Fix an observation $o\in\cO$ and layers $i,j\in[n]$ satisfying
\[
i\equiv_o^\Theta j.
\]
By stability,
\[
i\equiv_o^{\mathcal T(\Theta)}j.
\]
The first condition in the definition of $\mathcal T(\Theta)$ therefore implies
\[
R(o,i,a)
=
R(o,j,a)
\quad\text{for every }a\in\cA.
\]
By the definition of the reward partition $\Pi^{(0)}$, this is equivalent to
\[
i\equiv_o^{\Pi^{(0)}}j.
\]
Since this implication holds for every $o\in\cO$ and every $i,j\in[n]$, every block of $\Theta_o$ is contained in a block of $\Pi_o^{(0)}$. Hence
\[
\Theta
\preceq
\Pi^{(0)}.
\]

We now show by induction that
\[
\Theta
\preceq
\Pi^{(t)}
\quad\text{for every }t\geq0.
\]
The statement holds at $t=0$ by the preceding argument. Suppose that
\[
\Theta
\preceq
\Pi^{(t)}
\]
for some $t\geq0$. Monotonicity of $\mathcal T$ gives
\[
\mathcal T(\Theta)
\preceq
\mathcal T(\Pi^{(t)}).
\]
Since $\Theta$ is stable and $\Pi^{(t+1)}=\mathcal T(\Pi^{(t)})$, this becomes
\[
\Theta
=
\mathcal T(\Theta)
\preceq
\mathcal T(\Pi^{(t)})
=
\Pi^{(t+1)}.
\]
The induction is therefore complete, and
\[
\Theta
\preceq
\Pi^{(t)}
\quad\text{for every }t\geq0.
\]
In particular, evaluating this relation at the stabilization index $t_\star$ yields
\[
\Theta
\preceq
\Pi^{(t_\star)}
=
\Pi^\star.
\]
Thus every stable partition family refines $\Pi^\star$. Since $\Pi^\star$ is itself stable, it is an element of the collection of stable partition families, and every other stable family lies below it in the refinement order. By the convention in Definition~\ref{def:stable_family_main}, a smaller partition family is finer and a larger partition family is coarser. Therefore $\Pi^\star$ is the greatest stable family with respect to $\preceq$, equivalently the coarsest stable partition family.
\end{proof}

\subsection{Proof of Proposition~\ref{prop:stable_exact_equiv_main}}

\begin{proof}
Assume first that the quotient maps
\[
q_o^\Pi:[n]\longrightarrow[n]/\Pi_o,
\qquad
q_o^\Pi(i)=[i]_o^\Pi,
\]
define an exact class abstraction in the sense of
Definition~\ref{def:exact_abstraction_main}. Thus, for every
$o\in\cO$, there exists a class-level mean reward
\[
\widehat R:
\left\{
(o,\hat c,a):
o\in\cO,\ 
\hat c\in[n]/\Pi_o,\ 
a\in\cA
\right\}
\longrightarrow\mathbb R,
\]
and, for every feasible triple $(o,a,o')$ satisfying
\[
P_O(o'\mid o,a)>0,
\]
there exists a deterministic class transport
\[
\widehat\tau_{o,a,o'}:
[n]/\Pi_o
\longrightarrow
[n]/\Pi_{o'},
\]
such that, for every layer $i\in[n]$,
\[
R(o,i,a)
=
\widehat R\bigl(o,q_o^\Pi(i),a\bigr)
\]
and
\[
q_{o'}^\Pi\!\left(
\sigma_{o,a,o'}(i)
\right)
=
\widehat\tau_{o,a,o'}\!\left(
q_o^\Pi(i)
\right).
\]

Fix an arbitrary observation $o\in\cO$ and arbitrary layers
$i,j\in[n]$ satisfying
\[
i\equiv_o^\Pi j.
\]
By the definition of the quotient map, two layers have the same
quotient label precisely when they belong to the same block of
$\Pi_o$. Hence
\[
i\equiv_o^\Pi j
\quad\Longleftrightarrow\quad
q_o^\Pi(i)=q_o^\Pi(j),
\]
and in particular
\[
q_o^\Pi(i)=q_o^\Pi(j).
\]
For any action $a\in\cA$, the exact reward identity gives
\[
R(o,i,a)
=
\widehat R\bigl(o,q_o^\Pi(i),a\bigr)
\]
and
\[
R(o,j,a)
=
\widehat R\bigl(o,q_o^\Pi(j),a\bigr).
\]
Since the quotient labels are equal, these two expressions satisfy
\[
\begin{aligned}
R(o,i,a)
&=
\widehat R\bigl(o,q_o^\Pi(i),a\bigr)\\
&=
\widehat R\bigl(o,q_o^\Pi(j),a\bigr)\\
&=
R(o,j,a).
\end{aligned}
\]
Because $a\in\cA$ was arbitrary, we have shown that
\[
R(o,i,a)=R(o,j,a)
\quad\text{for every }a\in\cA.
\]

Now fix any action--successor pair $(a,o')$ satisfying
\[
P_O(o'\mid o,a)>0.
\]
The transport identity in
Definition~\ref{def:exact_abstraction_main} gives
\[
q_{o'}^\Pi\!\left(
\sigma_{o,a,o'}(i)
\right)
=
\widehat\tau_{o,a,o'}\!\left(
q_o^\Pi(i)
\right)
\]
and
\[
q_{o'}^\Pi\!\left(
\sigma_{o,a,o'}(j)
\right)
=
\widehat\tau_{o,a,o'}\!\left(
q_o^\Pi(j)
\right).
\]
Using again the equality
\[
q_o^\Pi(i)=q_o^\Pi(j),
\]
we obtain
\[
\begin{aligned}
q_{o'}^\Pi\!\left(
\sigma_{o,a,o'}(i)
\right)
&=
\widehat\tau_{o,a,o'}\!\left(
q_o^\Pi(i)
\right)\\
&=
\widehat\tau_{o,a,o'}\!\left(
q_o^\Pi(j)
\right)\\
&=
q_{o'}^\Pi\!\left(
\sigma_{o,a,o'}(j)
\right).
\end{aligned}
\]
Equality of the images under $q_{o'}^\Pi$ is equivalent to membership
in the same block of $\Pi_{o'}$. Therefore
\[
\sigma_{o,a,o'}(i)
\equiv_{o'}^\Pi
\sigma_{o,a,o'}(j).
\]
Since the feasible pair $(a,o')$ was arbitrary, this relation holds
for every $(a,o')$ such that
\[
P_O(o'\mid o,a)>0.
\]

We have therefore established both conditions
\[
R(o,i,a)=R(o,j,a)
\quad
\text{for every }a\in\cA
\]
and
$
\sigma_{o,a,o'}(i)
\equiv_{o'}^\Pi
\sigma_{o,a,o'}(j)
\quad
\text{for every }(a,o')
\text{ with }P_O(o'\mid o,a)>0.
$
By the definition of the stability operator in
Definition~\ref{def:stable_family_main}, these two conditions are
equivalent to
\[
i\equiv_o^{\mathcal T(\Pi)}j.
\]
Consequently,
\[
i\equiv_o^\Pi j
\quad\Longrightarrow\quad
i\equiv_o^{\mathcal T(\Pi)}j.
\]
Since $o\in\cO$ and $i,j\in[n]$ were arbitrary, every block of
$\Pi_o$ is contained in a block of $\mathcal T(\Pi)_o$ for every
$o\in\cO$. By the definition of the refinement order, this means
\[
\Pi\preceq\mathcal T(\Pi).
\]

Conversely, assume that
\[
\Pi\preceq\mathcal T(\Pi).
\]
For every observation $o\in\cO$, define the class-label set by
\[
\widehat{\cC}_o
:=
[n]/\Pi_o
\]
and define the abstraction map to be the canonical quotient map
\[
q_o^\Pi:[n]\longrightarrow\widehat{\cC}_o,
\qquad
q_o^\Pi(i):=[i]_o^\Pi.
\]
Because $[n]$ is finite, the set $[n]/\Pi_o$ of blocks of $\Pi_o$ is
finite. Moreover, $q_o^\Pi$ is surjective: for every class
$C\in[n]/\Pi_o$, the block $C$ is nonempty, so one may choose
$i\in C$, and then
\[
q_o^\Pi(i)=C.
\]

We define the proposed class-level reward by
\[
\widehat R^\Pi
\bigl(
o,[i]_o^\Pi,a
\bigr)
:=
R(o,i,a).
\]
We also define, for every feasible triple $(o,a,o')$ satisfying
\[
P_O(o'\mid o,a)>0,
\]
the proposed class transport by
\[
\widehat\tau_{o,a,o'}^\Pi
\bigl(
[i]_o^\Pi
\bigr)
:=
\bigl[
\sigma_{o,a,o'}(i)
\bigr]_{o'}^\Pi.
\]
To obtain an exact class abstraction, it is essential to show that
these definitions do not depend on the representative $i$ chosen
from the quotient class.

Suppose that two representatives $i,j\in[n]$ determine the same
class at observation $o$, so that
\[
[i]_o^\Pi=[j]_o^\Pi.
\]
By the definition of quotient classes, this equality is equivalent
to
\[
i\equiv_o^\Pi j.
\]
The assumption
\[
\Pi\preceq\mathcal T(\Pi)
\]
means that every $\Pi_o$-block is contained in a
$\mathcal T(\Pi)_o$-block. Therefore
\[
i\equiv_o^\Pi j
\quad\Longrightarrow\quad
i\equiv_o^{\mathcal T(\Pi)}j,
\]
and hence
\[
i\equiv_o^{\mathcal T(\Pi)}j.
\]
Expanding the definition of $\mathcal T(\Pi)$ gives
\[
R(o,i,a)=R(o,j,a)
\quad\text{for every }a\in\cA.
\]
It follows that
\[
\begin{aligned}
\widehat R^\Pi
\bigl(
o,[i]_o^\Pi,a
\bigr)
&=
R(o,i,a)\\
&=
R(o,j,a)\\
&=
\widehat R^\Pi
\bigl(
o,[j]_o^\Pi,a
\bigr).
\end{aligned}
\]
Thus the definition of $\widehat R^\Pi$ is independent of the
representative, and the class-level reward is well defined.

The same relation
\[
i\equiv_o^{\mathcal T(\Pi)}j
\]
also implies, for every feasible $(a,o')$,
\[
\sigma_{o,a,o'}(i)
\equiv_{o'}^\Pi
\sigma_{o,a,o'}(j).
\]
Equivalence under $\Pi_{o'}$ is precisely equality of the
corresponding quotient classes. Therefore
\[
\bigl[
\sigma_{o,a,o'}(i)
\bigr]_{o'}^\Pi
=
\bigl[
\sigma_{o,a,o'}(j)
\bigr]_{o'}^\Pi.
\]
Consequently,
\[
\begin{aligned}
\widehat\tau_{o,a,o'}^\Pi
\bigl(
[i]_o^\Pi
\bigr)
&=
\bigl[
\sigma_{o,a,o'}(i)
\bigr]_{o'}^\Pi\\
&=
\bigl[
\sigma_{o,a,o'}(j)
\bigr]_{o'}^\Pi\\
&=
\widehat\tau_{o,a,o'}^\Pi
\bigl(
[j]_o^\Pi
\bigr).
\end{aligned}
\]
Thus $\widehat\tau_{o,a,o'}^\Pi$ is also independent of the chosen
representative and is a well-defined deterministic map
\[
\widehat\tau_{o,a,o'}^\Pi:
[n]/\Pi_o
\longrightarrow
[n]/\Pi_{o'}.
\]

It remains to verify the two exactness identities. For every
$o\in\cO$, $i\in[n]$, and $a\in\cA$, the definition of
$\widehat R^\Pi$ gives
\[
\begin{aligned}
\widehat R^\Pi
\bigl(
o,q_o^\Pi(i),a
\bigr)
&=
\widehat R^\Pi
\bigl(
o,[i]_o^\Pi,a
\bigr)\\
&=
R(o,i,a).
\end{aligned}
\]
Equivalently,
\[
R(o,i,a)
=
\widehat R^\Pi
\bigl(
o,q_o^\Pi(i),a
\bigr).
\]
Similarly, for every feasible $(o,a,o')$,
\[
\begin{aligned}
\widehat\tau_{o,a,o'}^\Pi
\bigl(
q_o^\Pi(i)
\bigr)
&=
\widehat\tau_{o,a,o'}^\Pi
\bigl(
[i]_o^\Pi
\bigr)\\
&=
\bigl[
\sigma_{o,a,o'}(i)
\bigr]_{o'}^\Pi\\
&=
q_{o'}^\Pi\!\left(
\sigma_{o,a,o'}(i)
\right).
\end{aligned}
\]
Equivalently,
\[
q_{o'}^\Pi\!\left(
\sigma_{o,a,o'}(i)
\right)
=
\widehat\tau_{o,a,o'}^\Pi
\bigl(
q_o^\Pi(i)
\bigr).
\]
The quotient maps are surjective, the class label sets are finite,
and the class reward and class transports satisfy exactly the two
identities in Definition~\ref{def:exact_abstraction_main}. Hence
$\{q_o^\Pi\}_{o\in\cO}$ defines an exact class abstraction.

Combining the two implications proves
$
\{q_o^\Pi\}_{o\in\cO}
\text{ defines an exact class abstraction}
\quad\Longleftrightarrow\quad
\Pi\preceq\mathcal T(\Pi).
$

Suppose now that $\Pi$ is stable. By
Definition~\ref{def:stable_family_main},
\[
\mathcal T(\Pi)=\Pi.
\]
Since the refinement order is reflexive,
\[
\Pi\preceq\Pi,
\]
and therefore
\[
\Pi
\preceq
\mathcal T(\Pi).
\]
The equivalence just established implies that every stable family
defines an exact class abstraction.

Exactness alone, however, requires only
\[
\Pi\preceq\mathcal T(\Pi);
\]
it does not require equality. Thus it is possible that
\[
\Pi\prec\mathcal T(\Pi),
\]
in which case $\Pi$ distinguishes some layers that
$\mathcal T(\Pi)$ can safely merge. For completeness, this strict
case can be realized by an explicit HCDP. Let
\[
n=2,
\qquad
\cO=\{o_0,o_1\},
\qquad
\cA=\{a\},
\]
and define the visible transition kernel by
\[
P_O(o_1\mid o_0,a)=1,
\qquad
P_O(o_1\mid o_1,a)=1.
\]
Assign the identity permutation to both feasible edges:
\[
\sigma_{o_0,a,o_1}
=
\operatorname{id}_{[2]},
\qquad
\sigma_{o_1,a,o_1}
=
\operatorname{id}_{[2]},
\]
and let every mean reward be zero:
\[
R(o,i,a)=0
\quad
\text{for every }o\in\cO
\text{ and }i\in[2].
\]
Consider the partition family
\[
\Pi_{o_0}
=
\bigl\{
\{1\},\{2\}
\bigr\},
\qquad
\Pi_{o_1}
=
\bigl\{
\{1,2\}
\bigr\}.
\]

At observation $o_1$, layers $1$ and $2$ have equal rewards. Their
successors under the only feasible edge are
\[
\sigma_{o_1,a,o_1}(1)=1,
\qquad
\sigma_{o_1,a,o_1}(2)=2,
\]
and these successors are equivalent under $\Pi_{o_1}$ because
\[
1\equiv_{o_1}^\Pi2.
\]
It follows from the definition of $\mathcal T$ that
\[
1\equiv_{o_1}^{\mathcal T(\Pi)}2,
\]
and hence
\[
\mathcal T(\Pi)_{o_1}
=
\bigl\{
\{1,2\}
\bigr\}.
\]

At observation $o_0$, layers $1$ and $2$ again have equal rewards.
Their successors under the only feasible edge are
\[
\sigma_{o_0,a,o_1}(1)=1,
\qquad
\sigma_{o_0,a,o_1}(2)=2.
\]
Both successors lie in the common block $\{1,2\}$ of $\Pi_{o_1}$,
so
\[
\sigma_{o_0,a,o_1}(1)
\equiv_{o_1}^\Pi
\sigma_{o_0,a,o_1}(2).
\]
Therefore
\[
1\equiv_{o_0}^{\mathcal T(\Pi)}2,
\]
and hence
\[
\mathcal T(\Pi)_{o_0}
=
\bigl\{
\{1,2\}
\bigr\}.
\]
We have consequently obtained
\[
\Pi\preceq\mathcal T(\Pi),
\]
because the two singleton blocks at $o_0$ refine the single block of
$\mathcal T(\Pi)_{o_0}$, while the partitions at $o_1$ are equal.
However,
\[
\Pi\neq\mathcal T(\Pi),
\]
because
\[
\Pi_{o_0}
=
\bigl\{
\{1\},\{2\}
\bigr\}
\neq
\bigl\{
\{1,2\}
\bigr\}
=
\mathcal T(\Pi)_{o_0}.
\]
The established criterion shows that this $\Pi$ is exact, but it is
not stable. Its separation of layers $1$ and $2$ at $o_0$ is
unnecessary: the two layers have identical immediate rewards and
are transported, under every feasible continuation, into the same
target quotient class.

Finally, stability was defined precisely as the fixed-point
condition
\[
\mathcal T(\Pi)=\Pi.
\]
Therefore, for an exact family, as for any partition family,
\[
\Pi
\text{ is stable}
\quad\Longleftrightarrow\quad
\Pi=\mathcal T(\Pi).
\]
This proves all assertions of the proposition.
\end{proof}

\subsection{Proof of Theorem~\ref{thm:coarsest_exact_main}}

\begin{proof}
By Lemma~\ref{lem:stable_fp_main}, the family $\Pi^\star$ obtained from the refinement iteration is a fixed point of the stability operator, so
\[
\mathcal T(\Pi^\star)=\Pi^\star.
\]
In particular,
\[
\Pi^\star\preceq\mathcal T(\Pi^\star).
\]
For each $o\in\cO$, the map
\[
q_o^\star:[n]\longrightarrow[n]/\Pi_o^\star,
\qquad
q_o^\star(i)=[i]_o,
\]
is exactly the canonical quotient map associated with the partition $\Pi_o^\star$. Proposition~\ref{prop:stable_exact_equiv_main}, applied with $\Pi=\Pi^\star$, therefore implies that the family $q^\star=\{q_o^\star\}_{o\in\cO}$ defines an exact class abstraction. This proves the existence assertion.

Let now
\[
q_o:[n]\longrightarrow\widehat{\cC}_o,
\qquad o\in\cO,
\]
be any other exact class abstraction. By Definition~\ref{def:exact_abstraction_main}, each $q_o$ is surjective, and there exist a class-level mean reward $\widehat R$ and, for every feasible triple $(o,a,o')$, a deterministic class transport
\[
\widehat\tau_{o,a,o'}:
\widehat{\cC}_o\longrightarrow\widehat{\cC}_{o'}
\]
such that, for every $i\in[n]$,
\[
R(o,i,a)=\widehat R(o,q_o(i),a)
\]
and
\[
q_{o'}\!\left(\sigma_{o,a,o'}(i)\right)
=
\widehat\tau_{o,a,o'}\!\left(q_o(i)\right)
\]
whenever $P_O(o'\mid o,a)>0$. Let $\Pi^q=\{\Pi_o^q\}_{o\in\cO}$ be the kernel partition family induced by $q$, namely
\[
i\equiv_o^{\Pi^q}j
\quad\Longleftrightarrow\quad
q_o(i)=q_o(j).
\]
We first show that this induced partition family is exact-compatible. Fix $o\in\cO$ and suppose that
\[
i\equiv_o^{\Pi^q}j.
\]
Then
\[
q_o(i)=q_o(j).
\]
For every action $a\in\cA$, exactness of $q$ gives
\[
\begin{aligned}
R(o,i,a)
&=\widehat R(o,q_o(i),a)\\
&=\widehat R(o,q_o(j),a)\\
&=R(o,j,a).
\end{aligned}
\]
Thus $i$ and $j$ have identical immediate mean rewards for every action. Now fix any $(a,o')$ satisfying
\[
P_O(o'\mid o,a)>0.
\]
Using the exact transport identity and again using $q_o(i)=q_o(j)$, we obtain
\[
\begin{aligned}
q_{o'}\!\left(\sigma_{o,a,o'}(i)\right)
&=\widehat\tau_{o,a,o'}\!\left(q_o(i)\right)\\
&=\widehat\tau_{o,a,o'}\!\left(q_o(j)\right)\\
&=q_{o'}\!\left(\sigma_{o,a,o'}(j)\right).
\end{aligned}
\]
By the definition of the kernel partition at $o'$, this equality is equivalent to
\[
\sigma_{o,a,o'}(i)
\equiv_{o'}^{\Pi^q}
\sigma_{o,a,o'}(j).
\]
Since the feasible pair $(a,o')$ was arbitrary, the two defining conditions of $\mathcal T(\Pi^q)$ are satisfied. Therefore
\[
i\equiv_o^{\mathcal T(\Pi^q)}j.
\]
We have proved that
\[
i\equiv_o^{\Pi^q}j
\quad\Longrightarrow\quad
i\equiv_o^{\mathcal T(\Pi^q)}j
\]
for every $o\in\cO$ and every $i,j\in[n]$. Equivalently,
\[
\Pi^q\preceq\mathcal T(\Pi^q).
\]

We next compare $\Pi^q$ with the fixed-point sequence from Lemma~\ref{lem:stable_fp_main}. Recall that $\Pi^{(0)}$ is the reward partition, so
\[
\begin{aligned}
i\equiv_o^{\Pi^{(0)}}j
\quad\Longleftrightarrow\quad&
R(o,i,a)=R(o,j,a)\\
&\text{for every }a\in\cA.
\end{aligned}
\]
If $i\equiv_o^{\Pi^q}j$, the reward calculation above shows that
\[
R(o,i,a)=R(o,j,a)
\quad\text{for every }a\in\cA.
\]
Hence
\[
i\equiv_o^{\Pi^{(0)}}j.
\]
This implication holds at every observation, and therefore
\[
\Pi^q\preceq\Pi^{(0)}.
\]
We claim that, more generally,
\[
\Pi^q\preceq\Pi^{(t)}
\quad\text{for every }t\geq0.
\]
The claim has just been established for $t=0$. Suppose that it holds for some $t\geq0$. Monotonicity of $\mathcal T$, proved in Lemma~\ref{lem:stable_fp_main}, gives
\[
\mathcal T(\Pi^q)
\preceq
\mathcal T(\Pi^{(t)}).
\]
Since $\Pi^q$ is exact-compatible and the iteration satisfies $\Pi^{(t+1)}=\mathcal T(\Pi^{(t)})$, we have
\[
\Pi^q
\preceq
\mathcal T(\Pi^q)
\preceq
\mathcal T(\Pi^{(t)})
=
\Pi^{(t+1)}.
\]
Thus
\[
\Pi^q\preceq\Pi^{(t+1)},
\]
and induction proves the claim for every $t\geq0$.

By Lemma~\ref{lem:stable_fp_main}, there exists a finite index $t_\star$ such that
\[
\Pi^{(t_\star)}=\Pi^\star.
\]
Evaluating the preceding refinement relation at $t=t_\star$ yields
\[
\Pi^q\preceq\Pi^\star.
\]
Unpacking this refinement relation, for every $o\in\cO$ and every $i,j\in[n]$,
\[
i\equiv_o^{\Pi^q}j
\quad\Longrightarrow\quad
i\equiv_o^{\Pi^\star}j.
\]
By the definitions of $\Pi^q$ and $q_o^\star$, this is precisely
\[
q_o(i)=q_o(j)
\quad\Longrightarrow\quad
q_o^\star(i)=q_o^\star(j).
\]
This proves that no exact abstraction can merge two layers that are separated by the stable quotient.

We now construct the factor map. Fix $o\in\cO$. For each $\hat c\in\widehat{\cC}_o$, surjectivity of $q_o$ guarantees that the fiber
\[
q_o^{-1}(\{\hat c\})
=
\{i\in[n]:q_o(i)=\hat c\}
\]
is nonempty. Choose any $i\in[n]$ with $q_o(i)=\hat c$ and define
\[
\kappa_o(\hat c)
:=
q_o^\star(i).
\]
This definition is independent of the chosen representative. Indeed, if $i,j\in[n]$ both satisfy
\[
q_o(i)=\hat c=q_o(j),
\]
then the implication proved above gives
\[
q_o^\star(i)=q_o^\star(j).
\]
Thus every representative of the fiber of $\hat c$ produces the same value, and
\[
\kappa_o:
\widehat{\cC}_o\longrightarrow\cC_o
\]
is a well-defined map.

For every $i\in[n]$, taking $\hat c=q_o(i)$ in the definition of $\kappa_o$ gives
\[
\begin{aligned}
(\kappa_o\circ q_o)(i)
&=\kappa_o(q_o(i))\\
&=q_o^\star(i).
\end{aligned}
\]
Therefore
\[
q_o^\star=\kappa_o\circ q_o.
\]
The map $\kappa_o$ is surjective. To see this, let $c\in\cC_o=[n]/\Pi_o^\star$ be arbitrary. Since $c$ is a nonempty block of the partition $\Pi_o^\star$, there exists $i\in[n]$ such that
\[
q_o^\star(i)=c.
\]
The factorization identity then gives
\[
c
=q_o^\star(i)
=\kappa_o(q_o(i)),
\]
so $c$ lies in the image of $\kappa_o$. Because $c$ was arbitrary,
\[
\kappa_o(\widehat{\cC}_o)=\cC_o.
\]

Finally, suppose that another map
\[
\lambda_o:
\widehat{\cC}_o\longrightarrow\cC_o
\]
satisfies
\[
q_o^\star=\lambda_o\circ q_o.
\]
Let $\hat c\in\widehat{\cC}_o$. Since $q_o$ is surjective, there exists $i\in[n]$ with
\[
q_o(i)=\hat c.
\]
Then
\[
\begin{aligned}
\lambda_o(\hat c)
&=\lambda_o(q_o(i))\\
&=q_o^\star(i)\\
&=\kappa_o(q_o(i))\\
&=\kappa_o(\hat c).
\end{aligned}
\]
Thus $\lambda_o=\kappa_o$, proving uniqueness. Conversely, the existence of any factorization $q_o^\star=\kappa_o\circ q_o$ immediately implies that
\[
q_o(i)=q_o(j)
\quad\Longrightarrow\quad
\begin{aligned}[t]
q_o^\star(i)
&=\kappa_o(q_o(i))\\
&=\kappa_o(q_o(j))\\
&=q_o^\star(j),
\end{aligned}
\]
so the implication and the factorization statements are indeed equivalent.

We have shown that the kernel partition $\Pi^q$ of every exact abstraction satisfies
\[
\Pi^q\preceq\Pi^\star,
\]
while $\Pi^\star$ itself defines the exact abstraction $q^\star$. Hence every exact observation-wise abstraction refines the stable quotient. Equivalently, every exact label set admits a surjection onto the corresponding stable label set; in particular,
\[
|\cC_o|
\leq
|\widehat{\cC}_o|
\quad\text{for every }o\in\cO.
\]
Therefore $\Pi^\star$ is the coarsest exact observation-wise partition family, and $q^\star$ is the coarsest exact class abstraction.
\end{proof}

\subsection{Proof of Theorem~\ref{thm:markovization_main}}

\begin{proof}
Recall that $\Pi^\star$ is stable, and hence
\[
\mathcal T(\Pi^\star)=\Pi^\star.
\]
Fix an observation $o\in\cO$ and two layers $i,j\in[n]$ belonging to the same stable class, so that
\[
[i]_o=[j]_o,
\]
or equivalently,
\[
i\equiv_o^{\Pi^\star}j.
\]
Since $\Pi^\star=\mathcal T(\Pi^\star)$, this also means
\[
i\equiv_o^{\mathcal T(\Pi^\star)}j.
\]
Expanding the definition of the stability operator gives, for every action $a\in\cA$,
\[
R(o,i,a)=R(o,j,a),
\]
and, for every $o'\in\cO$ satisfying $P_O(o'\mid o,a)>0$,
\[
\sigma_{o,a,o'}(i)
\equiv_{o'}^{\Pi^\star}
\sigma_{o,a,o'}(j).
\]
The latter equivalence is precisely the equality of the corresponding stable classes:
\[
[\sigma_{o,a,o'}(i)]_{o'}
=
[\sigma_{o,a,o'}(j)]_{o'}.
\]
It follows that replacing $i$ by any other representative $j$ of the same class $[i]_o$ leaves both
\[
R(o,i,a)
\]
and
\[
[\sigma_{o,a,o'}(i)]_{o'}
\]
unchanged. Therefore the formulas
\[
\bar R(o,[i]_o,a):=R(o,i,a)
\]
and
\[
\tau_{o,a,o'}([i]_o)
:=
[\sigma_{o,a,o'}(i)]_{o'}
\]
are independent of the representative and define, respectively, a function
\[
\bar R:
\bar\cS\times\cA\longrightarrow\mathbb R
\]
and, for every feasible triple $(o,a,o')$, a deterministic map
\[
\tau_{o,a,o'}:
\cC_o\longrightarrow\cC_{o'}.
\]
For a triple satisfying $P_O(o'\mid o,a)=0$, neither the raw transport nor the quotient transport affects any transition probability. For notational uniformity in sums over all $o'\in\cO$, we may extend both $\sigma_{o,a,o'}$ and $\tau_{o,a,o'}$ arbitrarily to such zero-probability triples; these arbitrary values are always multiplied by $P_O(o'\mid o,a)=0$ and therefore do not change any kernel, expectation, or value function.

We next verify that the displayed formulas define a finite discounted MDP. The state space
\[
\bar\cS
=
\{(o,c):o\in\cO,\ c\in\cC_o\}
\]
is finite because $\cO$ and $[n]$ are finite. The reward $\bar R$ is bounded by the same bound as the latent reward, since for every $(o,c)\in\bar\cS$ one may choose $i\in[n]$ with $c=[i]_o$ and obtain
\[
|\bar R(o,c,a)|
=
|R(o,i,a)|
\leq
r_{\max}.
\]
For every $(o,c)\in\bar\cS$ and $a\in\cA$, the proposed transition probabilities are nonnegative because
\[
P_O(o'\mid o,a)\geq0
\]
and the indicator is nonnegative. Their total mass is
\[
\begin{aligned}
&\sum_{o'\in\cO}
\sum_{c'\in\cC_{o'}}
\bar P\bigl((o',c')\mid(o,c),a\bigr)\\
&\quad=
\sum_{o'\in\cO}
\sum_{c'\in\cC_{o'}}
P_O(o'\mid o,a)
\mathbf 1\!\left\{
c'=\tau_{o,a,o'}(c)
\right\}\\
&\quad=
\sum_{o'\in\cO}
P_O(o'\mid o,a)
\sum_{c'\in\cC_{o'}}
\mathbf 1\!\left\{
c'=\tau_{o,a,o'}(c)
\right\}.
\end{aligned}
\]
For each $o'$ with $P_O(o'\mid o,a)>0$, the element $\tau_{o,a,o'}(c)$ is a single well-defined member of $\cC_{o'}$, so exactly one term in the inner sum is equal to one and all other terms are zero. For $o'$ with $P_O(o'\mid o,a)=0$, the entire corresponding summand vanishes. Hence
\[
\begin{aligned}
&\sum_{o'\in\cO}
\sum_{c'\in\cC_{o'}}
\bar P\bigl((o',c')\mid(o,c),a\bigr)\\
&\quad=
\sum_{o'\in\cO}
P_O(o'\mid o,a)\\
&\quad=1,
\end{aligned}
\]
because $P_O(\cdot\mid o,a)$ is a probability distribution on $\cO$. Thus $\bar P(\cdot\mid(o,c),a)$ is a probability distribution on $\bar\cS$.

The proposed initial law is also nonnegative. Since the classes in $\cC_o=[n]/\Pi_o^\star$ form a partition of $[n]$, each $i\in[n]$ belongs to exactly one class $c\in\cC_o$. Therefore
\[
\begin{aligned}
\sum_{(o,c)\in\bar\cS}
\bar\rho_0(o,c)
&=
\sum_{o\in\cO}
\sum_{c\in\cC_o}
\sum_{i:\,[i]_o=c}
\rho_0(o,i)\\
&=
\sum_{o\in\cO}
\sum_{i=1}^{n}
\rho_0(o,i)\\
&=1.
\end{aligned}
\]
Hence $\bar\rho_0\in\Delta(\bar\cS)$. Together with the original discount factor $\gamma\in(0,1)$, these facts show that
\[
(\bar\cS,\cA,\bar P,\bar R,\gamma,\bar\rho_0)
\]
is a well-defined finite discounted MDP.

The quotient kernel is exactly the pushforward of the latent HCDP kernel under the map
\[
q^\star(o,i):=(o,[i]_o).
\]
Indeed, fix $(o,i)\in\cO\times[n]$, put $c=[i]_o$, and fix $a\in\cA$, $o'\in\cO$, and $c'\in\cC_{o'}$. The probability that the latent next state has observation $o'$ and stable class $c'$ is
\[
\begin{aligned}
&\sum_{i':\,[i']_{o'}=c'}
P_H\bigl((o',i')\mid(o,i),a\bigr)\\
&\quad=
\sum_{i':\,[i']_{o'}=c'}
P_O(o'\mid o,a)
\mathbf 1\!\left\{
i'=\sigma_{o,a,o'}(i)
\right\}\\
&\quad=
P_O(o'\mid o,a)
\sum_{i':\,[i']_{o'}=c'}
\mathbf 1\!\left\{
i'=\sigma_{o,a,o'}(i)
\right\}\\
&\quad=
P_O(o'\mid o,a)
\mathbf 1\!\left\{
[\sigma_{o,a,o'}(i)]_{o'}=c'
\right\}\\
&\quad=
P_O(o'\mid o,a)
\mathbf 1\!\left\{
\tau_{o,a,o'}([i]_o)=c'
\right\}\\
&\quad=
\bar P\bigl((o',c')\mid(o,[i]_o),a\bigr).
\end{aligned}
\]
The fourth equality holds because the deterministic successor layer $\sigma_{o,a,o'}(i)$ either belongs to the class $c'$ or does not, and the fifth equality is the definition of the quotient transport. This calculation makes explicit that, once the current stable class is known, the conditional law of the next observation--class pair is independent of the latent representative $i$.

Let now $\bar\pi$ be an arbitrary stationary policy on the quotient MDP, and define its lift to the fully observed latent state space by
\[
\pi^\uparrow(a\mid o,i)
:=
\bar\pi(a\mid o,[i]_o).
\]
This is a valid stationary policy because, for every $(o,i)$,
\[
\pi^\uparrow(a\mid o,i)\geq0
\]
for all $a$, and
\[
\begin{aligned}
\sum_{a\in\cA}
\pi^\uparrow(a\mid o,i)
&=
\sum_{a\in\cA}
\bar\pi(a\mid o,[i]_o)\\
&=1.
\end{aligned}
\]
To compare the two value functions, let $\mathbb B(\bar\cS)$ denote the bounded real-valued functions on $\bar\cS$, and define the lifting operator
\[
L:\mathbb B(\bar\cS)
\longrightarrow
\mathbb B(\cO\times[n])
\]
by
\[
(Lf)(o,i):=f(o,[i]_o).
\]
Define the policy-evaluation operator of the latent HCDP under $\pi^\uparrow$ by
$
(T_{\pi^\uparrow}V)(o,i)
:=
\sum_{a\in\cA}
\pi^\uparrow(a\mid o,i)
\Biggl[
R(o,i,a)
+
\gamma
\sum_{o'\in\cO}
\sum_{i'=1}^{n}
P_H\bigl((o',i')\mid(o,i),a\bigr)
V(o',i')
\Biggr],
$
and define the quotient policy-evaluation operator by
$
(\bar T_{\bar\pi}f)(o,c)
:=
\sum_{a\in\cA}
\bar\pi(a\mid o,c)
\Biggl[
\bar R(o,c,a)
+
\gamma
\sum_{o'\in\cO}
\sum_{c'\in\cC_{o'}}
\bar P\bigl((o',c')\mid(o,c),a\bigr)
f(o',c')
\Biggr].
$
Both operators are $\gamma$-contractions under the sup norm. For example, for bounded $V,W$ and every $(o,i)$,
\[
\begin{aligned}
&\left|
(T_{\pi^\uparrow}V)(o,i)
-
(T_{\pi^\uparrow}W)(o,i)
\right|\\
&\quad\leq
\gamma
\sum_{a\in\cA}
\pi^\uparrow(a\mid o,i)\\
&\sum_{o'\in\cO}
\sum_{i'=1}^{n}
P_H\bigl((o',i')\mid(o,i),a\bigr)
\left|V(o',i')-W(o',i')\right|\\
&\quad\leq
\gamma\|V-W\|_\infty
\sum_{a\in\cA}
\pi^\uparrow(a\mid o,i)\\
&\sum_{o'\in\cO}
\sum_{i'=1}^{n}
P_H\bigl((o',i')\mid(o,i),a\bigr)\\
&\quad=
\gamma\|V-W\|_\infty.
\end{aligned}
\]
Taking the supremum over $(o,i)$ gives
\[
\|T_{\pi^\uparrow}V-T_{\pi^\uparrow}W\|_\infty
\leq
\gamma\|V-W\|_\infty.
\]
The same calculation with $\bar P$ and $\bar\pi$ gives
\[
\|\bar T_{\bar\pi}f-\bar T_{\bar\pi}g\|_\infty
\leq
\gamma\|f-g\|_\infty.
\]
Consequently, $T_{\pi^\uparrow}$ and $\bar T_{\bar\pi}$ have unique bounded fixed points, namely $V_{\pi^\uparrow}$ and $\bar V_{\bar\pi}$.

The lifting operator intertwines these two Bellman evaluation operators. To verify this identity, fix $f\in\mathbb B(\bar\cS)$ and $(o,i)\in\cO\times[n]$. Substituting the definitions of $\pi^\uparrow$, $P_H$, and $L$ gives
\[
\begin{aligned}
&(T_{\pi^\uparrow}Lf)(o,i)\\
&\quad=
\sum_{a\in\cA}
\bar\pi(a\mid o,[i]_o)
\Biggl[
R(o,i,a)\\
&+
\gamma
\sum_{o'\in\cO}
\sum_{i'=1}^{n}
P_O(o'\mid o,a)
\mathbf 1\!\left\{
i'=\sigma_{o,a,o'}(i)
\right\}
f(o',[i']_{o'})
\Biggr]\\
&\quad=
\sum_{a\in\cA}
\bar\pi(a\mid o,[i]_o)
\Biggl[
R(o,i,a)\\
&+
\gamma
\sum_{o'\in\cO}
P_O(o'\mid o,a)
f\bigl(o',[\sigma_{o,a,o'}(i)]_{o'}\bigr)
\Biggr]\\
&\quad=
\sum_{a\in\cA}
\bar\pi(a\mid o,[i]_o)
\Biggl[
\bar R(o,[i]_o,a)\\
&+
\gamma
\sum_{o'\in\cO}
P_O(o'\mid o,a)
f\bigl(o',\tau_{o,a,o'}([i]_o)\bigr)
\Biggr].
\end{aligned}
\]
For fixed $o'$, the definition of $\bar P$ implies
\[
\begin{aligned}
&\sum_{c'\in\cC_{o'}}
\bar P\bigl((o',c')\mid(o,[i]_o),a\bigr)
f(o',c')\\
&\quad=
\sum_{c'\in\cC_{o'}}
P_O(o'\mid o,a)
\mathbf 1\!\left\{
c'=\tau_{o,a,o'}([i]_o)
\right\}
f(o',c')\\
&\quad=
P_O(o'\mid o,a)
f\bigl(o',\tau_{o,a,o'}([i]_o)\bigr).
\end{aligned}
\]
Substituting this equality into the preceding Bellman expression yields
\[
\begin{aligned}
(T_{\pi^\uparrow}Lf)(o,i)
&=
(\bar T_{\bar\pi}f)(o,[i]_o)\\
&=
(L\bar T_{\bar\pi}f)(o,i).
\end{aligned}
\]
Thus, as an operator identity,
\[
T_{\pi^\uparrow}L
=
L\bar T_{\bar\pi}.
\]
Since $\bar V_{\bar\pi}$ is the fixed point of $\bar T_{\bar\pi}$,
\[
\bar T_{\bar\pi}\bar V_{\bar\pi}
=
\bar V_{\bar\pi}.
\]
Applying the intertwining identity gives
\[
\begin{aligned}
T_{\pi^\uparrow}(L\bar V_{\bar\pi})
&=
L(\bar T_{\bar\pi}\bar V_{\bar\pi})\\
&=
L\bar V_{\bar\pi}.
\end{aligned}
\]
Therefore $L\bar V_{\bar\pi}$ is a fixed point of $T_{\pi^\uparrow}$. By uniqueness of the fixed point of $T_{\pi^\uparrow}$,
\[
V_{\pi^\uparrow}
=
L\bar V_{\bar\pi}.
\]
Evaluating this identity at $(o,i)$ gives
\[
V_{\pi^\uparrow}(o,i)
=
\bar V_{\bar\pi}(o,[i]_o),
\]
which proves the claimed policy-value preservation.

It remains to establish the optimality assertion. Let the Bellman optimality operator of the fully observed latent HCDP be
$
(T_{\mathrm{full}}^\star V)(o,i)
:=
\max_{a\in\cA}
\Biggl[
R(o,i,a)
+
\gamma
\sum_{o'\in\cO}
\sum_{i'=1}^{n}
P_H\bigl((o',i')\mid(o,i),a\bigr)
V(o',i')
\Biggr],
$
and let the quotient Bellman optimality operator be
$
(\bar T^\star f)(o,c)
:=
\max_{a\in\cA}
\Biggl[
\bar R(o,c,a)
+
\gamma
\sum_{o'\in\cO}
\sum_{c'\in\cC_{o'}}
\bar P\bigl((o',c')\mid(o,c),a\bigr)
f(o',c')
\Biggr].
$
Because both state spaces and the action set are finite and $\gamma\in(0,1)$, these operators are $\gamma$-contractions under the sup norm. Their unique fixed points are, respectively, the fully observed latent optimal value $V_{\mathrm{full}}^\star$ and the quotient optimal value $\bar V^\star$.

The same calculation used for policy evaluation, now performed before taking the maximum over actions, shows that for every $f\in\mathbb B(\bar\cS)$,
\[
T_{\mathrm{full}}^\star Lf
=
L\bar T^\star f.
\]
Indeed, for every $(o,i)$,
\[
\begin{aligned}
&(T_{\mathrm{full}}^\star Lf)(o,i)\\
&\quad=
\max_{a\in\cA}
\Biggl[
R(o,i,a)\\
&+
\gamma
\sum_{o'\in\cO}
P_O(o'\mid o,a)
f\bigl(o',[\sigma_{o,a,o'}(i)]_{o'}\bigr)
\Biggr]\\
&\quad=
\max_{a\in\cA}
\Biggl[
\bar R(o,[i]_o,a)\\
&+
\gamma
\sum_{o'\in\cO}
P_O(o'\mid o,a)
f\bigl(o',\tau_{o,a,o'}([i]_o)\bigr)
\Biggr]\\
&\quad=
(\bar T^\star f)(o,[i]_o)\\
&\quad=
(L\bar T^\star f)(o,i).
\end{aligned}
\]
Since $\bar V^\star$ is the unique fixed point of $\bar T^\star$,
\[
\bar T^\star\bar V^\star
=
\bar V^\star.
\]
Consequently,
\[
\begin{aligned}
T_{\mathrm{full}}^\star(L\bar V^\star)
&=
L(\bar T^\star\bar V^\star)\\
&=
L\bar V^\star.
\end{aligned}
\]
Thus $L\bar V^\star$ is a fixed point of $T_{\mathrm{full}}^\star$. Uniqueness of the fully observed optimal fixed point implies
\[
V_{\mathrm{full}}^\star
=
L\bar V^\star,
\]
and therefore, for every $(o,i)$,
\[
V_{\mathrm{full}}^\star(o,i)
=
\bar V^\star(o,[i]_o).
\]

Finally, because the quotient MDP is finite and discounted, there exists a stationary deterministic quotient policy $\bar\pi^\star$ attaining the maximum in the Bellman optimality equation at every quotient state. Define its lift by
\[
(\pi^\star)^\uparrow(a\mid o,i)
:=
\bar\pi^\star(a\mid o,[i]_o).
\]
The already proved policy-value identity gives
\[
\begin{aligned}
V_{(\pi^\star)^\uparrow}(o,i)
&=
\bar V_{\bar\pi^\star}(o,[i]_o)\\
&=
\bar V^\star(o,[i]_o)\\
&=
V_{\mathrm{full}}^\star(o,i).
\end{aligned}
\]
Hence $(\pi^\star)^\uparrow$ is optimal for the fully observed latent HCDP. By construction, whenever $[i]_o=[j]_o$,
$
(\pi^\star)^\uparrow(\cdot\mid o,i)
=
\bar\pi^\star(\cdot\mid o,[i]_o)
=
\bar\pi^\star(\cdot\mid o,[j]_o)
=
(\pi^\star)^\uparrow(\cdot\mid o,j).
$
Thus a fully observed latent optimal policy can be chosen constant on stable classes, completing the proof.
\end{proof}

\subsection{Proof of Corollary~\ref{cor:path_transport_descends_main}}

\begin{proof}
Write the directed path as
\[
\omega=e_1\cdots e_k,
\qquad
e_j=(o_{j-1}\xrightarrow{a_{j-1}}o_j),
\]
where $o_0=o$ and $o_k=o'$. For each edge $e_j$, use the abbreviations
\[
\sigma_{e_j}:=\sigma_{o_{j-1},a_{j-1},o_j}
\qquad\text{and}\qquad
\tau_{e_j}:=\tau_{o_{j-1},a_{j-1},o_j}.
\]
Because $e_j$ is an edge of the directed support graph, it is feasible, and hence
\[
P_O(o_j\mid o_{j-1},a_{j-1})>0.
\]
Theorem~\ref{thm:markovization_main} therefore applies to every $e_j$ and shows that
\[
\tau_{e_j}:\cC_{o_{j-1}}\longrightarrow\cC_{o_j}
\]
is a well-defined map satisfying, for every $r\in[n]$,
\[
\begin{aligned}
\tau_{e_j}([r]_{o_{j-1}})
&=
\tau_{o_{j-1},a_{j-1},o_j}([r]_{o_{j-1}})\\
&=
[\sigma_{o_{j-1},a_{j-1},o_j}(r)]_{o_j}\\
&=
[\sigma_{e_j}(r)]_{o_j}.
\end{aligned}
\]
Thus each raw one-edge transport descends through the stable quotient. Moreover, the codomain $\cC_{o_j}$ of $\tau_{e_j}$ is exactly the domain of $\tau_{e_{j+1}}$. Consequently, the composition
\[
\tau_\omega
=
\tau_{e_k}\circ\cdots\circ\tau_{e_1}:
\cC_{o_0}\longrightarrow\cC_{o_k}
\]
is well defined.

For $j=0,\ldots,k$, let
\[
\omega^{(j)}:=e_1\cdots e_j
\]
denote the length-$j$ prefix of $\omega$, with
\[
\omega^{(0)}:=\varnothing_{o_0}.
\]
Define
\[
\sigma_{\omega^{(0)}}:=\operatorname{id}_{[n]},
\qquad
\tau_{\omega^{(0)}}:=\operatorname{id}_{\cC_{o_0}},
\]
and, for $j\geq1$, define
\[
\sigma_{\omega^{(j)}}
:=
\sigma_{e_j}\circ\cdots\circ\sigma_{e_1},
\qquad
\tau_{\omega^{(j)}}
:=
\tau_{e_j}\circ\cdots\circ\tau_{e_1}.
\]
We prove that, for every $j=0,\ldots,k$ and every $i\in[n]$,
\[
\tau_{\omega^{(j)}}([i]_{o_0})
=
[\sigma_{\omega^{(j)}}(i)]_{o_j}.
\]
For $j=0$, both prefix transports are identity maps, and therefore
\[
\begin{aligned}
\tau_{\omega^{(0)}}([i]_{o_0})
&=
\operatorname{id}_{\cC_{o_0}}([i]_{o_0})\\
&=
[i]_{o_0}\\
&=
[\operatorname{id}_{[n]}(i)]_{o_0}\\
&=
[\sigma_{\omega^{(0)}}(i)]_{o_0}.
\end{aligned}
\]
Hence the claimed identity holds for the empty prefix.

Now fix $j\in\{1,\ldots,k\}$ and suppose that the identity holds for the prefix $\omega^{(j-1)}$. Thus, for every $i\in[n]$,
\[
\tau_{\omega^{(j-1)}}([i]_{o_0})
=
[\sigma_{\omega^{(j-1)}}(i)]_{o_{j-1}}.
\]
By the definitions of the prefix transports,
\[
\tau_{\omega^{(j)}}
=
\tau_{e_j}\circ\tau_{\omega^{(j-1)}}
\]
and
\[
\sigma_{\omega^{(j)}}
=
\sigma_{e_j}\circ\sigma_{\omega^{(j-1)}}.
\]
Applying the quotient prefix transport to $[i]_{o_0}$ and then using the induction hypothesis gives
\[
\begin{aligned}
\tau_{\omega^{(j)}}([i]_{o_0})
&=
(\tau_{e_j}\circ\tau_{\omega^{(j-1)}})([i]_{o_0})\\
&=
\tau_{e_j}\!\left(
\tau_{\omega^{(j-1)}}([i]_{o_0})
\right)\\
&=
\tau_{e_j}\!\left(
[\sigma_{\omega^{(j-1)}}(i)]_{o_{j-1}}
\right).
\end{aligned}
\]
The element
\[
\sigma_{\omega^{(j-1)}}(i)
\]
belongs to $[n]$, so the one-edge descent identity for $e_j$ may be applied with
\[
r=\sigma_{\omega^{(j-1)}}(i).
\]
It follows that
\[
\begin{aligned}
\tau_{e_j}\!\left(
[\sigma_{\omega^{(j-1)}}(i)]_{o_{j-1}}
\right)
&=
\left[
\sigma_{e_j}\!\left(
\sigma_{\omega^{(j-1)}}(i)
\right)
\right]_{o_j}\\
&=
\left[
(\sigma_{e_j}\circ\sigma_{\omega^{(j-1)}})(i)
\right]_{o_j}\\
&=
[\sigma_{\omega^{(j)}}(i)]_{o_j}.
\end{aligned}
\]
Combining the preceding equalities yields
\[
\tau_{\omega^{(j)}}([i]_{o_0})
=
[\sigma_{\omega^{(j)}}(i)]_{o_j}.
\]
This completes the induction over all path prefixes.

Taking $j=k$, and using
\[
o_0=o,
\qquad
o_k=o',
\qquad
\omega^{(k)}=\omega,
\]
together with
\[
\tau_\omega
=
\tau_{e_k}\circ\cdots\circ\tau_{e_1}
\]
and
\[
\sigma_\omega
=
\sigma_{e_k}\circ\cdots\circ\sigma_{e_1},
\]
we obtain, for every $i\in[n]$,
\[
\tau_\omega([i]_o)
=
[\sigma_\omega(i)]_{o'},
\]
as required.
\end{proof}

\subsection{Proof of Lemma~\ref{lem:directed_quotient_holonomy_main}}

\begin{proof}
Fix an observation $o\in\cO$ and a directed closed walk $\omega$ based at $o$. Since the initial and terminal observations of $\omega$ are both $o$, its raw transport is a self-map
\[
\sigma_\omega:[n]\longrightarrow[n].
\]
By Definition~\ref{def:directed_transport_main}, $\sigma_\omega$ is a composition of edge permutations. More explicitly, if
\[
\omega=e_1\cdots e_k,
\]
then
\[
\sigma_\omega
=
\sigma_{e_k}\circ\cdots\circ\sigma_{e_1}.
\]
Every $\sigma_{e_j}$ belongs to $S_n$ and is therefore bijective. A composition of bijections is bijective, so
\[
\sigma_\omega\in S_n.
\]
Let
\[
q_o^\star:[n]\longrightarrow\cC_o,
\qquad
q_o^\star(i)=[i]_o,
\]
be the canonical stable quotient map. Corollary~\ref{cor:path_transport_descends_main} gives, for every $i\in[n]$,
\[
\tau_\omega\bigl(q_o^\star(i)\bigr)
=
\tau_\omega([i]_o)
=
[\sigma_\omega(i)]_o
=
q_o^\star\bigl(\sigma_\omega(i)\bigr).
\]
Thus the raw and quotient transports satisfy the commuting relation
\[
\tau_\omega\circ q_o^\star
=
q_o^\star\circ\sigma_\omega.
\]

We first show directly that $\tau_\omega$ is surjective. Let $c'\in\cC_o$ be arbitrary. Since $q_o^\star$ is the quotient map onto
\[
\cC_o=[n]/\Pi_o^\star,
\]
it is surjective. Hence there exists $j\in[n]$ such that
\[
q_o^\star(j)=c'.
\]
Because $\sigma_\omega$ is bijective, its inverse $\sigma_\omega^{-1}$ exists. Define
\[
i:=\sigma_\omega^{-1}(j)
\]
and let
\[
c:=q_o^\star(i)\in\cC_o.
\]
By the definition of $i$,
\[
\sigma_\omega(i)
=
\sigma_\omega\bigl(\sigma_\omega^{-1}(j)\bigr)
=
j.
\]
Using the commuting relation, we therefore obtain
\[
\begin{aligned}
\tau_\omega(c)
&=
\tau_\omega\bigl(q_o^\star(i)\bigr)\\
&=
q_o^\star\bigl(\sigma_\omega(i)\bigr)\\
&=
q_o^\star(j)\\
&=
c'.
\end{aligned}
\]
Since $c'\in\cC_o$ was arbitrary, every element of $\cC_o$ has a preimage under $\tau_\omega$. Thus
\[
\tau_\omega:\cC_o\longrightarrow\cC_o
\]
is surjective.

The set $\cC_o$ is finite because it is the set of blocks of a partition of the finite set $[n]$. A surjective self-map of a finite set is injective. Indeed, suppose for contradiction that $\tau_\omega$ were not injective. Then there would exist two distinct classes $c_1,c_2\in\cC_o$ such that
\[
\tau_\omega(c_1)=\tau_\omega(c_2).
\]
Consequently, the image of $\tau_\omega$ would contain at most
\[
|\cC_o|-1
\]
distinct elements, contradicting the already established surjectivity
\[
\tau_\omega(\cC_o)=\cC_o.
\]
Therefore $\tau_\omega$ is injective as well as surjective. Hence it is bijective, and
\[
\tau_\omega\in\Sym(\cC_o).
\]

Now consider the collection
\[
\Hol_o^{\mathrm{dir}}
=
\left\{
\tau_\omega:
\omega\text{ is a directed closed walk based at }o
\right\}.
\]
The empty walk $\varnothing_o$ is a directed closed walk based at $o$. By Definition~\ref{def:directed_transport_main},
\[
\sigma_{\varnothing_o}
=
\operatorname{id}_{[n]}.
\]
Applying Corollary~\ref{cor:path_transport_descends_main} to the empty walk gives, for every $i\in[n]$,
\[
\begin{aligned}
\tau_{\varnothing_o}([i]_o)
&=
[\sigma_{\varnothing_o}(i)]_o\\
&=
[\operatorname{id}_{[n]}(i)]_o\\
&=
[i]_o.
\end{aligned}
\]
Every element of $\cC_o$ is a block of $\Pi_o^\star$ and therefore has the form $[i]_o$ for some $i\in[n]$. It follows that
\[
\tau_{\varnothing_o}
=
\operatorname{id}_{\cC_o}.
\]
Consequently,
\[
\operatorname{id}_{\cC_o}
\in
\Hol_o^{\mathrm{dir}}.
\]

Let $\omega_1$ and $\omega_2$ be arbitrary directed closed walks based at $o$, where $\omega_1$ is executed first and $\omega_2$ is executed second. Their concatenation
\[
\omega_2\circ\omega_1
\]
is again a directed walk beginning at $o$ and ending at $o$, and hence is a directed closed walk based at $o$. By the composition convention in Definition~\ref{def:directed_transport_main},
\[
\sigma_{\omega_2\circ\omega_1}
=
\sigma_{\omega_2}\circ\sigma_{\omega_1}.
\]
For every $i\in[n]$, Corollary~\ref{cor:path_transport_descends_main} therefore gives
\[
\begin{aligned}
\tau_{\omega_2\circ\omega_1}([i]_o)
&=
[\sigma_{\omega_2\circ\omega_1}(i)]_o\\
&=
[(\sigma_{\omega_2}\circ\sigma_{\omega_1})(i)]_o\\
&=
[\sigma_{\omega_2}(\sigma_{\omega_1}(i))]_o.
\end{aligned}
\]
Applying the same corollary first to $\omega_1$ and then to $\omega_2$ gives
\[
\tau_{\omega_1}([i]_o)
=
[\sigma_{\omega_1}(i)]_o
\]
and
\[
\tau_{\omega_2}\bigl([\sigma_{\omega_1}(i)]_o\bigr)
=
[\sigma_{\omega_2}(\sigma_{\omega_1}(i))]_o.
\]
Combining these identities yields
\[
\begin{aligned}
\tau_{\omega_2\circ\omega_1}([i]_o)
&=
[\sigma_{\omega_2}(\sigma_{\omega_1}(i))]_o\\
&=
\tau_{\omega_2}\bigl([\sigma_{\omega_1}(i)]_o\bigr)\\
&=
\tau_{\omega_2}\bigl(\tau_{\omega_1}([i]_o)\bigr)\\
&=
(\tau_{\omega_2}\circ\tau_{\omega_1})([i]_o).
\end{aligned}
\]
Since every element of $\cC_o$ has the form $[i]_o$, equality on all such classes implies equality of the two maps. Therefore
\[
\tau_{\omega_2\circ\omega_1}
=
\tau_{\omega_2}\circ\tau_{\omega_1}.
\]
Because $\omega_2\circ\omega_1$ is again a directed closed walk based at $o$, it follows that
\[
\tau_{\omega_2}\circ\tau_{\omega_1}
\in
\Hol_o^{\mathrm{dir}}.
\]
Thus $\Hol_o^{\mathrm{dir}}$ is closed under composition. Together with the presence of the identity, this shows that $\Hol_o^{\mathrm{dir}}$ is a submonoid of $\Sym(\cC_o)$.

It remains to verify that the inverse of every element also belongs to $\Hol_o^{\mathrm{dir}}$. Fix an arbitrary element
\[
g\in\Hol_o^{\mathrm{dir}}.
\]
By definition, there exists a directed closed walk $\omega$ based at $o$ such that
\[
g=\tau_\omega.
\]
The first part of the proof shows that
\[
g\in\Sym(\cC_o).
\]
Since $\cC_o$ is finite, the symmetric group $\Sym(\cC_o)$ is finite. Therefore the infinite sequence
$
\operatorname{id}_{\cC_o},
\quad
g,
\quad
g^2,
\quad
g^3,
\quad
\ldots
$
cannot consist of pairwise distinct permutations. Hence there exist integers
\[
0\leq r<s
\]
such that
\[
g^r=g^s.
\]
Because $g$ is bijective, $g^r$ is also bijective. We may therefore compose both sides on the left with the inverse permutation $(g^r)^{-1}$ inside $\Sym(\cC_o)$, obtaining
\[
\begin{aligned}
(g^r)^{-1}\circ g^r
&=
(g^r)^{-1}\circ g^s,\\
\operatorname{id}_{\cC_o}
&=
g^{s-r}.
\end{aligned}
\]
Set
\[
m:=s-r.
\]
Then $m\geq1$ and
\[
g^m
=
\operatorname{id}_{\cC_o}.
\]
It follows that
\[
\begin{aligned}
g\circ g^{m-1}
&=
g^m\\
&=
\operatorname{id}_{\cC_o},
\end{aligned}
\]
and likewise
\[
\begin{aligned}
g^{m-1}\circ g
&=
g^m\\
&=
\operatorname{id}_{\cC_o}.
\end{aligned}
\]
Therefore
\[
g^{-1}=g^{m-1}.
\]

Because $\Hol_o^{\mathrm{dir}}$ contains the identity and is closed under composition, it contains every nonnegative power of $g$. In particular,
\[
g^{m-1}\in\Hol_o^{\mathrm{dir}}.
\]
More explicitly, for each integer $\ell\geq1$, let
\[
\omega^{\langle\ell\rangle}
\]
denote the directed closed walk obtained by executing $\omega$ successively $\ell$ times, and let
\[
\omega^{\langle0\rangle}:=\varnothing_o.
\]
Repeated application of the composition identity gives
\[
\tau_{\omega^{\langle\ell\rangle}}
=
(\tau_\omega)^\ell
=
g^\ell.
\]
Consequently,
\[
g^{-1}
=
g^{m-1}
=
\tau_{\omega^{\langle m-1\rangle}}
\in
\Hol_o^{\mathrm{dir}}.
\]
When $m=1$, the exponent $m-1$ is zero and the inverse is induced by the empty walk. This construction does not require an executable edge-wise reversal of $\omega$; the inverse permutation is realized algebraically by finitely many repetitions of the same directed closed walk.

We have shown that $\Hol_o^{\mathrm{dir}}$ contains the identity permutation, is closed under composition, and contains the inverse of every one of its elements. Therefore
\[
\Hol_o^{\mathrm{dir}}
\leq
\Sym(\cC_o).
\]
Finally, because $\cC_o$ is finite,
\[
|\Hol_o^{\mathrm{dir}}|
\leq
|\Sym(\cC_o)|
=
|\cC_o|!
<
\infty.
\]
Hence $\Hol_o^{\mathrm{dir}}$ is a finite subgroup of $\Sym(\cC_o)$, with the empty walk providing the identity.
\end{proof}

\subsection{Proof of Proposition~\ref{prop:quotient_belief_main}}

\begin{proof}
Fix a time $t$ and a positive-probability realization of the controller history
\[
H_t=(o_0,a_0,\ldots,a_{t-1},o_t).
\]
All conditional probabilities below are taken under the policy generating the interaction. Let
\[
c_t:=[i_t]_{o_t}\in\cC_{o_t}
\]
be the stable quotient class of the current latent layer, so that
\[
\beta_t(c)=\bP(c_t=c\mid H_t),
\qquad c\in\cC_{o_t}.
\]
The action $a_t$ is selected according to a distribution that is measurable with respect to $H_t$. Consequently, conditional on $H_t$, the policy randomization used to choose $a_t$ does not depend on the hidden layer $i_t$ or on its class $c_t$. More explicitly, for every action $a\in\cA$ satisfying $\bP(a_t=a\mid H_t)>0$ and every $c\in\cC_{o_t}$,
\[
\begin{aligned}
&\bP(c_t=c\mid H_t,a_t=a)\\
&=
\frac{\bP(c_t=c,a_t=a\mid H_t)}
{\bP(a_t=a\mid H_t)}\\
&=
\frac{
\bP(c_t=c\mid H_t)
\bP(a_t=a\mid H_t,c_t=c)
}{
\bP(a_t=a\mid H_t)
}\\
&=
\frac{
\bP(c_t=c\mid H_t)
\bP(a_t=a\mid H_t)
}{
\bP(a_t=a\mid H_t)
}\\
&=
\bP(c_t=c\mid H_t)\\
&=
\beta_t(c).
\end{aligned}
\]
The third equality follows from the definition of an admissible history-dependent policy: once $H_t$ is given, the conditional distribution of $a_t$ is $\pi_t(\cdot\mid H_t)$ and cannot depend additionally on the unobserved variable $c_t$. Thus conditioning on the selected action does not change the current class belief. In the notation of the proposition,
\[
\bP(c_t=c\mid H_t,a_t)=\beta_t(c)
\]
for every action value that can be selected after $H_t$.

Now fix an observed successor $o_{t+1}$ satisfying
\[
P_O(o_{t+1}\mid o_t,a_t)>0.
\]
The HCDP transition law implies that the probability of this visible successor is independent of the current raw layer. Indeed, for every $i\in[n]$,
\[
\begin{aligned}
&\bP(o_{t+1}\mid H_t,a_t,i_t=i)\\
&\quad=
\sum_{i'=1}^{n}
P_H\bigl((o_{t+1},i')\mid(o_t,i),a_t\bigr)\\
&\quad=
\sum_{i'=1}^{n}
P_O(o_{t+1}\mid o_t,a_t)
\mathbf{1}\!\left\{
i'=\sigma_{o_t,a_t,o_{t+1}}(i)
\right\}\\
&\quad=
P_O(o_{t+1}\mid o_t,a_t)
\sum_{i'=1}^{n}
\mathbf{1}\!\left\{
i'=\sigma_{o_t,a_t,o_{t+1}}(i)
\right\}\\
&\quad=
P_O(o_{t+1}\mid o_t,a_t).
\end{aligned}
\]
The final equality holds because
\[
\sigma_{o_t,a_t,o_{t+1}}(i)\in[n]
\]
is a single well-defined layer, so exactly one term in the indicator sum is equal to one.

Because the right-hand side does not depend on $i$, the joint probability of the current class and the observed successor can be computed without conditioning on a possibly null class event. For every $c\in\cC_{o_t}$,
\[
\begin{aligned}
&\bP(c_t=c,o_{t+1}\mid H_t,a_t)\\
&\quad=
\sum_{i:\,[i]_{o_t}=c}
\bP(i_t=i,o_{t+1}\mid H_t,a_t)\\
&\quad=
\sum_{i:\,[i]_{o_t}=c}
\bP(o_{t+1}\mid H_t,a_t,i_t=i)
\bP(i_t=i\mid H_t,a_t)\\
&\quad=
P_O(o_{t+1}\mid o_t,a_t)
\sum_{i:\,[i]_{o_t}=c}
\bP(i_t=i\mid H_t,a_t)\\
&\quad=
P_O(o_{t+1}\mid o_t,a_t)
\bP(c_t=c\mid H_t,a_t)\\
&\quad=
P_O(o_{t+1}\mid o_t,a_t)\beta_t(c).
\end{aligned}
\]
Summing this identity over all current classes gives
\[
\begin{aligned}
\bP(o_{t+1}\mid H_t,a_t)
&=
\sum_{d\in\cC_{o_t}}
\bP(c_t=d,o_{t+1}\mid H_t,a_t)\\
&=
\sum_{d\in\cC_{o_t}}
P_O(o_{t+1}\mid o_t,a_t)\beta_t(d)\\
&=
P_O(o_{t+1}\mid o_t,a_t)
\sum_{d\in\cC_{o_t}}\beta_t(d)\\
&=
P_O(o_{t+1}\mid o_t,a_t),
\end{aligned}
\]
because $\beta_t$ is a probability distribution on $\cC_{o_t}$.

Bayes' rule now shows that observing $o_{t+1}$ introduces no class-dependent likelihood correction. For every $c\in\cC_{o_t}$,
\[
\begin{aligned}
&\bP(c_t=c\mid H_t,a_t,o_{t+1})\\
&\quad=
\frac{
\bP(c_t=c,o_{t+1}\mid H_t,a_t)
}{
\bP(o_{t+1}\mid H_t,a_t)
}\\
&\quad=
\frac{
P_O(o_{t+1}\mid o_t,a_t)\beta_t(c)
}{
P_O(o_{t+1}\mid o_t,a_t)
}\\
&\quad=
\beta_t(c).
\end{aligned}
\]
The positivity of
\[
P_O(o_{t+1}\mid o_t,a_t)
\]
guarantees that this conditional probability is well defined. The assumption that reward samples are not used as within-episode controller observations is used at this point. Under the convention of the proposition, the information available at the next decision time is
\[
H_{t+1}
=
H_t\cdot(a_t,o_{t+1}),
\]
and therefore no realized reward sample is included in the conditioning information. If a reward sample were included, its class-conditional likelihood could produce an additional Bayes correction, which would require the stronger reward-output abstraction discussed in Appendix~\ref{app:prelim_details}.

By Theorem~\ref{thm:markovization_main}, the stable class transition associated with the observed feasible edge is deterministic. More precisely,
\[
\begin{aligned}
c_{t+1}
&=
[i_{t+1}]_{o_{t+1}}\\
&=
[\sigma_{o_t,a_t,o_{t+1}}(i_t)]_{o_{t+1}}\\
&=
\tau_{o_t,a_t,o_{t+1}}([i_t]_{o_t})\\
&=
\tau_{o_t,a_t,o_{t+1}}(c_t)
\end{aligned}
\]
almost surely conditional on $(H_t,a_t,o_{t+1})$. Equivalently, for every $c\in\cC_{o_t}$ and every $c'\in\cC_{o_{t+1}}$,
\[
\bP(c_{t+1}=c'\mid H_t,a_t,o_{t+1},c_t=c)
=
\mathbf{1}\!\left\{
c'=\tau_{o_t,a_t,o_{t+1}}(c)
\right\}
\]
whenever the conditioning event has positive probability. In the zero-probability case, the corresponding term in the law-of-total-probability expansion below vanishes.

Using the conditional law of total probability over the finite set $\cC_{o_t}$, we obtain
\[
\begin{aligned}
\beta_{t+1}(c')
&=
\bP(c_{t+1}=c'\mid H_{t+1})\\
&=
\bP(c_{t+1}=c'\mid H_t,a_t,o_{t+1})\\
&=
\sum_{c\in\cC_{o_t}}
\bP(c_{t+1}=c'\mid H_t,a_t,o_{t+1},c_t=c)\\
&\bP(c_t=c\mid H_t,a_t,o_{t+1})\\
&=
\sum_{c\in\cC_{o_t}}
\mathbf{1}\!\left\{
c'=\tau_{o_t,a_t,o_{t+1}}(c)
\right\}
\beta_t(c)\\
&=
\sum_{\substack{c\in\cC_{o_t}:\\
\tau_{o_t,a_t,o_{t+1}}(c)=c'}}
\beta_t(c).
\end{aligned}
\]
This formula remains valid when the open-edge class transport is not injective. If several current classes are mapped to the same successor class, the posterior masses of all those classes are added.

The right-hand side also defines a normalized probability distribution. Indeed,
\[
\begin{aligned}
\sum_{c'\in\cC_{o_{t+1}}}\beta_{t+1}(c')
&=
\sum_{c'\in\cC_{o_{t+1}}}
\sum_{c\in\cC_{o_t}}
\mathbf{1}\!\left\{
c'=\tau_{o_t,a_t,o_{t+1}}(c)
\right\}
\beta_t(c)\\
&=
\sum_{c\in\cC_{o_t}}
\beta_t(c)
\sum_{c'\in\cC_{o_{t+1}}}
\mathbf{1}\!\left\{
c'=\tau_{o_t,a_t,o_{t+1}}(c)
\right\}\\
&=
\sum_{c\in\cC_{o_t}}\beta_t(c)\\
&=
1,
\end{aligned}
\]
because $\tau_{o_t,a_t,o_{t+1}}(c)$ is one uniquely defined element of $\cC_{o_{t+1}}$ for each $c\in\cC_{o_t}$.

The stable quotient also makes the conditional mean reward depend only on the current class. By the representative-independent reward identity established in Theorem~\ref{thm:markovization_main},
\[
\begin{aligned}
R(o_t,i_t,a_t)
&=
\bar R(o_t,[i_t]_{o_t},a_t)\\
&=
\bar R(o_t,c_t,a_t)
\end{aligned}
\]
almost surely. Hence
\[
\begin{aligned}
&\bE\!\left[
R(o_t,i_t,a_t)\mid H_t,a_t
\right]\\
&=
\bE\!\left[
\bar R(o_t,c_t,a_t)\mid H_t,a_t
\right]\\
&=
\sum_{c\in\cC_{o_t}}
\bar R(o_t,c,a_t)
\bP(c_t=c\mid H_t,a_t)\\
&=
\sum_{c\in\cC_{o_t}}
\beta_t(c)\bar R(o_t,c,a_t).
\end{aligned}
\]
This proves the two displayed identities in the proposition.

To make the belief-state sufficiency statement explicit, define, for every feasible triple $(o,a,o')$, the deterministic pushforward operator
\[
\Phi_{o,a,o'}:
\Delta(\cC_o)
\longrightarrow
\Delta(\cC_{o'})
\]
by
\[
\bigl(\Phi_{o,a,o'}\beta\bigr)(c')
:=
\sum_{\substack{c\in\cC_o:\\
\tau_{o,a,o'}(c)=c'}}
\beta(c).
\]
The update established above can then be written as
\[
\beta_{t+1}
=
\Phi_{o_t,a_t,o_{t+1}}\beta_t.
\]
Moreover, for every $o'\in\cO$,
\[
\bP(o_{t+1}=o'\mid H_t,a_t=a)
=
P_O(o'\mid o_t,a).
\]
Define the compressed belief-state space by
\[
\mathfrak B
:=
\left\{
(o,\beta):
o\in\cO,\ 
\beta\in\Delta(\cC_o)
\right\}.
\]
For any current realization $(o_t,\beta_t)$ and action $a$, the conditional transition law of the next compressed belief state is therefore
\[
\begin{aligned}
&\bP\bigl(
(o_{t+1},\beta_{t+1})=(o',\beta')
\mid H_t,a_t=a
\bigr)\\
&\quad=
P_O(o'\mid o_t,a)
\mathbf{1}\!\left\{
\beta'
=
\Phi_{o_t,a,o'}\beta_t
\right\}.
\end{aligned}
\]
The corresponding conditional mean reward is
\[
\bar R_{\mathrm{bel}}(o_t,\beta_t,a)
:=
\sum_{c\in\cC_{o_t}}
\beta_t(c)\bar R(o_t,c,a).
\]
Both the controlled transition law and the conditional mean reward depend on the complete observable history $H_t$ only through the pair $(o_t,\beta_t)$. Since the next belief is obtained recursively from the current belief, the selected action, and the next observation, iteration of this update reproduces exactly the posterior distribution of the stable quotient class after every future observable continuation. Therefore $(o_t,\beta_t)$ is an exact belief state after the raw hidden layers have been compressed to stable classes, for the conditional-mean discounted control objective considered in the paper.

Finally, suppose that the current stable class is known exactly, so that
\[
\beta_t=\delta_{c_t}.
\]
For every $c'\in\cC_{o_{t+1}}$, the belief update gives
\[
\begin{aligned}
\beta_{t+1}(c')
&=
\sum_{\substack{c\in\cC_{o_t}:\\
\tau_{o_t,a_t,o_{t+1}}(c)=c'}}
\delta_{c_t}(c)\\
&=
\mathbf{1}\!\left\{
\tau_{o_t,a_t,o_{t+1}}(c_t)=c'
\right\}\\
&=
\delta_{\tau_{o_t,a_t,o_{t+1}}(c_t)}(c').
\end{aligned}
\]
Since the two probability distributions agree at every
$c'\in\cC_{o_{t+1}}$, we conclude that
\[
\beta_{t+1}
=
\delta_{\tau_{o_t,a_t,o_{t+1}}(c_t)}.
\]
Thus exact knowledge of the stable class, once available, remains exactly and recursively maintainable.
\end{proof}

\subsection{Proof of Lemma~\ref{lem:quotient_memory_update_main}}

\begin{proof}
Let
\[
C_t:=[i_t]_{o_t}\in\cC_{o_t}
\]
denote the actual stable class of the latent layer at time $t$, and let
$M_t\in\cC_{o_t}$ denote the class label maintained by the controller.
Suppose that the maintained label is correct at time $t$, so that
\[
M_t=C_t=[i_t]_{o_t}.
\]
Fix a realized action $a_t$ and a realized successor observation
$o_{t+1}$ having positive conditional probability. Then
\[
P_O(o_{t+1}\mid o_t,a_t)>0,
\]
so the triple $(o_t,a_t,o_{t+1})$ is feasible and the transport maps
\[
\sigma_{o_t,a_t,o_{t+1}}:[n]\to[n]
\qquad\text{and}\qquad
\tau_{o_t,a_t,o_{t+1}}:
\cC_{o_t}\to\cC_{o_{t+1}}
\]
are well defined. By the HCDP transition kernel, for every candidate
layer $j\in[n]$,
\[
\begin{aligned}
&\bP\bigl(
i_{t+1}=j,
o_{t+1}
\mid H_t,i_t,a_t
\bigr)\\
&\quad=
P_H\bigl(
(o_{t+1},j)\mid(o_t,i_t),a_t
\bigr)\\
&\quad=
P_O(o_{t+1}\mid o_t,a_t)
\mathbf{1}\!\left\{
j=\sigma_{o_t,a_t,o_{t+1}}(i_t)
\right\}.
\end{aligned}
\]
Marginalizing over the candidate next layer gives
\[
\begin{aligned}
&\bP(o_{t+1}\mid H_t,i_t,a_t)\\
&=
\sum_{j=1}^{n}
P_H\bigl(
(o_{t+1},j)\mid(o_t,i_t),a_t
\bigr)\\
&=
P_O(o_{t+1}\mid o_t,a_t)
\sum_{j=1}^{n}
\mathbf{1}\!\left\{
j=\sigma_{o_t,a_t,o_{t+1}}(i_t)
\right\}\\
&=
P_O(o_{t+1}\mid o_t,a_t).
\end{aligned}
\]
The final equality holds because
$\sigma_{o_t,a_t,o_{t+1}}(i_t)$ is a single element of $[n]$, and
hence exactly one term in the indicator sum equals one. Since the
observed successor has positive probability, division by
$P_O(o_{t+1}\mid o_t,a_t)$ is valid. Therefore
\[
\begin{aligned}
&\bP\bigl(
i_{t+1}=j
\mid H_t,i_t,a_t,o_{t+1}
\bigr)\\
&\quad=
\frac{
P_O(o_{t+1}\mid o_t,a_t)
\mathbf{1}\!\left\{
j=\sigma_{o_t,a_t,o_{t+1}}(i_t)
\right\}
}{
P_O(o_{t+1}\mid o_t,a_t)
}\\
&\quad=
\mathbf{1}\!\left\{
j=\sigma_{o_t,a_t,o_{t+1}}(i_t)
\right\}.
\end{aligned}
\]
Thus, conditional on the observed transition,
\[
i_{t+1}
=
\sigma_{o_t,a_t,o_{t+1}}(i_t)
\]
almost surely. Taking stable classes and using the representative-independent
transport identity from Theorem~\ref{thm:markovization_main}, we obtain
\[
\begin{aligned}
C_{t+1}
&=
[i_{t+1}]_{o_{t+1}}\\
&=
\bigl[
\sigma_{o_t,a_t,o_{t+1}}(i_t)
\bigr]_{o_{t+1}}\\
&=
\tau_{o_t,a_t,o_{t+1}}([i_t]_{o_t})\\
&=
\tau_{o_t,a_t,o_{t+1}}(C_t)\\
&=
\tau_{o_t,a_t,o_{t+1}}(M_t)
\end{aligned}
\]
almost surely. Consequently, if the controller performs the update
\[
M_{t+1}
:=
\tau_{o_t,a_t,o_{t+1}}(M_t),
\]
then
\[
M_{t+1}=C_{t+1}
\]
almost surely. Repeating this argument inductively shows that, for any
time $t_0$, correctness at time $t_0$ implies
\[
M_t=C_t
\qquad
\text{for every }t\geq t_0
\]
almost surely along every realized continuation. Indeed, the equality is
true at $t=t_0$ by assumption, and the one-step implication just proved
carries correctness from each time $t$ to time $t+1$.

We now verify that the correctly tracked observation--class pair has
exactly the transition law of the quotient MDP. Fix a
positive-probability conditioning event on which
\[
o_t=o,
\qquad
C_t=c,
\qquad
a_t=a.
\]
For any $o'\in\cO$ and $c'\in\cC_{o'}$, the conditional law of total
probability over the possible raw layers contained in the class $c$
gives
\[
\begin{aligned}
&\bP\bigl(
o_{t+1}=o',
C_{t+1}=c'
\mid H_t,C_t=c,a_t=a
\bigr)\\
&\quad=
\sum_{i:\,[i]_o=c}
\bP(i_t=i\mid H_t,C_t=c,a_t=a)\\
&\qquad\qquad\cdot
\sum_{j:\,[j]_{o'}=c'}
P_H\bigl((o',j)\mid(o,i),a\bigr).
\end{aligned}
\]
For every representative $i$ satisfying $[i]_o=c$, the inner sum is
\[
\begin{aligned}
&\sum_{j:\,[j]_{o'}=c'}
P_H\bigl((o',j)\mid(o,i),a\bigr)\\
&\quad=
\sum_{j:\,[j]_{o'}=c'}
P_O(o'\mid o,a)
\mathbf{1}\!\left\{
j=\sigma_{o,a,o'}(i)
\right\}\\
&\quad=
P_O(o'\mid o,a)
\mathbf{1}\!\left\{
[\sigma_{o,a,o'}(i)]_{o'}=c'
\right\}\\
&\quad=
P_O(o'\mid o,a)
\mathbf{1}\!\left\{
\tau_{o,a,o'}([i]_o)=c'
\right\}\\
&\quad=
P_O(o'\mid o,a)
\mathbf{1}\!\left\{
\tau_{o,a,o'}(c)=c'
\right\}.
\end{aligned}
\]
When $P_O(o'\mid o,a)=0$, both sides are zero and the value assigned
to the corresponding transport is immaterial. When
$P_O(o'\mid o,a)>0$, the fourth equality follows directly from
Theorem~\ref{thm:markovization_main}. In either case, the final
expression is independent of the representative $i$. Substituting it
into the preceding total-probability expansion yields
\[
\begin{aligned}
&\bP\bigl(
o_{t+1}=o',
C_{t+1}=c'
\mid H_t,C_t=c,a_t=a
\bigr)\\
&\quad=
P_O(o'\mid o,a)
\mathbf{1}\!\left\{
\tau_{o,a,o'}(c)=c'
\right\}\\
&\sum_{i:\,[i]_o=c}
\bP(i_t=i\mid H_t,C_t=c,a_t=a)\\
&\quad=
P_O(o'\mid o,a)
\mathbf{1}\!\left\{
c'=\tau_{o,a,o'}(c)
\right\}\\
&\quad=
\bar P\bigl((o',c')\mid(o,c),a\bigr).
\end{aligned}
\]
Thus the conditional distribution of the next observation--class pair
depends on the complete past only through the current pair $(o_t,C_t)$
and the selected action, and it coincides exactly with the transition
kernel of the quotient MDP.

The one-step mean reward also agrees exactly with the quotient reward.
Since
\[
C_t=[i_t]_{o_t},
\]
the representative-independent reward identity from
Theorem~\ref{thm:markovization_main} gives
\[
\begin{aligned}
R(o_t,i_t,a_t)
&=
\bar R(o_t,[i_t]_{o_t},a_t)\\
&=
\bar R(o_t,C_t,a_t)
\end{aligned}
\]
almost surely. Hence both the controlled transition kernel and the
mean reward of the correctly tracked process $(o_t,M_t)$ agree with
those of the quotient MDP whenever $M_t=C_t$.

If the class is correctly initialized at time zero, so that
\[
M_0=C_0=[i_0]_{o_0},
\]
then its initial joint distribution is also the quotient initial
distribution. Indeed, for every $(o,c)\in\bar\cS$,
\[
\begin{aligned}
\bP(o_0=o,M_0=c)
&=
\bP(o_0=o,C_0=c)\\
&=
\sum_{i:\,[i]_o=c}
\bP(o_0=o,i_0=i)\\
&=
\sum_{i:\,[i]_o=c}
\rho_0(o,i)\\
&=
\bar\rho_0(o,c).
\end{aligned}
\]
Combining this initial-law identity with the transition and reward
identities proves that the process $(o_t,M_t)$ started from a correct
initial class has exactly the controlled law and mean-reward structure
of the quotient MDP in Theorem~\ref{thm:markovization_main}. If the
class is instead inferred correctly at a later time $t_0$, the same
argument applies conditionally from time $t_0$ onward, with the
inferred class serving as the current quotient state.

Finally, the update is a finite-memory recursion. The tagged class
alphabet
\[
\cM_{\mathrm{quot}}
:=
\{(o,c):o\in\cO,\ c\in\cC_o\}
=
\bar\cS
\]
is finite, and for every feasible observed transition one may define
\[
U_{\mathrm{quot}}
\bigl((o,c),o,a,o'\bigr)
:=
\bigl(
o',
\tau_{o,a,o'}(c)
\bigr).
\]
For zero-probability transitions, $U_{\mathrm{quot}}$ may be defined
arbitrarily because those transitions are never realized. This
recursion uses only the current stored class, the current observation,
the selected action, and the newly observed successor. Since
correctness is preserved by every feasible update, a class that is
correctly initialized or correctly inferred once is a recursively
maintainable finite memory state.
\end{proof}

\subsection{Proof of Corollary~\ref{cor:min_memory_main}}

\begin{proof}
For every observation $o\in\cO$, the stable class set
\[
\cC_o=[n]/\Pi_o^\star
\]
is finite and nonempty. Since $\cO$ is finite, the quantity
\[
m_{\mathrm{exact}}^\star
=
\max_{o\in\cO}|\cC_o|
\]
is a well-defined positive integer. We first prove that no alphabet satisfying the requirements of the corollary can contain fewer than $m_{\mathrm{exact}}^\star$ symbols. Fix an arbitrary observation $o\in\cO$. Because
\[
\eta_o:\cC_o\longrightarrow\cM
\]
is injective, its restriction to the finite set $\cC_o$ maps distinct classes to distinct memory symbols. Hence
\[
|\eta_o(\cC_o)|=|\cC_o|.
\]
Since $\eta_o(\cC_o)\subseteq\cM$, it follows that
\[
|\cC_o|
=
|\eta_o(\cC_o)|
\leq
|\cM|.
\]
This inequality holds for every $o\in\cO$, and therefore
\[
|\cM|
\geq
\max_{o\in\cO}|\cC_o|
=
m_{\mathrm{exact}}^\star.
\]
Equivalently, if $|\cM|<|\cC_o|$ for some observation $o$, then the pigeonhole principle would give distinct classes $c,d\in\cC_o$ with
\[
\eta_o(c)=\eta_o(d),
\]
contradicting injectivity. Thus, when the visible observation is fixed at $o$, at least $|\cC_o|$ different memory symbols are necessary to retain exact knowledge of the stable class. This requirement is intrinsic to exact class tracking: by Theorem~\ref{thm:coarsest_exact_main}, two distinct stable classes cannot be merged by any exact observation-wise abstraction.

We now construct an alphabet of exactly $m_{\mathrm{exact}}^\star$ symbols satisfying all the required update identities. Let
\[
\cM^\star
:=
[m_{\mathrm{exact}}^\star]
=
\{1,\ldots,m_{\mathrm{exact}}^\star\}.
\]
For each $o\in\cO$, write
\[
r_o:=|\cC_o|.
\]
By the definition of $m_{\mathrm{exact}}^\star$,
\[
r_o\leq m_{\mathrm{exact}}^\star.
\]
Choose an enumeration
\[
\cC_o
=
\{c_{o,1},\ldots,c_{o,r_o}\}
\]
and define
\[
\eta_o^\star(c_{o,\ell})
:=
\ell,
\qquad
\ell=1,\ldots,r_o.
\]
For each fixed $o$, the labels $1,\ldots,r_o$ are distinct, so
\[
\eta_o^\star:\cC_o\longrightarrow\cM^\star
\]
is injective. The same numerical symbol may be used by the encoders associated with different observations; for example, the symbol $1$ may encode $c_{o,1}$ at observation $o$ and $c_{\widetilde o,1}$ at observation $\widetilde o$. This causes no ambiguity because the maintained controller state is the pair consisting of the current observation and the memory symbol, and the update map receives the current observation $o$ as an explicit argument.

We define a total update map
\[
U^\star:
\cM^\star\times\cO\times\cA\times\cO
\longrightarrow
\cM^\star.
\]
Fix a tuple $(m,o,a,o')$. If
\[
P_O(o'\mid o,a)>0
\qquad\text{and}\qquad
m\in\eta_o^\star(\cC_o),
\]
then the injectivity of $\eta_o^\star$ implies that there is a unique class $c\in\cC_o$ such that
\[
\eta_o^\star(c)=m.
\]
For such a tuple, define
\[
U^\star(m,o,a,o')
:=
\eta_{o'}^\star\!\left(
\tau_{o,a,o'}(c)
\right).
\]
This value belongs to $\cM^\star$ because
\[
\tau_{o,a,o'}(c)\in\cC_{o'}
\]
and $\eta_{o'}^\star$ maps $\cC_{o'}$ into $\cM^\star$. The definition is unambiguous because the class $c$ satisfying $\eta_o^\star(c)=m$ is unique. If either
\[
P_O(o'\mid o,a)=0
\]
or
\[
m\notin\eta_o^\star(\cC_o),
\]
define, for example,
\[
U^\star(m,o,a,o'):=1.
\]
These arbitrary values do not affect exact class tracking: a zero-probability transition is never realized, and a correctly encoded class symbol always lies in $\eta_o^\star(\cC_o)$.

Let now $(o,a,o')$ be feasible and let $c\in\cC_o$. Put
\[
m:=\eta_o^\star(c).
\]
Then $m\in\eta_o^\star(\cC_o)$, and the unique class mapped to $m$ by $\eta_o^\star$ is precisely $c$. Therefore the defining clause for $U^\star$ gives
\[
U^\star\bigl(\eta_o^\star(c),o,a,o'\bigr)
=
\eta_{o'}^\star\!\left(
\tau_{o,a,o'}(c)
\right).
\]
Thus $U^\star$ satisfies the required compatibility identity for every feasible transition and every source class.

To verify explicitly that this compatibility identity recursively maintains the class, suppose that at some time $t$ the true class $c_t\in\cC_{o_t}$ is known and the memory is initialized as
\[
m_t=\eta_{o_t}^\star(c_t).
\]
By Lemma~\ref{lem:quotient_memory_update_main}, after observing a feasible transition $(o_t,a_t,o_{t+1})$, the true next class is
\[
c_{t+1}
=
\tau_{o_t,a_t,o_{t+1}}(c_t).
\]
The memory update then satisfies
\[
\begin{aligned}
m_{t+1}
&=
U^\star(m_t,o_t,a_t,o_{t+1})\\
&=
U^\star\bigl(
\eta_{o_t}^\star(c_t),
o_t,a_t,o_{t+1}
\bigr)\\
&=
\eta_{o_{t+1}}^\star\!\left(
\tau_{o_t,a_t,o_{t+1}}(c_t)
\right)\\
&=
\eta_{o_{t+1}}^\star(c_{t+1}).
\end{aligned}
\]
Hence a correct encoded class remains correct after one update. Repeating the same argument inductively gives
\[
m_s=\eta_{o_s}^\star(c_s)
\qquad
\text{for every later time }s.
\]
Therefore an alphabet of size
\[
|\cM^\star|
=
m_{\mathrm{exact}}^\star
\]
is sufficient.

Combining the lower bound valid for every admissible alphabet with the explicit construction above yields
\[
|\cM|_{\min}
=
m_{\mathrm{exact}}^\star
=
\max_{o\in\cO}|\cC_o|.
\]
The bound is a maximum rather than the sum
\[
\sum_{o\in\cO}|\cC_o|
\]
because class labels need only be injective within each fixed observation fiber. The observation coordinate identifies which encoder $\eta_o^\star$ and which interpretation of a reused memory symbol are active, so the same alphabet symbols may be reused across different observations.
\end{proof}

\subsection{Proof of Theorem~\ref{thm:passive_nonident_main}}

\begin{proof}
Let the common observation and action spaces be
\[
\cO=\{o\},
\qquad
\cA=\{a,b\},
\]
and let both HCDPs have two raw layers, so that $n=2$. We construct two
models, denoted by $\mathsf M^{\mathrm A}$ and $\mathsf M^{\mathrm B}$.
They have the same discount factor $\gamma\in(0,1)$, the same initial
distribution
\[
\rho_0(o,1)=\rho_0(o,2)=\frac12,
\]
the same deterministic observation kernel
\[
P_O(o\mid o,a)=P_O(o\mid o,b)=1,
\]
and the same raw transport on every feasible edge,
\[
\sigma_{o,a,o}
=
\sigma_{o,b,o}
=
\operatorname{id}_{[2]}.
\]
Thus, under either action, the visible observation remains $o$ and the raw
layer remains unchanged. The two models differ only in their mean reward
functions. In $\mathsf M^{\mathrm A}$, set
\[
R^{\mathrm A}(o,1,a)
=
R^{\mathrm A}(o,2,a)
=
R^{\mathrm A}(o,1,b)
=
R^{\mathrm A}(o,2,b)
=
0.
\]
In $\mathsf M^{\mathrm B}$, set
$
R^{\mathrm B}(o,1,a)
=
R^{\mathrm B}(o,2,a)
=
0,
\qquad
R^{\mathrm B}(o,1,b)=0,
\qquad
R^{\mathrm B}(o,2,b)=1.
$
Both are finite HCDPs in the sense of Definition~\ref{def:hcdp_main}. We may
take the reward noise to be identically zero in both models, namely
$\xi_t=0$ almost surely for every $t$. This choice satisfies
Assumption~\ref{ass:hcdp_noise_main}, because it is conditionally mean zero
and, for every $\lambda\in\mathbb R$,
\[
\bE\!\left[
e^{\lambda\xi_t}
\mid
\mathcal G_t^-
\right]
=
1
\leq
\exp\!\left(
\frac{\lambda^2\sigma_Y^2}{2}
\right).
\]

We first compute the two stable quotients. In $\mathsf M^{\mathrm A}$, layers
$1$ and $2$ have the same mean reward under every action. Hence the reward
partition at the unique observation is
\[
\Pi_o^{\mathrm A}
=
\bigl\{\{1,2\}\bigr\}.
\]
Because both raw edge transports are the identity, for each
$u\in\{a,b\}$ we have
\[
\sigma_{o,u,o}(1)=1,
\qquad
\sigma_{o,u,o}(2)=2,
\]
and these two successor layers belong to the same block of
$\Pi_o^{\mathrm A}$. Therefore the defining conditions of the stability
operator give
\[
1
\equiv_o^{\mathcal T(\Pi^{\mathrm A})}
2.
\]
There is only one block to begin with, so this implies
\[
\mathcal T(\Pi^{\mathrm A})
=
\Pi^{\mathrm A}.
\]
Thus $\Pi^{\mathrm A}$ is stable. Since no partition can be strictly coarser
than the one-block partition, it is the coarsest stable family, and
consequently
\[
\Pi_o^{\mathrm A,\star}
=
\bigl\{\{1,2\}\bigr\},
\qquad
\cC_o^{\mathrm A}
=
\{c_\star\},
\qquad
c_\star=\{1,2\}.
\]
The quotient transports in $\mathsf M^{\mathrm A}$ are necessarily
\[
\tau_{o,a,o}^{\mathrm A}(c_\star)
=
\tau_{o,b,o}^{\mathrm A}(c_\star)
=
c_\star,
\]
the quotient rewards satisfy
\[
\bar R^{\mathrm A}(o,c_\star,a)
=
\bar R^{\mathrm A}(o,c_\star,b)
=
0,
\]
and the quotient initial distribution satisfies
\[
\bar\rho_0^{\mathrm A}(o,c_\star)
=
1.
\]

In $\mathsf M^{\mathrm B}$, the two layers have different mean rewards under
action $b$, because
\[
R^{\mathrm B}(o,1,b)
=
0
\neq
1
=
R^{\mathrm B}(o,2,b).
\]
Therefore they cannot belong to the same block of the reward partition, and
the reward partition is the discrete partition
\[
\Pi_o^{\mathrm B}
=
\bigl\{\{1\},\{2\}\bigr\}.
\]
Applying $\mathcal T$ cannot merge the two singleton blocks, since equality of
rewards for every action is a necessary condition in
Definition~\ref{def:stable_family_main}, and that condition already fails for
action $b$. Hence
\[
\mathcal T(\Pi^{\mathrm B})
=
\Pi^{\mathrm B}.
\]
It follows that
$
\Pi_o^{\mathrm B,\star}
=
\bigl\{\{1\},\{2\}\bigr\},
\qquad
\cC_o^{\mathrm B}
=
\{c_1,c_2\},
\qquad
c_1=\{1\},
\quad
c_2=\{2\}.
$
Since the raw transports are identities, the quotient transports are
identities as well:
\[
\tau_{o,a,o}^{\mathrm B}(c_j)
=
\tau_{o,b,o}^{\mathrm B}(c_j)
=
c_j,
\qquad
j\in\{1,2\}.
\]
The quotient rewards under action $a$ are
\[
\bar R^{\mathrm B}(o,c_1,a)
=
\bar R^{\mathrm B}(o,c_2,a)
=
0,
\]
whereas under action $b$ they are
\[
\bar R^{\mathrm B}(o,c_1,b)=0,
\qquad
\bar R^{\mathrm B}(o,c_2,b)=1.
\]
The quotient initial distribution is
\[
\bar\rho_0^{\mathrm B}(o,c_1)
=
\bar\rho_0^{\mathrm B}(o,c_2)
=
\frac12.
\]
The two stable quotient descriptions cannot be observation-wise gauge
equivalent. Indeed, Definition~\ref{def:gauge_equiv_main} would require a
bijection between the class sets at the unique observation, whereas
\[
|\cC_o^{\mathrm A}|=1
\qquad\text{and}\qquad
|\cC_o^{\mathrm B}|=2.
\]
No bijection from a one-element set to a two-element set exists, so the
required observation-wise gauge map cannot exist.

Now define the common behavior policy by
$
\mu_t(a\mid H_t)=1,
\qquad
\mu_t(b\mid H_t)=0,
\quad
\text{for every }t\geq0
\text{ and every observable history }H_t.
$
Under either model, $o_0=o$ almost surely. Suppose that at some time $t$ the
latent state is $(o,i_t)$. Since the behavior policy chooses $a$ almost surely,
the deterministic base kernel and identity transport imply
\[
o_{t+1}=o,
\qquad
i_{t+1}
=
\sigma_{o,a,o}(i_t)
=
i_t
\qquad
\text{almost surely}.
\]
Moreover, both models assign mean reward zero to action $a$ at both layers,
and the reward noise is zero, so under $\mathsf M^{\mathrm A}$,
\[
Y_t
=
R^{\mathrm A}(o,i_t,a)+\xi_t
=
0,
\]
and under $\mathsf M^{\mathrm B}$,
\[
Y_t
=
R^{\mathrm B}(o,i_t,a)+\xi_t
=
0.
\]
Induction over $t$ therefore gives, under both models,
\[
L_\infty
=
(o,a,0,o,a,0,o,a,0,\ldots)
\qquad
\text{almost surely}.
\]
Writing $\bP_{\mathrm A}^{\mu}$ and $\bP_{\mathrm B}^{\mu}$ for the two
induced probability laws, we have the stronger identity
\[
\mathcal L_{\bP_{\mathrm A}^{\mu}}(L_\infty)
=
\delta_{(o,a,0,o,a,0,\ldots)}
=
\mathcal L_{\bP_{\mathrm B}^{\mu}}(L_\infty).
\]
Thus even the complete infinite passive log has exactly the same distribution
in the two models.

It remains to derive the statistical impossibility. Let $L_T$ be any finite
prefix of $L_\infty$, and let $\widehat{\mathfrak Q}_T$ be any estimator
measurable with respect to $L_T$. The same argument also covers a randomized
estimator by adjoining to $L_T$ an auxiliary random seed having the same
independent law under both models. Consider the gauge-invariant functional
\[
K(\mathfrak Q)
:=
|\cC_o|,
\]
which is the number of stable classes at the unique observation, and define
\[
\widehat K_T
:=
K(\widehat{\mathfrak Q}_T).
\]
For the two constructed models,
\[
K(\mathfrak Q^{\mathrm A})=1,
\qquad
K(\mathfrak Q^{\mathrm B})=2.
\]
Because $L_T$ has the same law under $\bP_{\mathrm A}^{\mu}$ and
$\bP_{\mathrm B}^{\mu}$, the estimator output also has the same law. In
particular, if
\[
p_T
:=
\bP_{\mathrm A}^{\mu}(\widehat K_T=1)
=
\bP_{\mathrm B}^{\mu}(\widehat K_T=1)
\]
and
\[
q_T
:=
\bP_{\mathrm A}^{\mu}(\widehat K_T=2)
=
\bP_{\mathrm B}^{\mu}(\widehat K_T=2),
\]
then the events $\{\widehat K_T=1\}$ and $\{\widehat K_T=2\}$ are disjoint,
so
\[
p_T+q_T\leq1.
\]
Consequently,
\[
\begin{aligned}
\max\Bigl\{
\bP_{\mathrm A}^{\mu}(\widehat K_T\neq1),
\bP_{\mathrm B}^{\mu}(\widehat K_T\neq2)
\Bigr\}
&=
\max\{1-p_T,1-q_T\}\\
&\geq
1-\frac{p_T+q_T}{2}\\
&\geq
\frac12.
\end{aligned}
\]
This lower bound holds for every sample length $T$. Hence no estimator can
consistently recover the stable class structure at both
$\mathsf M^{\mathrm A}$ and $\mathsf M^{\mathrm B}$ under the common behavior
policy $\mu$; a fortiori, no estimator can be uniformly consistent over the
full HCDP class and all behavior policies.

Finally, the same obstruction applies to quotient transports and to the
quotient MDP. A quotient-transport description includes the source and target
class sets on which each $\tau_e$ is defined, and therefore determines
$|\cC_o|$. Likewise, by Theorem~\ref{thm:markovization_main}, the quotient MDP
has state space
\[
\bar{\cS}
=
\bigl\{
(o',c):
o'\in\cO,\;
c\in\cC_{o'}
\bigr\}.
\]
In the present one-observation construction,
\[
|\bar{\cS}|
=
|\cC_o|.
\]
Thus a uniformly consistent estimator of either the quotient transports or
the quotient MDP would induce a uniformly consistent estimator of
$K(\mathfrak Q)$, contradicting the lower bound above. Therefore unrestricted
passive logs do not uniformly identify stable classes, quotient transports, or
the quotient MDP.
\end{proof}

\subsection{Proof of Lemma~\ref{lem:finite_distinguishing_witness_main}}

\begin{proof}
We prove a slightly stronger statement at the level of the refinement sequence. For every integer $t\geq0$, every observation $o\in\cO$, and every pair of layers $i,j\in[n]$ satisfying
\[
i\not\equiv_o^{\Pi^{(t)}}j,
\]
there exist a directed path $\omega$ beginning at $o$, of length at most $t$, and terminating at some observation $\bar o$, together with an action $b\in\cA$, such that
\[
R\bigl(\bar o,\sigma_\omega(i),b\bigr)
\neq
R\bigl(\bar o,\sigma_\omega(j),b\bigr).
\]
The path is allowed to be empty. For the empty path at $o$, we use the convention
\[
\sigma_{\varnothing_o}
=
\operatorname{id}_{[n]}.
\]

We establish this statement by induction on $t$. Consider first $t=0$. By the definition of the initial reward partition $\Pi^{(0)}$,
\[
i\equiv_o^{\Pi^{(0)}}j
\quad\Longleftrightarrow\quad
R(o,i,a)=R(o,j,a)
\quad
\text{for every }a\in\cA.
\]
Therefore, if
\[
i\not\equiv_o^{\Pi^{(0)}}j,
\]
then it is not true that the two layers have equal rewards under every action. Consequently, there exists an action $b\in\cA$ such that
\[
R(o,i,b)\neq R(o,j,b).
\]
Choose
\[
\omega=\varnothing_o,
\qquad
\bar o=o.
\]
Since the empty-path transport is the identity,
\[
\sigma_{\varnothing_o}(i)=i,
\qquad
\sigma_{\varnothing_o}(j)=j,
\]
and hence
\[
R\bigl(\bar o,\sigma_{\varnothing_o}(i),b\bigr)
=
R(o,i,b)
\neq
R(o,j,b)
=
R\bigl(\bar o,\sigma_{\varnothing_o}(j),b\bigr).
\]
The empty path has length zero, which is at most $t=0$. Thus the desired statement holds at the initial refinement level.

Now fix an integer $t\geq0$ and assume that the statement holds for $\Pi^{(t)}$. Let $o\in\cO$ and $i,j\in[n]$ satisfy
\[
i\not\equiv_o^{\Pi^{(t+1)}}j.
\]
Because the refinement sequence is defined recursively by
\[
\Pi^{(t+1)}
=
\mathcal T\bigl(\Pi^{(t)}\bigr),
\]
this is equivalent to
\[
i\not\equiv_o^{\mathcal T(\Pi^{(t)})}j.
\]
Definition~\ref{def:stable_family_main} states that
\[
i\equiv_o^{\mathcal T(\Pi^{(t)})}j
\]
holds if and only if both of the following conditions hold:
\[
R(o,i,a)=R(o,j,a)
\qquad
\text{for every }a\in\cA,
\]
and
\[
\sigma_{o,a,o'}(i)
\equiv_{o'}^{\Pi^{(t)}}
\sigma_{o,a,o'}(j)
\]
for every pair $(a,o')$ satisfying
\[
P_O(o'\mid o,a)>0.
\]
Since $i$ and $j$ are not equivalent in
$\mathcal T(\Pi^{(t)})$, at least one of these two requirements fails.

Suppose first that the immediate-reward requirement fails. Then there exists an action $b\in\cA$ such that
\[
R(o,i,b)\neq R(o,j,b).
\]
As in the base case, choose the empty path
\[
\omega=\varnothing_o
\]
and set $\bar o=o$. We then have
\[
\begin{aligned}
R\bigl(\bar o,\sigma_\omega(i),b\bigr)
&=
R\bigl(o,\sigma_{\varnothing_o}(i),b\bigr)\\
&=
R(o,i,b)\\
&\neq
R(o,j,b)\\
&=
R\bigl(o,\sigma_{\varnothing_o}(j),b\bigr)\\
&=
R\bigl(\bar o,\sigma_\omega(j),b\bigr).
\end{aligned}
\]
The length of this path is zero and therefore is at most $t+1$.

It remains to consider the case in which the immediate rewards agree under every action, but the successor-equivalence requirement fails. In that case there exist an action $a\in\cA$ and an observation $o_1\in\cO$ such that
\[
P_O(o_1\mid o,a)>0
\]
and
\[
\sigma_{o,a,o_1}(i)
\not\equiv_{o_1}^{\Pi^{(t)}}
\sigma_{o,a,o_1}(j).
\]
The positivity condition means that
\[
e=(o\xrightarrow{a}o_1)
\]
is an edge of the directed support graph $X$. Define the two successor layers
\[
i_1:=\sigma_{o,a,o_1}(i),
\qquad
j_1:=\sigma_{o,a,o_1}(j).
\]
The preceding failure of successor equivalence is exactly
\[
i_1\not\equiv_{o_1}^{\Pi^{(t)}}j_1.
\]
The induction hypothesis may therefore be applied at observation $o_1$ to the pair $i_1,j_1$. It yields a directed path $\nu$ from $o_1$ to some observation $\bar o$, of length at most $t$, and an action $b\in\cA$ such that
\[
R\bigl(\bar o,\sigma_\nu(i_1),b\bigr)
\neq
R\bigl(\bar o,\sigma_\nu(j_1),b\bigr).
\]

Let $\omega$ be the directed path obtained by first traversing the edge $e$ and then traversing $\nu$. Thus, if
\[
\nu=e_2\cdots e_k,
\]
we set
\[
\omega=e\,e_2\cdots e_k.
\]
This is a directed path from $o$ to $\bar o$, because the terminal observation of $e$ is $o_1$, which is the initial observation of $\nu$. By Definition~\ref{def:directed_transport_main}, composition follows execution order, and therefore
\[
\sigma_\omega
=
\sigma_\nu\circ\sigma_e,
\qquad
\sigma_e=\sigma_{o,a,o_1}.
\]
Applying this identity to $i$ gives
\[
\begin{aligned}
\sigma_\omega(i)
&=
(\sigma_\nu\circ\sigma_e)(i)\\
&=
\sigma_\nu\bigl(\sigma_e(i)\bigr)\\
&=
\sigma_\nu\bigl(\sigma_{o,a,o_1}(i)\bigr)\\
&=
\sigma_\nu(i_1).
\end{aligned}
\]
Likewise,
\[
\begin{aligned}
\sigma_\omega(j)
&=
(\sigma_\nu\circ\sigma_e)(j)\\
&=
\sigma_\nu\bigl(\sigma_e(j)\bigr)\\
&=
\sigma_\nu\bigl(\sigma_{o,a,o_1}(j)\bigr)\\
&=
\sigma_\nu(j_1).
\end{aligned}
\]
Substituting these identities into the reward inequality supplied by the induction hypothesis yields
\[
\begin{aligned}
R\bigl(\bar o,\sigma_\omega(i),b\bigr)
&=
R\bigl(\bar o,\sigma_\nu(i_1),b\bigr)\\
&\neq
R\bigl(\bar o,\sigma_\nu(j_1),b\bigr)\\
&=
R\bigl(\bar o,\sigma_\omega(j),b\bigr).
\end{aligned}
\]
The path $\omega$ consists of the initial edge $e$ followed by $\nu$. Since $\nu$ has length at most $t$, the length of $\omega$ is at most
\[
1+t=t+1.
\]
Thus the stronger statement holds for $\Pi^{(t+1)}$. By induction, it holds for every $t\geq0$.

We now apply this statement to the stable partition. Fix an observation $o\in\cO$ and two distinct stable classes
\[
c,c'\in\cC_o,
\qquad
c\neq c'.
\]
Recall that
\[
\cC_o=[n]/\Pi_o^\star.
\]
Choose representatives
\[
i\in c,
\qquad
j\in c'.
\]
Equivalently,
\[
c=[i]_o,
\qquad
c'=[j]_o.
\]
Because $c$ and $c'$ are distinct blocks of the partition $\Pi_o^\star$, their representatives are not equivalent under that partition:
\[
i\not\equiv_o^{\Pi^\star}j.
\]
By the definition of the stabilization depth,
\[
L_\star
=
\min\left\{
L\geq0:
\Pi^{(L)}=\Pi^\star
\right\},
\]
and hence
\[
\Pi^{(L_\star)}=\Pi^\star.
\]
It follows that
\[
i\not\equiv_o^{\Pi^{(L_\star)}}j.
\]
Applying the stronger statement with $t=L_\star$, we obtain a directed path $\omega$ from $o$ to some observation $\bar o$, of length at most $L_\star$, and an action $b\in\cA$ such that
\[
R\bigl(\bar o,\sigma_\omega(i),b\bigr)
\neq
R\bigl(\bar o,\sigma_\omega(j),b\bigr).
\]

It remains only to express this raw-layer distinction in quotient notation. By Corollary~\ref{cor:path_transport_descends_main},
\[
\tau_\omega([i]_o)
=
[\sigma_\omega(i)]_{\bar o}.
\]
Since $c=[i]_o$, this gives
\[
\tau_\omega(c)
=
[\sigma_\omega(i)]_{\bar o}.
\]
Applying the same corollary to $j$ and using $c'=[j]_o$ gives
\[
\tau_\omega(c')
=
[\sigma_\omega(j)]_{\bar o}.
\]
Theorem~\ref{thm:markovization_main} defines the quotient reward by
\[
\bar R(\bar o,[r]_{\bar o},b)
=
R(\bar o,r,b)
\]
for every $r\in[n]$, and proves that this value is independent of the chosen representative of the stable class. Therefore,
\[
\begin{aligned}
\bar R\bigl(\bar o,\tau_\omega(c),b\bigr)
&=
\bar R\bigl(
\bar o,
[\sigma_\omega(i)]_{\bar o},
b
\bigr)\\
&=
R\bigl(\bar o,\sigma_\omega(i),b\bigr),
\end{aligned}
\]
whereas
\[
\begin{aligned}
\bar R\bigl(\bar o,\tau_\omega(c'),b\bigr)
&=
\bar R\bigl(
\bar o,
[\sigma_\omega(j)]_{\bar o},
b
\bigr)\\
&=
R\bigl(\bar o,\sigma_\omega(j),b\bigr).
\end{aligned}
\]
Combining these identities with the raw-layer reward inequality gives
\[
\bar R\bigl(\bar o,\tau_\omega(c),b\bigr)
\neq
\bar R\bigl(\bar o,\tau_\omega(c'),b\bigr).
\]
The constructed path has length at most $L_\star$, and when the distinction is already present in the immediate rewards, the construction correctly permits $\omega$ to be the empty path. This proves the lemma.
\end{proof}

\subsection{Proof of Theorem~\ref{thm:local_class_inf_main}}

\begin{proof}
Fix an observation $o\in\cO$ and a latent state $(o,i)$, and write
\[
c=[i]_o\in\cC_o.
\]
If $\cC_o$ contains only one stable class, then Assumption~\ref{ass:local_proto_main} implies that $\widehat{\cC}_o$ also contains only one element, because
\[
\lambda_o:\widehat{\cC}_o\to\cC_o
\]
is a bijection. In that case the nearest-prototype rule necessarily returns the unique local label, whose image under $\lambda_o$ is $c$, and hence
\[
\Pr\!\left(
\lambda_o(\widehat c)\neq c
\mid
s_{\mathrm{start}}=(o,i)
\right)
=
0.
\]
It therefore suffices to consider a non-singleton observation fiber. By Definition~\ref{def:class_fp_main}, this implies $d_o\geq1$.

Because $\lambda_o$ is bijective, there exists a unique local label
\[
\widehat c^\star
:=
\lambda_o^{-1}(c)
\in\widehat{\cC}_o
\]
corresponding to the true stable class. For each diagnostic coordinate $\ell\in\{1,\ldots,d_o\}$, define
\[
G_\ell
:=
G(o,i;u_\ell)
\]
and
\[
\widehat G_\ell
:=
\widehat G(o,i;u_\ell)
=
\frac{1}{m}
\sum_{s=1}^{m}
W^{(s)}(o,i;u_\ell).
\]
Assumption~\ref{ass:diag_class_stable_main} ensures that the diagnostic mean depends on the raw layer $i$ only through its stable class $c=[i]_o$. Therefore
\[
G_\ell
=
\bar G(o,c;u_\ell)
=
\bigl(\phi_o(c)\bigr)_\ell.
\]
Consequently,
\[
\widehat G_\ell-G_\ell
=
\bigl(\widehat\phi_o(i)-\phi_o(c)\bigr)_\ell.
\]

We first derive the required concentration inequality for one fixed coordinate. Define the centered repetition noise
\[
Z_{\ell,s}
:=
W^{(s)}(o,i;u_\ell)-G(o,i;u_\ell),
\qquad
s\in\{1,\ldots,m\}.
\]
Conditional on the event
\[
s_{\mathrm{start}}=(o,i),
\]
Assumption~\ref{ass:diag_repeat_main} states that the repetitions are independent, while Assumption~\ref{ass:diag_subg_main} gives, for every $\eta\in\mathbb R$,
\[
\bE\!\left[
\exp(\eta Z_{\ell,s})
\middle|
s_{\mathrm{start}}=(o,i)
\right]
\leq
\exp\!\left(
\frac{\eta^2\nu^2}{2}
\right).
\]
Since
\[
\widehat G_\ell-G_\ell
=
\frac{1}{m}
\sum_{s=1}^{m}Z_{\ell,s},
\]
conditional independence yields, for every $\eta\in\mathbb R$,
\[
\begin{aligned}
&\bE\!\left[
\exp\!\left(
\eta(\widehat G_\ell-G_\ell)
\right)
\middle|
s_{\mathrm{start}}=(o,i)
\right]\\
&\qquad=
\bE\!\left[
\exp\!\left(
\frac{\eta}{m}
\sum_{s=1}^{m}Z_{\ell,s}
\right)
\middle|
s_{\mathrm{start}}=(o,i)
\right]\\
&\qquad=
\prod_{s=1}^{m}
\bE\!\left[
\exp\!\left(
\frac{\eta}{m}Z_{\ell,s}
\right)
\middle|
s_{\mathrm{start}}=(o,i)
\right]\\
&\qquad\leq
\prod_{s=1}^{m}
\exp\!\left(
\frac{\eta^2\nu^2}{2m^2}
\right)\\
&\qquad=
\exp\!\left(
\frac{\eta^2\nu^2}{2m}
\right).
\end{aligned}
\]
Thus, conditionally on the fixed latent start state, the empirical-mean error $\widehat G_\ell-G_\ell$ is $(\nu^2/m)$-sub-Gaussian.

For any $\varepsilon>0$ and any $\eta>0$, Markov's inequality gives
\[
\begin{aligned}
&\Pr\!\left(
\widehat G_\ell-G_\ell\geq\varepsilon
\middle|
s_{\mathrm{start}}=(o,i)
\right)\\
&\qquad=
\Pr\!\left(
\exp\!\left(
\eta(\widehat G_\ell-G_\ell)
\right)
\geq
e^{\eta\varepsilon}
\middle|
s_{\mathrm{start}}=(o,i)
\right)\\
&\qquad\leq
e^{-\eta\varepsilon}
\bE\!\left[
\exp\!\left(
\eta(\widehat G_\ell-G_\ell)
\right)
\middle|
s_{\mathrm{start}}=(o,i)
\right]\\
&\qquad\leq
\exp\!\left(
-\eta\varepsilon
+
\frac{\eta^2\nu^2}{2m}
\right).
\end{aligned}
\]
The exponent on the right-hand side is minimized at
\[
\eta^\star
=
\frac{m\varepsilon}{\nu^2}.
\]
Substituting this value gives
\[
\Pr\!\left(
\widehat G_\ell-G_\ell\geq\varepsilon
\middle|
s_{\mathrm{start}}=(o,i)
\right)
\leq
\exp\!\left(
-\frac{m\varepsilon^2}{2\nu^2}
\right).
\]
Applying the same calculation to $G_\ell-\widehat G_\ell$ yields
\[
\Pr\!\left(
G_\ell-\widehat G_\ell\geq\varepsilon
\middle|
s_{\mathrm{start}}=(o,i)
\right)
\leq
\exp\!\left(
-\frac{m\varepsilon^2}{2\nu^2}
\right).
\]
Combining the two one-sided inequalities, we obtain
\[
\Pr\!\left(
\left|
\widehat G_\ell-G_\ell
\right|
\geq\varepsilon
\middle|
s_{\mathrm{start}}=(o,i)
\right)
\leq
2\exp\!\left(
-\frac{m\varepsilon^2}{2\nu^2}
\right).
\]

Set
\[
\varepsilon
:=
\frac{\Delta}{8\sqrt{d_o}}.
\]
For each $\ell\in\{1,\ldots,d_o\}$, the preceding bound becomes
\[
\begin{aligned}
&\Pr\!\left(
\left|
\widehat G_\ell-G_\ell
\right|
\geq
\frac{\Delta}{8\sqrt{d_o}}
\middle|
s_{\mathrm{start}}=(o,i)
\right)\\
&\qquad\leq
2\exp\!\left(
-\frac{m}{2\nu^2}
\frac{\Delta^2}{64d_o}
\right)\\
&\qquad=
2\exp\!\left(
-\frac{m\Delta^2}{128\nu^2d_o}
\right).
\end{aligned}
\]
Define the event
\[
\mathcal E
:=
\left\{
\left\|
\widehat\phi_o(i)-\phi_o(c)
\right\|_\infty
<
\frac{\Delta}{8\sqrt{d_o}}
\right\}.
\]
Since
\[
\left\|
\widehat\phi_o(i)-\phi_o(c)
\right\|_\infty
=
\max_{1\leq\ell\leq d_o}
\left|
\widehat G_\ell-G_\ell
\right|,
\]
the union bound gives
\[
\begin{aligned}
&\Pr\!\left(
\mathcal E^{\mathsf c}
\middle|
s_{\mathrm{start}}=(o,i)
\right)\\
&\qquad\leq
\sum_{\ell=1}^{d_o}
\Pr\!\left(
\left|
\widehat G_\ell-G_\ell
\right|
\geq
\frac{\Delta}{8\sqrt{d_o}}
\middle|
s_{\mathrm{start}}=(o,i)
\right)\\
&\qquad\leq
2d_o
\exp\!\left(
-\frac{m\Delta^2}{128\nu^2d_o}
\right).
\end{aligned}
\]
No independence between different diagnostic coordinates is needed for this union bound.

On the event $\mathcal E$, the Euclidean error of the empirical fingerprint satisfies
\[
\begin{aligned}
\left\|
\widehat\phi_o(i)-\phi_o(c)
\right\|_2
&=
\left(
\sum_{\ell=1}^{d_o}
\left|
\widehat G_\ell-G_\ell
\right|^2
\right)^{1/2}\\
&\leq
\left(
\sum_{\ell=1}^{d_o}
\left\|
\widehat\phi_o(i)-\phi_o(c)
\right\|_\infty^2
\right)^{1/2}\\
&=
\sqrt{d_o}
\left\|
\widehat\phi_o(i)-\phi_o(c)
\right\|_\infty\\
&<
\sqrt{d_o}
\frac{\Delta}{8\sqrt{d_o}}\\
&=
\frac{\Delta}{8}.
\end{aligned}
\]
The calibrated prototype corresponding to the true class satisfies, by Assumption~\ref{ass:local_proto_main},
\[
\begin{aligned}
\left\|
\widetilde\phi_o(\widehat c^\star)-\phi_o(c)
\right\|_2
&=
\left\|
\widetilde\phi_o(\widehat c^\star)
-
\phi_o\bigl(\lambda_o(\widehat c^\star)\bigr)
\right\|_2\\
&\leq
\frac{\Delta}{8},
\end{aligned}
\]
because
\[
\lambda_o(\widehat c^\star)=c.
\]
The triangle inequality therefore implies that, on $\mathcal E$,
\[
\begin{aligned}
\left\|
\widehat\phi_o(i)
-
\widetilde\phi_o(\widehat c^\star)
\right\|_2
&\leq
\left\|
\widehat\phi_o(i)-\phi_o(c)
\right\|_2
+
\left\|
\phi_o(c)-\widetilde\phi_o(\widehat c^\star)
\right\|_2\\
&<
\frac{\Delta}{8}
+
\frac{\Delta}{8}\\
&=
\frac{\Delta}{4}.
\end{aligned}
\]

Now fix any competing local label
\[
\widehat c'\in\widehat{\cC}_o,
\qquad
\widehat c'\neq\widehat c^\star,
\]
and define its associated canonical stable class by
\[
c'
:=
\lambda_o(\widehat c').
\]
Since $\lambda_o$ is injective and $\widehat c'\neq\widehat c^\star$,
\[
c'
=
\lambda_o(\widehat c')
\neq
\lambda_o(\widehat c^\star)
=
c.
\]
Fingerprint separability in Assumption~\ref{ass:fp_sep_main} consequently gives
\[
\left\|
\phi_o(c)-\phi_o(c')
\right\|_2
\geq
\Delta.
\]
Moreover, prototype calibration gives
\[
\left\|
\widetilde\phi_o(\widehat c')
-
\phi_o(c')
\right\|_2
=
\left\|
\widetilde\phi_o(\widehat c')
-
\phi_o\bigl(\lambda_o(\widehat c')\bigr)
\right\|_2
\leq
\frac{\Delta}{8}.
\]
Using the triangle inequality in the form
\[
\|x-z\|_2
\geq
\|y-w\|_2
-
\|x-y\|_2
-
\|z-w\|_2,
\]
with
$
x=\widehat\phi_o(i),
\qquad
y=\phi_o(c),
\qquad
z=\widetilde\phi_o(\widehat c'),
\qquad
w=\phi_o(c'),
$
we obtain, on $\mathcal E$,
\[
\begin{aligned}
\left\|
\widehat\phi_o(i)
-
\widetilde\phi_o(\widehat c')
\right\|_2
&\geq
\left\|
\phi_o(c)-\phi_o(c')
\right\|_2\\
&\qquad-
\left\|
\widehat\phi_o(i)-\phi_o(c)
\right\|_2\\
&\qquad-
\left\|
\widetilde\phi_o(\widehat c')-\phi_o(c')
\right\|_2\\
&>
\Delta
-
\frac{\Delta}{8}
-
\frac{\Delta}{8}\\
&=
\frac{3\Delta}{4}.
\end{aligned}
\]
Thus, on $\mathcal E$, the empirical fingerprint is strictly closer to the correct calibrated prototype than to every incorrect calibrated prototype:
\[
\left\|
\widehat\phi_o(i)
-
\widetilde\phi_o(\widehat c^\star)
\right\|_2
<
\frac{\Delta}{4}
<
\frac{3\Delta}{4}
<
\left\|
\widehat\phi_o(i)
-
\widetilde\phi_o(\widehat c')
\right\|_2
\]
for every $\widehat c'\neq\widehat c^\star$. Hence the minimizer in Definition~\ref{def:emp_fp_main} is unique on $\mathcal E$, and the nearest-prototype rule must return
\[
\widehat c=\widehat c^\star.
\]
It follows that
\[
\lambda_o(\widehat c)
=
\lambda_o(\widehat c^\star)
=
c
\]
on $\mathcal E$. Equivalently,
\[
\left\{
\lambda_o(\widehat c)\neq c
\right\}
\subseteq
\mathcal E^{\mathsf c}.
\]
Taking conditional probabilities and using the bound for $\mathcal E^{\mathsf c}$ gives
\[
\begin{aligned}
&\Pr\!\left(
\lambda_o(\widehat c)\neq c
\middle|
s_{\mathrm{start}}=(o,i)
\right)\\
&\qquad\leq
\Pr\!\left(
\mathcal E^{\mathsf c}
\middle|
s_{\mathrm{start}}=(o,i)
\right)\\
&\qquad\leq
2d_o
\exp\!\left(
-\frac{m\Delta^2}{128\nu^2d_o}
\right),
\end{aligned}
\]
which is the claimed result.
\end{proof}


\subsection{Finite-sample and eventual diagnostic consequences}


\begin{proof}
Fix a non-singleton observation fiber $\cC_o$, a latent start state
$(o,i)$, and a confidence level $\delta\in(0,1)$. Since $\cC_o$ is
non-singleton, Definition~\ref{def:class_fp_main} gives
\[
d_o\geq 1.
\]
Moreover, Assumption~\ref{ass:fp_sep_main} gives $\Delta>0$, and
Assumption~\ref{ass:diag_subg_main} gives $\nu>0$. Hence
\[
\frac{\Delta^2}{128\nu^2d_o}>0,
\]
and every expression appearing below is well defined.

Assume that
\[
m
\geq
\frac{128\nu^2d_o}{\Delta^2}
\log\!\left(
\frac{2d_o}{\delta}
\right).
\]
Multiplying both sides by the strictly positive quantity
$\Delta^2/(128\nu^2d_o)$ preserves the direction of the inequality and
yields
\[
\begin{aligned}
\frac{m\Delta^2}{128\nu^2d_o}
&\geq
\frac{\Delta^2}{128\nu^2d_o}
\cdot
\frac{128\nu^2d_o}{\Delta^2}
\log\!\left(
\frac{2d_o}{\delta}
\right)\\
&=
\log\!\left(
\frac{2d_o}{\delta}
\right).
\end{aligned}
\]
Multiplying by $-1$ reverses the inequality, so
\[
-\frac{m\Delta^2}{128\nu^2d_o}
\leq
-\log\!\left(
\frac{2d_o}{\delta}
\right).
\]
Because the exponential function is strictly increasing, exponentiating
both sides gives
\[
\exp\!\left(
-\frac{m\Delta^2}{128\nu^2d_o}
\right)
\leq
\exp\!\left(
-\log\!\left(
\frac{2d_o}{\delta}
\right)
\right).
\]
Since $d_o\geq1$ and $\delta\in(0,1)$, the quantity $2d_o/\delta$ is
strictly positive. Therefore,
\[
\begin{aligned}
\exp\!\left(
-\log\!\left(
\frac{2d_o}{\delta}
\right)
\right)
&=
\exp\!\left(
\log\!\left(
\frac{\delta}{2d_o}
\right)
\right)\\
&=
\frac{\delta}{2d_o}.
\end{aligned}
\]
It follows that
\[
\begin{aligned}
2d_o
\exp\!\left(
-\frac{m\Delta^2}{128\nu^2d_o}
\right)
&\leq
2d_o
\exp\!\left(
-\log\!\left(
\frac{2d_o}{\delta}
\right)
\right)\\
&=
2d_o\cdot\frac{\delta}{2d_o}\\
&=
\delta.
\end{aligned}
\]

Applying Theorem~\ref{thm:local_class_inf_main} to the latent start state
$(o,i)$, whose true stable class is $[i]_o$, gives
\[
\Pr\!\left(
\lambda_o(\widehat c)\neq[i]_o
\middle|
s_{\mathrm{start}}=(o,i)
\right)
\leq
2d_o
\exp\!\left(
-\frac{m\Delta^2}{128\nu^2d_o}
\right).
\]
Combining this inequality with the preceding bound yields
\[
\Pr\!\left(
\lambda_o(\widehat c)\neq[i]_o
\middle|
s_{\mathrm{start}}=(o,i)
\right)
\leq
\delta,
\]
which proves the claimed sample-complexity guarantee. Since the number
of diagnostic repetitions must be an integer, it is sufficient in
particular to choose
\[
m
=
\left\lceil
\frac{128\nu^2d_o}{\Delta^2}
\log\!\left(
\frac{2d_o}{\delta}
\right)
\right\rceil.
\]
For a singleton fiber, the local label set is also a singleton by the
bijection in Assumption~\ref{ass:local_proto_main}, so the inference
error is identically zero and no diagnostic repetitions are required.
\end{proof}

\subsection{Proof of Corollary~\ref{cor:eventual_local_class_main}}


\begin{proof}
Fix an observation $o\in\cO$. If the stable fiber $\cC_o$ is a singleton, then Assumption~\ref{ass:local_proto_main} implies that the local label set $\widehat{\cC}_o$ is also a singleton, because
\[
\lambda_o:\widehat{\cC}_o\to\cC_o
\]
is a bijection. Hence every classification event at $o$ returns the unique local label, whose image under $\lambda_o$ is the unique stable class in $\cC_o$. Therefore,
\[
\Pr\!\left(
\lambda_o(\widehat c_{o,k})\neq c_{o,k}
\right)
=
0
\]
for every classification event $k$. Thus the conclusion is immediate for a singleton fiber. In the remainder of the proof, assume that $\cC_o$ is non-singleton. By Definition~\ref{def:class_fp_main}, this implies
\[
d_o\geq1,
\]
so every quantity appearing in the stated schedule is well defined.

For each classification event $k\geq2$, let $I_{o,k}\in[n]$ denote the latent layer at the resettable start checkpoint of that event. The corresponding latent start state and true stable class are
\[
S_{o,k}:=(o,I_{o,k})
\qquad\text{and}\qquad
c_{o,k}:=[I_{o,k}]_o.
\]
Define the error event
\[
E_{o,k}
:=
\left\{
\lambda_o(\widehat c_{o,k})\neq c_{o,k}
\right\}.
\]
Conditional on the event $\{I_{o,k}=i\}$, the diagnostic procedure starts from the fixed latent state $(o,i)$, uses $m_{o,k}$ repetitions for each diagnostic coordinate, and has true stable class $[i]_o$. Therefore, Theorem~\ref{thm:local_class_inf_main} gives, for every $i\in[n]$ such that $\Pr(I_{o,k}=i)>0$,
\[
\Pr\!\left(
E_{o,k}
\middle|
I_{o,k}=i
\right)
\leq
2d_o
\exp\!\left(
-\frac{m_{o,k}\Delta^2}{128\nu^2d_o}
\right).
\]
The right-hand side is independent of the particular value of $i$. Since $I_{o,k}$ takes values in the finite set $[n]$, the law of total probability yields
\[
\begin{aligned}
\Pr(E_{o,k})
&=
\sum_{\substack{i\in[n]:\\ \Pr(I_{o,k}=i)>0}}
\Pr\!\left(
E_{o,k}
\middle|
I_{o,k}=i
\right)
\Pr(I_{o,k}=i)\\
&\leq
\sum_{\substack{i\in[n]:\\ \Pr(I_{o,k}=i)>0}}
2d_o
\exp\!\left(
-\frac{m_{o,k}\Delta^2}{128\nu^2d_o}
\right)
\Pr(I_{o,k}=i)\\
&=
2d_o
\exp\!\left(
-\frac{m_{o,k}\Delta^2}{128\nu^2d_o}
\right)
\sum_{\substack{i\in[n]:\\ \Pr(I_{o,k}=i)>0}}
\Pr(I_{o,k}=i)\\
&=
2d_o
\exp\!\left(
-\frac{m_{o,k}\Delta^2}{128\nu^2d_o}
\right).
\end{aligned}
\]

Assume now that, for some $\eta>0$,
\[
m_{o,k}
\geq
\frac{128\nu^2d_o}{\Delta^2}
\left[
(1+\eta)\log k+\log(2d_o)
\right].
\]
Since $\Delta>0$, $\nu>0$, and $d_o\geq1$, the factor
\[
\frac{\Delta^2}{128\nu^2d_o}
\]
is strictly positive. Multiplying the preceding inequality by this positive factor preserves the direction of the inequality and gives
\[
\begin{aligned}
\frac{m_{o,k}\Delta^2}{128\nu^2d_o}
&\geq
\frac{\Delta^2}{128\nu^2d_o}
\cdot
\frac{128\nu^2d_o}{\Delta^2}
\left[
(1+\eta)\log k+\log(2d_o)
\right]\\
&=
(1+\eta)\log k+\log(2d_o).
\end{aligned}
\]
Multiplying by $-1$ reverses the inequality:
\[
-\frac{m_{o,k}\Delta^2}{128\nu^2d_o}
\leq
-(1+\eta)\log k-\log(2d_o).
\]
Because the exponential function is increasing,
\[
\exp\!\left(
-\frac{m_{o,k}\Delta^2}{128\nu^2d_o}
\right)
\leq
\exp\!\left(
-(1+\eta)\log k-\log(2d_o)
\right).
\]
Using
\[
\exp(x+y)=\exp(x)\exp(y),
\exp(-\log x)=\frac{1}{x}
\quad\text{for }x>0,
\]
we obtain
\[
\begin{aligned}
&2d_o
\exp\!\left(
-\frac{m_{o,k}\Delta^2}{128\nu^2d_o}
\right)\\
&\leq
2d_o
\exp\!\left(
-(1+\eta)\log k-\log(2d_o)
\right)\\
&=
2d_o
\exp\!\left(
-(1+\eta)\log k
\right)
\exp\!\left(
-\log(2d_o)
\right)\\
&=
2d_o
k^{-(1+\eta)}
\frac{1}{2d_o}\\
&=
k^{-(1+\eta)}.
\end{aligned}
\]
Combining this inequality with the unconditional error bound derived above gives
\[
\Pr\!\left(
\lambda_o(\widehat c_{o,k})\neq c_{o,k}
\right)
=
\Pr(E_{o,k})
\leq
k^{-(1+\eta)}.
\]

Since $\eta>0$, we have $1+\eta>1$. Hence the corresponding $p$-series converges:
\[
\sum_{k=2}^{\infty}k^{-(1+\eta)}<\infty.
\]
For completeness, this can also be verified by the integral comparison
\[
\begin{aligned}
\sum_{k=2}^{\infty}k^{-(1+\eta)}
&\leq
\int_{1}^{\infty}x^{-(1+\eta)}\,dx\\
&=
\left[
-\frac{x^{-\eta}}{\eta}
\right]_{1}^{\infty}\\
&=
\frac{1}{\eta}\\
&<
\infty.
\end{aligned}
\]
Therefore,
\[
\sum_{k=2}^{\infty}\Pr(E_{o,k})
\leq
\sum_{k=2}^{\infty}k^{-(1+\eta)}
<
\infty.
\]
The first Borel--Cantelli lemma now implies
\[
\Pr\!\left(
E_{o,k}\ \text{infinitely often}
\right)
=
0.
\]
No independence assumption among the events $E_{o,k}$ is required for this implication. Equivalently, with probability one, there exists a finite random index $K_o$ such that
\[
\lambda_o(\widehat c_{o,k})
=
c_{o,k}
\qquad
\text{for every }k\geq K_o.
\]
Thus, almost surely, only finitely many local classification errors occur at the fixed observation $o$. If only finitely many classification events occur at $o$, then the number of errors at $o$ is trivially finite, so the same conclusion remains valid.

Finally, suppose that every required observation follows such a schedule. For each $o\in\cO$, define
\[
B_o
:=
\left\{
E_{o,k}\ \text{occurs infinitely often}
\right\}.
\]
The preceding argument gives
\[
\Pr(B_o)=0
\]
for every $o\in\cO$. Since the observation space $\cO$ is finite,
\[
\Pr\!\left(
\bigcup_{o\in\cO}B_o
\right)
\leq
\sum_{o\in\cO}\Pr(B_o)
=
0.
\]
Hence, with probability one, no observation incurs infinitely many classification errors. On this probability-one event, every observation contributes only finitely many errors, and because $\cO$ is finite, their total number is finite:
\[
\sum_{o\in\cO}
\sum_{k=2}^{\infty}
\mathbf{1}\!\left\{
\lambda_o(\widehat c_{o,k})\neq c_{o,k}
\right\}
<
\infty
\qquad
\text{almost surely}.
\]
Therefore, there are almost surely only finitely many local classification errors system-wide.
\end{proof}

\subsection{Proof of Theorem~\ref{thm:transport_local_main}}

\begin{proof}
Fix an arbitrary feasible edge
$e=(o\xrightarrow{a}o')$
and an arbitrary local source label
$\hat c\in\widehat{\cC}_o$.
Set
\[
c
:=
\lambda_o(\hat c)
\in
\cC_o
\]
and define the corresponding local target label by
\[
\hat c^{\star}
:=
\tau_e^\lambda(\hat c)
=
\lambda_{o'}^{-1}
\bigl(
\tau_e(\lambda_o(\hat c))
\bigr)
=
\lambda_{o'}^{-1}
\bigl(
\tau_e(c)
\bigr).
\]
By the prototype-gauge construction, $\lambda_o$ and $\lambda_{o'}$ are bijections, so both $c$ and $\hat c^{\star}$ are uniquely determined. Index the transition-anchored samples associated with the fixed edge $e$ by $r=1,2,\ldots$. For the $r$-th such sample, let $(o,i_r)$ be its exact source checkpoint, let $(o',\sigma_e(i_r))$ be the paired target checkpoint supplied by Assumption~\ref{ass:transition_anchor_main}, and write
\[
c_r^{-}
:=
[i_r]_o,
\qquad
c_r^{+}
:=
[\sigma_e(i_r)]_{o'}.
\]
By the definition of the deterministic quotient transport in Theorem~\ref{thm:markovization_main},
\[
\begin{aligned}
c_r^{+}
&=
[\sigma_e(i_r)]_{o'}\\
&=
\tau_e([i_r]_o)\\
&=
\tau_e(c_r^{-}).
\end{aligned}
\]
Let
$\widehat c_r^{-}\in\widehat{\cC}_o$
and
$\widehat c_r^{+}\in\widehat{\cC}_{o'}$
denote the source and target labels inferred from these two paired checkpoints. The source inference is correct when
\[
\lambda_o(\widehat c_r^{-})
=
c_r^{-},
\]
and the target inference is correct when
\[
\lambda_{o'}(\widehat c_r^{+})
=
c_r^{+}.
\]
By hypothesis, only finitely many source- or target-class inference errors occur almost surely. In particular, after restricting attention to the samples associated with the fixed edge $e$, there are still almost surely only finitely many indices at which at least one of the two equalities above fails. Therefore, on an event of probability one, there exists a finite random index
$R_{\mathrm{err}}\geq1$
such that, for every $r\geq R_{\mathrm{err}}$,
\[
\lambda_o(\widehat c_r^{-})
=
c_r^{-}
\qquad\text{and}\qquad
\lambda_{o'}(\widehat c_r^{+})
=
c_r^{+}.
\]
We work on this probability-one event throughout the remainder of the proof.

Assumption~\ref{ass:edge_cov_main}, applied to the fixed edge $e$ and the true source class
$c=\lambda_o(\hat c)$,
states that the protocol produces infinitely many transition-anchored samples satisfying
\[
c_r^{-}
=
c.
\]
Because only finitely many sample indices precede $R_{\mathrm{err}}$, the set
\[
\mathcal I
:=
\left\{
r\geq R_{\mathrm{err}}
:
c_r^{-}=c
\right\}
\]
is also infinite. Fix any $r\in\mathcal I$. Since the source inference is correct at every index not smaller than $R_{\mathrm{err}}$,
\[
\begin{aligned}
\lambda_o(\widehat c_r^{-})
&=
c_r^{-}\\
&=
c\\
&=
\lambda_o(\hat c).
\end{aligned}
\]
The map $\lambda_o$ is injective, and hence
\[
\widehat c_r^{-}
=
\hat c.
\]
For the same anchored sample, target correctness and the deterministic quotient-transport identity give
\[
\begin{aligned}
\lambda_{o'}(\widehat c_r^{+})
&=
c_r^{+}\\
&=
\tau_e(c_r^{-})\\
&=
\tau_e(c)\\
&=
\lambda_{o'}\!\left(
\lambda_{o'}^{-1}(\tau_e(c))
\right)\\
&=
\lambda_{o'}(\hat c^{\star}).
\end{aligned}
\]
Since $\lambda_{o'}$ is injective, it follows that
\[
\widehat c_r^{+}
=
\hat c^{\star}.
\]
Thus every sample indexed by $r\in\mathcal I$ produces the paired inferred labels
\[
(\widehat c_r^{-},\widehat c_r^{+})
=
(\hat c,\hat c^{\star})
\]
and therefore increments exactly the count
\[
N_e[\hat c,\hat c^{\star}].
\]

For clarity, let $N_e^{(r)}$ denote the count matrix after the first $r$ transition-anchored samples associated with edge $e$ have been incorporated. For every $r\geq R_{\mathrm{err}}$, the update rule in Definition~\ref{def:rowwise_transport_main} gives
\[
N_e^{(r)}[\hat c,\hat c^{\star}]
\geq
N_e^{(R_{\mathrm{err}}-1)}
[\hat c,\hat c^{\star}]
+
\left|
\mathcal I
\cap
\{R_{\mathrm{err}},\ldots,r\}
\right|.
\]
Because $\mathcal I$ is infinite,
\[
\left|
\mathcal I
\cap
\{R_{\mathrm{err}},\ldots,r\}
\right|
\longrightarrow
\infty
\qquad
\text{as }r\longrightarrow\infty.
\]
Consequently,
\[
N_e^{(r)}[\hat c,\hat c^{\star}]
\longrightarrow
\infty
\qquad
\text{as }r\longrightarrow\infty.
\]

We next show that every competing entry in the row indexed by $\hat c$ stops changing after the finite index $R_{\mathrm{err}}-1$. Let
\[
\hat d
\in
\widehat{\cC}_{o'}
\setminus
\{\hat c^{\star}\}
\]
be an arbitrary competing target label. Suppose that some sample with index
$r\geq R_{\mathrm{err}}$
increments an entry in the row indexed by $\hat c$. By the estimator update rule, this means that its inferred source label satisfies
\[
\widehat c_r^{-}
=
\hat c.
\]
Because the source inference is correct at this index,
\[
\begin{aligned}
c_r^{-}
&=
\lambda_o(\widehat c_r^{-})\\
&=
\lambda_o(\hat c)\\
&=
c.
\end{aligned}
\]
The paired target checkpoint therefore has true quotient class
\[
\begin{aligned}
c_r^{+}
&=
\tau_e(c_r^{-})\\
&=
\tau_e(c).
\end{aligned}
\]
Target correctness now implies
\[
\begin{aligned}
\lambda_{o'}(\widehat c_r^{+})
&=
c_r^{+}\\
&=
\tau_e(c)\\
&=
\lambda_{o'}(\hat c^{\star}).
\end{aligned}
\]
Injectivity of $\lambda_{o'}$ yields
\[
\widehat c_r^{+}
=
\hat c^{\star}.
\]
Hence, after $R_{\mathrm{err}}-1$, every sample that increments the row indexed by $\hat c$ must increment its correct column $\hat c^{\star}$. No sample with index
$r\geq R_{\mathrm{err}}$
can increment the competing entry
$N_e[\hat c,\hat d]$.
It follows that, for every
$\hat d\neq\hat c^{\star}$
and every
$r\geq R_{\mathrm{err}}$,
\[
N_e^{(r)}[\hat c,\hat d]
=
N_e^{(R_{\mathrm{err}}-1)}
[\hat c,\hat d].
\]

If
$\widehat{\cC}_{o'}=\{\hat c^{\star}\}$,
then $\hat c^{\star}$ is the only target label and is therefore the unique maximizer in the row indexed by $\hat c$ at every time, so the result is immediate. Suppose henceforth that
$\widehat{\cC}_{o'}\setminus\{\hat c^{\star}\}$
is nonempty. Since $\widehat{\cC}_{o'}$ is finite and every count at the finite index
$R_{\mathrm{err}}-1$
is finite, the random quantity
\[
B
:=
\max_{
\hat d\in
\widehat{\cC}_{o'}
\setminus
\{\hat c^{\star}\}
}
N_e^{(R_{\mathrm{err}}-1)}
[\hat c,\hat d]
\]
is finite. Moreover, the preceding stabilization identity implies that, for every
$r\geq R_{\mathrm{err}}$
and every
$\hat d\neq\hat c^{\star}$,
\[
N_e^{(r)}[\hat c,\hat d]
\leq
B.
\]
On the other hand,
\[
N_e^{(r)}[\hat c,\hat c^{\star}]
\longrightarrow
\infty.
\]
Therefore, there exists a finite random index
$R_{\star}\geq R_{\mathrm{err}}$
such that
\[
N_e^{(R_{\star})}
[\hat c,\hat c^{\star}]
>
B.
\]
The entry
$N_e^{(r)}[\hat c,\hat c^{\star}]$
is nondecreasing in $r$, because every estimator update only adds one to a count and never decreases any count. Thus, for every
$r\geq R_{\star}$,
\[
N_e^{(r)}
[\hat c,\hat c^{\star}]
\geq
N_e^{(R_{\star})}
[\hat c,\hat c^{\star}]
>
B.
\]
Combining this inequality with the bound on every competing column gives, for every
$r\geq R_{\star}$
and every
$\hat d\in\widehat{\cC}_{o'}$
with
$\hat d\neq\hat c^{\star}$,
\[
N_e^{(r)}
[\hat c,\hat c^{\star}]
>
B
\geq
N_e^{(r)}
[\hat c,\hat d].
\]
Hence $\hat c^{\star}$ is the unique maximizer of the row
\[
\hat d
\longmapsto
N_e^{(r)}[\hat c,\hat d]
\]
for every $r\geq R_{\star}$. In particular, the deterministic tie-breaking convention is no longer relevant after $R_{\star}$, because no tie involving the correct column is possible. Definition~\ref{def:rowwise_transport_main} therefore gives
\[
\begin{aligned}
\widehat\tau_e(\hat c)
&=
\hat c^{\star}\\
&=
\tau_e^\lambda(\hat c)\\
&=
\lambda_{o'}^{-1}
\bigl(
\tau_e(\lambda_o(\hat c))
\bigr)
\end{aligned}
\]
after the $R_{\star}$-th anchored sample of edge $e$ has been processed.

The index $R_{\star}$ is almost surely finite, so the corresponding anchored sample occurs at an almost surely finite protocol time. Between two updates of $N_e$, the estimator $\widehat\tau_e$ remains unchanged, and every subsequent update preserves the strict dominance of the correct column established above. The displayed equality therefore holds permanently after that finite time. Since the feasible edge $e$ and the source label
$\hat c\in\widehat{\cC}_o$
were arbitrary, the conclusion holds for every feasible edge and every local source label.
\end{proof}

\subsection{Proof of Lemma~\ref{lem:exact_local_tracking_main}}

\begin{proof}
Fix an arbitrary realization of the process on the reset-free interval beginning at time $t_0$. The claim is pathwise and therefore does not require any probabilistic qualification. We prove by induction over the time indices contained in this interval that
\[
\lambda_{o_t}(\widehat c_t)=c_t.
\]
At the initial time $t=t_0$, this identity is exactly the assumed initialization condition,
\[
\lambda_{o_{t_0}}(\widehat c_{t_0})
=
c_{t_0}.
\]
Now fix any time $t\geq t_0$ such that both $t$ and $t+1$ belong to the same reset-free interval, and suppose that
\[
\lambda_{o_t}(\widehat c_t)
=
c_t.
\]
Let
\[
e_t
:=
(o_t\xrightarrow{a_t}o_{t+1})
\]
be the realized edge at time $t$. Since this edge is realized by the process, it is feasible, and by its definition,
\[
\src(e_t)=o_t
\qquad\text{and}\qquad
\tgt(e_t)=o_{t+1}.
\]
Because no reset occurs between times $t$ and $t+1$, Definition~\ref{def:local_class_tracker_main} updates the local label by propagation rather than by a new diagnostic initialization. Hence
\[
\widehat c_{t+1}
=
\widehat\tau_{e_t}(\widehat c_t).
\]
The hypothesis of the lemma states that every edge used on the interval has the correct local-coordinate transport. Applied to the realized edge $e_t$, this gives the equality of maps
\[
\widehat\tau_{e_t}
=
\lambda_{\tgt(e_t)}^{-1}
\circ
\tau_{e_t}
\circ
\lambda_{\src(e_t)}.
\]
Evaluating both sides at $\widehat c_t$ and using
$\src(e_t)=o_t$
and
$\tgt(e_t)=o_{t+1}$,
we obtain
\[
\begin{aligned}
\widehat c_{t+1}
&=
\widehat\tau_{e_t}(\widehat c_t)\\
&=
\left(
\lambda_{\tgt(e_t)}^{-1}
\circ
\tau_{e_t}
\circ
\lambda_{\src(e_t)}
\right)(\widehat c_t)\\
&=
\lambda_{o_{t+1}}^{-1}
\left(
\tau_{e_t}
\left(
\lambda_{o_t}(\widehat c_t)
\right)
\right).
\end{aligned}
\]
Applying $\lambda_{o_{t+1}}$ to this identity yields
\[
\begin{aligned}
\lambda_{o_{t+1}}(\widehat c_{t+1})
&=
\lambda_{o_{t+1}}
\left[
\lambda_{o_{t+1}}^{-1}
\left(
\tau_{e_t}
\left(
\lambda_{o_t}(\widehat c_t)
\right)
\right)
\right]\\
&=
\tau_{e_t}
\left(
\lambda_{o_t}(\widehat c_t)
\right).
\end{aligned}
\]
The cancellation in the second equality is valid because
\[
\lambda_{o_{t+1}}
:
\widehat{\cC}_{o_{t+1}}
\longrightarrow
\cC_{o_{t+1}}
\]
is a bijection. By the induction hypothesis,
\[
\lambda_{o_t}(\widehat c_t)=c_t,
\]
and therefore
\[
\lambda_{o_{t+1}}(\widehat c_{t+1})
=
\tau_{e_t}(c_t).
\]

It remains only to identify the right-hand side with the actual successor class. By definition,
\[
c_t=[i_t]_{o_t}
\qquad\text{and}\qquad
c_{t+1}=[i_{t+1}]_{o_{t+1}}.
\]
Along the realized edge
$e_t=(o_t\xrightarrow{a_t}o_{t+1})$,
the hidden layer evolves according to
\[
i_{t+1}
=
\sigma_{e_t}(i_t).
\]
The quotient transport induced by this raw hidden transition is
\[
\tau_{e_t}([i]_{o_t})
=
[\sigma_{e_t}(i)]_{o_{t+1}}.
\]
Substituting $i=i_t$ gives
\[
\begin{aligned}
\tau_{e_t}(c_t)
&=
\tau_{e_t}([i_t]_{o_t})\\
&=
[\sigma_{e_t}(i_t)]_{o_{t+1}}\\
&=
[i_{t+1}]_{o_{t+1}}\\
&=
c_{t+1}.
\end{aligned}
\]
Combining the preceding identities, we conclude that
\[
\lambda_{o_{t+1}}(\widehat c_{t+1})
=
\tau_{e_t}(c_t)
=
c_{t+1}.
\]
Thus correctness at time $t$ implies correctness at time $t+1$ whenever both times lie in the same reset-free interval. Since correctness holds at the initial time $t_0$, induction yields
\[
\lambda_{o_t}(\widehat c_t)
=
c_t
\]
for every $t\geq t_0$ belonging to that interval.
\end{proof}

\subsection{Proof of Proposition~\ref{prop:eventual_sync_state_main}}

\begin{proof}
Let
\[
0\leq \zeta_0<\zeta_1<\zeta_2<\cdots
\]
be the successive exogenous initialization times. By assumption, these times
are almost surely unbounded, so the sequence is almost surely infinite and
$\zeta_j\to\infty$.

For each $j\geq0$, let
\[
E_j^{\mathrm{init}}
:=
\left\{
\lambda_{o_{\zeta_j}}(\widehat c_{\zeta_j})
\neq
c_{\zeta_j}
\right\}
\]
be the event that the diagnostic initialization at time $\zeta_j$ is
incorrect. The assumed summability of the initialization error probabilities
gives
\[
\sum_{j=0}^{\infty}
\bP(E_j^{\mathrm{init}})
<
\infty.
\]
The first Borel--Cantelli lemma therefore implies
\[
\bP\!\left(
E_j^{\mathrm{init}}
\text{ occurs infinitely often}
\right)
=0.
\]
Hence, on an event of probability one, there exists a finite random index
$J_{\mathrm{cls}}$ such that
\[
\lambda_{o_{\zeta_j}}(\widehat c_{\zeta_j})
=
c_{\zeta_j}
\qquad
\text{for every }j\geq J_{\mathrm{cls}}.
\]
Set
\[
T_{\mathrm{cls}}
:=
\zeta_{J_{\mathrm{cls}}}.
\]
Then every exogenous initialization occurring at or after
$T_{\mathrm{cls}}$ is correct.

We next obtain a common time after which every transport entry that can be used
by the tracker is correct. Let
\[
\mathcal P
:=
\left\{
(e,\hat c):
e=(o\xrightarrow{a}o')\in E,\;
\hat c\in\widehat{\cC}_o
\right\}.
\]
The set $\mathcal P$ is finite because $E$ and every local label set are
finite. Write $\widehat\tau_e^{(t)}$ for the transport estimate available to
the tracker at time $t$. By the assumed eventual recovery of every required
transport entry, for each $(e,\hat c)\in\mathcal P$ there exists an almost
surely finite random time $T_{e,\hat c}$ such that, for every
$t\geq T_{e,\hat c}$,
\[
\widehat\tau_e^{(t)}(\hat c)
=
\lambda_{\tgt(e)}^{-1}\!\left(
\tau_e\bigl(\lambda_{\src(e)}(\hat c)\bigr)
\right),
\]
and the equality remains true permanently. Since $\mathcal P$ is finite, the
random time
\[
T_{\mathrm{tr}}
:=
\max_{(e,\hat c)\in\mathcal P}
T_{e,\hat c}
\]
is almost surely finite. Consequently, for every $t\geq T_{\mathrm{tr}}$ and
every feasible edge $e$,
\[
\widehat\tau_e^{(t)}
=
\lambda_{\tgt(e)}^{-1}
\circ
\tau_e
\circ
\lambda_{\src(e)}
\]
as maps on $\widehat{\cC}_{\src(e)}$.

Let
\[
T_\star
:=
\max\{T_{\mathrm{cls}},T_{\mathrm{tr}}\}.
\]
Because the initialization times are unbounded, the random index
\[
J
:=
\min\{j\geq0:\zeta_j\geq T_\star\}
\]
is almost surely finite. Define
\[
T:=\zeta_J.
\]
Then $T\geq T_{\mathrm{cls}}$, so the initialization at time $T$ is correct:
\[
\lambda_{o_T}(\widehat c_T)=c_T.
\]
Also $T\geq T_{\mathrm{tr}}$, so every transport used at or after time $T$ is
the true quotient transport expressed in the fixed local gauges.

Consider the reset-free interval beginning at $\zeta_J=T$. Its initial label
is correct, and every transport used on the interval is correct.
Lemma~\ref{lem:exact_local_tracking_main} therefore gives
\[
\lambda_{o_t}(\widehat c_t)=c_t
\]
throughout that interval. Now let $j>J$. Since
$\zeta_j\geq T\geq T_{\mathrm{cls}}$, the initialization at $\zeta_j$ is also
correct; since $\zeta_j\geq T\geq T_{\mathrm{tr}}$, all transports used on the
subsequent reset-free interval are correct. Applying
Lemma~\ref{lem:exact_local_tracking_main} again yields exact class tracking on
that entire interval.

The reset-free intervals beginning at
$\zeta_J,\zeta_{J+1},\zeta_{J+2},\ldots$ cover every time $t\geq T$.
Therefore
\[
\lambda_{o_t}(\widehat c_t)=c_t
\qquad
\text{for every }t\geq T.
\]
All random times used above are finite on an event of probability one, which
proves the proposition.
\end{proof}

\subsection{Proof of Lemma~\ref{lem:gauge_policy_transfer_main}}

\begin{proof}
Define
\[
\Lambda_\lambda(o,\hat c)
:=
(o,\lambda_o(\hat c)).
\]
Because every
$\lambda_o:\widehat{\cC}_o\to\cC_o$ is bijective,
$\Lambda_\lambda:\widehat{\cS}\to\bar{\cS}$ is bijective, with inverse
\[
\Lambda_\lambda^{-1}(o,c)
=
(o,\lambda_o^{-1}(c)).
\]
The policy transformation
\[
\bar\pi(a\mid o,c)
=
\widehat\pi(a\mid o,\lambda_o^{-1}(c))
\]
is therefore well defined and bijective between stationary policies on
$\widehat{\cS}$ and stationary policies on $\bar{\cS}$. In particular,
\[
\bar\pi\bigl(a\mid\Lambda_\lambda(o,\hat c)\bigr)
=
\widehat\pi(a\mid o,\hat c).
\]

By Definition~\ref{def:quotient_lifted_state_main}, rewards are preserved:
\[
\widehat R^\lambda(o,\hat c,a)
=
\bar R\bigl(\Lambda_\lambda(o,\hat c),a\bigr).
\]
The transition kernels are preserved as well. For
$\hat z=(o,\hat c)$ and $\hat z'=(o',\hat c')$, if
$P_O(o'\mid o,a)=0$, both transition probabilities are zero. If
$P_O(o'\mid o,a)>0$, then
\[
\begin{aligned}
\widehat P^\lambda(\hat z'\mid\hat z,a)
&=
P_O(o'\mid o,a)
\mathbf 1\!\left\{
\hat c'=\tau_{o,a,o'}^\lambda(\hat c)
\right\}\\
&=
P_O(o'\mid o,a)
\mathbf 1\!\left\{
\lambda_{o'}(\hat c')
=
\tau_{o,a,o'}\bigl(\lambda_o(\hat c)\bigr)
\right\}\\
&=
\bar P\!\left(
\Lambda_\lambda(\hat z')
\mid
\Lambda_\lambda(\hat z),a
\right).
\end{aligned}
\]

For a bounded function $f:\bar{\cS}\to\mathbb R$, define its pullback by
\[
(U_\lambda f)(\hat z)
:=
f(\Lambda_\lambda(\hat z)).
\]
Let $\bar T_{\bar\pi}$ and $\widehat T_{\widehat\pi}$ be the
policy-evaluation Bellman operators of the canonical and local-coordinate
quotient MDPs, respectively. Using the policy, reward, and transition
identities above, and then changing variables through the bijection
$\Lambda_\lambda$, gives, for every bounded $f$,
\[
\widehat T_{\widehat\pi}(U_\lambda f)
=
U_\lambda(\bar T_{\bar\pi}f).
\]
Both Bellman operators are $\gamma$-contractions on their finite state spaces.
Their unique fixed points therefore satisfy
\[
\widehat V_{\widehat\pi}
=
U_\lambda\bar V_{\bar\pi}.
\]
Equivalently, for every $(o,\hat c)\in\widehat{\cS}$,
\[
\widehat V_{\widehat\pi}(o,\hat c)
=
\bar V_{\bar\pi}\bigl(o,\lambda_o(\hat c)\bigr).
\]

The same argument applies to the optimality operators. If $\bar T$ and
$\widehat T$ denote the two Bellman optimality operators, then
\[
\widehat T(U_\lambda f)
=
U_\lambda(\bar T f)
\]
for every bounded $f$, because the action set is unchanged and the maximum is
taken over the same actions on both sides. Uniqueness of the optimal fixed
points yields
\[
\widehat V^*
=
U_\lambda\bar V^*.
\]
Hence
\[
\widehat V^*(o,\hat c)
=
\bar V^*\bigl(o,\lambda_o(\hat c)\bigr).
\]

Finally, the policy transformation is bijective and preserves every statewise
value. Thus $\widehat\pi$ attains $\widehat V^*$ at every local state if and
only if $\bar\pi$ attains $\bar V^*$ at every corresponding canonical state.
Therefore $\widehat\pi$ is optimal if and only if $\bar\pi$ is optimal.
\end{proof}

\subsection{Proof of Theorem~\ref{thm:eventual_exact_reduction_main}}

\begin{proof}
Fix a deterministic time $t$ and a candidate successor
$\hat z'=(o',\hat c')\in\widehat{\cS}$. By definition, $A_t$ is determined by
the current latent quotient class, the current learned tracker state, the
transport tables available at time $t$, and the selected action $a_t$.
Therefore
\[
A_t\in\mathcal G_t^-.
\]
On $A_t$,
\[
c_t=[i_t]_{o_t}
=
\lambda_{o_t}(\widehat c_t).
\]

Suppose first that $P_O(o'\mid o_t,a_t)>0$, and write
\[
e=(o_t\xrightarrow{a_t}o').
\]
On the event $A_t\cap\{o_{t+1}=o'\}$, the tracker recursion and the
definition of $A_t$ give
\[
\begin{aligned}
\widehat c_{t+1}
&=
\widehat\tau_e(\widehat c_t)\\
&=
\tau_e^\lambda(\widehat c_t)\\
&=
\lambda_{o'}^{-1}\!\left(
\tau_e\bigl(\lambda_{o_t}(\widehat c_t)\bigr)
\right)\\
&=
\lambda_{o'}^{-1}\!\left(
\tau_e(c_t)
\right).
\end{aligned}
\]
By Theorem~\ref{thm:markovization_main}, on $\{o_{t+1}=o'\}$,
\[
c_{t+1}=\tau_e(c_t).
\]
Hence, on $A_t\cap\{o_{t+1}=o'\}$,
\[
\widehat c_{t+1}
=
\lambda_{o'}^{-1}(c_{t+1}).
\]
Equivalently,
\[
\begin{aligned}
&\mathbf 1_{A_t}
\mathbf 1\!\left\{
\widehat z_{t+1}=(o',\hat c')
\right\}\\
&\qquad=
\mathbf 1_{A_t}
\mathbf 1\{o_{t+1}=o'\}
\mathbf 1\!\left\{
\hat c'
=
\tau_{o_t,a_t,o'}^\lambda(\widehat c_t)
\right\}.
\end{aligned}
\]
Taking conditional expectation with respect to $\mathcal G_t^-$ is valid
because $A_t\in\mathcal G_t^-$. The HCDP transition law gives
\[
\bP(o_{t+1}=o'\mid\mathcal G_t^-)
=
P_O(o'\mid o_t,a_t),
\]
so
\[
\begin{aligned}
&\mathbf 1_{A_t}
\bP\!\left(
\widehat z_{t+1}=(o',\hat c')
\mid
\mathcal G_t^-
\right)\\
&\qquad=
\mathbf 1_{A_t}
P_O(o'\mid o_t,a_t)
\mathbf 1\!\left\{
\hat c'
=
\tau_{o_t,a_t,o'}^\lambda(\widehat c_t)
\right\}\\
&\qquad=
\mathbf 1_{A_t}
\widehat P^\lambda\!\left(
(o',\hat c')
\mid
\widehat z_t,a_t
\right).
\end{aligned}
\]
If $P_O(o'\mid o_t,a_t)=0$, then
\[
\bP(o_{t+1}=o'\mid\mathcal G_t^-)=0,
\]
and both transition probabilities are zero by the zero-support branch in
Definition~\ref{def:quotient_lifted_state_main}. Thus, for every
$\hat z'\in\widehat{\cS}$,
\[
\bP\!\left(
\widehat z_{t+1}=\hat z'
\mid
\mathcal G_t^-
\right)
=
\widehat P^\lambda\!\left(
\hat z'
\mid
\widehat z_t,a_t
\right)
\]
almost surely on $A_t$.

We next verify the reward law. Assumption~\ref{ass:hcdp_noise_main} gives
\[
Y_t=R(o_t,i_t,a_t)+\xi_t,
\qquad
\bE[\xi_t\mid\mathcal G_t^-]=0,
\]
and, for every $\eta\in\mathbb R$,
\[
\bE\!\left[
\exp(\eta\xi_t)
\mid
\mathcal G_t^-
\right]
\leq
\exp\!\left(
\frac{\eta^2\sigma_Y^2}{2}
\right).
\]
On $A_t$, reward consistency of the stable quotient implies
\[
\begin{aligned}
R(o_t,i_t,a_t)
&=
\bar R(o_t,c_t,a_t)\\
&=
\bar R\bigl(
o_t,\lambda_{o_t}(\widehat c_t),a_t
\bigr)\\
&=
\widehat R^\lambda(\widehat z_t,a_t).
\end{aligned}
\]
Therefore
\[
\bE[Y_t\mid\mathcal G_t^-]
=
\widehat R^\lambda(\widehat z_t,a_t)
\]
almost surely on $A_t$. Moreover, on $A_t$,
\[
Y_t-\widehat R^\lambda(\widehat z_t,a_t)=\xi_t,
\]
so, for every $\eta\in\mathbb R$,
\[
\bE\!\left[
\exp\!\left(
\eta\bigl[
Y_t-\widehat R^\lambda(\widehat z_t,a_t)
\bigr]
\right)
\middle|
\mathcal G_t^-
\right]
\leq
\exp\!\left(
\frac{\eta^2\sigma_Y^2}{2}
\right)
\]
almost surely on $A_t$.

It remains to prove the model isomorphism. Define
\[
\Lambda_\lambda(o,\hat c)
:=
(o,\lambda_o(\hat c)).
\]
Because each $\lambda_o$ is bijective, $\Lambda_\lambda$ is a bijection from
$\widehat{\cS}$ to $\bar{\cS}$. For every
$\hat z=(o,\hat c)$ and action $a$,
\[
\widehat R^\lambda(\hat z,a)
=
\bar R\bigl(\Lambda_\lambda(\hat z),a\bigr).
\]
For $\hat z'=(o',\hat c')$, if $P_O(o'\mid o,a)=0$, both kernels are zero. If
$P_O(o'\mid o,a)>0$, then
\[
\begin{aligned}
\widehat P^\lambda(\hat z'\mid\hat z,a)
&=
P_O(o'\mid o,a)
\mathbf 1\!\left\{
\hat c'=\tau_{o,a,o'}^\lambda(\hat c)
\right\}\\
&=
P_O(o'\mid o,a)
\mathbf 1\!\left\{
\lambda_{o'}(\hat c')
=
\tau_{o,a,o'}\bigl(\lambda_o(\hat c)\bigr)
\right\}\\
&=
\bar P\!\left(
\Lambda_\lambda(\hat z')
\mid
\Lambda_\lambda(\hat z),a
\right).
\end{aligned}
\]
The initial laws satisfy
\[
\widehat\rho_0^\lambda(\hat z)
=
\bar\rho_0\bigl(\Lambda_\lambda(\hat z)\bigr).
\]
Thus $\Lambda_\lambda$ preserves the action set, rewards, transition kernel,
discount factor, and canonical initial law, and is therefore an MDP
isomorphism.

Now suppose that an almost surely finite $T_0$ satisfies $A_t$ for every
$t\geq T_0$. For each deterministic $t$, the transition, reward, and
sub-Gaussian identities above are equalities of conditional random variables
on the $\mathcal G_t^-$-measurable event $A_t$. Since
\[
\{T_0\leq t\}\subseteq A_t,
\]
the same identities hold on $\{T_0\leq t\}$. This establishes the post-$T_0$
conclusions relative to the ambient filtration; no stopping-time property of
$T_0$ is needed for this implication because the conditional identities were
first proved on $A_t$ itself.

Assume additionally that $T_0$ is a stopping time for
$(\widehat{\mathcal F}_t^-)$. Let
\[
B_t:=\{T_0\leq t\}.
\]
Then
\[
B_t\in\widehat{\mathcal F}_t^-
\subseteq
\mathcal G_t^-.
\]
Since the right-hand sides of the transition and reward identities are
$\widehat{\mathcal F}_t^-$-measurable, the tower property gives
\[
\begin{aligned}
&\mathbf 1_{B_t}
\bP\!\left(
\widehat z_{t+1}=\hat z'
\mid
\widehat{\mathcal F}_t^-
\right)\\
&\quad=
\bE\!\left[
\mathbf 1_{B_t}
\bP\!\left(
\widehat z_{t+1}=\hat z'
\mid
\mathcal G_t^-
\right)
\middle|
\widehat{\mathcal F}_t^-
\right]\\
&\quad=
\mathbf 1_{B_t}
\widehat P^\lambda\!\left(
\hat z'
\mid
\widehat z_t,a_t
\right),
\end{aligned}
\]
and similarly
\[
\mathbf 1_{B_t}
\bE[Y_t\mid\widehat{\mathcal F}_t^-]
=
\mathbf 1_{B_t}
\widehat R^\lambda(\widehat z_t,a_t).
\]
Applying the same tower-property argument to the exponential moment yields
\[
\begin{aligned}
&\mathbf 1_{B_t}
\bE\!\left[
\exp\!\left(
\eta\bigl[
Y_t-\widehat R^\lambda(\widehat z_t,a_t)
\bigr]
\right)
\middle|
\widehat{\mathcal F}_t^-
\right]\\
&\qquad\leq
\mathbf 1_{B_t}
\exp\!\left(
\frac{\eta^2\sigma_Y^2}{2}
\right).
\end{aligned}
\]
Hence the same controlled-MDP laws hold after $T_0$ relative to the learned
information filtration.

Finally, $\widehat\rho_0^\lambda$ is the pullback of the canonical quotient
law at the original time zero, whereas the law of $\widehat z_{T_0}$ is
induced by the controlled trajectory and the synchronization time. In
general,
\[
\mathcal L(\widehat z_{T_0})
\neq
\widehat\rho_0^\lambda,
\]
and the theorem does not require these distributions to coincide.
\end{proof}

\subsection{Proof of Corollary~\ref{cor:diag_to_exact_reduction_main}}

\begin{proof}
Under Assumption~\ref{ass:local_proto_main}, for every observation
$o\in\cO$ the diagnostic interface provides a finite local label set
$\widehat{\cC}_o$ together with an unknown bijection
\[
\lambda_o:
\widehat{\cC}_o
\longrightarrow
\cC_o.
\]
These objects are fixed components of the calibrated diagnostic interface: successive diagnostic calls at observation $o$ return labels in the same set $\widehat{\cC}_o$, and correctness is always evaluated through the same map $\lambda_o$. In particular, the local label names may be arbitrary and may be reused at different observations, but their observation-wise gauges do not vary with time. Since $\cO$ is finite and every $\widehat{\cC}_o$ is finite, the local-coordinate state space
\[
\widehat{\cS}
=
\bigsqcup_{o\in\cO}
\bigl(\{o\}\times\widehat{\cC}_o\bigr)
\]
is finite.

The diagnostic assumptions of Section~\ref{sec:ident} include the class-stability, fingerprint-separation, resettable-repeatability, sub-Gaussian-noise, and calibrated-prototype conditions required by Theorem~\ref{thm:local_class_inf_main}, together with the summable diagnostic schedules of Corollary~\ref{cor:eventual_local_class_main}. Therefore, at every required observation, the probability of an incorrect diagnostic classification is summable over successive classification events. Corollary~\ref{cor:eventual_local_class_main} consequently implies that, almost surely, only finitely many diagnostic calls satisfy
\[
\lambda_o(\widehat c)\neq c.
\]
Because the observation space is finite, this conclusion holds system-wide: on an event $\Omega_{\mathrm{diag}}$ with
\[
\bP(\Omega_{\mathrm{diag}})=1,
\]
there are only finitely many incorrectly classified initialization checkpoints and only finitely many incorrectly classified source or target checkpoints used by the edge-transport estimator.

Under Assumptions~\ref{ass:transition_anchor_main} and~\ref{ass:edge_cov_main}, the latter finite-error property is precisely the classification condition required by Theorem~\ref{thm:transport_local_main}. Hence, for every feasible edge
\[
e=(o\xrightarrow{a}o')
\]
and every source label
$\hat c\in\widehat{\cC}_o$, there exists an almost surely finite random time after which
\[
\widehat\tau_e(\hat c)
=
\lambda_{o'}^{-1}
\left(
\tau_e\bigl(\lambda_o(\hat c)\bigr)
\right)
\]
holds permanently. Equivalently, after that time the recovered row agrees with the true quotient transport expressed in the fixed local gauges. Since there are only finitely many feasible edges and finitely many local source labels, these row-wise recovery events may be intersected and their recovery times may be maximized. Thus there is an event $\Omega_{\mathrm{tr}}$ of probability one and an almost surely finite random time $T_{\mathrm{tr}}$ such that, on $\Omega_{\mathrm{tr}}$, every transport required by the tracker satisfies
\[
\widehat\tau_e
=
\lambda_{\tgt(e)}^{-1}
\circ
\tau_e
\circ
\lambda_{\src(e)}
\]
for all times at least $T_{\mathrm{tr}}$, and every such equality remains valid thereafter.

Every exogenous initialization of HMRL-D uses the diagnostic schedule above, and between exogenous resets its label is propagated according to Definition~\ref{def:local_class_tracker_main}. The hypotheses of Proposition~\ref{prop:eventual_sync_state_main} are therefore satisfied. That proposition provides an event $\Omega_{\mathrm{sync}}$ with
\[
\bP(\Omega_{\mathrm{sync}})=1
\]
and an almost surely finite random time $T$ such that, on $\Omega_{\mathrm{sync}}$,
\[
\lambda_{o_t}(\widehat c_t)
=
c_t
\qquad
\text{for every }t\geq T.
\]
By the definition of the canonical quotient state,
\[
c_t=[i_t]_{o_t}.
\]
Consequently, on the probability-one event
\[
\Omega_\star
:=
\Omega_{\mathrm{diag}}
\cap
\Omega_{\mathrm{tr}}
\cap
\Omega_{\mathrm{sync}},
\]
we have
\[
\lambda_{o_t}(\widehat c_t)
=
c_t
=
[i_t]_{o_t}
\qquad
\text{for every }t\geq T.
\]
The intersection still has probability one because it is a finite intersection of probability-one events:
\[
\begin{aligned}
\bP(\Omega_\star^c)
&=
\bP\!\left(
\Omega_{\mathrm{diag}}^c
\cup
\Omega_{\mathrm{tr}}^c
\cup
\Omega_{\mathrm{sync}}^c
\right)\\
&\leq
\bP(\Omega_{\mathrm{diag}}^c)
+
\bP(\Omega_{\mathrm{tr}}^c)
+
\bP(\Omega_{\mathrm{sync}}^c)\\
&=
0.
\end{aligned}
\]

We now set
\[
T_0:=\max\{T,T_{\mathrm{tr}}\}.
\]
This time is almost surely finite. For every $t\geq T_0$, the class-tracking
identity gives
$\lambda_{o_t}(\widehat c_t)=c_t$, while the definition of
$T_{\mathrm{tr}}$ gives
\[
\widehat\tau_e
=
\lambda_{\tgt(e)}^{-1}\circ\tau_e\circ\lambda_{\src(e)}
\]
for every feasible edge whose table may be used at time $t$. Therefore
$\{T_0\leq t\}\subseteq A_t$ for every $t$. The eventual clause of
Theorem~\ref{thm:eventual_exact_reduction_main} now applies and yields the
local-coordinate quotient transition law, conditional reward mean, and, under
Assumption~\ref{ass:hcdp_noise_main}, the conditional sub-Gaussian noise bound.
The map $\Lambda_\lambda$ is only a proof-level relabeling: HMRL-D supplies
$(o_t,\widehat c_t)$ directly to its policy and update rules and never needs a
cross-observation synchronization of numerical class names. This proves the
corollary.
\end{proof}

\subsection{Proof of Corollary~\ref{cor:pac_calibration_main}}

\begin{proof}
If $\cO_+=\varnothing$, every quotient fiber is a singleton, so no
classification or transport-row ambiguity remains and the conclusion is
immediate. Assume henceforth that $\cO_+\neq\varnothing$ and condition on
$\mathcal E_{\rm proto}$.

For a diagnostic classification performed at observation $o$,
Theorem~\ref{thm:local_class_inf_main} gives
\[
\Pr(\text{classification error})
\leq
2d_o\exp\!\left(
-\frac{m\Delta^2}{128\nu^2d_o}
\right).
\]
Since $d_o\leq d$ and the function
$x\mapsto x\exp(-c/x)$ is increasing on $(0,\infty)$ for every $c>0$, the
chosen value of $m$ implies
\[
\Pr(\text{classification error})
\leq
2d\exp\!\left(
-\frac{m\Delta^2}{128\nu^2d}
\right)
\leq
\frac{\delta_{\rm cls}}{K}.
\]
A union bound over the $K$ classifications therefore shows that, conditional on $\mathcal E_{\rm proto}$, all classifications are correct with probability at least $1-\delta_{\rm cls}$. Hence the joint event consisting of prototype accuracy and correct classification has probability at least $1-\delta_{\rm proto}-\delta_{\rm cls}$.

Condition on this event. Because every calibration table starts at zero and
every source and target label in every anchored pair is correct, all samples in
the row associated with a source label $\hat c$ have the same target label
\[
\lambda_{\tgt(e)}^{-1}
\bigl(\tau_e(\lambda_{\src(e)}(\hat c))\bigr).
\]
Coverage supplies at least one such sample for every feasible edge--class row.
Therefore the row-wise majority rule of
Definition~\ref{def:rowwise_transport_main} recovers
\[
\widehat\tau_e
=
\lambda_{\tgt(e)}^{-1}\circ\tau_e\circ\lambda_{\src(e)}
\]
on every row. Every episode initialization counted among the $K$
classifications is also correct, so
Lemma~\ref{lem:exact_local_tracking_main} gives exact tracking throughout each
post-calibration episode. Hence $A_t$ holds at every post-calibration time.
Theorem~\ref{thm:eventual_exact_reduction_main} and
Corollary~\ref{cor:backbone_transfer_main} then give the desired backbone
guarantee on the all-correct event. Union bounding its original failure event
with the calibration failure event increases the failure probability by at
most $\delta$.
\end{proof}

\subsection{Proof of Corollary~\ref{cor:backbone_transfer_main}}

\begin{proof}

Let $T$ be the learned-information stopping time in the corollary. Since
$A_t$ holds for every $t\geq T$,
Theorem~\ref{thm:eventual_exact_reduction_main} gives, relative to
$(\widehat{\mathcal F}_t^-)$ after $T$, the transition kernel
$\widehat P^\lambda$, conditional reward mean $\widehat R^\lambda$, and,
when required, the conditional sub-Gaussian bound with variance proxy
$\sigma_Y^2$. Thus the post-$T$ stream satisfies the MDP-side probabilistic
conditions assumed by the backbone theorem.

This conclusion does not by itself verify exploration, visitation, sampling,
step-size, optimization, realizability, concentrability, or
function-approximation assumptions; those method-specific conditions remain
in force.

Suppose first that $\mathfrak A$ is initialized or restarted at a time $T\geq T_0$ and that all algorithmic information produced before $T$ is discarded. In particular, any parameter values, value tables, replay data, transition counts, eligibility traces, target-network states, optimizer states, or step-size counters that could have been affected by the pre-$T$ representation are either reinitialized or excluded from subsequent updates. For $t\geq T$, define the post-restart history
\[
\mathcal H_{T,t}^{\widehat z}
:=
\sigma
\bigl(
\widehat z_T,
a_T,
Y_T,
\widehat z_{T+1},
\ldots,
a_{t-1},
Y_{t-1},
\widehat z_t
\bigr).
\]
Because
$\mathcal H_{T,t}^{\widehat z}\subseteq\mathcal F_t^{\widehat z}$,
the tower property and the transition identity above imply
\[
\begin{aligned}
&\bP
\left(
\widehat z_{t+1}=\hat z'
\middle|
\mathcal H_{T,t}^{\widehat z},a_t
\right)\\
&\quad=
\bE
\left[
\bP
\left(
\widehat z_{t+1}=\hat z'
\middle|
\mathcal F_t^{\widehat z},a_t
\right)
\middle|
\mathcal H_{T,t}^{\widehat z},a_t
\right]\\
&\quad=
\bE
\left[
\widehat P^\lambda
\left(
\hat z'
\middle|
\widehat z_t,a_t
\right)
\middle|
\mathcal H_{T,t}^{\widehat z},a_t
\right]\\
&\quad=
\widehat P^\lambda
\left(
\hat z'
\middle|
\widehat z_t,a_t
\right),
\end{aligned}
\]
where the last equality holds because
$\widehat P^\lambda(\hat z'\mid\widehat z_t,a_t)$
is measurable with respect to
$\sigma(\widehat z_t,a_t)\subseteq
\sigma(\mathcal H_{T,t}^{\widehat z},a_t)$.
The same argument for the reward gives
\[
\begin{aligned}
\bE
\left[
Y_t
\middle|
\mathcal H_{T,t}^{\widehat z},a_t
\right]
&=
\bE
\left[
\bE
\left[
Y_t
\middle|
\mathcal F_t^{\widehat z},a_t
\right]
\middle|
\mathcal H_{T,t}^{\widehat z},a_t
\right]\\
&=
\bE
\left[
\widehat R^\lambda
\left(
\widehat z_t,a_t
\right)
\middle|
\mathcal H_{T,t}^{\widehat z},a_t
\right]\\
&=
\widehat R^\lambda
\left(
\widehat z_t,a_t
\right).
\end{aligned}
\]
If the conditional sub-Gaussian condition is required, another application of the same tower argument yields
\[
\begin{aligned}
&\bE
\left[
\exp
\left(
\eta
\left[
Y_t-
\widehat R^\lambda(\widehat z_t,a_t)
\right]
\right)
\middle|
\mathcal H_{T,t}^{\widehat z},a_t
\right]\\
&\quad=
\bE
\left[
\bE
\left[
\exp
\left(
\eta
\left[
Y_t-
\widehat R^\lambda(\widehat z_t,a_t)
\right]
\right)
\middle|
\mathcal F_t^{\widehat z},a_t
\right]
\middle|
\mathcal H_{T,t}^{\widehat z},a_t
\right]\\
&\quad\leq
\exp
\left(
\frac{\eta^2\sigma_Y^2}{2}
\right).
\end{aligned}
\]
Consequently, from the viewpoint of the restarted backbone, the complete data stream beginning at time $T$ is a valid controlled trajectory of
$\widehat{\mathsf M}^{\lambda}$.
Its initial distribution is the actual law of $\widehat z_T$, which need not be the canonical law
$\widehat\rho_0^\lambda$.
Therefore, if the original theorem for $\mathfrak A$ is uniform over initial distributions, it applies immediately; if it assumes a particular class of initial distributions, the actual post-restart law must belong to that class as part of the retained method-specific assumptions. Since no pre-$T$ data or update remains in the restarted algorithmic state, the hypotheses of the finite-MDP guarantee apply exactly to this post-$T$ execution, and the conclusion of that guarantee follows for
$\widehat{\mathsf M}^{\lambda}$.

Suppose next that the backbone is not restarted, but its original theorem is invariant to an arbitrary finite prefix of corrupted states, transitions, or updates. On the probability-one event on which $T_0<\infty$, the tuples and updates with indices
\[
0,1,\ldots,T_0-1
\]
form a finite prefix, while every tuple from index $T_0$ onward satisfies the transition and reward conditions of
$\widehat{\mathsf M}^{\lambda}$.
The prefix may have changed the internal state of the algorithm in an arbitrary way; this is precisely why invariance must cover corrupted updates and not merely the deletion of finitely many observations from a statistical record. By the assumed finite-prefix invariance, replacing the first $T_0$ states, transitions, rewards as assigned to learned states, or updates by arbitrary quantities does not alter the conclusion of the original theorem once the remaining stream is valid and all of its other assumptions hold. Hence the same guarantee again follows for
$\widehat{\mathsf M}^{\lambda}$.

The randomness of $T_0$ creates no additional difficulty because time is discrete and
\[
\{T_0<\infty\}
=
\bigcup_{m=0}^{\infty}
\{T_0=m\}
\]
has probability one. On the event $\{T_0=m\}$, only the first $m$ indices can be corrupted, and the assumed invariance applies to that finite value of $m$. For example, if the original theorem has a failure probability at most $\delta$ uniformly over the length and contents of the corrupted prefix, then
\[
\begin{aligned}
\bP(\text{failure})
&=
\sum_{m=0}^{\infty}
\bP
\left(
\text{failure}
\middle|
T_0=m
\right)
\bP(T_0=m)\\
&\leq
\sum_{m=0}^{\infty}
\delta\,\bP(T_0=m)\\
&=
\delta.
\end{aligned}
\]
An almost-sure conclusion is handled similarly by applying finite-prefix invariance on each event $\{T_0=m\}$ and then taking their countable union.

It remains to transfer the resulting guarantee from local coordinates to the canonical quotient coordinates. Define
\[
\Lambda_\lambda(o,\hat c)
=
\bigl(o,\lambda_o(\hat c)\bigr).
\]
By Theorem~\ref{thm:eventual_exact_reduction_main}, this map is an MDP isomorphism from
$\widehat{\mathsf M}^{\lambda}$
to
$\overline{\mathsf M}$: it is bijective and satisfies
\[
\widehat R^\lambda(\hat z,a)
=
\bar R
\left(
\Lambda_\lambda(\hat z),a
\right)
\]
and
\[
\widehat P^\lambda
\left(
\hat z'
\middle|
\hat z,a
\right)
=
\bar P
\left(
\Lambda_\lambda(\hat z')
\middle|
\Lambda_\lambda(\hat z),a
\right).
\]
If the backbone produces a stationary policy $\widehat\pi$ on
$\widehat{\cS}$, define its gauge-transformed policy on
$\bar{\cS}$ by
\[
\bar\pi(a\mid z)
:=
\widehat\pi
\left(
a
\middle|
\Lambda_\lambda^{-1}(z)
\right).
\]
Lemma~\ref{lem:gauge_policy_transfer_main} gives, for every
$\hat z\in\widehat{\cS}$,
\[
\widehat V_{\widehat\pi}(\hat z)
=
\bar V_{\bar\pi}
\left(
\Lambda_\lambda(\hat z)
\right)
\]
and
\[
\widehat V^\star(\hat z)
=
\bar V^\star
\left(
\Lambda_\lambda(\hat z)
\right).
\]
In particular,
\[
\begin{aligned}
\left\|
\widehat V_{\widehat\pi}
-
\widehat V^\star
\right\|_\infty
&=
\max_{\hat z\in\widehat{\cS}}
\left|
\bar V_{\bar\pi}
\left(
\Lambda_\lambda(\hat z)
\right)
-
\bar V^\star
\left(
\Lambda_\lambda(\hat z)
\right)
\right|\\
&=
\max_{z\in\bar{\cS}}
\left|
\bar V_{\bar\pi}(z)
-
\bar V^\star(z)
\right|\\
&=
\left\|
\bar V_{\bar\pi}
-
\bar V^\star
\right\|_\infty,
\end{aligned}
\]
where the middle equality uses bijectivity of $\Lambda_\lambda$. More generally, the isomorphism preserves actions, reward observations, transition probabilities, occupancy measures after the corresponding pushforward of the state distribution, and the ordering of policy values. Therefore any guarantee stated in terms of optimality, value error, return, regret, visitation, or sample complexity transfers after the states and policies are relabeled through $\Lambda_\lambda$, provided that every algorithm-specific assumption appearing in the original theorem is also preserved or separately verified in the local coordinates. This proves the positive transfer claims in both cases~(i) and~(ii).

Finally, eventual correctness alone cannot remove an error already stored permanently in the internal state of a backbone. To see this explicitly, consider the finite discounted MDP with
\[
\cS_0
=
\{s_0,s_1\},
\qquad
\cA_0
=
\{a^\star,a^-\},
\]
deterministic self-loop transitions
\[
P_0(s_j\mid s_j,a)
=
1
\qquad
\text{for every }j\in\{0,1\}
\text{ and }a\in\cA_0,
\]
and deterministic rewards
\[
R_0(s_j,a^\star)
=
1,
\qquad
R_0(s_j,a^-)
=
0.
\]
The action $a^\star$ is uniquely optimal at both states, and its value is
\[
V^\star(s_j)
=
\sum_{k=0}^{\infty}\gamma^k
=
\frac{1}{1-\gamma}.
\]
Define a backbone that selects $a^\star$ for its first action and then stores one permanent bit. It sets this bit to ``good'' if its first recorded update has the valid form
\[
(s_j,a^\star,1,s_j)
\]
for some $j\in\{0,1\}$, and otherwise sets the bit to ``bad.'' It selects $a^\star$ forever in the good mode and $a^-$ forever in the bad mode. On every uncorrupted trajectory of this MDP, the first recorded update necessarily has the valid form above. The backbone therefore remains in the good mode, selects the unique optimal action forever, and has a correct finite-MDP optimality guarantee on the uncorrupted model.

Now let the true quotient state be $s_0$ at every time, but suppose that the learned representation is incorrect only at time zero, with
\[
\widehat z_0=s_1,
\qquad
\widehat z_t=s_0
\quad
\text{for every }t\geq1.
\]
Thus one may take $T_0=1$, and the representation is exact permanently after that time. The true first action, reward, and successor state are
$a^\star$, $1$, and $s_0$, respectively, but the first update recorded by the backbone is
\[
(s_1,a^\star,1,s_0),
\]
which is not a valid self-loop tuple. The backbone enters the bad mode and thereafter selects $a^-$ forever, even though every state and transition presented from time one onward is correct. Its post-transient value at the true state is then
\[
V(s_0)
=
\sum_{k=0}^{\infty}
\gamma^k\,0
=
0
<
\frac{1}{1-\gamma}
=
V^\star(s_0).
\]
A restart after time one would delete the poisoned bit, and a theorem invariant to finite corrupted updates would exclude this failure, but eventual representation correctness by itself does neither. Hence, without condition~(i), condition~(ii), or some other assumption that removes or neutralizes all effects of the finite corrupted prefix, the finite-MDP guarantee need not transfer.
\end{proof}

\subsection{Proof of Theorem~\ref{thm:passive_conditional_reduction_main}}

\begin{proof}
Fix a deterministic time $t\geq0$ and let
\[
B_t:=\{T_0\leq t\}.
\]
Because $T_0$ is a stopping time for $(\mathcal G_t^-)$,
\[
B_t\in\mathcal G_t^-.
\]
On $B_t$, eventual class correctness holds at both $t$ and $t+1$:
\[
\widehat c_t
=
\lambda_{o_t}^{-1}(c_t),
\qquad
\widehat c_{t+1}
=
\lambda_{o_{t+1}}^{-1}(c_{t+1}).
\]
Fix a candidate successor
\[
\hat z'=(o',\hat c')\in\widehat{\cS}.
\]
If $P_O(o'\mid o_t,a_t)>0$, then on
$B_t\cap\{o_{t+1}=o'\}$, the quotient transition identity gives
\[
\begin{aligned}
\widehat c_{t+1}
&=
\lambda_{o'}^{-1}(c_{t+1})\\
&=
\lambda_{o'}^{-1}\!\left(
\tau_{o_t,a_t,o'}(c_t)
\right)\\
&=
\lambda_{o'}^{-1}\!\left(
\tau_{o_t,a_t,o'}
\bigl(\lambda_{o_t}(\widehat c_t)\bigr)
\right)\\
&=
\tau_{o_t,a_t,o'}^\lambda(\widehat c_t).
\end{aligned}
\]
Therefore
\[
\begin{aligned}
&\mathbf 1_{B_t}
\mathbf 1\!\left\{
\widehat z_{t+1}=(o',\hat c')
\right\}\\
&\qquad=
\mathbf 1_{B_t}
\mathbf 1\{o_{t+1}=o'\}
\mathbf 1\!\left\{
\hat c'
=
\tau_{o_t,a_t,o'}^\lambda(\widehat c_t)
\right\}.
\end{aligned}
\]
Taking conditional expectation with respect to $\mathcal G_t^-$ and using
$B_t\in\mathcal G_t^-$ and
\[
\bP(o_{t+1}=o'\mid\mathcal G_t^-)
=
P_O(o'\mid o_t,a_t)
\]
yields
\[
\begin{aligned}
&\mathbf 1_{B_t}
\bP\!\left(
\widehat z_{t+1}=(o',\hat c')
\mid
\mathcal G_t^-
\right)\\
&\qquad=
\mathbf 1_{B_t}
\widehat P^\lambda\!\left(
(o',\hat c')
\mid
\widehat z_t,a_t
\right).
\end{aligned}
\]
If $P_O(o'\mid o_t,a_t)=0$, both sides are zero. Thus the transition identity
holds on $B_t$ for every candidate successor.

Under Assumption~\ref{ass:hcdp_noise_main},
\[
Y_t=R(o_t,i_t,a_t)+\xi_t,
\qquad
\bE[\xi_t\mid\mathcal G_t^-]=0.
\]
On $B_t$, reward consistency and
$c_t=\lambda_{o_t}(\widehat c_t)$ imply
\[
R(o_t,i_t,a_t)
=
\bar R(o_t,c_t,a_t)
=
\widehat R^\lambda(\widehat z_t,a_t).
\]
Since $B_t\in\mathcal G_t^-$,
\[
\mathbf 1_{B_t}
\bE[Y_t\mid\mathcal G_t^-]
=
\mathbf 1_{B_t}
\widehat R^\lambda(\widehat z_t,a_t).
\]
Moreover, on $B_t$,
\[
Y_t-\widehat R^\lambda(\widehat z_t,a_t)=\xi_t,
\]
so the conditional sub-Gaussian assumption gives, for every
$\eta\in\mathbb R$,
\[
\begin{aligned}
&\mathbf 1_{B_t}
\bE\!\left[
\exp\!\left(
\eta\bigl[
Y_t-\widehat R^\lambda(\widehat z_t,a_t)
\bigr]
\right)
\middle|
\mathcal G_t^-
\right]\\
&\qquad\leq
\mathbf 1_{B_t}
\exp\!\left(
\frac{\eta^2\sigma_Y^2}{2}
\right).
\end{aligned}
\]

The map
\[
\Lambda_\lambda(o,\hat c)
=
(o,\lambda_o(\hat c))
\]
is an MDP isomorphism by the same algebraic argument as in
Theorem~\ref{thm:eventual_exact_reduction_main}: it is bijective and preserves
rewards and transition probabilities under the exact local transports
$\tau_e^\lambda$.

If $T_0$ is also a stopping time for
$(\widehat{\mathcal F}_t^-)$, then
\[
B_t\in\widehat{\mathcal F}_t^-.
\]
Applying the tower property to the three ambient-filtration identities above
yields the corresponding transition, conditional-reward, and conditional
sub-Gaussian identities relative to $\widehat{\mathcal F}_t^-$ on $B_t$.

This argument does not require the passive learner to recover or explicitly
store the edge maps; it uses only eventual exact class labels together with
the stopping-time condition. Conversely, without eventual exact labels, the
learned representation may merge histories from different stable classes.
Without the stopping-time condition, pathwise eventual agreement alone does
not justify conditioning on the random post-synchronization regime. Therefore
the structural quotient results alone do not give an unconditional Markov or
value-preservation guarantee for passive learning.
\end{proof}

\subsection{Proof of Theorem~\ref{thm:abelian_barrier_main}}

\begin{proof}
Let $H_w$ and $H_{w'}$ denote reachable observable histories that start from the fixed reachable quotient class $c_0$ at the base observation $o$ and execute the loop words $w$ and $w'$, respectively. Every generator and inverse generator is represented by an executable directed closed walk based at $o$, so both histories terminate again at the same visible observation $o$. By the definition of the free-group transport representation and of $c_w$ and $c_{w'}$, their terminal canonical quotient states are
$
(o,c_w)
=
\bigl(o,\rho_o(w)(c_0)\bigr)
\qquad\text{and}\qquad
(o,c_{w'})
=
\bigl(o,\rho_o(w')(c_0)\bigr).
$
Since the loop-memory surrogate factors through abelianization, there exists a map
$f:\mathbb Z^r\to\mathcal M$
such that
$S=f\circ\ab$.
The assumed equality
$\ab(w)=\ab(w')$
therefore gives
\[
\begin{aligned}
S(w)
&=
(f\circ\ab)(w)\\
&=
f\bigl(\ab(w)\bigr)\\
&=
f\bigl(\ab(w')\bigr)\\
&=
(f\circ\ab)(w')\\
&=
S(w').
\end{aligned}
\]
Both histories end at the same observation $o$, and hence their terminal memories satisfy
\[
\begin{aligned}
m(w)
&=
M\bigl(o,S(w)\bigr)\\
&=
M\bigl(o,S(w')\bigr)\\
&=
m(w').
\end{aligned}
\]
Write this common memory value as $m_\star$. Under the finite-memory controller convention of Section~\ref{sec:prelim}, the action readout depends on a history only through its terminal observation and terminal memory. Consequently, after either history the controller uses the same action distribution
\[
p(a)
:=
\pi_{\cM}(a\mid o,m_\star),
\qquad
a\in\cA.
\]
Because
$\pi_{\cM}(\cdot\mid o,m_\star)$
is a probability distribution, we have
\[
p\in\Delta(\cA).
\]

For an arbitrary distribution
$q\in\Delta(\cA)$,
define the one-decision optimality gaps at the two terminal quotient classes by
\[
L_w(q)
:=
\bar V^\star(o,c_w)
-
\sum_{a\in\cA}
q(a)\bar Q^\star(o,c_w,a)
\]
and
\[
L_{w'}(q)
:=
\bar V^\star(o,c_{w'})
-
\sum_{a\in\cA}
q(a)\bar Q^\star(o,c_{w'},a).
\]
For every quotient class $c\in\cC_o$, the optimal value and optimal state--action value satisfy
\[
\bar V^\star(o,c)
=
\max_{a\in\cA}
\bar Q^\star(o,c,a).
\]
It follows that, for every
$q\in\Delta(\cA)$,
\[
\begin{aligned}
\sum_{a\in\cA}
q(a)\bar Q^\star(o,c,a)
&\leq
\sum_{a\in\cA}
q(a)\bar V^\star(o,c)\\
&=
\bar V^\star(o,c)
\sum_{a\in\cA}q(a)\\
&=
\bar V^\star(o,c).
\end{aligned}
\]
Therefore
\[
L_w(q)\geq0
\qquad\text{and}\qquad
L_{w'}(q)\geq0
\]
for every
$q\in\Delta(\cA)$.

We now apply the definition of the common-decision loss to the particular distribution $p$ selected by the controller. Since the minimum over all action distributions cannot exceed the value of the same objective at the particular distribution $p$, we have
\[
\begin{aligned}
\max
\bigl\{
L_w(p),L_{w'}(p)
\bigr\}
&\geq
\min_{q\in\Delta(\cA)}
\max
\bigl\{
L_w(q),L_{w'}(q)
\bigr\}\\
&=
\varepsilon_{\mathrm{dec}}
\bigl(o;c_w,c_{w'}\bigr).
\end{aligned}
\]
Hence at least one of the inequalities
\[
L_w(p)
\geq
\varepsilon_{\mathrm{dec}}
\bigl(o;c_w,c_{w'}\bigr)
\]
or
\[
L_{w'}(p)
\geq
\varepsilon_{\mathrm{dec}}
\bigl(o;c_w,c_{w'}\bigr)
\]
must hold.

For completeness, the lower bound is strictly positive because the two terminal classes are decision separated. The simplex
$\Delta(\cA)$
is compact because $\cA$ is finite, and the function
\[
q
\longmapsto
\max
\bigl\{
L_w(q),L_{w'}(q)
\bigr\}
\]
is continuous. Its minimum is therefore attained. Suppose, for contradiction, that
\[
\varepsilon_{\mathrm{dec}}
\bigl(o;c_w,c_{w'}\bigr)
=
0.
\]
Then there exists
$q_\star\in\Delta(\cA)$
such that
\[
\max
\bigl\{
L_w(q_\star),L_{w'}(q_\star)
\bigr\}
=
0.
\]
Both losses are nonnegative, so the preceding equality implies
\[
L_w(q_\star)=0
\qquad\text{and}\qquad
L_{w'}(q_\star)=0.
\]
For the first equality, we may write
\[
\begin{aligned}
0
&=
L_w(q_\star)\\
&=
\bar V^\star(o,c_w)
-
\sum_{a\in\cA}
q_\star(a)\bar Q^\star(o,c_w,a)\\
&=
\sum_{a\in\cA}
q_\star(a)
\left[
\bar V^\star(o,c_w)
-
\bar Q^\star(o,c_w,a)
\right].
\end{aligned}
\]
For every action $a$,
\[
\bar V^\star(o,c_w)
-
\bar Q^\star(o,c_w,a)
\geq0.
\]
Thus the last expression is a sum of nonnegative terms. It can equal zero only if every term having positive weight is zero. Consequently,
\[
q_\star(a)>0
\quad\Longrightarrow\quad
\bar Q^\star(o,c_w,a)
=
\bar V^\star(o,c_w),
\]
and hence
\[
q_\star(a)>0
\quad\Longrightarrow\quad
a\in\cA^\star(o,c_w).
\]
Equivalently,
\[
\operatorname{supp}(q_\star)
\subseteq
\cA^\star(o,c_w).
\]
Applying the same reasoning to
$L_{w'}(q_\star)=0$
gives
\[
\operatorname{supp}(q_\star)
\subseteq
\cA^\star(o,c_{w'}).
\]
Therefore
\[
\operatorname{supp}(q_\star)
\subseteq
\cA^\star(o,c_w)
\cap
\cA^\star(o,c_{w'}).
\]
The classes $c_w$ and $c_{w'}$ are decision separated, so
\[
\cA^\star(o,c_w)
\cap
\cA^\star(o,c_{w'})
=
\varnothing.
\]
It would follow that
\[
\operatorname{supp}(q_\star)
=
\varnothing,
\]
which is impossible because
$q_\star$
is a probability distribution and therefore satisfies
\[
\sum_{a\in\cA}q_\star(a)=1.
\]
This contradiction proves that
\[
\varepsilon_{\mathrm{dec}}
\bigl(o;c_w,c_{w'}\bigr)
>
0.
\]

We next relate the preceding one-decision gaps to the controller's actual conditional continuation values. Let
\[
J_{\cM}(H_w)
\]
denote the controller's expected discounted return conditional on reaching the terminal decision point of $H_w$, with the discount restarted at that decision point. More explicitly, if $t_w$ is the terminal time of $H_w$, then
\[
J_{\cM}(H_w)
:=
\bE
\left[
\sum_{k=0}^{\infty}
\gamma^k
R_{t_w+k}
\middle|
H_w
\right],
\]
where the future actions are selected by the controller under consideration. Conditional on selecting action $a$ at the terminal state $(o,c_w)$, the controller's subsequent behavior is one admissible continuation strategy from that quotient state. By the definition of the optimal quotient state--action value,
\[
\bar Q^\star(o,c_w,a)
\]
is the supremal expected discounted return among all admissible continuations that first take action $a$. Therefore the controller's conditional return after first selecting $a$ cannot exceed this quantity. Averaging over its first-action distribution $p$ gives
\[
J_{\cM}(H_w)
\leq
\sum_{a\in\cA}
p(a)\bar Q^\star(o,c_w,a).
\]
Subtracting both sides from
$\bar V^\star(o,c_w)$
yields
\[
\begin{aligned}
\bar V^\star(o,c_w)
-
J_{\cM}(H_w)
&\geq
\bar V^\star(o,c_w)
-
\sum_{a\in\cA}
p(a)\bar Q^\star(o,c_w,a)\\
&=
L_w(p).
\end{aligned}
\]
The identical argument at the history $H_{w'}$ yields
\[
\bar V^\star(o,c_{w'})
-
J_{\cM}(H_{w'})
\geq
L_{w'}(p).
\]
Combining these two inequalities gives
\[
\begin{aligned}
&\max
\Bigl\{
\bar V^\star(o,c_w)-J_{\cM}(H_w),
\bar V^\star(o,c_{w'})-J_{\cM}(H_{w'})
\Bigr\}\\
&\quad\geq
\max
\bigl\{
L_w(p),L_{w'}(p)
\bigr\}\\
&\quad\geq
\varepsilon_{\mathrm{dec}}
\bigl(o;c_w,c_{w'}\bigr).
\end{aligned}
\]
It follows that at least one of the two histories satisfies
\[
\bar V^\star(o,c_w)-J_{\cM}(H_w)
\geq
\varepsilon_{\mathrm{dec}}
\bigl(o;c_w,c_{w'}\bigr)
\]
or
\[
\bar V^\star(o,c_{w'})
-
J_{\cM}(H_{w'})
\geq
\varepsilon_{\mathrm{dec}}
\bigl(o;c_w,c_{w'}\bigr),
\]
respectively. Thus at least one history incurs conditional optimality loss at least
\[
\varepsilon_{\mathrm{dec}}
\bigl(o;c_w,c_{w'}\bigr).
\]

Finally, suppose that a terminal memory factoring through abelianization were exact for quotient-optimal continuation control on a reachable history set containing both $H_w$ and $H_{w'}$. Exactness would require the controller to attain the quotient-optimal continuation value after each of these histories, namely
\[
J_{\cM}(H_w)
=
\bar V^\star(o,c_w)
\]
and
\[
J_{\cM}(H_{w'})
=
\bar V^\star(o,c_{w'}).
\]
Both conditional optimality losses would then be zero, so
\[
\max
\Bigl\{
\bar V^\star(o,c_w)-J_{\cM}(H_w),
\bar V^\star(o,c_{w'})-J_{\cM}(H_{w'})
\Bigr\}
=
0.
\]
This contradicts
$
\max
\Bigl\{
\bar V^\star(o,c_w)-J_{\cM}(H_w),
\bar V^\star(o,c_{w'})-J_{\cM}(H_{w'})
\Bigr\}
\geq
\varepsilon_{\mathrm{dec}}
\bigl(o;c_w,c_{w'}\bigr)
>
0.
$
Therefore no terminal memory of the form
$m(w)=M(o,S(w))$
with
$S=f\circ\ab$
can be exact for quotient-optimal continuation control on any reachable history set containing both histories.
\end{proof}

\subsection{Proof of Corollary~\ref{cor:commutator_barrier_main}}

\begin{proof}
Let
\[
w=[u,v]=uvu^{-1}v^{-1}
\qquad\text{and}\qquad
w'=e.
\]
Because the abelianization map
\(\ab:F_r\to\mathbb Z^r\)
is a group homomorphism from the multiplicative group \(F_r\) to the additive group \(\mathbb Z^r\), it satisfies
\[
\ab(zz')
=
\ab(z)+\ab(z')
\qquad\text{and}\qquad
\ab(z^{-1})
=
-\ab(z)
\]
for every \(z,z'\in F_r\). Applying these identities successively to the commutator gives
\[
\begin{aligned}
\ab([u,v])
&=
\ab\bigl(uvu^{-1}v^{-1}\bigr)\\
&=
\ab(u)
+
\ab(v)
+
\ab(u^{-1})
+
\ab(v^{-1})\\
&=
\ab(u)
+
\ab(v)
-
\ab(u)
-
\ab(v)\\
&=
0.
\end{aligned}
\]
Moreover, every group homomorphism maps the identity element of its domain to the identity element of its codomain. Since the identity element of the additive group \(\mathbb Z^r\) is \(0\), we also have
\[
\ab(e)=0.
\]
Consequently,
\[
\ab([u,v])=\ab(e).
\]

Now let
\(S:F_r\to\mathcal M\)
be an arbitrary abelianized loop-memory surrogate. By Definition~\ref{def:abelianized_memory_main}, there exists a map
\(f:\mathbb Z^r\to\mathcal M\)
such that
\[
S=f\circ\ab.
\]
It follows that
\[
\begin{aligned}
S([u,v])
&=
(f\circ\ab)([u,v])\\
&=
f\bigl(\ab([u,v])\bigr)\\
&=
f(0)\\
&=
f\bigl(\ab(e)\bigr)\\
&=
(f\circ\ab)(e)\\
&=
S(e).
\end{aligned}
\]
Because \(c_0\) is reachable and every generator and inverse generator in the executable reversible loop alphabet is realized by an executable directed closed walk based at \(o\), the word \([u,v]\) determines a reachable loop history starting from \((o,c_0)\). The identity word \(e\) determines the corresponding reachable empty-loop history. Both histories terminate at the same visible observation \(o\). Therefore, for any terminal controller memory of the form
\[
m(z)=M\bigl(o,S(z)\bigr),
\]
we obtain
\[
\begin{aligned}
m([u,v])
&=
M\bigl(o,S([u,v])\bigr)\\
&=
M\bigl(o,S(e)\bigr)\\
&=
m(e).
\end{aligned}
\]
Thus the controller receives exactly the same terminal observation and terminal memory after the commutator history and after the empty-loop history, and hence it must use the same action distribution after both histories.

It remains to identify the canonical quotient states reached by these two histories. Since
\[
\rho_o:F_r\to\Sym(\cC_o)
\]
is a group homomorphism, it maps the identity word \(e\) to the identity permutation on \(\cC_o\). Hence
\[
\rho_o(e)=\Id_{\cC_o},
\]
and therefore
\[
\begin{aligned}
\rho_o(e)(c_0)
&=
\Id_{\cC_o}(c_0)\\
&=
c_0.
\end{aligned}
\]
Accordingly, the terminal quotient classes of the two histories are
\[
c_{[u,v]}
=
\rho_o([u,v])(c_0)
\]
and
\[
c_e
=
\rho_o(e)(c_0)
=
c_0.
\]
The hypothesis of the corollary states precisely that
\[
\rho_o([u,v])(c_0)
\quad\text{is decision separated from}\quad
c_0.
\]
Equivalently,
\[
\cA^\star
\bigl(o,\rho_o([u,v])(c_0)\bigr)
\cap
\cA^\star(o,c_0)
=
\varnothing.
\]
We have therefore verified all hypotheses of Theorem~\ref{thm:abelian_barrier_main} with
\[
w=[u,v]
\qquad\text{and}\qquad
w'=e.
\]
That theorem implies that every controller whose terminal memory factors through abelianization must incur conditional optimality loss on at least one of the two histories of at least
\[
\varepsilon_{\mathrm{dec}}
\bigl(
o;
\rho_o([u,v])(c_0),
c_0
\bigr).
\]
Because the two terminal quotient classes are decision separated, Definition~\ref{def:decision_separated_main} gives
\[
\varepsilon_{\mathrm{dec}}
\bigl(
o;
\rho_o([u,v])(c_0),
c_0
\bigr)
>
0.
\]
If the abelianized memory were exact at both resulting quotient states, then the controller would attain the quotient-optimal continuation value after both histories, so its conditional optimality loss would be zero after each history. This would contradict the strictly positive lower bound above. Hence no abelianized loop-memory surrogate can be exact at both
\[
\bigl(o,\rho_o([u,v])(c_0)\bigr)
\qquad\text{and}\qquad
(o,c_0).
\]
\end{proof}

\subsection{Proof of Corollary~\ref{cor:additive_barrier_main}}

\begin{proof}
Let $\mathcal V$ denote the additive space in which the vectors
$v_1,\ldots,v_r$ take values. Since the abelianization map records the signed exponent sums of the free generators, for every word $w\in F_r$ we have
\[
\ab(w)
=
\bigl(N_1(w),\ldots,N_r(w)\bigr)
\in
\mathbb Z^r.
\]
Define a map
\[
\Phi:\mathbb Z^r\longrightarrow\mathcal V
\]
by
\[
\Phi(n_1,\ldots,n_r)
:=
\sum_{j=1}^{r}n_jv_j.
\]
This map is well defined. Moreover, if
$n=(n_1,\ldots,n_r)$
and
$n'=(n'_1,\ldots,n'_r)$
are arbitrary elements of $\mathbb Z^r$, then
\[
\begin{aligned}
\Phi(n+n')
&=
\Phi(n_1+n'_1,\ldots,n_r+n'_r)\\
&=
\sum_{j=1}^{r}(n_j+n'_j)v_j\\
&=
\sum_{j=1}^{r}n_jv_j
+
\sum_{j=1}^{r}n'_jv_j\\
&=
\Phi(n)+\Phi(n').
\end{aligned}
\]
Thus $\Phi$ is a homomorphism from the additive group $\mathbb Z^r$ into the additive group underlying $\mathcal V$. Injectivity of $\Phi$ is neither assumed nor needed. Indeed, if two distinct vectors in $\mathbb Z^r$ are mapped to the same element of $\mathcal V$, then the count-additive statistic discards even more information than abelianization; such additional collisions cannot recover the ordering information that abelianization has already removed.

For every $w\in F_r$, substituting the coordinates of $\ab(w)$ into the definition of $\Phi$ gives
\[
\begin{aligned}
(\Phi\circ\ab)(w)
&=
\Phi\bigl(\ab(w)\bigr)\\
&=
\Phi\bigl(N_1(w),\ldots,N_r(w)\bigr)\\
&=
\sum_{j=1}^{r}N_j(w)v_j\\
&=
A(w).
\end{aligned}
\]
Therefore
\[
A=\Phi\circ\ab.
\]
By assumption, the readout of the encoder depends on the loop word $w$ only through the value of $A(w)$. Consequently, there exists a map
\[
\psi:\mathcal V\longrightarrow\mathcal M
\]
such that the resulting loop-memory surrogate satisfies
\[
S(w)
=
\psi\bigl(A(w)\bigr)
\qquad
\text{for every }w\in F_r.
\]
Combining this representation with the identity
$A=\Phi\circ\ab$
yields
\[
\begin{aligned}
S(w)
&=
\psi\bigl(A(w)\bigr)\\
&=
\psi\bigl((\Phi\circ\ab)(w)\bigr)\\
&=
(\psi\circ\Phi)\bigl(\ab(w)\bigr).
\end{aligned}
\]
Define
\[
f
:=
\psi\circ\Phi
:
\mathbb Z^r
\longrightarrow
\mathcal M.
\]
Then, for every $w\in F_r$,
\[
S(w)
=
f\bigl(\ab(w)\bigr),
\]
and hence
\[
S=f\circ\ab.
\]
It follows directly from Definition~\ref{def:abelianized_memory_main} that $S$ is an abelianized loop-memory surrogate.

The same conclusion can also be verified directly from equality of signed generator counts. Let $w,w'\in F_r$ satisfy
\[
\ab(w)=\ab(w').
\]
Since equality in $\mathbb Z^r$ is coordinatewise and
$
\ab(w)
=
\bigl(N_1(w),\ldots,N_r(w)\bigr),
\qquad
\ab(w')
=
\bigl(N_1(w'),\ldots,N_r(w')\bigr),
$
we obtain
\[
N_j(w)=N_j(w')
\qquad
\text{for every }j\in\{1,\ldots,r\}.
\]
Therefore
\[
\begin{aligned}
A(w)
&=
\sum_{j=1}^{r}N_j(w)v_j\\
&=
\sum_{j=1}^{r}N_j(w')v_j\\
&=
A(w').
\end{aligned}
\]
Because the readout depends only on the value of the count-additive statistic, this equality further implies
\[
\begin{aligned}
S(w)
&=
\psi\bigl(A(w)\bigr)\\
&=
\psi\bigl(A(w')\bigr)\\
&=
S(w').
\end{aligned}
\]
Thus no readout applied only after $A(w)$ can distinguish two loop histories having the same abelianization.

Now fix a reachable class
$c_0\in\cC_o$
and an executable reversible loop alphabet, and suppose that
$w,w'\in F_r$
satisfy
$
\ab(w)=\ab(w'),
\qquad
c_w=\rho_o(w)(c_0),
\qquad
c_{w'}=\rho_o(w')(c_0),
$
where $c_w$ and $c_{w'}$ are decision separated. Since the loop-memory surrogate has the form
\[
S=f\circ\ab,
\]
all of the hypotheses of Theorem~\ref{thm:abelian_barrier_main} concerning the memory representation are satisfied. More explicitly, for any terminal controller memory of the form
\[
m(z)
=
M\bigl(o,S(z)\bigr),
\]
the equality of the two abelianizations gives
\[
\begin{aligned}
m(w)
&=
M\bigl(o,S(w)\bigr)\\
&=
M\bigl(o,f(\ab(w))\bigr)\\
&=
M\bigl(o,f(\ab(w'))\bigr)\\
&=
M\bigl(o,S(w')\bigr)\\
&=
m(w').
\end{aligned}
\]
Both loop histories terminate at the same base observation $o$, and the controller therefore receives the same terminal observation and the same terminal memory after both histories. It must consequently use the same action distribution after the terminal quotient states
\[
(o,c_w)
=
\bigl(o,\rho_o(w)(c_0)\bigr)
\]
and
\[
(o,c_{w'})
=
\bigl(o,\rho_o(w')(c_0)\bigr).
\]
Since these quotient classes are decision separated, Theorem~\ref{thm:abelian_barrier_main} implies that at least one of the two histories incurs conditional optimality loss at least
\[
\varepsilon_{\mathrm{dec}}
\bigl(o;c_w,c_{w'}\bigr).
\]
By Definition~\ref{def:decision_separated_main}, decision separation further implies
\[
\varepsilon_{\mathrm{dec}}
\bigl(o;c_w,c_{w'}\bigr)
>
0.
\]
If the count-additive encoder were exact for quotient-optimal continuation control on a reachable history set containing both histories, then the conditional optimality loss after each history would be zero. This would contradict the strictly positive lower bound above. Hence every encoder of the form
\[
A(w)
=
\sum_{j=1}^{r}N_j(w)v_j
\]
whose readout depends only on $A(w)$ is subject to the barrier of Theorem~\ref{thm:abelian_barrier_main} whenever the theorem's equal-abelianization and decision-separation conditions hold.
\end{proof}

\subsection{Proof of Corollary~\ref{cor:decision_lower_bound_main}}

\begin{proof}
For each $c\in\cC_o$, choose a positive-probability history $H_c$ that ends at
observation $o$ with quotient class $c$, and let $F(H_c)\in\cM$ be the memory
state induced by the controller. Suppose, toward a contradiction, that two
distinct classes $c,c'$ satisfy
\[
F(H_c)=F(H_{c'}).
\]
Because both histories end at the same observation, the controller uses the
same action distribution
\[
p(a):=\pi_{\cM}(a\mid o,F(H_c))
=
\pi_{\cM}(a\mid o,F(H_{c'}))
\]
after both histories.

Let $V_{\mathrm{ctrl}}(H_c)$ denote the controller's conditional continuation
value after $H_c$. Conditional on its first action, the controller's subsequent
behavior is only one admissible continuation policy, whereas
$\bar Q^\star(o,c,a)$ uses an optimal continuation. Hence
\[
V_{\mathrm{ctrl}}(H_c)
\leq
\sum_{a\in\cA}p(a)\bar Q^\star(o,c,a),
\]
and analogously
\[
V_{\mathrm{ctrl}}(H_{c'})
\leq
\sum_{a\in\cA}p(a)\bar Q^\star(o,c',a).
\]
By Definition~\ref{def:decision_separated_main}, pairwise decision separation
implies
$
\max\!\{
\bar V^\star(o,c)-\sum_a p(a)\bar Q^\star(o,c,a),
\bar V^\star(o,c')-\sum_a p(a)\bar Q^\star(o,c',a)
\}
\geq
\varepsilon_{\mathrm{dec}}(o;c,c')
>
0.
$
Therefore the controller's continuation value is strictly below the optimum
after at least one of $H_c$ and $H_{c'}$, contradicting the assumed exactness
after every positive-probability history. Thus the memory states
$\{F(H_c):c\in\cC_o\}$ are pairwise distinct, and
$|\cM|\geq|\cC_o|$. If the hypotheses hold at an observation $o^\star$ with
$|\cC_{o^\star}|=m_{\mathrm{exact}}^\star$, the same inequality gives
$|\cM|\geq m_{\mathrm{exact}}^\star$.
\end{proof}

\subsection{Proof of Proposition~\ref{prop:min_nonabelian_main}}

\begin{proof}
Because there is only one observation and
\(P_O(o\mid o,a)=P_O(o\mid o,b)=1\), every finite action sequence is an executable directed closed walk based at \(o\). Let
\[
\alpha:=\sigma_a=(12)
\qquad\text{and}\qquad
\beta:=\sigma_b=(123).
\]
The reward vector of a layer is the vector of its immediate rewards under the two available actions. For the three layers these vectors are
\[
\bigl(R(o,1,a),R(o,1,b)\bigr)=(1,0),
\]
\[
\bigl(R(o,2,a),R(o,2,b)\bigr)=(0,1),
\]
and
\[
\bigl(R(o,3,a),R(o,3,b)\bigr)=(0,0).
\]
They are pairwise distinct. Hence two layers \(i,j\in\{1,2,3\}\) have equal rewards under every action if and only if \(i=j\). It follows that the reward partition appearing in Lemma~\ref{lem:stable_fp_main} is
\[
\Pi_o^{(0)}
=
\bigl\{\{1\},\{2\},\{3\}\bigr\}.
\]
We next verify that this partition is already stable. If
\(i\equiv_o^{\mathcal T(\Pi^{(0)})}j\), then the first condition in Definition~\ref{def:stable_family_main} requires
\[
R(o,i,a)=R(o,j,a)
\qquad\text{and}\qquad
R(o,i,b)=R(o,j,b).
\]
The pairwise distinctness of the three reward vectors then implies \(i=j\). Conversely, every layer is equivalent to itself under
\(\mathcal T(\Pi^{(0)})\), because equality of rewards and equality of successor blocks are reflexive. Therefore
\[
\mathcal T(\Pi^{(0)})
=
\Pi^{(0)}.
\]
Thus \(\Pi^{(0)}\) is a fixed point of the stability operator. Since the refinement sequence defining the stable quotient begins at \(\Pi^{(0)}\), its fixed point is
\[
\Pi_o^\star
=
\Pi_o^{(0)}
=
\bigl\{\{1\},\{2\},\{3\}\bigr\}.
\]
Consequently, the stable quotient has exactly the three singleton classes
\[
c_1:=\{1\},
\qquad
c_2:=\{2\},
\qquad
c_3:=\{3\}.
\]
Because every stable class is a singleton, the quotient transports induced by actions \(a\) and \(b\) are identified with the corresponding raw permutations. More precisely, Theorem~\ref{thm:markovization_main} gives
\[
\tau_a(c_i)
=
c_{\alpha(i)}
\qquad\text{and}\qquad
\tau_b(c_i)
=
c_{\beta(i)}
\]
for every \(i\in\{1,2,3\}\). Hence, after identifying \(c_i\) with \(i\),
\[
\tau_a=(12)=\alpha
\qquad\text{and}\qquad
\tau_b=(123)=\beta.
\]
Let
\[
G:=\langle\tau_a,\tau_b\rangle
\leq
\Sym(\{c_1,c_2,c_3\})
\cong S_3.
\]
The element \(\tau_a\) has order \(2\), while \(\tau_b\) has order \(3\). By Lagrange's theorem, both \(2\) and \(3\) divide \(|G|\), and therefore \(6\) divides \(|G|\). On the other hand, \(G\) is a subgroup of \(S_3\), so \(|G|\) divides \(|S_3|=6\). Hence
\[
|G|=6,
\]
and therefore
\[
G=S_3.
\]
This proves that the loop transports of the stable quotient generate the nonabelian group \(S_3\).

We now construct an executable reversible loop alphabet in the sense of Definition~\ref{def:reversible_loop_alphabet_main}. Let \(\ell_1^+\) and \(\ell_1^-\) both be the one-edge closed walk labeled by action \(a\), and let \(\ell_2^+\) be the one-edge closed walk labeled by action \(b\), while \(\ell_2^-\) is the two-edge closed walk whose consecutive action labels are \(b,b\). Since
\[
\alpha^{-1}=\alpha
\]
and
\[
\beta^{-1}=\beta^2=\beta\circ\beta,
\]
the corresponding quotient transports satisfy
\[
\tau_{\ell_1^-}
=
\alpha
=
\alpha^{-1}
=
\bigl(\tau_{\ell_1^+}\bigr)^{-1}
\]
and
\[
\tau_{\ell_2^-}
=
\beta\circ\beta
=
\beta^2
=
\beta^{-1}
=
\bigl(\tau_{\ell_2^+}\bigr)^{-1}.
\]
Thus
\(\{\ell_1^+,\ell_1^-,\ell_2^+,\ell_2^-\}\)
is an executable reversible loop alphabet. Let
\(F_2=\langle x_1,x_2\rangle\)
be the associated free group, so that
$
\rho_o(x_1)=\alpha,
\qquad
\rho_o(x_2)=\beta,
\qquad
\rho_o(x_1^{-1})=\alpha^{-1},
\qquad
\rho_o(x_2^{-1})=\beta^{-1}.
$
For the commutator
\[
[x_1,x_2]
=
x_1x_2x_1^{-1}x_2^{-1},
\]
the homomorphism convention
\(\rho_o(uv)=\rho_o(u)\circ\rho_o(v)\)
gives
\[
\rho_o([x_1,x_2])
=
\alpha\circ\beta\circ\alpha^{-1}\circ\beta^{-1}.
\]
Set
\[
\kappa
:=
\alpha\circ\beta\circ\alpha^{-1}\circ\beta^{-1}.
\]
Using
\(\alpha^{-1}=\alpha=(12)\)
and
\(\beta^{-1}=(132)\), and remembering that the rightmost permutation acts first, we obtain
\[
\begin{aligned}
\kappa(1)
&=
\alpha\!\left(
\beta\!\left(
\alpha^{-1}\!\left(
\beta^{-1}(1)
\right)
\right)
\right)\\
&=
\alpha\!\left(
\beta\!\left(
\alpha^{-1}(3)
\right)
\right)\\
&=
\alpha\!\left(
\beta(3)
\right)\\
&=
\alpha(1)\\
&=2,
\end{aligned}
\]
\[
\begin{aligned}
\kappa(2)
&=
\alpha\!\left(
\beta\!\left(
\alpha^{-1}\!\left(
\beta^{-1}(2)
\right)
\right)
\right)\\
&=
\alpha\!\left(
\beta\!\left(
\alpha^{-1}(1)
\right)
\right)\\
&=
\alpha\!\left(
\beta(2)
\right)\\
&=
\alpha(3)\\
&=3,
\end{aligned}
\]
and
\[
\begin{aligned}
\kappa(3)
&=
\alpha\!\left(
\beta\!\left(
\alpha^{-1}\!\left(
\beta^{-1}(3)
\right)
\right)
\right)\\
&=
\alpha\!\left(
\beta\!\left(
\alpha^{-1}(2)
\right)
\right)\\
&=
\alpha\!\left(
\beta(1)
\right)\\
&=
\alpha(2)\\
&=1.
\end{aligned}
\]
Therefore
\[
\rho_o([x_1,x_2])
=
\kappa
=
(123)
=
\beta.
\]
In particular,
\[
\rho_o([x_1,x_2])(c_1)
=
c_2.
\]
This commutator is executable using only the original actions. Indeed, the inverse letter \(x_1^{-1}\) is implemented by one use of action \(a\), and the inverse letter \(x_2^{-1}\) is implemented by two consecutive uses of action \(b\). Because the rightmost factor in the commutator transport acts first, the chronological action sequence
\[
b,b,a,b,a
\]
implements \([x_1,x_2]\). If \(\omega_{\mathrm{com}}\) denotes this directed closed walk, then Definition~\ref{def:directed_transport_main} gives
\[
\begin{aligned}
\tau_{\omega_{\mathrm{com}}}
&=
\alpha\circ\beta\circ\alpha\circ\beta\circ\beta\\
&=
\alpha\circ\beta\circ\alpha^{-1}\circ\beta^{-1}\\
&=
\rho_o([x_1,x_2]).
\end{aligned}
\]

We also verify that \(c_1\) is reachable, as required by Corollary~\ref{cor:commutator_barrier_main}. Since \(\rho_0\) is a probability distribution on \(\{o\}\times\{1,2,3\}\), there exists at least one layer \(i_0\in\{1,2,3\}\) such that
\[
\rho_0(o,i_0)>0.
\]
The permutation \(\beta=(123)\) cycles through all three layers. Hence there exists
\(k\in\{0,1,2\}\)
such that
\[
\beta^k(i_0)=1.
\]
Executing action \(b\) exactly \(k\) times has visible probability
\[
\prod_{t=1}^{k}P_O(o\mid o,b)=1
\]
and transports layer \(i_0\) to layer \(1\). It follows that the latent state \((o,1)\), and therefore the stable quotient class \(c_1\), is reachable with positive probability. Starting from this reachable class, the action sequence implementing the commutator is also executable with probability one.

It remains to prove the asserted decision separation. Write
\[
\bar V_i^\star
:=
\bar V^\star(o,c_i),
\qquad
i\in\{1,2,3\}.
\]
Every one-step reward in this HCDP lies in the interval \([0,1]\). Therefore, under every admissible policy and from every quotient state, the discounted return lies between
\(0\) and
\(\sum_{t=0}^{\infty}\gamma^t\). Taking the supremum over policies gives
\[
0
\leq
\bar V_i^\star
\leq
\sum_{t=0}^{\infty}\gamma^t
=
\frac{1}{1-\gamma}
\qquad
\text{for every }i\in\{1,2,3\}.
\]
At class \(c_1\), both actions have the same successor class, because
\[
\alpha(1)=2
\qquad\text{and}\qquad
\beta(1)=2.
\]
Consequently, the optimal quotient state--action values satisfy
\[
\bar Q^\star(o,c_1,a)
=
1+\gamma\bar V_2^\star
\]
and
\[
\bar Q^\star(o,c_1,b)
=
0+\gamma\bar V_2^\star.
\]
Subtracting the second equality from the first yields
\[
\begin{aligned}
\bar Q^\star(o,c_1,a)
-
\bar Q^\star(o,c_1,b)
&=
\bigl(1+\gamma\bar V_2^\star\bigr)
-
\bigl(\gamma\bar V_2^\star\bigr)\\
&=1\\
&>0.
\end{aligned}
\]
Thus action \(a\) is uniquely optimal at \(c_1\), and hence
\[
\cA^\star(o,c_1)=\{a\}.
\]
At class \(c_2\), action \(a\) transports the class to \(c_1\), whereas action \(b\) transports it to \(c_3\), because
\[
\alpha(2)=1
\qquad\text{and}\qquad
\beta(2)=3.
\]
The corresponding optimal quotient state--action values are therefore
\[
\bar Q^\star(o,c_2,a)
=
0+\gamma\bar V_1^\star
\]
and
\[
\bar Q^\star(o,c_2,b)
=
1+\gamma\bar V_3^\star.
\]
Their difference satisfies
\[
\begin{aligned}
\bar Q^\star(o,c_2,b)
-
\bar Q^\star(o,c_2,a)
&=
1+\gamma\bar V_3^\star
-
\gamma\bar V_1^\star\\
&\geq
1
-
\gamma\bar V_1^\star\\
&\geq
1
-
\frac{\gamma}{1-\gamma}\\
&=
\frac{1-2\gamma}{1-\gamma}.
\end{aligned}
\]
The assumption
\(0<\gamma<1/2\)
implies
\[
1-2\gamma>0
\qquad\text{and}\qquad
1-\gamma>0,
\]
so
\[
\frac{1-2\gamma}{1-\gamma}>0.
\]
It follows that
\[
\bar Q^\star(o,c_2,b)
>
\bar Q^\star(o,c_2,a).
\]
Thus action \(b\) is uniquely optimal at \(c_2\), and hence
\[
\cA^\star(o,c_2)=\{b\}.
\]
Therefore
\[
\cA^\star(o,c_1)
\cap
\cA^\star(o,c_2)
=
\{a\}\cap\{b\}
=
\varnothing,
\]
which means, by Definition~\ref{def:decision_separated_main}, that \(c_1\) and \(c_2\) are decision separated.

We have exhibited an executable reversible loop alphabet, a reachable class \(c_0=c_1\), and the words
\[
u=x_1
\qquad\text{and}\qquad
v=x_2
\]
such that
\[
\rho_o([u,v])(c_0)
=
\rho_o([x_1,x_2])(c_1)
=
c_2,
\]
while \(c_2\) is decision separated from \(c_1\). These are exactly the hypotheses of Corollary~\ref{cor:commutator_barrier_main}. Hence the commutator and empty-loop histories have the same abelianized memory but terminate at decision-separated quotient states, completing the proof.
\end{proof}

\subsection{Proof of Proposition~\ref{prop:ordered_tracking_nonabelian_main}}

\begin{proof}
Fix a base observation $o\in\cO$, an executable reversible loop alphabet
$\{\ell_j^+,\ell_j^-\}_{j=1}^r$ based at $o$, and a known initial stable class
$c\in\cC_o$. Exact quotient tracking stores the current stable class and, after every realized feasible observation transition
$(o_t,a_t,o_{t+1})$, applies the deterministic update
\[
c_{t+1}
=
\tau_{o_t,a_t,o_{t+1}}(c_t).
\]
By Lemma~\ref{lem:quotient_memory_update_main}, if the stored class is correct before a feasible transition, then the class produced by this update is the true stable class after that transition. Consequently, once the tracker is initialized with the correct class, its correctness is preserved along every finite executable directed path.

Consider first one letter
\[
z\in\{x_1^{\pm1},\ldots,x_r^{\pm1}\}.
\]
If $z=x_j$, its executable realization is the directed closed walk $\ell_j^+$, whereas if $z=x_j^{-1}$, its executable realization is the directed closed walk $\ell_j^-$. Denote the corresponding closed walk by
\[
\ell(z)=e_1e_2\cdots e_m,
\]
where every $e_q$ is a feasible directed edge and the walk begins and ends at $o$. Starting from an arbitrary known class $d\in\cC_o$, repeated application of the exact one-edge update gives, after the first edge,
\[
d^{(1)}
=
\tau_{e_1}(d),
\]
and, after the second edge,
\[
\begin{aligned}
d^{(2)}
&=
\tau_{e_2}(d^{(1)})\\
&=
\tau_{e_2}\bigl(\tau_{e_1}(d)\bigr)\\
&=
(\tau_{e_2}\circ\tau_{e_1})(d).
\end{aligned}
\]
Continuing recursively through all $m$ edges yields
\[
\begin{aligned}
d^{(m)}
&=
\tau_{e_m}(d^{(m-1)})\\
&=
\tau_{e_m}\circ\tau_{e_{m-1}}\circ\cdots\circ\tau_{e_1}(d)\\
&=
\tau_{\ell(z)}(d).
\end{aligned}
\]
The defining assignment of the free-group transport representation gives
\[
\tau_{\ell(z)}
=
\rho_o(z).
\]
Indeed, when $z=x_j$,
\[
\tau_{\ell(z)}
=
\tau_{\ell_j^+}
=
\rho_o(x_j),
\]
whereas when $z=x_j^{-1}$,
\[
\tau_{\ell(z)}
=
\tau_{\ell_j^-}
=
\rho_o(x_j^{-1}).
\]
Therefore, after completing the executable loop associated with a single letter $z$, exact quotient tracking transforms every known class $d\in\cC_o$ according to
\[
d
\longmapsto
\rho_o(z)(d).
\]

Now let $w\in F_r$. If $w=e$ is the empty word, no loop is executed, and therefore exact tracking leaves the initial class unchanged. Since $\rho_o$ is a group homomorphism, it maps the identity of $F_r$ to the identity permutation of $\cC_o$, so
\[
\rho_o(e)
=
\Id_{\cC_o}.
\]
Consequently, the terminal tracked class for the empty word is
\[
c
=
\Id_{\cC_o}(c)
=
\rho_o(e)(c).
\]
Thus the asserted identity holds when $w=e$.

Suppose next that $w$ has a reduced expression
\[
w
=
z_1z_2\cdots z_k,
\qquad
z_q\in\{x_1^{\pm1},\ldots,x_r^{\pm1}\},
\qquad
k\geq1.
\]
The convention in Section~\ref{sec:nonabelian} is
\[
\rho_o(uv)
=
\rho_o(u)\circ\rho_o(v),
\]
so the rightmost factor acts first. Accordingly, the executable realization of the written word
$w=z_1\cdots z_k$
performs the loop associated with $z_k$ first, then the loop associated with $z_{k-1}$, and continues in this order until the loop associated with $z_1$ is performed last. Every one of these loops begins and ends at $o$, so their concatenation is executable, and the tracked class again belongs to $\cC_o$ after each completed letter.

Let
\[
d_0:=c,
\]
and, for every $t\in\{1,\ldots,k\}$, define
\[
d_t
:=
\rho_o\bigl(z_{k-t+1}\bigr)(d_{t-1}).
\]
We show that $d_t$ is exactly the stable class stored after the first $t$ loops in this chronological execution. For $t=0$, the statement follows from the initialization
\[
d_0=c.
\]
Suppose that the statement holds after $t-1$ completed loops. The next executed letter is
\[
z_{k-t+1}.
\]
By the one-letter calculation above, completing the executable loop associated with this letter transforms the correctly stored class $d_{t-1}$ into
\[
\rho_o\bigl(z_{k-t+1}\bigr)(d_{t-1})
=
d_t.
\]
Lemma~\ref{lem:quotient_memory_update_main} guarantees that this recursively produced class remains the true stable class throughout the constituent edges of the loop. Hence the assertion holds after $t$ loops. By induction, it holds for every
$t\in\{0,\ldots,k\}$.

At the terminal index, repeated expansion of the recursion gives
\[
\begin{aligned}
d_k
&=
\rho_o(z_1)(d_{k-1})\\
&=
\rho_o(z_1)\Bigl(\rho_o(z_2)(d_{k-2})\Bigr)\\
&=
\rho_o(z_1)\Bigl(
\rho_o(z_2)\bigl(
\cdots
\rho_o(z_k)(c)
\cdots
\bigr)
\Bigr)\\
&=
\bigl(
\rho_o(z_1)
\circ
\rho_o(z_2)
\circ
\cdots
\circ
\rho_o(z_k)
\bigr)(c).
\end{aligned}
\]
Repeated use of the homomorphism identity gives
\[
\begin{aligned}
\rho_o(w)
&=
\rho_o(z_1z_2\cdots z_k)\\
&=
\rho_o(z_1)\circ\rho_o(z_2\cdots z_k)\\
&=
\rho_o(z_1)\circ\rho_o(z_2)\circ\rho_o(z_3\cdots z_k)\\
&=
\cdots\\
&=
\rho_o(z_1)
\circ
\rho_o(z_2)
\circ
\cdots
\circ
\rho_o(z_k).
\end{aligned}
\]
Combining the preceding two displays yields
\[
d_k
=
\rho_o(w)(c).
\]
Therefore exact quotient tracking maps the loop word $w$ to the terminal stable class
\[
c_w
:=
\rho_o(w)(c).
\]

This terminal class is well defined as a function of the free-group element $w$, rather than of a particular spelling of that element. To see this explicitly, insertion or deletion of an adjacent inverse pair does not change the terminal transport, because
\[
\begin{aligned}
\rho_o(x_j)\circ\rho_o(x_j^{-1})
&=
\rho_o(x_jx_j^{-1})\\
&=
\rho_o(e)\\
&=
\Id_{\cC_o},
\end{aligned}
\]
and similarly
\[
\begin{aligned}
\rho_o(x_j^{-1})\circ\rho_o(x_j)
&=
\rho_o(x_j^{-1}x_j)\\
&=
\rho_o(e)\\
&=
\Id_{\cC_o}.
\end{aligned}
\]
Thus free reduction does not alter the recursively tracked terminal class.

Now let $w,w'\in F_r$ satisfy
\[
\rho_o(w)(c)
\neq
\rho_o(w')(c).
\]
Applying the identity proved above separately to the two executable loop histories gives
\[
c_w
=
\rho_o(w)(c)
\]
and
\[
c_{w'}
=
\rho_o(w')(c).
\]
Hence
\[
c_w
\neq
c_{w'}.
\]
Since the canonical exact quotient tracker stores the stable class itself, its two terminal memory states are different. More generally, if the stable classes are represented by an injective class encoder
\[
\eta_o:\cC_o\to\cM,
\]
then injectivity implies
\[
\eta_o(c_w)
\neq
\eta_o(c_{w'}).
\]
Therefore the two loop histories remain distinguished by the exact quotient memory. This conclusion is independent of their abelianized summaries and, in particular, remains valid when
\[
\ab(w)
=
\ab(w').
\]
The reason is that exact quotient tracking recursively composes the quotient transports in their prescribed order, whereas abelianization retains only the signed number of occurrences of each generator and discards their order.

Finally, let $u,v\in F_r$ and suppose that the commutator
\[
[u,v]
=
uvu^{-1}v^{-1}
\]
acts nontrivially on the known class $c$, so that
\[
\rho_o([u,v])(c)
\neq
c.
\]
The empty word leaves the class unchanged:
\[
\begin{aligned}
c_e
&=
\rho_o(e)(c)\\
&=
\Id_{\cC_o}(c)\\
&=
c.
\end{aligned}
\]
By contrast, exact quotient tracking after the commutator history produces
\[
c_{[u,v]}
=
\rho_o([u,v])(c).
\]
The assumed nontriviality therefore gives
\[
c_{[u,v]}
\neq
c_e.
\]
Nevertheless, because abelianization is a homomorphism from the multiplicative group $F_r$ to the additive abelian group $\mathbb Z^r$, we have
\[
\ab(z^{-1})
=
-\ab(z)
\]
for every $z\in F_r$, and consequently
\[
\begin{aligned}
\ab([u,v])
&=
\ab(uvu^{-1}v^{-1})\\
&=
\ab(u)
+
\ab(v)
+
\ab(u^{-1})
+
\ab(v^{-1})\\
&=
\ab(u)
+
\ab(v)
-
\ab(u)
-
\ab(v)\\
&=
0.
\end{aligned}
\]
Moreover,
\[
\ab(e)=0,
\]
and hence
\[
\ab([u,v])
=
\ab(e).
\]
Thus the commutator and empty-loop histories have identical abelianized summaries, but exact ordered stable-class tracking assigns them different terminal classes whenever the commutator acts nontrivially on $c$. Therefore exact quotient tracking preserves every nontrivial commutator effect on the known stable class while storing only the current finite stable class rather than the complete loop word.
\end{proof}

\subsection{Proof of Lemma~\ref{lem:history_support_path}}

\begin{proof}
For every $j\in\{0,\ldots,t\}$, let
\[
H_j
=
(o_0,a_0,o_1,\ldots,a_{j-1},o_j)
\]
denote the length-$j$ observable prefix of $H_t$. To distinguish random variables from their realized values, write $S_j$, $O_j$, and $A_j$ for the latent state, observation, and action at time $j$, and let
$
\mathsf{H}_t
:=
\{O_0=o_0,A_0=a_0,O_1=o_1,\ldots,A_{t-1}=a_{t-1},O_t=o_t\}
$
be the event that the realized observable history equals $H_t$. Thus saying that $H_t$ has positive probability means that
$\bP^{\boldsymbol{\pi}}(\mathsf{H}_t)>0$.
Fix an arbitrary latent-state sequence
\[
(s_0,s_1,\ldots,s_t)
\in
\cS^{t+1}.
\]
Because the observation map is deterministic, the chain rule and the definition of a history-dependent policy give
$
\bP^{\boldsymbol{\pi}}
\bigl(
S_0=s_0,S_1=s_1,\ldots,S_t=s_t,\mathsf{H}_t
\bigr)
\quad=
\rho_0(s_0)
\mathbf{1}\{h(s_0)=o_0\}
\prod_{j=0}^{t-1}
\Bigl[
\pi_j(a_j\mid H_j)
P(s_{j+1}\mid s_j,a_j)
\mathbf{1}\{h(s_{j+1})=o_{j+1}\}
\Bigr].
$
When $t=0$, the product is the empty product and is equal to one. The events obtained by varying $(s_0,\ldots,s_t)\in\cS^{t+1}$ are pairwise disjoint and their union is $\mathsf{H}_t$. Summing over all latent-state sequences therefore gives
\[
\begin{aligned}
&\bP^{\boldsymbol{\pi}}(\mathsf{H}_t)\\
&=
\sum_{(s_0,\ldots,s_t)\in\cS^{t+1}}
\rho_0(s_0)
\mathbf{1}\{h(s_0)=o_0\}\\
&\quad\times
\prod_{j=0}^{t-1}
\Bigl[
\pi_j(a_j\mid H_j)
P(s_{j+1}\mid s_j,a_j)
\mathbf{1}\{h(s_{j+1})=o_{j+1}\}
\Bigr].
\end{aligned}
\]
The sum is finite because $\cS$ is finite, and every summand is nonnegative. Suppose first that
\[
\bP^{\boldsymbol{\pi}}(\mathsf{H}_t)>0.
\]
A finite sum of nonnegative numbers is strictly positive only if at least one summand is strictly positive. Hence there exists
\[
(s_0^\star,s_1^\star,\ldots,s_t^\star)
\in
\cS^{t+1}
\]
such that
\[
\begin{aligned}
0
&<
\rho_0(s_0^\star)
\mathbf{1}\{h(s_0^\star)=o_0\}\\
&\quad\times
\prod_{j=0}^{t-1}
\Bigl[
\pi_j(a_j\mid H_j)
P(s_{j+1}^\star\mid s_j^\star,a_j)
\mathbf{1}\{h(s_{j+1}^\star)=o_{j+1}\}
\Bigr].
\end{aligned}
\]
Every factor in this product is nonnegative. Strict positivity of the product therefore implies
\[
\rho_0(s_0^\star)>0,
\]
forces all observation-consistency indicators to be equal to one, and gives
\[
P(s_{j+1}^\star\mid s_j^\star,a_j)>0
\qquad
\text{for every }j\in\{0,\ldots,t-1\}.
\]
The indicator equalities are precisely
\[
h(s_j^\star)=o_j
\qquad
\text{for every }j\in\{0,\ldots,t\},
\]
so
\[
s_j^\star\in\cS_{o_j}
\qquad
\text{for every }j\in\{0,\ldots,t\}.
\]
For every $j\in\{0,\ldots,t-1\}$, the states
\[
s_j^\star\in\cS_{o_j}
\qquad\text{and}\qquad
s_{j+1}^\star\in\cS_{o_{j+1}}
\]
together with
\[
P(s_{j+1}^\star\mid s_j^\star,a_j)>0
\]
witness, by the definition of $E$, that
\[
(o_j,a_j,o_{j+1})\in E.
\]
Thus
\[
\omega(H_t)
=
(o_0\xrightarrow{a_0}o_1)
\cdots
(o_{t-1}\xrightarrow{a_{t-1}}o_t)
\]
is a directed support path in $X$. More importantly, the same sequence
$(s_0^\star,\ldots,s_t^\star)$ simultaneously realizes all of its edges, rather than providing unrelated witnesses edge by edge, and its initial state satisfies
$\rho_0(s_0^\star)>0$. Therefore $\omega(H_t)$ is $\rho_0$-realizable. When $t=0$, the argument states simply that there exists
$s_0^\star\in\cS_{o_0}$ with $\rho_0(s_0^\star)>0$, which is exactly $\rho_0$-realizability of the empty path based at $o_0$.

Conversely, suppose that $\omega(H_t)$ is $\rho_0$-realizable. By definition, there exist latent states
\[
(s_0^\star,s_1^\star,\ldots,s_t^\star)
\]
such that
\[
s_j^\star\in\cS_{o_j}
\qquad
\text{for every }j\in\{0,\ldots,t\},
\]
\[
\rho_0(s_0^\star)>0,
\]
and
\[
P(s_{j+1}^\star\mid s_j^\star,a_j)>0
\qquad
\text{for every }j\in\{0,\ldots,t-1\}.
\]
Since
\[
\cS_{o_j}=h^{-1}(\{o_j\}),
\]
the membership $s_j^\star\in\cS_{o_j}$ implies
\[
h(s_j^\star)=o_j
\qquad
\text{for every }j\in\{0,\ldots,t\}.
\]
Hence every observation-consistency indicator in the trajectory-probability formula is equal to one. The additional assumption of the lemma gives
\[
\pi_j(a_j\mid H_j)>0
\qquad
\text{for every }j\in\{0,\ldots,t-1\}.
\]
It follows that the summand associated with the particular compatible latent sequence
$(s_0^\star,\ldots,s_t^\star)$ is
\[
\begin{aligned}
&\rho_0(s_0^\star)
\mathbf{1}\{h(s_0^\star)=o_0\}\\
&\prod_{j=0}^{t-1}
\Bigl[
\pi_j(a_j\mid H_j)
P(s_{j+1}^\star\mid s_j^\star,a_j)
\mathbf{1}\{h(s_{j+1}^\star)=o_{j+1}\}
\Bigr]\\
&\quad=
\rho_0(s_0^\star)
\prod_{j=0}^{t-1}
\Bigl[
\pi_j(a_j\mid H_j)
P(s_{j+1}^\star\mid s_j^\star,a_j)
\Bigr]\\
&\quad>0.
\end{aligned}
\]
Since $\bP^{\boldsymbol{\pi}}(\mathsf{H}_t)$ is the sum of this strictly positive term and other nonnegative terms, we conclude that
\[
\bP^{\boldsymbol{\pi}}(\mathsf{H}_t)>0.
\]
This also covers $t=0$, because in that case the displayed quantity reduces to
$\rho_0(s_0^\star)>0$.

It remains to justify why membership of every edge in $E$ does not by itself imply latent realizability of the whole path. The definition of $E$ contains an existential quantifier separately for each edge, so the terminal latent-state witness chosen for one edge need not coincide with the initial latent-state witness chosen for the next edge. Consider, for example,
\[
\cO=\{o_0,o_1,o_2\},
\qquad
\cA=\{a,b\},
\qquad
\cS=\{s_0,p,q,s_2\},
\]
with deterministic observation map
\[
h(s_0)=o_0,
\qquad
h(p)=h(q)=o_1,
\qquad
h(s_2)=o_2.
\]
Choose a transition kernel satisfying
$
P(p\mid s_0,a)=1,
\qquad
P(p\mid p,b)=1,
\qquad
P(s_2\mid q,b)=1,
$
and let every remaining unspecified state--action pair be a deterministic self-loop. The first transition shows that
\[
(o_0,a,o_1)\in E,
\]
with witnesses $s_0\in\cS_{o_0}$ and $p\in\cS_{o_1}$, while the third transition shows that
\[
(o_1,b,o_2)\in E,
\]
with witnesses $q\in\cS_{o_1}$ and $s_2\in\cS_{o_2}$. Therefore
\[
(o_0\xrightarrow{a}o_1)
(o_1\xrightarrow{b}o_2)
\]
is a directed support path in $X$. Nevertheless, any latent realization of the first edge must arrive at $p$, because $s_0$ is the only latent state over $o_0$ and
$P(p\mid s_0,a)=1$. From $p$, action $b$ returns to $p$ with probability one, since
\[
P(p\mid p,b)=1.
\]
It therefore has zero probability of reaching any latent state in $\cS_{o_2}$. The state $q$ that witnesses the second edge cannot be used as the terminal state of the first edge, so the two edge-wise witnesses cannot be joined into a single latent-state sequence. Hence this directed support path is not latent-realizable, proving the final assertion.
\end{proof}

\subsection{Proof of Lemma~\ref{lem:finite_memory_bellman_eval_app}}

\begin{proof}
Let
$
\mathcal{B}(\cS\times\cM)
:=
\left\{
V:\cS\times\cM\to\mathbb{R}
:
\|V\|_\infty<\infty
\right\},
\qquad
\|V\|_\infty
:=
\max_{(s,m)\in\cS\times\cM}|V(s,m)|.
$
Because \(\cS\) and \(\cM\) are finite, every real-valued function on \(\cS\times\cM\) is bounded, and \(\mathcal{B}(\cS\times\cM)\), equipped with the sup norm, is complete. Since the reward function is bounded, define
\[
R_{\max}
:=
\max_{(s,a)\in\cS\times\cA}|r(s,a)|
<
\infty.
\]
Fix \(V\in\mathcal{B}(\cS\times\cM)\) and \((s,m)\in\cS\times\cM\). Using the triangle inequality, nonnegativity of the action and transition probabilities, and the fact that both probability distributions sum to one, we obtain
\[
\begin{aligned}
&\bigl|(T_{\pi_{\cM},U}V)(s,m)\bigr|\\
&=
\bigl|
\sum_{a\in\cA}
\pi_{\cM}(a\mid h(s),m)
[
r(s,a)\\
&+
\gamma
\sum_{s'\in\cS}
P(s'\mid s,a)
V\bigl(s',U(m,h(s),a,h(s'))\bigr)
]
\bigr|\\
&\leq
\sum_{a\in\cA}
\pi_{\cM}(a\mid h(s),m)
\bigl[
|r(s,a)|\\
&+
\gamma
\sum_{s'\in\cS}
P(s'\mid s,a)
\left|
V\bigl(s',U(m,h(s),a,h(s'))\bigr)
\right|
\bigr]\\
&\leq
\sum_{a\in\cA}
\pi_{\cM}(a\mid h(s),m)
\left[
R_{\max}
+
\gamma
\sum_{s'\in\cS}
P(s'\mid s,a)
\|V\|_\infty
\right]\\
&=
\sum_{a\in\cA}
\pi_{\cM}(a\mid h(s),m)
\left[
R_{\max}
+
\gamma\|V\|_\infty
\right]\\
&=
R_{\max}
+
\gamma\|V\|_\infty.
\end{aligned}
\]
Taking the maximum over \((s,m)\) gives
\[
\|T_{\pi_{\cM},U}V\|_\infty
\leq
R_{\max}
+
\gamma\|V\|_\infty
<
\infty,
\]
so the operator maps bounded functions to bounded functions.

Now fix \(V,W\in\mathcal{B}(\cS\times\cM)\). The reward terms cancel after subtraction, and therefore, for every \((s,m)\in\cS\times\cM\),
\[
\begin{aligned}
&\bigl|
(T_{\pi_{\cM},U}V)(s,m)
-
(T_{\pi_{\cM},U}W)(s,m)
\bigr|\\
&\quad=
\gamma
\bigl|
\sum_{a\in\cA}
\pi_{\cM}(a\mid h(s),m)\\
&\sum_{s'\in\cS}
P(s'\mid s,a)
[
V\bigl(s',U(m,h(s),a,h(s'))\bigr)\\
&-
W\bigl(s',U(m,h(s),a,h(s'))\bigr)
]
\bigr|\\
&\quad\leq
\gamma
\sum_{a\in\cA}
\pi_{\cM}(a\mid h(s),m)\\
&\sum_{s'\in\cS}
P(s'\mid s,a)
|
V\bigl(s',U(m,h(s),a,h(s'))\bigr)\\
&-
W\bigl(s',U(m,h(s),a,h(s'))\bigr)
|\\
&\quad\leq
\gamma
\sum_{a\in\cA}
\pi_{\cM}(a\mid h(s),m)
\sum_{s'\in\cS}
P(s'\mid s,a)
\|V-W\|_\infty\\
&\quad=
\gamma\|V-W\|_\infty.
\end{aligned}
\]
Taking the maximum over \((s,m)\) yields
\[
\|T_{\pi_{\cM},U}V-T_{\pi_{\cM},U}W\|_\infty
\leq
\gamma\|V-W\|_\infty.
\]
Since \(0<\gamma<1\), the operator is a \(\gamma\)-contraction. The Banach fixed-point theorem therefore gives a unique function
\[
V_{\pi_{\cM},U}
\in
\mathcal{B}(\cS\times\cM)
\]
such that
\[
V_{\pi_{\cM},U}
=
T_{\pi_{\cM},U}V_{\pi_{\cM},U}.
\]
The fixed-point identity and the bound already proved further imply
\[
\begin{aligned}
\|V_{\pi_{\cM},U}\|_\infty
&=
\|T_{\pi_{\cM},U}V_{\pi_{\cM},U}\|_\infty\\
&\leq
R_{\max}
+
\gamma\|V_{\pi_{\cM},U}\|_\infty,
\end{aligned}
\]
so
\[
\|V_{\pi_{\cM},U}\|_\infty
\leq
\frac{R_{\max}}{1-\gamma}.
\]

We next identify this fixed point with the discounted return of the given controller. For an arbitrary pair \((s,m)\in\cS\times\cM\), initialize the augmented process at
\[
(S_0,M_0)=(s,m)
\]
and generate it according to
\[
A_t\sim\pi_{\cM}(\cdot\mid h(S_t),M_t),
\qquad
S_{t+1}\sim P(\cdot\mid S_t,A_t),
\]
\[
M_{t+1}
=
U\bigl(M_t,h(S_t),A_t,h(S_{t+1})\bigr).
\]
By Lemma~\ref{lem:memory_aug_mdp_main}, \((S_t,M_t)\) is a time-homogeneous Markov process under this fixed controller. For every integer \(N\geq0\), define
\[
V_N(s,m)
:=
\mathbb{E}_{s,m}
\left[
\sum_{t=0}^{N-1}
\gamma^t r(S_t,A_t)
\right],
\]
where the sum is empty when \(N=0\). Thus
\[
V_0(s,m)=0
\qquad
\text{for every }(s,m)\in\cS\times\cM.
\]
Fix \(N\geq0\) and \((s,m)\in\cS\times\cM\). Separating the reward at time zero and reindexing the remaining terms gives
\[
\begin{aligned}
V_{N+1}(s,m)
&=
\mathbb{E}_{s,m}
\left[
r(S_0,A_0)
+
\sum_{t=1}^{N}
\gamma^t r(S_t,A_t)
\right]\\
&=
\mathbb{E}_{s,m}
\left[
r(s,A_0)
+
\gamma
\sum_{q=0}^{N-1}
\gamma^q r(S_{q+1},A_{q+1})
\right].
\end{aligned}
\]
For any action \(a\in\cA\) and any next state \(s'\in\cS\) satisfying
\[
P(s'\mid s,a)>0,
\]
the deterministic update rule gives
\[
M_1
=
U(m,h(s),a,h(s')).
\]
By the time-homogeneous Markov property of the augmented process, conditional on
\[
S_0=s,
\qquad
M_0=m,
\qquad
A_0=a,
\qquad
S_1=s',
\]
the expected discounted reward accumulated over the next \(N\) stages, with discount restarted at time \(1\), is
\[
V_N\Bigl(
s',
U(m,h(s),a,h(s'))
\Bigr).
\]
States \(s'\) with \(P(s'\mid s,a)=0\) contribute zero to the following sum. Hence the law of total expectation gives
\[
\begin{aligned}
&V_{N+1}(s,m)\\
&=
\sum_{a\in\cA}
\pi_{\cM}(a\mid h(s),m)
\Biggl[
r(s,a)\\
&+
\gamma
\sum_{s'\in\cS}
P(s'\mid s,a)
V_N\Bigl(
s',
U(m,h(s),a,h(s'))
\Bigr)
\Biggr]\\
&=
(T_{\pi_{\cM},U}V_N)(s,m).
\end{aligned}
\]
Therefore
\[
V_{N+1}
=
T_{\pi_{\cM},U}V_N
\qquad
\text{for every }N\geq0.
\]
Since \(V_0=0\), induction gives
\[
V_N
=
T_{\pi_{\cM},U}^{N}0
\qquad
\text{for every }N\geq0.
\]
The contraction property and the fixed-point identity imply
\[
\begin{aligned}
\|V_N-V_{\pi_{\cM},U}\|_\infty
&=
\left\|
T_{\pi_{\cM},U}^{N}0
-
T_{\pi_{\cM},U}^{N}V_{\pi_{\cM},U}
\right\|_\infty\\
&\leq
\gamma^N
\|V_{\pi_{\cM},U}\|_\infty\\
&\leq
\frac{R_{\max}\gamma^N}{1-\gamma}.
\end{aligned}
\]
Hence \(V_N\) converges uniformly to \(V_{\pi_{\cM},U}\).

On the other hand, boundedness of the reward implies that the infinite discounted series converges absolutely on every trajectory. Define
\[
V_\infty(s,m)
:=
\mathbb{E}_{s,m}
\left[
\sum_{t=0}^{\infty}
\gamma^t r(S_t,A_t)
\right].
\]
For every \((s,m)\) and every \(N\geq0\),
\[
\begin{aligned}
|V_\infty(s,m)-V_N(s,m)|
&=
\left|
\mathbb{E}_{s,m}
\left[
\sum_{t=N}^{\infty}
\gamma^t r(S_t,A_t)
\right]
\right|\\
&\leq
\mathbb{E}_{s,m}
\left[
\sum_{t=N}^{\infty}
\gamma^t|r(S_t,A_t)|
\right]\\
&\leq
R_{\max}
\sum_{t=N}^{\infty}
\gamma^t\\
&=
\frac{R_{\max}\gamma^N}{1-\gamma}.
\end{aligned}
\]
Thus \(V_N\) also converges uniformly to \(V_\infty\). A sequence in a normed space has at most one limit, so
\[
V_\infty
=
V_{\pi_{\cM},U}.
\]
Equivalently, for every augmented initial state \((s,m)\),
\[
V_{\pi_{\cM},U}(s,m)
=
\mathbb{E}_{s,m}
\left[
\sum_{t=0}^{\infty}
\gamma^t r(S_t,A_t)
\right].
\]

Finally, under the original initialization, the latent state satisfies
\[
S_0\sim\rho_0,
\]
the initial observation is \(h(S_0)\), and the memory is initialized deterministically as
\[
M_0
=
\iota(h(S_0)).
\]
Lemma~\ref{lem:memory_aug_mdp_main} states that the original finite-memory controller and the augmented process have the same expected discounted return, and that the augmented initial distribution is
\[
\widetilde\rho_0(s,m)
=
\rho_0(s)
\mathbf{1}\{m=\iota(h(s))\}.
\]
Using the value interpretation just proved and averaging over this initial distribution, we obtain
\[
\begin{aligned}
J(\pi_{\cM},U,\iota)
&=
\sum_{s\in\cS}
\sum_{m\in\cM}
\widetilde\rho_0(s,m)
V_{\pi_{\cM},U}(s,m)\\
&=
\sum_{s\in\cS}
\sum_{m\in\cM}
\rho_0(s)
\mathbf{1}\{m=\iota(h(s))\}
V_{\pi_{\cM},U}(s,m)\\
&=
\sum_{s\in\cS}
\rho_0(s)
V_{\pi_{\cM},U}
\bigl(s,\iota(h(s))\bigr).
\end{aligned}
\]
Renaming the dummy variable \(s\) as \(s_0\) gives
\[
J(\pi_{\cM},U,\iota)
=
\sum_{s_0\in\cS}
\rho_0(s_0)
V_{\pi_{\cM},U}
\bigl(s_0,\iota(h(s_0))\bigr),
\]
which completes the proof.
\end{proof}

\subsection{Proof of Lemma~\ref{lem:subg_mean_app}}

\begin{proof}
Set
\[
\overline{\xi}_m
:=
\frac{1}{m}\sum_{i=1}^{m}\xi_i.
\]
Since the random variables are identically distributed and each of them has mean \(\mu\), linearity of expectation gives
\[
\begin{aligned}
\bE[\overline{\xi}_m]
&=
\bE\left[
\frac{1}{m}\sum_{i=1}^{m}\xi_i
\right]\\
&=
\frac{1}{m}\sum_{i=1}^{m}\bE[\xi_i]\\
&=
\frac{1}{m}\sum_{i=1}^{m}\mu\\
&=
\frac{m\mu}{m}\\
&=
\mu.
\end{aligned}
\]
We first derive a sub-Gaussian moment-generating-function bound for the empirical mean. Fix an arbitrary \(\lambda\in\mathbb R\). By the definition of \(\overline{\xi}_m\),
\[
\begin{aligned}
\lambda(\overline{\xi}_m-\mu)
&=
\lambda\left(
\frac{1}{m}\sum_{i=1}^{m}\xi_i-\mu
\right)\\
&=
\lambda\left(
\frac{1}{m}\sum_{i=1}^{m}\xi_i
-
\frac{1}{m}\sum_{i=1}^{m}\mu
\right)\\
&=
\frac{\lambda}{m}
\sum_{i=1}^{m}(\xi_i-\mu).
\end{aligned}
\]
Exponentiating this identity and using
\(\exp(\sum_i x_i)=\prod_i\exp(x_i)\), we obtain
\[
\begin{aligned}
\exp\!\bigl(\lambda(\overline{\xi}_m-\mu)\bigr)
&=
\exp\!\left(
\frac{\lambda}{m}
\sum_{i=1}^{m}(\xi_i-\mu)
\right)\\
&=
\prod_{i=1}^{m}
\exp\!\left(
\frac{\lambda}{m}(\xi_i-\mu)
\right).
\end{aligned}
\]
The variables \(\xi_1,\ldots,\xi_m\) are independent, and each factor in the last product is a measurable function of only the corresponding variable \(\xi_i\). Therefore,
\[
\begin{aligned}
\bE\!\left[
\exp\!\bigl(\lambda(\overline{\xi}_m-\mu)\bigr)
\right]
&=
\bE\!\left[
\prod_{i=1}^{m}
\exp\!\left(
\frac{\lambda}{m}(\xi_i-\mu)
\right)
\right]\\
&=
\prod_{i=1}^{m}
\bE\!\left[
\exp\!\left(
\frac{\lambda}{m}(\xi_i-\mu)
\right)
\right].
\end{aligned}
\]
By the definition of a \(\sigma^2\)-sub-Gaussian random variable, applied to \(\xi_i\) with the real parameter \(\lambda/m\), we have
\[
\begin{aligned}
\bE\!\left[
\exp\!\left(
\frac{\lambda}{m}(\xi_i-\mu)
\right)
\right]
&\leq
\exp\!\left(
\frac{1}{2}
\left(\frac{\lambda}{m}\right)^2
\sigma^2
\right)\\
&=
\exp\!\left(
\frac{\lambda^2\sigma^2}{2m^2}
\right)
\end{aligned}
\]
for every \(i\in\{1,\ldots,m\}\). Multiplying these \(m\) inequalities gives
\[
\begin{aligned}
\bE\!\left[
\exp\!\bigl(\lambda(\overline{\xi}_m-\mu)\bigr)
\right]
&\leq
\prod_{i=1}^{m}
\exp\!\left(
\frac{\lambda^2\sigma^2}{2m^2}
\right)\\
&=
\exp\!\left(
\sum_{i=1}^{m}
\frac{\lambda^2\sigma^2}{2m^2}
\right)\\
&=
\exp\!\left(
\frac{m\lambda^2\sigma^2}{2m^2}
\right)\\
&=
\exp\!\left(
\frac{\lambda^2\sigma^2}{2m}
\right).
\end{aligned}
\]
Thus the centered empirical mean satisfies
$
\bE\!\left[
\exp\!\bigl(\lambda(\overline{\xi}_m-\mu)\bigr)
\right]
\leq
\exp\!\left(
\frac{\lambda^2(\sigma^2/m)}{2}
\right)
\qquad
\text{for every }\lambda\in\mathbb R,
$
so \(\overline{\xi}_m\) is \((\sigma^2/m)\)-sub-Gaussian.

We next bound the upper tail. Fix \(\varepsilon>0\) and an arbitrary \(\lambda>0\). Because the exponential function is strictly increasing,
\[
\left\{
\overline{\xi}_m-\mu\geq\varepsilon
\right\}
=
\left\{
\exp\!\bigl(\lambda(\overline{\xi}_m-\mu)\bigr)
\geq
\exp(\lambda\varepsilon)
\right\}.
\]
The random variable
\(\exp(\lambda(\overline{\xi}_m-\mu))\)
is nonnegative. Hence Markov's inequality implies
\[
\begin{aligned}
\Pr\!\left(
\overline{\xi}_m-\mu\geq\varepsilon
\right)
&=
\Pr\!\left(
\exp\!\bigl(\lambda(\overline{\xi}_m-\mu)\bigr)
\geq
\exp(\lambda\varepsilon)
\right)\\
&\leq
\frac{
\bE\!\left[
\exp\!\bigl(\lambda(\overline{\xi}_m-\mu)\bigr)
\right]
}{
\exp(\lambda\varepsilon)
}\\
&=
\exp(-\lambda\varepsilon)
\bE\!\left[
\exp\!\bigl(\lambda(\overline{\xi}_m-\mu)\bigr)
\right]\\
&\leq
\exp(-\lambda\varepsilon)
\exp\!\left(
\frac{\lambda^2\sigma^2}{2m}
\right)\\
&=
\exp\!\left(
-\lambda\varepsilon
+
\frac{\lambda^2\sigma^2}{2m}
\right).
\end{aligned}
\]
This bound holds for every \(\lambda>0\). Consider the exponent
\[
q(\lambda)
:=
-\lambda\varepsilon
+
\frac{\lambda^2\sigma^2}{2m}.
\]
Its derivative is
\[
q'(\lambda)
=
-\varepsilon
+
\frac{\lambda\sigma^2}{m},
\]
and its second derivative is
\[
q''(\lambda)
=
\frac{\sigma^2}{m}
>
0.
\]
Therefore \(q\) is strictly convex, and its unique minimizer is determined by
\[
q'(\lambda)=0.
\]
Solving this equation gives
\[
-\varepsilon+\frac{\lambda\sigma^2}{m}=0,
\]
and hence
\[
\lambda_\star
=
\frac{m\varepsilon}{\sigma^2}.
\]
Because \(\varepsilon>0\), this choice satisfies \(\lambda_\star>0\). Substituting it into the exponent gives
\[
\begin{aligned}
q(\lambda_\star)
&=
-\left(
\frac{m\varepsilon}{\sigma^2}
\right)\varepsilon
+
\frac{\sigma^2}{2m}
\left(
\frac{m\varepsilon}{\sigma^2}
\right)^2\\
&=
-\frac{m\varepsilon^2}{\sigma^2}
+
\frac{\sigma^2}{2m}
\frac{m^2\varepsilon^2}{\sigma^4}\\
&=
-\frac{m\varepsilon^2}{\sigma^2}
+
\frac{m\varepsilon^2}{2\sigma^2}\\
&=
-\frac{m\varepsilon^2}{2\sigma^2}.
\end{aligned}
\]
Consequently,
\[
\Pr\!\left(
\overline{\xi}_m-\mu\geq\varepsilon
\right)
\leq
\exp\!\left(
-\frac{m\varepsilon^2}{2\sigma^2}
\right).
\]

We now bound the lower tail. Again fix an arbitrary \(\lambda>0\). Since
\[
\mu-\overline{\xi}_m
=
-(\overline{\xi}_m-\mu),
\]
the moment-generating-function estimate established above, applied with the parameter \(-\lambda\), yields
\[
\begin{aligned}
\bE\!\left[
\exp\!\bigl(\lambda(\mu-\overline{\xi}_m)\bigr)
\right]
&=
\bE\!\left[
\exp\!\bigl(-\lambda(\overline{\xi}_m-\mu)\bigr)
\right]\\
&\leq
\exp\!\left(
\frac{(-\lambda)^2\sigma^2}{2m}
\right)\\
&=
\exp\!\left(
\frac{\lambda^2\sigma^2}{2m}
\right).
\end{aligned}
\]
Moreover,
\[
\left\{
\overline{\xi}_m-\mu\leq-\varepsilon
\right\}
=
\left\{
\mu-\overline{\xi}_m\geq\varepsilon
\right\}.
\]
Using monotonicity of the exponential function followed by Markov's inequality, we obtain
\[
\begin{aligned}
\Pr\!\left(
\overline{\xi}_m-\mu\leq-\varepsilon
\right)
&=
\Pr\!\left(
\mu-\overline{\xi}_m\geq\varepsilon
\right)\\
&=
\Pr\!\left(
\exp\!\bigl(\lambda(\mu-\overline{\xi}_m)\bigr)
\geq
\exp(\lambda\varepsilon)
\right)\\
&\leq
\exp(-\lambda\varepsilon)
\bE\!\left[
\exp\!\bigl(\lambda(\mu-\overline{\xi}_m)\bigr)
\right]\\
&\leq
\exp\!\left(
-\lambda\varepsilon
+
\frac{\lambda^2\sigma^2}{2m}
\right).
\end{aligned}
\]
The exponent is the same function \(q(\lambda)\) considered for the upper tail. Choosing
\[
\lambda
=
\lambda_\star
=
\frac{m\varepsilon}{\sigma^2}
\]
therefore gives
\[
\Pr\!\left(
\overline{\xi}_m-\mu\leq-\varepsilon
\right)
\leq
\exp\!\left(
-\frac{m\varepsilon^2}{2\sigma^2}
\right).
\]

Finally, the two-sided deviation event satisfies
\[
\begin{aligned}
\left\{
|\overline{\xi}_m-\mu|\geq\varepsilon
\right\}
&=
\left\{
\overline{\xi}_m-\mu\geq\varepsilon
\right\}
\cup
\left\{
\overline{\xi}_m-\mu\leq-\varepsilon
\right\}.
\end{aligned}
\]
Applying the union bound together with the two one-sided estimates yields
\[
\begin{aligned}
\Pr\!\left(
|\overline{\xi}_m-\mu|\geq\varepsilon
\right)
&\leq
\Pr\!\left(
\overline{\xi}_m-\mu\geq\varepsilon
\right)
+
\Pr\!\left(
\overline{\xi}_m-\mu\leq-\varepsilon
\right)\\
&\leq
\exp\!\left(
-\frac{m\varepsilon^2}{2\sigma^2}
\right)
+
\exp\!\left(
-\frac{m\varepsilon^2}{2\sigma^2}
\right)\\
&=
2\exp\!\left(
-\frac{m\varepsilon^2}{2\sigma^2}
\right).
\end{aligned}
\]
Recalling that
\[
\overline{\xi}_m
=
\frac{1}{m}\sum_{i=1}^{m}\xi_i
\]
proves
\[
\Pr\!\left(
\left|
\frac{1}{m}\sum_{i=1}^{m}\xi_i-\mu
\right|
\geq\varepsilon
\right)
\leq
2\exp\!\left(
-\frac{m\varepsilon^2}{2\sigma^2}
\right),
\]
as required.
\end{proof}

\subsection{Proof of Corollary~\ref{cor:vector_subg_mean_app}}

\begin{proof}
Define the empirical mean vector
\[
\overline X_m
:=
\frac{1}{m}\sum_{i=1}^{m}X_i.
\]
For each coordinate \(j\in\{1,\ldots,d\}\), write
\[
\overline X_{m,j}
:=
(\overline X_m)_j
\qquad\text{and}\qquad
\mu_j
:=
(\mu)_j.
\]
Because vector addition and scalar multiplication in \(\mathbb{R}^d\) are defined coordinatewise, we have
\[
\begin{aligned}
\overline X_{m,j}
&=
\left(
\frac{1}{m}\sum_{i=1}^{m}X_i
\right)_j\\
&=
\frac{1}{m}\sum_{i=1}^{m}(X_i)_j.
\end{aligned}
\]
Moreover, since \(\bE[X_i]=\mu\), equality of vectors implies equality of their coordinates, and therefore
\[
\begin{aligned}
\bE[(X_i)_j]
&=
(\bE[X_i])_j\\
&=
(\mu)_j\\
&=
\mu_j
\end{aligned}
\]
for every \(i\in\{1,\ldots,m\}\) and every \(j\in\{1,\ldots,d\}\).

Fix an arbitrary coordinate \(j\in\{1,\ldots,d\}\). Since the random vectors
\[
X_1,\ldots,X_m
\]
are independent, applying the measurable coordinate projection
\[
p_j:\mathbb{R}^d\to\mathbb{R},
\qquad
p_j(x)=x_j,
\]
shows that
\[
p_j(X_1),\ldots,p_j(X_m)
\]
are independent. Since the vectors are also identically distributed, their coordinate projections are identically distributed. Thus
\[
(X_1)_j,\ldots,(X_m)_j
\]
are independent and identically distributed scalar random variables. Each of them has mean \(\mu_j\), and, by the assumption of the corollary, each of them is \(\sigma^2\)-sub-Gaussian. Hence all assumptions of Lemma~\ref{lem:subg_mean_app} are satisfied by the scalar sequence
\[
\xi_i:=(X_i)_j,
\qquad
i=1,\ldots,m.
\]
Applying that lemma gives
\[
\Pr\!\left(
\left|
\frac{1}{m}\sum_{i=1}^{m}(X_i)_j-\mu_j
\right|
\geq\varepsilon
\right)
\leq
2\exp\!\left(
-\frac{m\varepsilon^2}{2\sigma^2}
\right).
\]
Using the coordinate identity for \(\overline X_m\), this can equivalently be written as
\[
\Pr\!\left(
\left|
\overline X_{m,j}-\mu_j
\right|
\geq\varepsilon
\right)
\leq
2\exp\!\left(
-\frac{m\varepsilon^2}{2\sigma^2}
\right).
\]
Since the coordinate \(j\) was arbitrary, the same estimate holds for every
\(j\in\{1,\ldots,d\}\). No independence between different coordinates is required here. For each fixed coordinate, independence across the sample index \(i\) follows from independence of the random vectors, and the coordinate events will be combined below only by the union bound.

By the definition of the \(\ell_\infty\) norm on \(\mathbb{R}^d\),
\[
\begin{aligned}
\left\|
\overline X_m-\mu
\right\|_\infty
&=
\max_{1\leq j\leq d}
\left|
(\overline X_m-\mu)_j
\right|\\
&=
\max_{1\leq j\leq d}
\left|
(\overline X_m)_j-(\mu)_j
\right|\\
&=
\max_{1\leq j\leq d}
\left|
\overline X_{m,j}-\mu_j
\right|.
\end{aligned}
\]
Because this maximum is taken over the finite set
\(\{1,\ldots,d\}\), it is at least \(\varepsilon\) if and only if at least one of its coordinate terms is at least \(\varepsilon\). Therefore,
\[
\begin{aligned}
\left\{
\left\|
\overline X_m-\mu
\right\|_\infty
\geq\varepsilon
\right\}
&=
\left\{
\max_{1\leq j\leq d}
\left|
\overline X_{m,j}-\mu_j
\right|
\geq\varepsilon
\right\}\\
&=
\bigcup_{j=1}^{d}
\left\{
\left|
\overline X_{m,j}-\mu_j
\right|
\geq\varepsilon
\right\}.
\end{aligned}
\]
Taking probabilities of both sides gives
\[
\begin{aligned}
\Pr\!\left(
\left\|
\overline X_m-\mu
\right\|_\infty
\geq\varepsilon
\right)
&=
\Pr\!\left(
\bigcup_{j=1}^{d}
\left\{
\left|
\overline X_{m,j}-\mu_j
\right|
\geq\varepsilon
\right\}
\right).
\end{aligned}
\]
Applying the union bound, which does not require the coordinate events to be independent, yields
\[
\begin{aligned}
\Pr\!\left(
\left\|
\overline X_m-\mu
\right\|_\infty
\geq\varepsilon
\right)
&\leq
\sum_{j=1}^{d}
\Pr\!\left(
\left|
\overline X_{m,j}-\mu_j
\right|
\geq\varepsilon
\right).
\end{aligned}
\]
Substituting the coordinatewise estimate established above into every term of the sum, we obtain
\[
\begin{aligned}
\Pr\!\left(
\left\|
\overline X_m-\mu
\right\|_\infty
\geq\varepsilon
\right)
&\leq
\sum_{j=1}^{d}
2\exp\!\left(
-\frac{m\varepsilon^2}{2\sigma^2}
\right)\\
&=
2
\left(
\sum_{j=1}^{d}1
\right)
\exp\!\left(
-\frac{m\varepsilon^2}{2\sigma^2}
\right)\\
&=
2d\exp\!\left(
-\frac{m\varepsilon^2}{2\sigma^2}
\right).
\end{aligned}
\]
Finally, recalling that
\[
\overline X_m
=
\frac{1}{m}\sum_{i=1}^{m}X_i,
\]
the preceding inequality becomes
\[
\Pr\!\left(
\left\|
\frac{1}{m}\sum_{i=1}^{m}X_i-\mu
\right\|_\infty
\geq\varepsilon
\right)
\leq
2d\exp\!\left(
-\frac{m\varepsilon^2}{2\sigma^2}
\right),
\]
which is the desired result.
\end{proof}

\subsection{Proof of Corollary~\ref{cor:finite_family_union_app}}

\begin{proof}
For every $t\geq 1$ and every $\alpha\in\cI_t$, define the individual failure event
\[
E_t^\alpha
:=
\left\{
\|\hat\mu_t^\alpha-\mu^\alpha\|_\infty
\geq
\varepsilon_t
\right\}.
\]
Also define the uniform failure event at round $t$ by
\[
E_t
:=
\left\{
\exists\,\alpha\in\cI_t:
\|\hat\mu_t^\alpha-\mu^\alpha\|_\infty
\geq
\varepsilon_t
\right\}.
\]
By the meaning of the existential quantifier over the finite index set $\cI_t$, this event is exactly the union of the individual failure events:
\[
E_t
=
\bigcup_{\alpha\in\cI_t}E_t^\alpha.
\]
If $\cI_t=\varnothing$, then the union is empty, and therefore
\[
\Pr(E_t)
=
0
=
|\cI_t|\delta_t.
\]
Thus the desired inequality is immediate in this case. Suppose henceforth that $\cI_t$ is nonempty. Since $\cI_t$ is finite, it can be written as
\[
\cI_t
=
\{\alpha_1,\ldots,\alpha_{N_t}\},
\qquad
N_t
:=
|\cI_t|.
\]
Using the representation of $E_t$ as a finite union and applying finite subadditivity of probability, we obtain
\[
\begin{aligned}
\Pr(E_t)
&=
\Pr\!\left(
\bigcup_{k=1}^{N_t}E_t^{\alpha_k}
\right)\\
&\leq
\sum_{k=1}^{N_t}
\Pr\!\left(
E_t^{\alpha_k}
\right).
\end{aligned}
\]
For every $k\in\{1,\ldots,N_t\}$, the assumed pointwise concentration inequality gives
\[
\begin{aligned}
\Pr\!\left(
E_t^{\alpha_k}
\right)
&=
\Pr\!\left(
\|\hat\mu_t^{\alpha_k}-\mu^{\alpha_k}\|_\infty
\geq
\varepsilon_t
\right)\\
&\leq
\delta_t.
\end{aligned}
\]
Substituting these bounds into the preceding finite sum yields
\[
\begin{aligned}
\Pr(E_t)
&\leq
\sum_{k=1}^{N_t}\delta_t\\
&=
N_t\delta_t\\
&=
|\cI_t|\delta_t.
\end{aligned}
\]
Recalling the definition of $E_t$, we have therefore proved
\[
\Pr\!\left(
\exists\,\alpha\in\cI_t:
\|\hat\mu_t^\alpha-\mu^\alpha\|_\infty
\geq
\varepsilon_t
\right)
\leq
|\cI_t|\delta_t.
\]

Now suppose that
\[
\sum_{t\geq1}
|\cI_t|\delta_t
<
\infty.
\]
The bound established above implies, for every $t\geq1$, that
\[
0
\leq
\Pr(E_t)
\leq
|\cI_t|\delta_t.
\]
Since all terms are nonnegative, comparison of the corresponding series gives
\[
\begin{aligned}
\sum_{t\geq1}\Pr(E_t)
&\leq
\sum_{t\geq1}
|\cI_t|\delta_t\\
&<
\infty.
\end{aligned}
\]
Lemma~\ref{lem:bc_wrapper_main}, equivalently the first Borel--Cantelli lemma, therefore implies
\[
\Pr(E_t\ \mathrm{i.o.})
=
0.
\]
The event that the uniform failure events occur infinitely often is the limsup event
\[
\{E_t\ \mathrm{i.o.}\}
=
\bigcap_{T=1}^{\infty}
\bigcup_{t\geq T}E_t.
\]
Its complement is consequently
\[
\begin{aligned}
\{E_t\ \mathrm{i.o.}\}^{\mathsf c}
&=
\left(
\bigcap_{T=1}^{\infty}
\bigcup_{t\geq T}E_t
\right)^{\mathsf c}\\
&=
\bigcup_{T=1}^{\infty}
\left(
\bigcup_{t\geq T}E_t
\right)^{\mathsf c}\\
&=
\bigcup_{T=1}^{\infty}
\bigcap_{t\geq T}E_t^{\mathsf c},
\end{aligned}
\]
where the final two equalities follow from De Morgan's laws. Since
\[
\Pr(E_t\ \mathrm{i.o.})=0,
\]
we have
\[
\Pr\!\left(
\bigcup_{T=1}^{\infty}
\bigcap_{t\geq T}E_t^{\mathsf c}
\right)
=
1.
\]
Thus, with probability one, there exists a finite random index $T$ such that
\[
E_t^{\mathsf c}
\qquad
\text{for every }t\geq T.
\]
For each fixed $t$, the complement of the uniform failure event can be expanded as
\[
\begin{aligned}
E_t^{\mathsf c}
&=
\left(
\bigcup_{\alpha\in\cI_t}
E_t^\alpha
\right)^{\mathsf c}\\
&=
\bigcap_{\alpha\in\cI_t}
(E_t^\alpha)^{\mathsf c}\\
&=
\bigcap_{\alpha\in\cI_t}
\left\{
\|\hat\mu_t^\alpha-\mu^\alpha\|_\infty
<
\varepsilon_t
\right\}.
\end{aligned}
\]
It follows that, with probability one, there exists a finite random index $T$ such that, simultaneously for every $t\geq T$ and every $\alpha\in\cI_t$,
\[
\|\hat\mu_t^\alpha-\mu^\alpha\|_\infty
<
\varepsilon_t.
\]
Equivalently, the corresponding uniform failure event occurs only finitely often almost surely. Neither the finite union bound nor the first Borel--Cantelli implication requires independence among the estimators indexed by $\alpha$ or among the failure events corresponding to different rounds $t$.
\end{proof}

\subsection{Proof of Lemma~\ref{lem:bc_wrapper_main}}

\begin{proof}
For every integer \(N\geq 1\), define the tail-union event
\[
B_N
:=
\bigcup_{t=N}^{\infty}E_t.
\]
An outcome belongs to \(B_N\) precisely when at least one of the events \(E_t\) occurs at an index \(t\geq N\). Consequently, an outcome belongs to infinitely many of the events \(E_t\) if and only if it belongs to \(B_N\) for every \(N\geq 1\). In event notation,
\[
\{E_t\ \mathrm{i.o.}\}
=
\bigcap_{N=1}^{\infty}B_N
=
\bigcap_{N=1}^{\infty}
\bigcup_{t=N}^{\infty}E_t.
\]
Indeed, suppose that an outcome belongs to infinitely many of the events \(E_t\). Then, for every \(N\geq 1\), there exists an index \(t\geq N\) for which that outcome belongs to \(E_t\). Hence the outcome belongs to \(B_N\) for every \(N\), and therefore it belongs to
\(\bigcap_{N\geq 1}B_N\). Conversely, suppose that an outcome belongs to only finitely many of the events \(E_t\). Then there exists an integer \(N_0\geq 1\) such that none of the events \(E_t\) with \(t\geq N_0\) occurs for that outcome. It follows that the outcome does not belong to
\[
B_{N_0}
=
\bigcup_{t=N_0}^{\infty}E_t,
\]
and hence it cannot belong to
\(\bigcap_{N\geq 1}B_N\). This proves the displayed identity.

By countable subadditivity of probability, for every \(N\geq 1\),
\[
\begin{aligned}
\Pr(B_N)
&=
\Pr\!\left(
\bigcup_{t=N}^{\infty}E_t
\right)\\
&\leq
\sum_{t=N}^{\infty}\Pr(E_t).
\end{aligned}
\]
By assumption,
\[
\sum_{t=1}^{\infty}\Pr(E_t)
<
\infty.
\]
Because this is a convergent series of nonnegative real numbers, its tails converge to zero. To write this explicitly, define
\[
S_0:=0,
\qquad
S_N
:=
\sum_{t=1}^{N}\Pr(E_t)
\quad
\text{for }N\geq 1,
\]
and let
\[
S
:=
\sum_{t=1}^{\infty}\Pr(E_t).
\]
The convergence of the series means that
\[
S_N\longrightarrow S
\qquad
\text{as }N\longrightarrow\infty.
\]
For every \(N\geq 1\), the corresponding tail can be written as
\[
\begin{aligned}
\sum_{t=N}^{\infty}\Pr(E_t)
&=
\sum_{t=1}^{\infty}\Pr(E_t)
-
\sum_{t=1}^{N-1}\Pr(E_t)\\
&=
S-S_{N-1}.
\end{aligned}
\]
Since \(S_{N-1}\to S\), it follows that
\[
\sum_{t=N}^{\infty}\Pr(E_t)
\longrightarrow
0
\qquad
\text{as }N\longrightarrow\infty.
\]

Moreover, for every fixed \(N\geq 1\),
\[
\{E_t\ \mathrm{i.o.}\}
=
\bigcap_{K=1}^{\infty}B_K
\subseteq
B_N.
\]
Monotonicity of probability therefore gives
\[
\begin{aligned}
0
&\leq
\Pr(E_t\ \mathrm{i.o.})\\
&\leq
\Pr(B_N)\\
&\leq
\sum_{t=N}^{\infty}\Pr(E_t).
\end{aligned}
\]
This inequality holds for every \(N\geq 1\). Letting \(N\to\infty\) and using the convergence of the tail sums yields
\[
0
\leq
\Pr(E_t\ \mathrm{i.o.})
\leq
0.
\]
Consequently,
\[
\Pr(E_t\ \mathrm{i.o.})=0.
\]

The preceding conclusion can equivalently be expressed as an eventual-correctness statement. By De Morgan's laws,
\[
\begin{aligned}
\{E_t\ \mathrm{i.o.}\}^{\mathsf c}
&=
\left(
\bigcap_{N=1}^{\infty}
\bigcup_{t=N}^{\infty}E_t
\right)^{\mathsf c}\\
&=
\bigcup_{N=1}^{\infty}
\left(
\bigcup_{t=N}^{\infty}E_t
\right)^{\mathsf c}\\
&=
\bigcup_{N=1}^{\infty}
\bigcap_{t=N}^{\infty}E_t^{\mathsf c}.
\end{aligned}
\]
Since
\[
\Pr(E_t\ \mathrm{i.o.})=0,
\]
we have
\[
\Pr\!\left(
\bigcup_{N=1}^{\infty}
\bigcap_{t=N}^{\infty}E_t^{\mathsf c}
\right)
=
1.
\]
Thus, with probability one, there exists a finite random index \(N\) such that
\[
E_t^{\mathsf c}
\qquad
\text{for every }t\geq N.
\]
Equivalently, with probability one, the random set
\[
\{t\geq 1:E_t\text{ occurs}\}
\]
is finite.

For the final assertion, let \(F_t\) denote the event that the finite identification procedure is incorrect at round \(t\). The assumed per-round failure guarantee states that
\[
\Pr(F_t)
\leq
\delta_t
\qquad
\text{for every }t\geq 1.
\]
Since all terms are nonnegative, comparison of series gives
\[
\begin{aligned}
\sum_{t=1}^{\infty}\Pr(F_t)
&\leq
\sum_{t=1}^{\infty}\delta_t\\
&<
\infty.
\end{aligned}
\]
Applying the result just proved to the sequence of failure events
\((F_t)_{t\geq 1}\) gives
\[
\Pr(F_t\ \mathrm{i.o.})=0.
\]
Taking complements and expanding the infinitely-often event as above, we obtain
\[
\Pr\!\left(
\bigcup_{T=1}^{\infty}
\bigcap_{t=T}^{\infty}F_t^{\mathsf c}
\right)
=
1.
\]
Therefore, with probability one, there exists a finite random round \(T\) such that
\[
F_t^{\mathsf c}
\qquad
\text{for every }t\geq T.
\]
Since \(F_t^{\mathsf c}\) is exactly the event that the identification procedure is correct at round \(t\), this means that
$
\Pr\!(
\exists\,T<\infty
\text{ such that the identification procedure is correct for every }
t\geq T
)
=
1.
$
Hence only finitely many rounds are incorrect almost surely. No independence assumption among the events \(E_t\), or among the identification failure events \(F_t\), is required.
\end{proof}

\subsection{Proof of Proposition~\ref{prop:raw_topological_holonomy_app}}

\begin{proof}
Fix a base observation $o\in\cO$. Every based edge loop in $|X_{\mathrm{und}}|$ can be represented by an oriented edge word
\[
w=e_1e_2\cdots e_k,
\]
where
$
\src(e_1)=o,
\qquad
\tgt(e_j)=\src(e_{j+1})
\quad
\text{for }j=1,\ldots,k-1,
\qquad
\tgt(e_k)=o.
$
The empty word $\varnothing_o$ represents the constant loop at $o$. Since every $\sigma_e$ is an element of $S_n$, the composition
\[
\sigma_w
=
\sigma_{e_k}\circ\cdots\circ\sigma_{e_1}
\]
is again an element of $S_n$, while the empty word has transport
\[
\sigma_{\varnothing_o}
=
\Id_{[n]}.
\]
We first prove that $\sigma_w$ is unchanged by the elementary reductions defining edge-path homotopy in the geometric realization of a graph. Suppose that an oriented edge word contains an adjacent backtracking pair and can be written as
\[
w=u\,e\,\bar e\,v,
\]
where $u$ and $v$ are possibly empty oriented edge words and all displayed concatenations are composable. Traversal occurs from left to right, so the transport of the complete word is obtained by composing the transports in the reverse written order. Hence
\[
\begin{aligned}
\sigma_w
&=
\sigma_v\circ\sigma_{\bar e}\circ\sigma_e\circ\sigma_u.
\end{aligned}
\]
Assumption~\ref{ass:inverse_consistency_app} gives
\[
\sigma_{\bar e}
=
\sigma_e^{-1},
\]
and therefore
\[
\begin{aligned}
\sigma_{\bar e}\circ\sigma_e
&=
\sigma_e^{-1}\circ\sigma_e\\
&=
\Id_{[n]}.
\end{aligned}
\]
Substituting this identity into the preceding expression yields
\[
\begin{aligned}
\sigma_w
&=
\sigma_v\circ
\bigl(\sigma_{\bar e}\circ\sigma_e\bigr)
\circ\sigma_u\\
&=
\sigma_v\circ\Id_{[n]}\circ\sigma_u\\
&=
\sigma_v\circ\sigma_u.
\end{aligned}
\]
The word obtained by deleting the adjacent pair $e\bar e$ is $uv$, and its transport is exactly
\[
\sigma_{uv}
=
\sigma_v\circ\sigma_u.
\]
Thus
\[
\sigma_{u e\bar e v}
=
\sigma_{uv}.
\]
The other possible immediate backtrack is handled in the same way. If
\[
w=u\,\bar e\,e\,v,
\]
then
\[
\begin{aligned}
\sigma_w
&=
\sigma_v\circ\sigma_e\circ\sigma_{\bar e}\circ\sigma_u\\
&=
\sigma_v\circ
\bigl(\sigma_e\circ\sigma_e^{-1}\bigr)
\circ\sigma_u\\
&=
\sigma_v\circ\Id_{[n]}\circ\sigma_u\\
&=
\sigma_{uv}.
\end{aligned}
\]
Consequently, insertion or deletion of either adjacent pair
\[
e\bar e
\qquad\text{or}\qquad
\bar e e
\]
does not change the raw transport.

For the geometric realization of a graph, based edge-path homotopy is generated by precisely these insertions and deletions of immediate backtracks, together with insertion or deletion of constant vertex pauses, which do not introduce an edge and hence have identity transport. Equivalently, two based oriented edge loops represent the same element of
$\pi_1(|X_{\mathrm{und}}|,o)$
if and only if one can pass from one edge word to the other by a finite sequence of these elementary moves. Let $w$ and $w'$ represent the same based homotopy class. Then there exist oriented edge words
\[
w=w^{(0)},w^{(1)},\ldots,w^{(L)}=w'
\]
such that, for every $\ell\in\{0,\ldots,L-1\}$, the word $w^{(\ell+1)}$ is obtained from $w^{(\ell)}$ by one elementary insertion or deletion. The calculation above gives
\[
\sigma_{w^{(\ell)}}
=
\sigma_{w^{(\ell+1)}}
\qquad
\text{for every }\ell\in\{0,\ldots,L-1\}.
\]
Applying transitivity of equality along this finite sequence gives
\[
\begin{aligned}
\sigma_w
&=
\sigma_{w^{(0)}}\\
&=
\sigma_{w^{(1)}}\\
&=\cdots\\
&=
\sigma_{w^{(L)}}\\
&=
\sigma_{w'}.
\end{aligned}
\]
Therefore the value assigned to $[w]$ depends only on the based homotopy class and not on the chosen oriented edge-word representative. Hence
\[
\Holo_o^{\mathrm{raw}}([w])
:=
\sigma_w
\]
defines a well-defined map
\[
\Holo_o^{\mathrm{raw}}:
\pi_1(|X_{\mathrm{und}}|,o)
\longrightarrow
S_n.
\]

We next verify compatibility with the group operations. Let $w_1$ and $w_2$ be based oriented edge loops at $o$, where $w_1$ is traversed first and $w_2$ is traversed second. Under the path-composition convention used throughout the paper, their product in the fundamental group is represented by
\[
w_2\circ w_1.
\]
If
\[
w_1=e_1\cdots e_k
\qquad\text{and}\qquad
w_2=f_1\cdots f_\ell,
\]
then the chronological edge word for $w_2\circ w_1$ is
\[
e_1\cdots e_k f_1\cdots f_\ell.
\]
Its raw transport is therefore
\[
\begin{aligned}
\sigma_{w_2\circ w_1}
&=
\sigma_{f_\ell}\circ\cdots\circ\sigma_{f_1}
\circ
\sigma_{e_k}\circ\cdots\circ\sigma_{e_1}\\
&=
\bigl(\sigma_{f_\ell}\circ\cdots\circ\sigma_{f_1}\bigr)
\circ
\bigl(\sigma_{e_k}\circ\cdots\circ\sigma_{e_1}\bigr)\\
&=
\sigma_{w_2}\circ\sigma_{w_1}.
\end{aligned}
\]
It follows that
\[
\begin{aligned}
\Holo_o^{\mathrm{raw}}
\bigl([w_2]\,[w_1]\bigr)
&=
\Holo_o^{\mathrm{raw}}
\bigl([w_2\circ w_1]\bigr)\\
&=
\sigma_{w_2\circ w_1}\\
&=
\sigma_{w_2}\circ\sigma_{w_1}\\
&=
\Holo_o^{\mathrm{raw}}([w_2])
\circ
\Holo_o^{\mathrm{raw}}([w_1]).
\end{aligned}
\]
Thus $\Holo_o^{\mathrm{raw}}$ preserves multiplication. It also preserves the identity, because the identity element of the fundamental group is represented by the constant loop and
\[
\begin{aligned}
\Holo_o^{\mathrm{raw}}([\varnothing_o])
&=
\sigma_{\varnothing_o}\\
&=
\Id_{[n]},
\end{aligned}
\]
which is the identity element of $S_n$.

For completeness, the inverse operation is also represented correctly. If
\[
w=e_1e_2\cdots e_k,
\]
then the reversed loop is represented by
\[
w^{-1}
=
\bar e_k\bar e_{k-1}\cdots\bar e_1.
\]
Its transport is
\[
\begin{aligned}
\sigma_{w^{-1}}
&=
\sigma_{\bar e_1}\circ\sigma_{\bar e_2}\circ\cdots\circ\sigma_{\bar e_k}\\
&=
\sigma_{e_1}^{-1}\circ\sigma_{e_2}^{-1}\circ\cdots\circ\sigma_{e_k}^{-1}\\
&=
\bigl(
\sigma_{e_k}\circ\sigma_{e_{k-1}}\circ\cdots\circ\sigma_{e_1}
\bigr)^{-1}\\
&=
\sigma_w^{-1}.
\end{aligned}
\]
Therefore
\[
\begin{aligned}
\Holo_o^{\mathrm{raw}}([w]^{-1})
&=
\Holo_o^{\mathrm{raw}}([w^{-1}])\\
&=
\sigma_{w^{-1}}\\
&=
\sigma_w^{-1}\\
&=
\Holo_o^{\mathrm{raw}}([w])^{-1}.
\end{aligned}
\]
Hence $\Holo_o^{\mathrm{raw}}$ is a group homomorphism.

Finally, define
\[
G_o^{\mathrm{raw}}
:=
\Holo_o^{\mathrm{raw}}
\bigl(\pi_1(|X_{\mathrm{und}}|,o)\bigr).
\]
This set is a subgroup of $S_n$. Indeed,
\[
\Id_{[n]}
=
\Holo_o^{\mathrm{raw}}([\varnothing_o])
\in
G_o^{\mathrm{raw}}.
\]
If $g_1,g_2\in G_o^{\mathrm{raw}}$, then there exist based loop classes $[w_1]$ and $[w_2]$ such that
\[
g_1
=
\Holo_o^{\mathrm{raw}}([w_1])
\qquad\text{and}\qquad
g_2
=
\Holo_o^{\mathrm{raw}}([w_2]).
\]
Using the homomorphism property,
\[
\begin{aligned}
g_2\circ g_1
&=
\Holo_o^{\mathrm{raw}}([w_2])
\circ
\Holo_o^{\mathrm{raw}}([w_1])\\
&=
\Holo_o^{\mathrm{raw}}([w_2]\,[w_1])\\
&\in
G_o^{\mathrm{raw}}.
\end{aligned}
\]
Likewise,
\[
\begin{aligned}
g_1^{-1}
&=
\Holo_o^{\mathrm{raw}}([w_1])^{-1}\\
&=
\Holo_o^{\mathrm{raw}}([w_1]^{-1})\\
&\in
G_o^{\mathrm{raw}}.
\end{aligned}
\]
Thus $G_o^{\mathrm{raw}}\leq S_n$, and by definition it is the raw topological holonomy group at the base observation $o$.
\end{proof}

\subsection{Proof of Lemma~\ref{lem:quotient_inverse_consistency_app}}

\begin{proof}
Fix an arbitrary directed edge $e\in E$, and write
\[
o:=\src(e)
\qquad\text{and}\qquad
o':=\tgt(e).
\]
By Assumption~\ref{ass:inverse_consistency_app}, the paired edge $\bar e$ also belongs to $E$ and satisfies
\[
\src(\bar e)=o',
\qquad
\tgt(\bar e)=o,
\qquad
\sigma_{\bar e}=\sigma_e^{-1}.
\]
Since both $e$ and $\bar e$ are edges of the directed support graph, both are feasible. Theorem~\ref{thm:markovization_main} therefore gives well-defined quotient transports
\[
\tau_e:\cC_o\longrightarrow\cC_{o'}
\qquad\text{and}\qquad
\tau_{\bar e}:\cC_{o'}\longrightarrow\cC_o
\]
satisfying
\[
\tau_e([i]_o)=[\sigma_e(i)]_{o'}
\qquad
\text{for every }i\in[n]
\]
and
\[
\tau_{\bar e}([j]_{o'})=[\sigma_{\bar e}(j)]_o
\qquad
\text{for every }j\in[n].
\]
In particular, the codomain of $\tau_e$ is the domain of $\tau_{\bar e}$, and the codomain of $\tau_{\bar e}$ is the domain of $\tau_e$, so both compositions appearing in the statement are well defined.

Let $c\in\cC_o$ be arbitrary. Because
\[
\cC_o=[n]/\Pi_o^\star,
\]
there exists a layer $i\in[n]$ such that
\[
c=[i]_o.
\]
Using first the quotient-transport formula for $e$ and then the corresponding formula for $\bar e$, we obtain
\[
\begin{aligned}
(\tau_{\bar e}\circ\tau_e)(c)
&=
(\tau_{\bar e}\circ\tau_e)([i]_o)\\
&=
\tau_{\bar e}\bigl(\tau_e([i]_o)\bigr)\\
&=
\tau_{\bar e}\bigl([\sigma_e(i)]_{o'}\bigr)\\
&=
[\sigma_{\bar e}(\sigma_e(i))]_o.
\end{aligned}
\]
Assumption~\ref{ass:inverse_consistency_app} gives
\[
\sigma_{\bar e}=\sigma_e^{-1},
\]
and hence
\[
\begin{aligned}
\sigma_{\bar e}(\sigma_e(i))
&=
\sigma_e^{-1}(\sigma_e(i))\\
&=
(\sigma_e^{-1}\circ\sigma_e)(i)\\
&=
\Id_{[n]}(i)\\
&=
i.
\end{aligned}
\]
Substituting this identity into the preceding quotient-class calculation yields
\[
\begin{aligned}
(\tau_{\bar e}\circ\tau_e)(c)
&=
[\sigma_{\bar e}(\sigma_e(i))]_o\\
&=
[i]_o\\
&=
c.
\end{aligned}
\]
Since $c\in\cC_o$ was arbitrary, the two maps agree at every element of their common domain, and therefore
\[
\tau_{\bar e}\circ\tau_e
=
\Id_{\cC_o}
=
\Id_{\cC_{\src(e)}}.
\]

We prove the reverse composition in the same explicit manner. Let $c'\in\cC_{o'}$ be arbitrary. Since
\[
\cC_{o'}=[n]/\Pi_{o'}^\star,
\]
there exists $j\in[n]$ such that
\[
c'=[j]_{o'}.
\]
Applying the quotient-transport formulas first to $\bar e$ and then to $e$ gives
\[
\begin{aligned}
(\tau_e\circ\tau_{\bar e})(c')
&=
(\tau_e\circ\tau_{\bar e})([j]_{o'})\\
&=
\tau_e\bigl(\tau_{\bar e}([j]_{o'})\bigr)\\
&=
\tau_e\bigl([\sigma_{\bar e}(j)]_o\bigr)\\
&=
[\sigma_e(\sigma_{\bar e}(j))]_{o'}.
\end{aligned}
\]
Again using $\sigma_{\bar e}=\sigma_e^{-1}$, we have
\[
\begin{aligned}
\sigma_e(\sigma_{\bar e}(j))
&=
\sigma_e(\sigma_e^{-1}(j))\\
&=
(\sigma_e\circ\sigma_e^{-1})(j)\\
&=
\Id_{[n]}(j)\\
&=
j.
\end{aligned}
\]
Consequently,
\[
\begin{aligned}
(\tau_e\circ\tau_{\bar e})(c')
&=
[\sigma_e(\sigma_{\bar e}(j))]_{o'}\\
&=
[j]_{o'}\\
&=
c'.
\end{aligned}
\]
Because $c'\in\cC_{o'}$ was arbitrary, it follows that
\[
\tau_e\circ\tau_{\bar e}
=
\Id_{\cC_{o'}}
=
\Id_{\cC_{\tgt(e)}}.
\]

The two identities show that $\tau_{\bar e}$ is simultaneously a left inverse and a right inverse of $\tau_e$. We now verify explicitly that this makes $\tau_e$ bijective. Suppose that $c_1,c_2\in\cC_o$ satisfy
\[
\tau_e(c_1)=\tau_e(c_2).
\]
Applying $\tau_{\bar e}$ to both sides and using the first composition identity gives
\[
\begin{aligned}
c_1
&=
(\tau_{\bar e}\circ\tau_e)(c_1)\\
&=
\tau_{\bar e}(\tau_e(c_1))\\
&=
\tau_{\bar e}(\tau_e(c_2))\\
&=
(\tau_{\bar e}\circ\tau_e)(c_2)\\
&=
c_2.
\end{aligned}
\]
Thus $\tau_e$ is injective. Next, let $c'\in\cC_{o'}$ be arbitrary. The second composition identity gives
\[
\begin{aligned}
c'
&=
(\tau_e\circ\tau_{\bar e})(c')\\
&=
\tau_e\bigl(\tau_{\bar e}(c')\bigr).
\end{aligned}
\]
Therefore $c'$ is the image under $\tau_e$ of the class
\[
\tau_{\bar e}(c')\in\cC_o,
\]
so $\tau_e$ is surjective. Hence $\tau_e$ is a bijection. Since $\tau_{\bar e}$ is both its left and right inverse, uniqueness of the inverse map yields
\[
\tau_{\bar e}
=
\tau_e^{-1}.
\]
The same argument, with $e$ and $\bar e$ interchanged, shows that $\tau_{\bar e}$ is also a bijection and that
\[
\tau_e
=
\tau_{\bar e}^{-1}.
\]
Every oriented traversal of an edge of $X_{\mathrm{und}}$ is represented by one member of a paired set
\[
\{e,\bar e\}.
\]
Since the quotient transport in either orientation is bijective, every quotient transport along an edge of $X_{\mathrm{und}}$ is a bijection.
\end{proof}

\subsection{Proof of Proposition~\ref{prop:quotient_topological_holonomy_app}}

\begin{proof}
Fix an arbitrary base observation $o\in\cO$. Let
\[
w=e_1e_2\cdots e_k
\]
be an oriented edge loop in $X_{\mathrm{und}}$ based at $o$. Thus there exist observations
\[
o_0,o_1,\ldots,o_k
\]
with
\[
o_0=o_k=o,
\]
such that, for every $j\in\{1,\ldots,k\}$,
\[
\src(e_j)=o_{j-1}
\qquad\text{and}\qquad
\tgt(e_j)=o_j.
\]
The associated quotient transport is
\[
\tau_w
=
\tau_{e_k}\circ\tau_{e_{k-1}}\circ\cdots\circ\tau_{e_1}
:
\cC_o\longrightarrow\cC_o.
\]
By Lemma~\ref{lem:quotient_inverse_consistency_app}, every edge transport
\[
\tau_{e_j}:\cC_{o_{j-1}}\longrightarrow\cC_{o_j}
\]
is a bijection, with inverse
\[
\tau_{e_j}^{-1}
=
\tau_{\bar e_j}.
\]
A finite composition of bijections is a bijection. Hence $\tau_w$ is a bijection from $\cC_o$ to itself, and therefore
\[
\tau_w\in\Sym(\cC_o).
\]
For the empty loop $\varnothing_o$, we use the empty-composition convention
\[
\tau_{\varnothing_o}
=
\Id_{\cC_o}.
\]
Thus the proposed assignment takes values in $\Sym(\cC_o)$ for every based oriented edge loop, including the empty loop.

We next prove that the transport is invariant under the elementary edge-word reductions representing based homotopy in the geometric realization of the graph. Suppose first that a based oriented edge loop contains an immediate backtrack and can be written as
\[
w=u\,e\,\bar e\,v.
\]
Let
\[
p:=\src(e)
\qquad\text{and}\qquad
q:=\tgt(e).
\]
Then $u$ is a path from $o$ to $p$, the edge $e$ is traversed from $p$ to $q$, the paired edge $\bar e$ is traversed from $q$ back to $p$, and $v$ is a path from $p$ to $o$. Consequently, the corresponding quotient transports have the compatible types
\[
\begin{aligned}
\tau_u&:\cC_o\to\cC_p,
&
\tau_e&:\cC_p\to\cC_q,\\
\tau_{\bar e}&:\cC_q\to\cC_p,
&
\tau_v&:\cC_p\to\cC_o.
\end{aligned}
\]
Because the edges are executed in the written order while maps compose from right to left, the transport of $w$ is
\[
\begin{aligned}
\tau_w
&=
\tau_v\circ\tau_{\bar e}\circ\tau_e\circ\tau_u.
\end{aligned}
\]
Lemma~\ref{lem:quotient_inverse_consistency_app} gives
\[
\tau_{\bar e}\circ\tau_e
=
\Id_{\cC_p}.
\]
Substituting this identity into the preceding expression yields
\[
\begin{aligned}
\tau_w
&=
\tau_v\circ
\bigl(\tau_{\bar e}\circ\tau_e\bigr)
\circ\tau_u\\
&=
\tau_v\circ\Id_{\cC_p}\circ\tau_u\\
&=
\tau_v\circ\tau_u.
\end{aligned}
\]
Deleting the adjacent pair $e\bar e$ produces the based loop $uv$. Since $u$ is executed first and $v$ is executed second, its transport is
\[
\tau_{uv}
=
\tau_v\circ\tau_u.
\]
Therefore
\[
\tau_{u e\bar e v}
=
\tau_{uv}.
\]

The other orientation of immediate backtracking is handled similarly. Suppose
\[
w=u\,\bar e\,e\,v.
\]
In this case, $u$ ends at $q=\tgt(e)$, the edge $\bar e$ moves from $q$ to $p=\src(e)$, the edge $e$ returns from $p$ to $q$, and $v$ begins at $q$. Thus
\[
\tau_w
=
\tau_v\circ\tau_e\circ\tau_{\bar e}\circ\tau_u.
\]
The second identity in Lemma~\ref{lem:quotient_inverse_consistency_app} gives
\[
\tau_e\circ\tau_{\bar e}
=
\Id_{\cC_q}.
\]
It follows that
\[
\begin{aligned}
\tau_w
&=
\tau_v\circ
\bigl(\tau_e\circ\tau_{\bar e}\bigr)
\circ\tau_u\\
&=
\tau_v\circ\Id_{\cC_q}\circ\tau_u\\
&=
\tau_v\circ\tau_u\\
&=
\tau_{uv}.
\end{aligned}
\]
Hence insertion or deletion of either adjacent pair
\[
e\bar e
\qquad\text{or}\qquad
\bar e e
\]
does not change the quotient transport. Insertion or deletion of a constant pause at a vertex also does not change the transport, because the empty path at an observation $p$ acts by
\[
\tau_{\varnothing_p}
=
\Id_{\cC_p}.
\]

For the geometric realization of a graph, based edge-path homotopy is generated by these elementary insertions and deletions of immediate backtracks, together with constant vertex pauses. Equivalently, two based oriented edge loops represent the same element of
$\pi_1(|X_{\mathrm{und}}|,o)$
precisely when their edge words are connected by a finite sequence of such elementary moves. Let $w$ and $w'$ represent the same based homotopy class. Then there exist based oriented edge loops
\[
w=w^{(0)},w^{(1)},\ldots,w^{(L)}=w'
\]
such that each $w^{(\ell+1)}$ is obtained from $w^{(\ell)}$ by one elementary insertion or deletion. The calculations above imply
\[
\tau_{w^{(\ell)}}
=
\tau_{w^{(\ell+1)}}
\qquad
\text{for every }\ell\in\{0,\ldots,L-1\}.
\]
Therefore
\[
\begin{aligned}
\tau_w
&=
\tau_{w^{(0)}}\\
&=
\tau_{w^{(1)}}\\
&=\cdots\\
&=
\tau_{w^{(L)}}\\
&=
\tau_{w'}.
\end{aligned}
\]
Thus the value $\tau_w$ depends only on the based homotopy class $[w]$, not on the chosen oriented edge-word representative. Consequently,
\[
\Holo_o^{\mathrm{quot}}([w])
:=
\tau_w
\]
defines a well-defined map
\[
\Holo_o^{\mathrm{quot}}:
\pi_1(|X_{\mathrm{und}}|,o)
\longrightarrow
\Sym(\cC_o).
\]

We now verify that this map preserves the group operation. Let $w_1$ and $w_2$ be arbitrary oriented edge loops based at $o$, with $w_1$ executed first and $w_2$ executed second. Write
\[
w_1=e_1e_2\cdots e_k
\qquad\text{and}\qquad
w_2=f_1f_2\cdots f_\ell.
\]
Under the path-composition convention used in Definition~\ref{def:directed_transport_main}, the concatenated loop is denoted by
\[
w_2\circ w_1,
\]
and its chronological edge word is
\[
e_1e_2\cdots e_k f_1f_2\cdots f_\ell.
\]
Applying the definition of quotient path transport gives
\[
\begin{aligned}
\tau_{w_2\circ w_1}
&=
\tau_{f_\ell}\circ\cdots\circ\tau_{f_1}
\circ
\tau_{e_k}\circ\cdots\circ\tau_{e_1}\\
&=
\bigl(
\tau_{f_\ell}\circ\cdots\circ\tau_{f_1}
\bigr)
\circ
\bigl(
\tau_{e_k}\circ\cdots\circ\tau_{e_1}
\bigr)\\
&=
\tau_{w_2}\circ\tau_{w_1}.
\end{aligned}
\]
The multiplication in the fundamental group follows the same convention, so the class product of $[w_1]$ followed by $[w_2]$ is represented by $w_2\circ w_1$. Hence
\[
\begin{aligned}
\Holo_o^{\mathrm{quot}}
\bigl([w_2][w_1]\bigr)
&=
\Holo_o^{\mathrm{quot}}
\bigl([w_2\circ w_1]\bigr)\\
&=
\tau_{w_2\circ w_1}\\
&=
\tau_{w_2}\circ\tau_{w_1}\\
&=
\Holo_o^{\mathrm{quot}}([w_2])
\circ
\Holo_o^{\mathrm{quot}}([w_1]).
\end{aligned}
\]
Thus $\Holo_o^{\mathrm{quot}}$ preserves multiplication. It also preserves the identity element, because the identity of
$\pi_1(|X_{\mathrm{und}}|,o)$
is represented by the constant loop $\varnothing_o$, and
\[
\begin{aligned}
\Holo_o^{\mathrm{quot}}([\varnothing_o])
&=
\tau_{\varnothing_o}\\
&=
\Id_{\cC_o}.
\end{aligned}
\]

For completeness, we also verify explicitly that loop reversal is sent to the inverse permutation. Let
\[
w=e_1e_2\cdots e_k.
\]
The reversed loop is represented by
\[
w^{-1}
=
\bar e_k\bar e_{k-1}\cdots\bar e_1.
\]
By the definition of transport along the reversed edge word,
\[
\begin{aligned}
\tau_{w^{-1}}
&=
\tau_{\bar e_1}\circ\tau_{\bar e_2}
\circ\cdots\circ\tau_{\bar e_k}.
\end{aligned}
\]
Using Lemma~\ref{lem:quotient_inverse_consistency_app} at every edge gives
\[
\begin{aligned}
\tau_{w^{-1}}
&=
\tau_{e_1}^{-1}\circ\tau_{e_2}^{-1}
\circ\cdots\circ\tau_{e_k}^{-1}\\
&=
\bigl(
\tau_{e_k}\circ\tau_{e_{k-1}}
\circ\cdots\circ\tau_{e_1}
\bigr)^{-1}\\
&=
\tau_w^{-1}.
\end{aligned}
\]
Therefore
\[
\begin{aligned}
\Holo_o^{\mathrm{quot}}([w]^{-1})
&=
\Holo_o^{\mathrm{quot}}([w^{-1}])\\
&=
\tau_{w^{-1}}\\
&=
\tau_w^{-1}\\
&=
\Holo_o^{\mathrm{quot}}([w])^{-1}.
\end{aligned}
\]
This is consistent with, and also follows from, the multiplication and identity identities above. Hence
\[
\Holo_o^{\mathrm{quot}}
\]
is a group homomorphism.

Finally, define
\[
G_o^{\mathrm{quot}}
:=
\Holo_o^{\mathrm{quot}}
\bigl(
\pi_1(|X_{\mathrm{und}}|,o)
\bigr).
\]
Every element of this image lies in $\Sym(\cC_o)$. Moreover,
\[
\Id_{\cC_o}
=
\Holo_o^{\mathrm{quot}}([\varnothing_o])
\in
G_o^{\mathrm{quot}}.
\]
Let $g_1,g_2\in G_o^{\mathrm{quot}}$. By the definition of the image, there exist based loop classes $[w_1]$ and $[w_2]$ such that
\[
g_1
=
\Holo_o^{\mathrm{quot}}([w_1])
\qquad\text{and}\qquad
g_2
=
\Holo_o^{\mathrm{quot}}([w_2]).
\]
The homomorphism property gives
\[
\begin{aligned}
g_2\circ g_1
&=
\Holo_o^{\mathrm{quot}}([w_2])
\circ
\Holo_o^{\mathrm{quot}}([w_1])\\
&=
\Holo_o^{\mathrm{quot}}
\bigl([w_2][w_1]\bigr)\\
&\in
G_o^{\mathrm{quot}}.
\end{aligned}
\]
Likewise,
\[
\begin{aligned}
g_1^{-1}
&=
\Holo_o^{\mathrm{quot}}([w_1])^{-1}\\
&=
\Holo_o^{\mathrm{quot}}([w_1]^{-1})\\
&\in
G_o^{\mathrm{quot}}.
\end{aligned}
\]
Thus the image contains the identity and is closed under composition and inverses. Therefore
\[
G_o^{\mathrm{quot}}
\leq
\Sym(\cC_o),
\]
and, by definition, it is the quotient topological holonomy group at the base observation $o$.
\end{proof}

\subsection{Proof of Proposition~\ref{prop:topological_basepoint_change_app}}

\begin{proof}
Fix an oriented path $\eta$ from $o$ to $o'$. Write
\[
\eta=e_1e_2\cdots e_k,
\]
where there are observations
\[
o=o_0,o_1,\ldots,o_k=o'
\]
such that
\[
\src(e_j)=o_{j-1}
\qquad\text{and}\qquad
\tgt(e_j)=o_j
\]
for every $j\in\{1,\ldots,k\}$. Under Assumption~\ref{ass:inverse_consistency_app}, the reversed oriented path is
\[
\eta^{-1}
=
\bar e_k\bar e_{k-1}\cdots\bar e_1,
\]
which begins at $o'$ and ends at $o$. By the definition of raw path transport,
\[
\sigma_\eta
=
\sigma_{e_k}\circ\sigma_{e_{k-1}}\circ\cdots\circ\sigma_{e_1}.
\]
The transport along the reversed path is
\[
\begin{aligned}
\sigma_{\eta^{-1}}
&=
\sigma_{\bar e_1}\circ\sigma_{\bar e_2}\circ\cdots\circ\sigma_{\bar e_k}\\
&=
\sigma_{e_1}^{-1}\circ\sigma_{e_2}^{-1}\circ\cdots\circ\sigma_{e_k}^{-1}\\
&=
\bigl(
\sigma_{e_k}\circ\sigma_{e_{k-1}}\circ\cdots\circ\sigma_{e_1}
\bigr)^{-1}\\
&=
\sigma_\eta^{-1},
\end{aligned}
\]
where the second equality follows from
\[
\sigma_{\bar e_j}=\sigma_{e_j}^{-1}
\]
for every $j\in\{1,\ldots,k\}$. Thus the transport along the reversed reference path is exactly the inverse of the transport along the reference path.

The same conclusion holds on the stable quotient. By Lemma~\ref{lem:quotient_inverse_consistency_app}, every edge transport
\[
\tau_{e_j}:\cC_{o_{j-1}}\longrightarrow\cC_{o_j}
\]
is a bijection and satisfies
\[
\tau_{\bar e_j}=\tau_{e_j}^{-1}.
\]
Consequently,
\[
\tau_\eta
=
\tau_{e_k}\circ\tau_{e_{k-1}}\circ\cdots\circ\tau_{e_1}
:
\cC_o\longrightarrow\cC_{o'}
\]
is a bijection. Moreover,
\[
\begin{aligned}
\tau_{\eta^{-1}}
&=
\tau_{\bar e_1}\circ\tau_{\bar e_2}\circ\cdots\circ\tau_{\bar e_k}\\
&=
\tau_{e_1}^{-1}\circ\tau_{e_2}^{-1}\circ\cdots\circ\tau_{e_k}^{-1}\\
&=
\bigl(
\tau_{e_k}\circ\tau_{e_{k-1}}\circ\cdots\circ\tau_{e_1}
\bigr)^{-1}\\
&=
\tau_\eta^{-1}.
\end{aligned}
\]
In particular, conjugation by $\tau_\eta$ maps permutations of $\cC_o$ to permutations of $\cC_{o'}$.

We first prove the raw-layer identity. We use the notation
\[
\sigma_\eta
G_o^{\mathrm{raw}}
\sigma_\eta^{-1}
:=
\left\{
\sigma_\eta\circ g\circ\sigma_\eta^{-1}
:
g\in G_o^{\mathrm{raw}}
\right\}.
\]
Let
\[
g\in G_o^{\mathrm{raw}}
\]
be arbitrary. By the definition of the raw topological holonomy group, there exists a based loop class
\[
[w]\in\pi_1(|X_{\mathrm{und}}|,o)
\]
with an oriented edge-loop representative $w$ such that
\[
g
=
\Holo_o^{\mathrm{raw}}([w])
=
\sigma_w.
\]
Consider the oriented path
\[
\lambda
:=
\eta\circ w\circ\eta^{-1}.
\]
Under the path-composition convention of Definition~\ref{def:directed_transport_main}, the rightmost path is executed first. Hence $\lambda$ first follows $\eta^{-1}$ from $o'$ to $o$, then traverses the loop $w$ based at $o$, and finally follows $\eta$ from $o$ back to $o'$. Therefore $\lambda$ is a loop based at $o'$, and
\[
[\lambda]\in\pi_1(|X_{\mathrm{und}}|,o').
\]
Repeated use of the path-composition identity gives
\[
\begin{aligned}
\sigma_\lambda
&=
\sigma_{\eta\circ w\circ\eta^{-1}}\\
&=
\sigma_\eta\circ\sigma_{w\circ\eta^{-1}}\\
&=
\sigma_\eta\circ\sigma_w\circ\sigma_{\eta^{-1}}\\
&=
\sigma_\eta\circ g\circ\sigma_\eta^{-1}.
\end{aligned}
\]
Since $\lambda$ is a loop based at $o'$, its transport belongs to the raw holonomy group at $o'$. More explicitly,
\[
\begin{aligned}
\sigma_\eta\circ g\circ\sigma_\eta^{-1}
&=
\sigma_\lambda\\
&=
\Holo_{o'}^{\mathrm{raw}}([\lambda])\\
&\in
G_{o'}^{\mathrm{raw}}.
\end{aligned}
\]
Since $g\in G_o^{\mathrm{raw}}$ was arbitrary, this proves
\[
\sigma_\eta
G_o^{\mathrm{raw}}
\sigma_\eta^{-1}
\subseteq
G_{o'}^{\mathrm{raw}}.
\]

For the reverse inclusion, let
\[
g'\in G_{o'}^{\mathrm{raw}}
\]
be arbitrary. There exists a based loop class
\[
[w']\in\pi_1(|X_{\mathrm{und}}|,o')
\]
with an oriented edge-loop representative $w'$ satisfying
\[
g'
=
\Holo_{o'}^{\mathrm{raw}}([w'])
=
\sigma_{w'}.
\]
Define
\[
\kappa
:=
\eta^{-1}\circ w'\circ\eta.
\]
This path first follows $\eta$ from $o$ to $o'$, then traverses $w'$ at $o'$, and finally follows $\eta^{-1}$ from $o'$ back to $o$. Hence $\kappa$ is a loop based at $o$. Its raw transport is
\[
\begin{aligned}
\sigma_\kappa
&=
\sigma_{\eta^{-1}\circ w'\circ\eta}\\
&=
\sigma_{\eta^{-1}}\circ\sigma_{w'}\circ\sigma_\eta\\
&=
\sigma_\eta^{-1}\circ g'\circ\sigma_\eta.
\end{aligned}
\]
Because $\kappa$ is a loop based at $o$,
\[
\sigma_\kappa
\in
G_o^{\mathrm{raw}}.
\]
Set
\[
g
:=
\sigma_\kappa
=
\sigma_\eta^{-1}\circ g'\circ\sigma_\eta.
\]
Then $g\in G_o^{\mathrm{raw}}$. Conjugating this identity by $\sigma_\eta$ gives
\[
\begin{aligned}
\sigma_\eta\circ g\circ\sigma_\eta^{-1}
&=
\sigma_\eta\circ
\bigl(
\sigma_\eta^{-1}\circ g'\circ\sigma_\eta
\bigr)
\circ\sigma_\eta^{-1}\\
&=
\bigl(
\sigma_\eta\circ\sigma_\eta^{-1}
\bigr)
\circ g'\circ
\bigl(
\sigma_\eta\circ\sigma_\eta^{-1}
\bigr)\\
&=
\Id_{[n]}\circ g'\circ\Id_{[n]}\\
&=
g'.
\end{aligned}
\]
Therefore
\[
g'
\in
\sigma_\eta
G_o^{\mathrm{raw}}
\sigma_\eta^{-1}.
\]
Since $g'\in G_{o'}^{\mathrm{raw}}$ was arbitrary, it follows that
\[
G_{o'}^{\mathrm{raw}}
\subseteq
\sigma_\eta
G_o^{\mathrm{raw}}
\sigma_\eta^{-1}.
\]
Combining the two inclusions yields
\[
G_{o'}^{\mathrm{raw}}
=
\sigma_\eta
G_o^{\mathrm{raw}}
\sigma_\eta^{-1}.
\]

We now prove the corresponding quotient-layer identity. Define
\[
\tau_\eta
G_o^{\mathrm{quot}}
\tau_\eta^{-1}
:=
\left\{
\tau_\eta\circ q\circ\tau_\eta^{-1}
:
q\in G_o^{\mathrm{quot}}
\right\}.
\]
The domains and codomains in this composition are compatible. Indeed,
\[
\tau_\eta^{-1}:\cC_{o'}\longrightarrow\cC_o,
\]
every
\[
q\in G_o^{\mathrm{quot}}
\]
is a permutation from $\cC_o$ to itself, and
\[
\tau_\eta:\cC_o\longrightarrow\cC_{o'}.
\]
Thus
\[
\tau_\eta\circ q\circ\tau_\eta^{-1}
\]
is a permutation of $\cC_{o'}$.

Let
\[
q\in G_o^{\mathrm{quot}}
\]
be arbitrary. By the definition of the quotient topological holonomy group, there exists a based loop class
\[
[w]\in\pi_1(|X_{\mathrm{und}}|,o)
\]
with an oriented edge-loop representative $w$ such that
\[
q
=
\Holo_o^{\mathrm{quot}}([w])
=
\tau_w.
\]
Using the loop
\[
\lambda
=
\eta\circ w\circ\eta^{-1}
\]
based at $o'$, quotient path composition gives
\[
\begin{aligned}
\tau_\lambda
&=
\tau_{\eta\circ w\circ\eta^{-1}}\\
&=
\tau_\eta\circ\tau_{w\circ\eta^{-1}}\\
&=
\tau_\eta\circ\tau_w\circ\tau_{\eta^{-1}}\\
&=
\tau_\eta\circ q\circ\tau_\eta^{-1}.
\end{aligned}
\]
Since $\lambda$ is a loop based at $o'$,
\[
\begin{aligned}
\tau_\eta\circ q\circ\tau_\eta^{-1}
&=
\tau_\lambda\\
&=
\Holo_{o'}^{\mathrm{quot}}([\lambda])\\
&\in
G_{o'}^{\mathrm{quot}}.
\end{aligned}
\]
As $q\in G_o^{\mathrm{quot}}$ was arbitrary, this proves
\[
\tau_\eta
G_o^{\mathrm{quot}}
\tau_\eta^{-1}
\subseteq
G_{o'}^{\mathrm{quot}}.
\]

Conversely, let
\[
q'\in G_{o'}^{\mathrm{quot}}
\]
be arbitrary. There exists a based loop class
\[
[w']\in\pi_1(|X_{\mathrm{und}}|,o')
\]
with an oriented edge-loop representative $w'$ such that
\[
q'
=
\Holo_{o'}^{\mathrm{quot}}([w'])
=
\tau_{w'}.
\]
For the loop
\[
\kappa
=
\eta^{-1}\circ w'\circ\eta
\]
based at $o$, quotient path composition gives
\[
\begin{aligned}
\tau_\kappa
&=
\tau_{\eta^{-1}\circ w'\circ\eta}\\
&=
\tau_{\eta^{-1}}\circ\tau_{w'}\circ\tau_\eta\\
&=
\tau_\eta^{-1}\circ q'\circ\tau_\eta.
\end{aligned}
\]
Since $\kappa$ is a loop based at $o$,
\[
\tau_\kappa
\in
G_o^{\mathrm{quot}}.
\]
Define
\[
q
:=
\tau_\kappa
=
\tau_\eta^{-1}\circ q'\circ\tau_\eta.
\]
Then $q\in G_o^{\mathrm{quot}}$, and
\[
\begin{aligned}
\tau_\eta\circ q\circ\tau_\eta^{-1}
&=
\tau_\eta\circ
\bigl(
\tau_\eta^{-1}\circ q'\circ\tau_\eta
\bigr)
\circ\tau_\eta^{-1}\\
&=
\bigl(
\tau_\eta\circ\tau_\eta^{-1}
\bigr)
\circ q'\circ
\bigl(
\tau_\eta\circ\tau_\eta^{-1}
\bigr)\\
&=
\Id_{\cC_{o'}}\circ q'\circ\Id_{\cC_{o'}}\\
&=
q'.
\end{aligned}
\]
Therefore
\[
q'
\in
\tau_\eta
G_o^{\mathrm{quot}}
\tau_\eta^{-1}.
\]
Since $q'\in G_{o'}^{\mathrm{quot}}$ was arbitrary, we obtain
\[
G_{o'}^{\mathrm{quot}}
\subseteq
\tau_\eta
G_o^{\mathrm{quot}}
\tau_\eta^{-1}.
\]
Combining the two inclusions gives
\[
G_{o'}^{\mathrm{quot}}
=
\tau_\eta
G_o^{\mathrm{quot}}
\tau_\eta^{-1}.
\]
Therefore both the raw and stable-quotient topological holonomy groups are transported from one base observation to another by conjugation through the corresponding reference-path transport. In particular, changing the base observation preserves the holonomy structure up to the relabeling induced by the chosen path $\eta$.
\end{proof}

\subsection{Proof of Proposition~\ref{prop:sync_tree_main}}

\begin{proof}
Work in the observation-wise local class coordinates after exact recovery of the selected tree-edge maps. Thus, for each observation $o$ in the chosen connected component, the recovered class set is $\widehat{\cC}_o$, and, for every selected feasible directed tree edge
\[
e=(p\xrightarrow{a}q),
\]
the recovered map satisfies
\[
\widehat\tau_e
=
\tau_e^\lambda
=
\lambda_q^{-1}\circ\tau_e\circ\lambda_p
:
\widehat{\cC}_p
\longrightarrow
\widehat{\cC}_q.
\]
The maps $\lambda_p$ and $\lambda_q$ are bijections, and the quotient transport $\tau_e$ is bijective by the hypothesis on the tree scaffold. Hence $\widehat\tau_e$ is also bijective. Indeed, the map
\[
\lambda_p^{-1}\circ\tau_e^{-1}\circ\lambda_q
:
\widehat{\cC}_q
\longrightarrow
\widehat{\cC}_p
\]
is a two-sided inverse, because
\[
\begin{aligned}
&\bigl(
\lambda_p^{-1}\circ\tau_e^{-1}\circ\lambda_q
\bigr)
\circ
\bigl(
\lambda_q^{-1}\circ\tau_e\circ\lambda_p
\bigr)\\
&\qquad=
\lambda_p^{-1}\circ\tau_e^{-1}
\circ
\bigl(
\lambda_q\circ\lambda_q^{-1}
\bigr)
\circ\tau_e\circ\lambda_p\\
&\qquad=
\lambda_p^{-1}\circ\tau_e^{-1}
\circ\tau_e\circ\lambda_p\\
&\qquad=
\lambda_p^{-1}\circ\Id_{\cC_p}\circ\lambda_p\\
&\qquad=
\Id_{\widehat{\cC}_p},
\end{aligned}
\]
and similarly
\[
\begin{aligned}
&\bigl(
\lambda_q^{-1}\circ\tau_e\circ\lambda_p
\bigr)
\circ
\bigl(
\lambda_p^{-1}\circ\tau_e^{-1}\circ\lambda_q
\bigr)\\
&\qquad=
\lambda_q^{-1}\circ\tau_e
\circ
\bigl(
\lambda_p\circ\lambda_p^{-1}
\bigr)
\circ\tau_e^{-1}\circ\lambda_q\\
&\qquad=
\lambda_q^{-1}\circ\tau_e
\circ\tau_e^{-1}\circ\lambda_q\\
&\qquad=
\lambda_q^{-1}\circ\Id_{\cC_q}\circ\lambda_q\\
&\qquad=
\Id_{\widehat{\cC}_q}.
\end{aligned}
\]
Consequently, the inverse of the recovered map is well defined and is given by
\[
\widehat\tau_e^{-1}
=
\lambda_p^{-1}\circ\tau_e^{-1}\circ\lambda_q.
\]
The synchronization construction below uses only the recovered maps and their inverses; the unknown proof-level gauges $\lambda_o$ are not required by the procedure itself.

For each undirected tree adjacency $\{p,q\}\in E(T)$, let $e_{\{p,q\}}$ denote the directed edge selected in the hypothesis. Define the transport used for an ordered traversal from $p$ to $q$ by
\[
\Psi_{p\to q}
:=
\begin{cases}
\widehat\tau_{e_{\{p,q\}}},
&
\text{if }
\src(e_{\{p,q\}})=p
\text{ and }
\tgt(e_{\{p,q\}})=q,
\\[1ex]
\widehat\tau_{e_{\{p,q\}}}^{-1},
&
\text{if }
\src(e_{\{p,q\}})=q
\text{ and }
\tgt(e_{\{p,q\}})=p.
\end{cases}
\]
In either case,
\[
\Psi_{p\to q}
:
\widehat{\cC}_p
\longrightarrow
\widehat{\cC}_q
\]
is a bijection. Reversing the ordered traversal exchanges the two cases in the definition, and therefore
\[
\Psi_{q\to p}
=
\Psi_{p\to q}^{-1}.
\]
In particular, adjacent observations in $T$ have class sets of equal cardinality. Since $T$ is connected, repeated application along a tree path shows that
\[
|\widehat{\cC}_o|
=
|\widehat{\cC}_{o_{\mathrm{root}}}|
\qquad
\text{for every }o\in V(T).
\]

Fix an arbitrary observation $o\in V(T)$. Because $T$ is a tree, there exists a unique simple path from the root to $o$. Write its ordered vertex sequence as
\[
o_0=o_{\mathrm{root}},
\quad
o_1,
\quad
\ldots,
\quad
o_k=o.
\]
Define the corresponding root-to-$o$ synchronization map by
\[
\Psi_o
:=
\Psi_{o_{k-1}\to o_k}
\circ
\Psi_{o_{k-2}\to o_{k-1}}
\circ\cdots\circ
\Psi_{o_0\to o_1}
:
\widehat{\cC}_{o_{\mathrm{root}}}
\longrightarrow
\widehat{\cC}_o.
\]
For the root itself, define
\[
\Psi_{o_{\mathrm{root}}}
:=
\Id_{\widehat{\cC}_{o_{\mathrm{root}}}}.
\]
Every factor in the composition defining $\Psi_o$ is a bijection, so $\Psi_o$ is a bijection. Its inverse is obtained by traversing the unique tree path in the reverse direction:
\[
\Psi_o^{-1}
=
\Psi_{o_1\to o_0}
\circ
\Psi_{o_2\to o_1}
\circ\cdots\circ
\Psi_{o_k\to o_{k-1}}.
\]
Indeed, using
\[
\Psi_{o_j\to o_{j-1}}
=
\Psi_{o_{j-1}\to o_j}^{-1}
\]
and successively cancelling adjacent inverse pairs gives
\[
\begin{aligned}
\Psi_o^{-1}\circ\Psi_o
&=
\Psi_{o_1\to o_0}
\circ\cdots\circ
\Psi_{o_k\to o_{k-1}}\\
&\circ
\Psi_{o_{k-1}\to o_k}
\circ\cdots\circ
\Psi_{o_0\to o_1}\\
&=
\Psi_{o_1\to o_0}
\circ\cdots\circ
\Psi_{o_{k-1}\to o_{k-2}}\\
&\circ
\Id_{\widehat{\cC}_{o_{k-1}}}
\circ
\Psi_{o_{k-2}\to o_{k-1}}
\circ\cdots\circ
\Psi_{o_0\to o_1}\\
&=
\cdots\\
&=
\Psi_{o_1\to o_0}
\circ
\Psi_{o_0\to o_1}\\
&=
\Id_{\widehat{\cC}_{o_{\mathrm{root}}}}.
\end{aligned}
\]
The reverse composition satisfies
\[
\Psi_o\circ\Psi_o^{-1}
=
\Id_{\widehat{\cC}_o}
\]
by the same cancellation argument. The uniqueness of the simple root-to-$o$ path is precisely what makes $\Psi_o$ unambiguous; no choice among multiple tree paths is required.

Let $\mathcal L$ be any finite label set satisfying
\[
|\mathcal L|
=
|\widehat{\cC}_{o_{\mathrm{root}}}|,
\]
and choose an arbitrary root labeling, that is, an arbitrary bijection
\[
\ell_{o_{\mathrm{root}}}
:
\widehat{\cC}_{o_{\mathrm{root}}}
\longrightarrow
\mathcal L.
\]
For every $o\in V(T)$, define
\[
\ell_o
:=
\ell_{o_{\mathrm{root}}}
\circ
\Psi_o^{-1}
:
\widehat{\cC}_o
\longrightarrow
\mathcal L.
\]
Since both factors are bijections, each $\ell_o$ is a bijection. This family is the desired root-relative coordinate convention. Explicitly, if a root class $\hat c_{\mathrm{root}}$ is transported along the unique tree path to
\[
\hat c_o
=
\Psi_o(\hat c_{\mathrm{root}}),
\]
then its label at $o$ is
\[
\begin{aligned}
\ell_o(\hat c_o)
&=
\ell_{o_{\mathrm{root}}}
\bigl(
\Psi_o^{-1}(\hat c_o)
\bigr)\\
&=
\ell_{o_{\mathrm{root}}}
\bigl(
\Psi_o^{-1}(\Psi_o(\hat c_{\mathrm{root}}))
\bigr)\\
&=
\ell_{o_{\mathrm{root}}}
\bigl(
\Id_{\widehat{\cC}_{o_{\mathrm{root}}}}
(\hat c_{\mathrm{root}})
\bigr)\\
&=
\ell_{o_{\mathrm{root}}}(\hat c_{\mathrm{root}}).
\end{aligned}
\]
Thus the root label is preserved when it is propagated along the tree.

The same statement can be written locally on each rooted tree edge. Let $p$ be the parent of $q$ in the tree rooted at $o_{\mathrm{root}}$. The unique root-to-$q$ path is obtained by appending the traversal $p\to q$ to the unique root-to-$p$ path, and hence
\[
\Psi_q
=
\Psi_{p\to q}\circ\Psi_p.
\]
Taking inverses gives
\[
\begin{aligned}
\Psi_q^{-1}
&=
\bigl(
\Psi_{p\to q}\circ\Psi_p
\bigr)^{-1}\\
&=
\Psi_p^{-1}
\circ
\Psi_{p\to q}^{-1}.
\end{aligned}
\]
Therefore
\[
\begin{aligned}
\ell_q\circ\Psi_{p\to q}
&=
\ell_{o_{\mathrm{root}}}
\circ\Psi_q^{-1}
\circ\Psi_{p\to q}\\
&=
\ell_{o_{\mathrm{root}}}
\circ\Psi_p^{-1}
\circ\Psi_{p\to q}^{-1}
\circ\Psi_{p\to q}\\
&=
\ell_{o_{\mathrm{root}}}
\circ\Psi_p^{-1}
\circ
\Id_{\widehat{\cC}_p}\\
&=
\ell_{o_{\mathrm{root}}}
\circ\Psi_p^{-1}\\
&=
\ell_p.
\end{aligned}
\]
Equivalently, for every $\hat c\in\widehat{\cC}_p$,
\[
\ell_q\bigl(\Psi_{p\to q}(\hat c)\bigr)
=
\ell_p(\hat c).
\]
Consequently, if the selected directed edge points from $p$ to $q$, labels are propagated through the recovered bijection
\(\widehat\tau_{e_{\{p,q\}}}\); if it points from $q$ to $p$, labels are propagated from $p$ to $q$ through
\(\widehat\tau_{e_{\{p,q\}}}^{-1}\). This is exactly the propagation rule stated in the proposition.

We now prove uniqueness. Suppose that
\[
\left\{
\widetilde\ell_o:
\widehat{\cC}_o\to\mathcal L
\right\}_{o\in V(T)}
\]
is another family of coordinate maps satisfying the same root labeling
\[
\widetilde\ell_{o_{\mathrm{root}}}
=
\ell_{o_{\mathrm{root}}}
\]
and the same tree-edge compatibility equations
\[
\widetilde\ell_q\circ\Psi_{p\to q}
=
\widetilde\ell_p
\]
for every parent--child pair $p,q$ in the rooted tree. We show that
\[
\widetilde\ell_o
=
\ell_o
\qquad
\text{for every }o\in V(T).
\]
The equality holds at the root by assumption. Suppose it holds at a vertex $p$, and let $q$ be any child of $p$. Since $\Psi_{p\to q}$ is bijective, the compatibility equation uniquely determines the coordinate map at $q$:
\[
\begin{aligned}
\widetilde\ell_q
&=
\widetilde\ell_p
\circ
\Psi_{p\to q}^{-1}\\
&=
\ell_p
\circ
\Psi_{p\to q}^{-1}.
\end{aligned}
\]
On the other hand, the previously established relation
\[
\ell_q\circ\Psi_{p\to q}
=
\ell_p
\]
implies
\[
\ell_q
=
\ell_p\circ\Psi_{p\to q}^{-1}.
\]
Consequently,
\[
\widetilde\ell_q
=
\ell_q.
\]
Induction on the graph distance from the root therefore gives
\[
\widetilde\ell_o
=
\ell_o
\]
for every vertex $o$ of the tree. Hence the root-relative coordinate convention is unique once the root labeling, the spanning tree, and the selected directed tree edges are fixed.

Finally, let
\[
f=(p\xrightarrow{a}q)
\]
be any feasible directed edge not used as a tree edge, and let
\[
\widehat\tau_f:
\widehat{\cC}_p
\longrightarrow
\widehat{\cC}_q
\]
denote its recovered local edge map. Its expression in the common root-relative label set is the map
\[
D_f
:=
\ell_q
\circ
\widehat\tau_f
\circ
\ell_p^{-1}
:
\mathcal L
\longrightarrow
\mathcal L.
\]
Using
\[
\ell_q
=
\ell_{o_{\mathrm{root}}}
\circ
\Psi_q^{-1}
\]
and
\[
\ell_p^{-1}
=
\Psi_p
\circ
\ell_{o_{\mathrm{root}}}^{-1},
\]
we obtain
\[
\begin{aligned}
D_f
&=
\ell_{o_{\mathrm{root}}}
\circ
\Psi_q^{-1}
\circ
\widehat\tau_f
\circ
\Psi_p
\circ
\ell_{o_{\mathrm{root}}}^{-1}.
\end{aligned}
\]
The middle map
\[
\Psi_q^{-1}
\circ
\widehat\tau_f
\circ
\Psi_p
:
\widehat{\cC}_{o_{\mathrm{root}}}
\longrightarrow
\widehat{\cC}_{o_{\mathrm{root}}}
\]
is the transport obtained by moving from the root to $p$ along the unique tree path, applying the off-tree edge $f$, and returning from $q$ to the root along the reverse of the unique tree path. Thus it is the residual transport around the closed walk determined by $f$ and the tree scaffold, and $D_f$ is precisely that residual transport written in the chosen root labels.

The off-tree edge is perfectly consistent with the identification induced by the tree if and only if this residual map is the identity. Indeed,
\[
\begin{aligned}
D_f
=
\Id_{\mathcal L}
&\iff
\ell_q
\circ
\widehat\tau_f
\circ
\ell_p^{-1}
=
\Id_{\mathcal L}\\
&\iff
\widehat\tau_f
=
\ell_q^{-1}
\circ
\ell_p\\
&\iff
\widehat\tau_f
=
\Psi_q
\circ
\ell_{o_{\mathrm{root}}}^{-1}
\circ
\ell_{o_{\mathrm{root}}}
\circ
\Psi_p^{-1}\\
&\iff
\widehat\tau_f
=
\Psi_q
\circ
\Id_{\widehat{\cC}_{o_{\mathrm{root}}}}
\circ
\Psi_p^{-1}\\
&\iff
\widehat\tau_f
=
\Psi_q\circ\Psi_p^{-1}.
\end{aligned}
\]
The map on the right is exactly the transport from $p$ to $q$ predicted by the synchronized tree coordinates. Hence $D_f$ provides a cycle-consistency diagnostic. More generally, a nonidentity $D_f$ records the residual effect around the corresponding cycle rather than being erased by the gauge choice. If $\widehat\tau_f$ is itself bijective, then
\[
D_f\in\Sym(\mathcal L),
\]
and the displayed conjugacy shows that it is the holonomy permutation of the associated closed walk, expressed in root-relative coordinates.

To make the dependence on the arbitrary root labeling explicit, suppose that the root labels are changed by a permutation
\[
\chi\in\Sym(\mathcal L),
\qquad
\ell'_{o_{\mathrm{root}}}
:=
\chi\circ\ell_{o_{\mathrm{root}}}.
\]
The propagated coordinate map at an arbitrary observation $o$ is then
\[
\begin{aligned}
\ell'_o
&=
\ell'_{o_{\mathrm{root}}}
\circ
\Psi_o^{-1}\\
&=
\chi
\circ
\ell_{o_{\mathrm{root}}}
\circ
\Psi_o^{-1}\\
&=
\chi\circ\ell_o.
\end{aligned}
\]
The corresponding off-tree diagnostic becomes
\[
\begin{aligned}
D'_f
&=
\ell'_q
\circ
\widehat\tau_f
\circ
(\ell'_p)^{-1}\\
&=
\chi
\circ
\ell_q
\circ
\widehat\tau_f
\circ
(\chi\circ\ell_p)^{-1}\\
&=
\chi
\circ
\ell_q
\circ
\widehat\tau_f
\circ
\ell_p^{-1}
\circ
\chi^{-1}\\
&=
\chi\circ D_f\circ\chi^{-1}.
\end{aligned}
\]
Thus all root-relative coordinates and off-tree diagnostics are completely determined by the chosen root labeling, the selected directed tree edges, and the tree $T$. Changing only the arbitrary root names produces the expected common relabeling, or conjugation, and introduces no additional ambiguity.
\end{proof}

\end{document}